\setlist{%
  align=left,%
  labelsep=*,%
  leftmargin=*,%
  topsep=1mm,%
  itemsep=0mm%
}
\newcommand*{\mysquare}{\rule[0.18em]{0.36em}{0.36em}}
\newcommand*{\mytriangle}{\raisebox{0.12em}{\resizebox{0.48em}{0.48em}{$\blacktriangleright$}}}
\newcommand*{\mybar}{\rule[0.32em]{0.62em}{0.08em}}
\newcommand*{\mydot}{\raisebox{0.14em}{\resizebox{0.44em}{!}{$\bullet$}}}
\setlist[itemize,1]{label={\mysquare\ }}%
\setlist[itemize,2]{label={\mytriangle\ }}%
\setlist[itemize,3]{label={\mybar\ }}%
\setlist[itemize,4]{label={\mydot\ }}%
\setlist[enumerate,1]{label=\arabic*)}%
\setlist[enumerate,2]{label=\arabic{enumi}.\arabic*)}%
\setlist[enumerate,3]{label=\arabic{enumi}.\arabic{enumii}.\arabic*)}%
\newcommand\myisodate{\number\year-\ifcase\month\or 01\or 02\or 03\or 04\or 05\or 06\or 07\or 08\or 09\or 10\or 11\or 12\fi-\ifcase\day\or 01\or 02\or 03\or 04\or 05\or 06\or 07\or 08\or 09\or 10\or 11\or 12\or 13\or 14\or 15\or 16\or 17\or 18\or 19\or 20\or 21\or 22\or 23\or 24\or 25\or 26\or 27\or 28\or 29\or 30\or 31\fi}%
\newcommand*{\abstractnoindent}{}%
\let\abstractnoindent\abstract
\renewcommand*{\abstract}{\let\quotation\quote\let\endquotation\endquote
  \abstractnoindent}
\lstdefinestyle{input}{
  backgroundcolor=\color{semilightgray},%
  commentstyle=\itshape\color{chocolate},%
  keywordstyle=\color{blue},%
  stringstyle=\color{blue},%
  numbers=left,%
  numbersep=4.8pt,%
  numberstyle=\color{darkgray!80}\tiny%
}
\lstdefinestyle{output}{
  backgroundcolor=\color{lightgray}%
}
\lstdefinestyle{Lstyle}{
  language=[LaTeX]TeX,%
  texcs={},%
  otherkeywords={}%
}
\lstdefinestyle{Rstyle}{
  language=R,%
  keywords={if, else, repeat, while, function, for, in, next, break},%
  otherkeywords={}%
}
\renewcommand*{\cite}[2][]{\textcite[#1]{#2}}%
\newif\ifstarttheorem
\newtheoremstyle{mythmstyle}%
{0.5em}%
{0.5em}%
{}%
{}%
{\sffamily\bfseries\global\starttheoremtrue}%
{}%
{\newline}%
{\thmname{#1}\ \thmnumber{#2}\ \thmnote{(#3)}}%
\theoremstyle{mythmstyle}%
\newtheorem{definition}{Definition}[section]%
\newtheorem{proposition}[definition]{Proposition}
\newtheorem{theorem}[definition]{Theorem}
\newtheorem{corollary}[definition]{Corollary}
\newtheorem{remark}[definition]{Remark}
\newtheorem{algorithm}[definition]{Algorithm}
\preto\itemize{%
  \if@inlabel
  \ifstarttheorem
  \mbox{}\par\nobreak\vskip\glueexpr-\parskip-\baselineskip+0.25em\relax\hrule\@height\z@
  \fi%
  \fi%
  \global\starttheoremfalse%
  \def\tempa{proof}%
  \ifx\tempa\mycurrenvir
  \ifstarttheorem
  \mbox{}\par\nobreak\vskip\glueexpr-\parskip-\baselineskip+0.25em\relax\hrule\@height\z@
  \fi%
  \fi%
  \global\starttheoremfalse%
}
\preto\enditemize{\global\starttheoremfalse}
\preto\enumerate{%
  \if@inlabel
  \ifstarttheorem
  \mbox{}\par\nobreak\vskip\glueexpr-\parskip-\baselineskip+0.25em\relax\hrule\@height\z@
  \fi%
  \fi%
  \global\starttheoremfalse%
  \def\tempa{proof}%
  \ifx\tempa\mycurrenvir
  \ifstarttheorem
  \mbox{}\par\nobreak\vskip\glueexpr-\parskip-\baselineskip+0.25em\relax\hrule\@height\z@
  \fi%
  \fi%
  \global\starttheoremfalse%
}
\preto\endenumerate{\global\starttheoremfalse}
\newcommand{\ou}[3]{%
  \mathrel{%
    \vcenter{\offinterlineskip
      \ialign{##\cr$#1$\cr\noalign{\kern-#3}$#2$\cr}%
    }%
  }%
}
\newcommand*{\omu}[3]{\underset{#3}{\overset{#1}{#2}}}
\newcommand*{\varhat}[3]{{{\hat{#1}}_{#2}^{\text{#3}}}}
\newcommand*{\isim}{\omu{\text{\tiny{ind.}}}{\sim}{}}
\newcommand*{\IN}{\mathbbm{N}}
\newcommand*{\IZ}{\mathbbm{Z}}
\newcommand*{\IR}{\mathbbm{R}}
\newcommand*{\LN}{\operatorname{LN}}
\newcommand*{\U}{\operatorname{U}}
\newcommand*{\N}{\operatorname{N}}
\newcommand*{\ARMA}{\operatorname{ARMA}}
\newcommand*{\GARCH}{\operatorname{GARCH}}
\newcommand*{\I}{\mathbbm{1}}
\newcommand*{\rd}{\mathrm{d}}
\renewcommand*{\mod}{\operatorname{mod}}
\newcommand*{\supp}{\operatorname*{supp}}
\newcommand*{\D}{\operatorname{D}}
\newcommand*{\E}{\mathbbm{E}}
\newcommand*{\Var}{\operatorname{Var}}
\newcommand*{\ES}{\operatorname{ES}}
\newcommand*{\AC}{\operatorname{AC}}
\newcommand*{\MMD}{\operatorname{MMD}}
\newcommand*{\R}{\textsf{R}}
\newcommand*{\eps}{\varepsilon}
\newcommand*{\ntrn}{n_{\text{trn}}}
\newcommand*{\nbat}{n_{\text{bat}}}
\newcommand*{\nepo}{n_{\text{epo}}}
\newcommand*{\ngen}{n_{\text{gen}}}
\newcommand*{\nkrn}{n_{\text{krn}}}
\begin{document}
\thispagestyle{plain}
\begin{center}
  \sffamily
  {\bfseries\LARGE Quasi-random sampling for multivariate distributions via generative neural networks\par}
  \bigskip\smallskip
  {\Large Marius Hofert\footnote{Department of Statistics and Actuarial Science, University of
      Waterloo, 200 University Avenue West, Waterloo, ON, N2L
      3G1,
      \href{mailto:marius.hofert@uwaterloo.ca}{\nolinkurl{marius.hofert@uwaterloo.ca}}. The
      author acknowledges support from NSERC (Grant RGPIN-5010-2015).},
    Avinash Prasad\footnote{Department of Statistics and Actuarial Science, University of
      Waterloo, 200 University Avenue West, Waterloo, ON, N2L
      3G1,
      \href{mailto:a2prasad@uwaterloo.ca}{\nolinkurl{a2prasad@uwaterloo.ca}}. The author acknowledges support from NSERC (PGS D Scholarship).},
    Mu Zhu\footnote{Department of Statistics and Actuarial Science, University of
      Waterloo, 200 University Avenue West, Waterloo, ON, N2L
      3G1,
      \href{mailto:mu.zhu@uwaterloo.ca}{\nolinkurl{mu.zhu@uwaterloo.ca}}. The
      author acknowledges support from NSERC (RGPIN-2016-03876).}
    \par\bigskip
    \myisodate\par}
\end{center}
\par\smallskip
\begin{abstract}
  Generative moment matching networks (GMMNs) are introduced for generating
  quasi-random samples from multivariate models with any underlying copula in
  order to compute estimates under variance reduction. So far, quasi-random
  sampling for multivariate distributions required a careful design, exploiting
  specific properties (such as conditional distributions) of the implied
  parametric copula or the underlying quasi-Monte Carlo (QMC) point set, and
  was only tractable for a small number of models. Utilizing GMMNs allows one
  to construct quasi-random samples for a much larger variety of multivariate
  distributions without such restrictions, including empirical ones from real
    data with dependence structures not well captured by parametric copulas.
  Once trained on pseudo-random samples from a parametric model
    or on real data, these neural networks only require a multivariate standard
  uniform randomized QMC point set as input and are thus fast in estimating
  expectations of interest under dependence with variance reduction. Numerical
  examples are considered to demonstrate the approach, including applications
  inspired by risk management practice. All results are reproducible with the
  demos \texttt{GMMN\_QMC\_paper}, \texttt{GMMN\_QMC\_data} and
  \texttt{GMMN\_QMC\_timings} as part of the \R\ package \texttt{gnn}.
\end{abstract}
\minisec{Keywords}
Maximum mean discrepancy,
generative moment matching networks,
quasi-random numbers,
copulas,
sums of dependent random variables,
expected shortfall.
\minisec{MSC2010}
62H99, 65C60, 60E05, 00A72, 65C10. %

\section{Introduction}
Let $\bm{X}=(X_1,\dots,X_d)$ be a $d$-dimensional random vector with
distribution function $F_{\bm{X}}$ and continuous margins
$F_{X_1},\dots,F_{X_d}$. It is not a trivial task in general to generate
quasi-random samples $\bm{X}_1,...,\bm{X}_n$ from $F_{\bm{X}}$, i.e., samples
that mimic realizations from $F_{\bm{X}}$ but preserve low-discrepancy in the
sense of being locally more homogeneous with fewer ``gaps'' or ``clusters''
\parencite[Section~4.2]{cambou2017}.

By Sklar's Theorem, we always have the decomposition
\begin{align}
  F_{\bm{X}}(\bm{x})=C(F_{X_1}(x_1),\dots,F_{X_d}(x_d)),\quad \bm{x}=(x_1,\dots,x_d)\in\IR^d,\label{eq:sklar}
\end{align}
where $C:[0,1]^d \rightarrow [0,1]$ is the unique underlying copula
\parencite{nelsen2006,joe2014}. Since, in distribution,
$\bm{X}=F_{\bm{X}}^{-1}(\bm{U})$ for $\bm{U}\sim C$ and
$F_{\bm{X}}^{-1}(\bm{u})=(F_{X_1}^{-1}(u_1),\dots,F_{X_d}^{-1}(u_d))$, we shall
mostly focus on the problem of generating quasi-random samples
$\bm{U}_1,...,\bm{U}_n$ from $C$ rather than $\bm{X}_1,...,\bm{X}_n$ from
$F_{\bm{X}}$, as the latter are easily obtained from the former.

\subsection{Existing difficulty}
For the independence copula, $C(\bm{u})=u_1\cdot\ldots\cdot u_d$, quasi-random
samples can be obtained simply by using randomized quasi-Monte Carlo (RQMC)
point sets such as randomized Sobol' or generalized Halton sequences
\parencite[see, e.g.,][Chapter~5]{lemieux2009}.

Recently, \cite{cambou2017} demonstrated that for a limited number of copulas
$C$ (normal, $t$ or Clayton copulas), one can obtain quasi-random samples by
transforming RQMC point sets with the inverse Rosenblatt transform of $C$
\parencite{rosenblatt1952}; this method is known as the \emph{conditional
  distribution method (CDM)} --- see, e.g., \cite{embrechtslindskogmcneil2003}
or \cite[p.~45]{hofert2010c}. \cite{cambou2017} also showed that transformations
to quasi-random copula samples may exist for copulas
with a sufficiently simple stochastic representation. For most copulas, the
latter is not the case and the CDM is numerically intractable. In other words,
there exists no universal and numerically tractable transformation from RQMC
point sets to quasi-random samples from copulas. For the majority of copula
models, including grouped normal variance mixture copulas, Archimax copulas,
nested Archimedean copulas or extreme-value copulas, we simply do not know how
to generate quasi-random samples from them.

\subsection{Our contribution}
The main contribution of this paper is to introduce a new approach for
quasi-random sampling from $F_{\bm{X}}$ with {\em any} underlying copula $C$,
using generative neural networks. Even when we do {\em not} know the
distribution $F_{\bm{X}}$, our approach can still provide quasi-random samples
from the corresponding empirical distribution $\hat{F}_{\bm{X}}$ as long as we
have a dataset from $F_{\bm{X}}$. This is especially useful when the dependence
structure in the data cannot be adequately captured by a readily available
parametric copula; see Section~\ref{sec:realdata} where we present a real-data
example to show how useful our approach can be in this case where no adequate
copula model is known in the first place.

Specifically, let $f_{\bm{\theta}}$ denote a neural network (NN) parameterized
by $\bm{\theta}$.  We train $f_{\bm{\theta}}$ so that, given a $p$-dimensional
input $\bm{Z} \sim F_{\bm{Z}}$ with independent components $Z_1,\dots,Z_p$ from
known distributions $F_{Z_1},\dots,F_{Z_p}$, the trained NN can generate
$d$-dimensional output from the desired distribution,
$f_{\hat{\bm{\theta}}}(\bm{Z}) \sim F_{\bm{X}}$, where $\hat{\bm{\theta}}$
denotes the parameter vector of the trained NN. We can thus turn a uniform RQMC
point set, $\{\tilde{\bm{v}}_1,\dots,\tilde{\bm{v}}_n\}$, into a quasi-random
sample from $F_{\bm{X}}$ by letting
\begin{align}
  \bm{Y}_i=f_{\hat{\bm{\theta}}}\circ F_{\bm{Z}}^{-1}(\tilde{\bm{v}}_i),\quad i=1,\dots,n,\label{def:Yi}
\end{align}
where $F_{\bm{Z}}^{-1}(\bm{u}) = (F_{Z_1}^{-1}(u_1),\dots,F_{Z_p}^{-1}(u_p))$.

\subsection{Assessment}
The theoretical properties of quasi-randomness (or low-discrepancy) under
dependence are hard to assess; see \cite{cambou2017} and
Appendix~\ref{sec:appendix}. In low-dimensional cases (see
Section~\ref{sec:GMMN:visual}), we use data visualization tools to assess the
quality of the generated quasi-random samples, such as contour plots (or level
curves) showing that the empirical copula of our GMMN quasi-random samples is closer
to the true target copula than that of GMMN pseudo-random
samples. In higher-dimensional cases (see Section~\ref{sec:GMMN:accuracy}), we
use a Cram\'{e}r-von-Mises goodness-of-fit statistic to make the same point.

Since the main application of quasi-random sampling is to obtain low-variance
Monte-Carlo estimates of
\begin{align}
  \mu=\E(\Psi(\bm{X}))=\E\bigl(\Psi(F_{\bm{X}}^{-1}(\bm{U}))\bigr)\quad\text{for}\quad \bm{X} \sim F_{\bm{X}},\ \bm{U} \sim C\label{eq:prblm:X}
\end{align}
for an integrable $\Psi:\IR^d\rightarrow \IR$, we also assess our method in such a specific context.
The \emph{Monte Carlo estimator} approximates this expectation by
\begin{align}
  \varhat{\mu}{n}{MC}=\frac{1}{n}\sum_{i=1}^n\Psi(F_{\bm{X}}^{-1}(\bm{U}_i)),  \label{eq:prblm:approx}
\end{align}
where $\bm{U}_1,\dots,\bm{U}_n\isim C$. Using NN-generated quasi-random samples $\bm{Y}_1,\dots,\bm{Y}_n$ from \eqref{def:Yi}, we can approximate $\E\bigl(\Psi(F_{\bm{X}}^{-1}(\bm{U}))\bigr)$ by
\begin{align}
  \varhat{\mu}{n}{NN}=\frac{1}{n}\sum_{i=1}^n\Psi(\bm{Y}_i)=
  \frac{1}{n}\sum_{i=1}^n\Psi(f_{\hat{\bm{\theta}}}\circ F_{\bm{Z}}^{-1}(\tilde{\bm{v}}_i)).\label{eq:rqmc}
\end{align}
Theoretically (Section~\ref{sec:GMMN:sampling} and Appendix~\ref{sec:appendix}), we establish various guarantees that the estimation error of \eqref{eq:prblm:X} by \eqref{eq:rqmc} will be small as long as both $f_{\hat{\bm{\theta}}}$ and $\Psi$ are sufficiently smooth; we also establish the corresponding convergence rates. Empirically (Section~\ref{sec:conv:analysis}), we verify that \eqref{eq:rqmc} indeed has lower variance and converges faster than \eqref{eq:prblm:approx}.

Although being the main focus in this paper, let us stress that estimating expectations
such as \eqref{eq:prblm:X} is not the only application of quasi-random
sampling. For example, quasi-random sampling can also be used for estimating
quantiles for the distribution of a sum of dependent random variables.

All results presented in this paper (and more) are reproducible with
the demos \texttt{GMMN\_QMC\_paper}, \texttt{GMMN\_QMC\_data} and
\texttt{GMMN\_QMC\_timings} as part of the \R\ package \texttt{gnn}.

\section{Quasi-random GMMN samples}\label{sec:quasi:GMMN}
\subsection{Generative moment matching networks}
In this paper, we work with the \emph{multi-layer perceptron (MLP)}, which is
regarded as the quintessential \emph{neural network (NN)}.
Let $L$ be the number of (hidden) layers in the NN and, for each
$l=0,\dots,L+1$, let $d_l$ %
be the dimension of layer $l$, that is, the number of neurons in layer $l$. In
this notation, layer $l=0$ refers to the \emph{input layer} which consists of
the \emph{input} $\bm{z} \in \IR^p$ for $d_0=p$, and layer $l=L+1$ refers to the
\emph{output layer} which consists of the \emph{output} $\bm{y}\in \IR^d$ for
$d_{L+1}=d$. Layers $l=1,\dots,L+1$ can be described in terms of the output
$\bm{a}_{l-1} \in \IR^{d_{l-1}}$ of layer $l-1$ via
\begin{align*}
  \bm{a}_0 &= \bm{z}\in\IR^{d_0},\\
  \bm{a}_l &= f_l(\bm{a}_{l-1})=\phi_l(W_{l}\bm{a}_{l-1} +\bm{b}_l)\in\IR^{d_l},\quad l=1,\dots,L+1,\\
  \bm{y} &= \bm{a}_{L+1}\in\IR^{d_{L+1}},
\end{align*}
with \emph{weight matrices} $W_{l} \in \IR^{d_l \times d_{l-1}}$, \emph{bias
  vectors} $\bm{b}_{l} \in \IR^{d_{l}}$ and \emph{activation functions} $\phi_l$;
note that for vector inputs the activation function $\phi_l$ is understood to be
applied componentwise. %

The NN $f_{\bm{\theta}}: \IR^p \rightarrow \IR^d$ can then be written as the composition
\begin{align*}
  f_{\bm{\theta}}=f_{L+1} \circ f_L\circ \dots \circ f_2 \circ f_1,
\end{align*}
with its (flattened) parameter vector given by
$\bm{\theta}=(W_{1},\dots,W_{L+1},\bm{b}_{1},\dots,\bm{b}_{L+1})$. To fit
$\bm{\theta}$, we use the backpropagation algorithm (a stochastic gradient
descent) based on a \emph{cost function} $E$.
Conceptually, $E$ computes a distance between the \emph{target output}
$\bm{x}\in\IR^{d}$ and the \emph{actual output}
$\bm{y}=\bm{y}(\bm{z})\in\IR^{d}$ predicted by the NN; what is actually
computed is a sample version of $E$ based on a subsample (the so-called \emph{mini-batch}),
see Section~\ref{sec:cost:training}.

The expressive power of NNs is primarily characterized by the \emph{universal
  approximation theorem}; see \cite[Chapter~6]{goodfellow2016}. In particular,
given suitable activation functions, a single hidden layer NN with a finite
number of neurons can approximate any continuous function on a compact subset of
the multidimensional Euclidean space; see \cite[Chapter~4]{nielsen2015} for a
visual account of the validity of the universal approximation theorem.
\cite{cybenko1989} first proposed such universal approximation results for the
sigmoid activation function $\phi_l(x)=1/(1+\mathrm{e}^{-x})$ %
and \cite[Theorem~1]{hornik1991} then generalized the results to include arbitrary bounded
and non-constant activation functions. In recent years, the \emph{rectified
  linear unit (ReLU)} $\phi_l(x)=\max\{0,x\}$ has become the most popular
activation function for efficiently training NNs. This unbounded activation
function does not satisfy the assumptions of the universal approximation theorem
in \cite{hornik1991}. However, there have since been numerous theoretical
investigations into the expressive power of NNs with ReLU activation
functions; see, for example, \cite{pascanu2013}, \cite{montufar2014} or
\cite{arora2016}. In particular, for certain conditions on the number of layers
and neurons in the NN, \cite{arora2016} provide a similar universal
approximation theorem for NNs with ReLU activation functions.

\cite{li2015} and \cite{dziugaite2015} simultaneously introduced a type of generative neural network known as the \emph{generative moment matching network (GMMN)} or the Maximum Mean Discrepancy (MMD) net.
 GMMNs are NNs $f_{\bm{\theta}}$ of
the above form which utilize a (kernel) maximum mean discrepancy statistic as the
cost function (see later).
Conceptually, they can be thought of as parametric maps of a given sample
$\bm{Z}=(Z_1,\dots,Z_p)$ from an \emph{input distribution} $F_{\bm{Z}}$ to a
sample $\bm{X}=(X_1,\dots,X_d)$ from the \emph{target distribution}
$F_{\bm{X}}$.  As is standard in the literature, we assume independence among
the components of $\bm{Z}=(Z_1,\dots,Z_p)$. %
Typical choices for the distribution of the $Z_j$'s are $\U(0,1)$ or $\N(0,1)$.
The objective is then to generate samples from the target
distribution %
via the trained GMMN $f_{\hat{\bm{\theta}}}$. The MMD nets introduced in \cite{dziugaite2015} are
almost identical to GMMNs but with a slight difference in the training
procedure; additionally, \cite{dziugaite2015} provided a theoretical framework
for analyzing optimization algorithms with (kernel) MMD cost
functions.

\subsection{Cost function and training of GMMNs}\label{sec:cost:training}
To learn $f_{\bm{\theta}}$ (or, statistically speaking, to estimate
the parameter vector $\bm{\theta}$) we assume that we have $\ntrn$ training data points
$\bm{X}_1,\dots,\bm{X}_{\ntrn}$ from $\bm{X}$, either in the form of a pseudo-random sample from $F_{\bm{X}}$ or as real data.
Based on a sample $\bm{Z}_1,\dots,\bm{Z}_{\ntrn}$ from the
input distribution, the GMMN generates the output sample
$\bm{Y}_1,\dots,\bm{Y}_{\ntrn}$, where $\bm{Y}_i=f_{\bm{\theta}}(\bm{Z}_i)$,
$i=1,\dots,\ntrn$. Stacking $\bm{X}_1,\dots,\bm{X}_{\ntrn}$ into an $\ntrn\times d$ matrix $X$ and likewise $\bm{Y}_1,\dots,\bm{Y}_{\ntrn}$ into $Y$, we are thus interested in whether the two samples $X$ and $Y$ come from the
same distribution.

To this end, GMMNs utilize as cost function $E$
the \emph{maximum mean discrepancy (MMD)} statistic from the kernel two-sample
test introduced by \cite{gretton2007}, %
whose sample version is given by
\begin{align}
  \MMD(X,Y)=\sqrt{\frac{1}{\ntrn^2} \sum_{i_1=1}^{\ntrn}\sum_{i_2=1}^{\ntrn}(K(\bm{X}_{i_1},\bm{X}_{i_2})- 2K(\bm{X}_{i_1},\bm{Y}_{i_2}) + K(\bm{Y}_{i_1},\bm{Y}_{i_2}))},\label{def:MMD}
\end{align}
where $K(\cdot,\cdot): \IR^d \times \IR^d \rightarrow \IR$ denotes a kernel
(similarity) function.
If $K(\cdot,\cdot)$ is a so-called universal kernel function (e.g., Gaussian or
Laplace), then it can be shown \parencite{gretton2007,gretton2012} that the $\MMD$
converges in probability to $0$ for $\ntrn\to\infty$ if and only if
$\bm{Y}=\bm{X}$ in distribution. This makes the $\MMD$ with a universal
  kernel function an intuitive choice as cost function for training
  $f_{\bm{\theta}}$ to learn a random number generator from a multivariate
  distribution $F_{\bm{X}}$; our choice of kernel $K$ is addressed in
  Section~\ref{sec:details}.  For the Gaussian kernel in particular, expanding
  the exponential, one can see that minimizing the $\MMD$ can be interpreted as
  matching all the moments of the distributions of $\bm{X}$ and $\bm{Y}$ which
  gives an interpretation of the $\MMD$ in this case.  Note that, in comparison
  to a quadratic cost function, all pairs of observations enter the $\MMD$,
  which turns out to be a crucial property for learning a random number generator
  of $F_{\bm{X}}$.

  Computing $\MMD(X,Y)$ in~\eqref{def:MMD} requires one to evaluate the
  kernel for all $\binom{\ntrn}{2}$ pairs of observations, which is
  memory-prohibitive for even moderately large $\ntrn$. As suggested
  by~\cite{li2015}, we thus adopt a mini-batch optimization procedure.
  Instead of directly optimizing the $\MMD$ for the entire training dataset, we
partition the data into \emph{batches} of size $\nbat$ and use the batches
sequentially to update the parameters $\bm{\theta}$ of the GMMN with the Adam
optimizer of \cite{kingma2014b}. Rather than following the gradient at each iterative step, the
Adam optimizer essentially uses a ``memory-sticking
gradient'' --- a weighted combination of the current gradient and past gradients
from earlier iterations. After all the training data are exhausted, i.e., roughly
after $(\ntrn/\nbat)$-many batches or gradient steps, one \emph{epoch} of the
training of the GMMN is completed. The overall training procedure is considered
completed after $\nepo$ epochs. The training of the GMMN can thus be summarized
as follows:
\begin{algorithm}[Training GMMNs]\label{algorithm:GMMN:train}
  \begin{enumerate}
  \item Fix the number $\nepo$ of
    epochs %
    and the batch size $1\le\nbat\le\ntrn$ per epoch, where $\nbat$ is assumed
    to divide $\ntrn$. Initialize the epoch counter $k=0$ and the GMMN's
    parameter vector $\bm{\theta}$; we follow
    \cite{glorot2010} and initialize the components of $\bm{\theta}$
    as $W_l\sim \U(-\sqrt{6/(d_l+d_{l-1})},\sqrt{6/(d_l+d_{l-1})})^{d_l \times
      d_{l-1}}$ and $\bm{b}_{l}=\bm{0}$ for $l=1,\dots,L+1$.
  \item For epoch $k=1,\dots,\nepo$, do:
    \begin{enumerate}
    \item Randomly partition the input distribution sample $\bm{Z}_1,\dots,\bm{Z}_{\ntrn}$ and training sample $\bm{X}_1,\dots,\bm{X}_{\ntrn}$ into corresponding $\ntrn/\nbat$ non-overlapping batches $\bm{Z}_1^{(b)},\dots,$ $\bm{Z}_{\nbat}^{(b)}$ and $\bm{X}_1^{(b)},\dots,$ $\bm{X}_{\nbat}^{(b)}$, $b=1,\dots,\ntrn/\nbat$, of size $\nbat$ each.
    \item For batch $b=1,\dots,\ntrn/\nbat$, do:
      \begin{enumerate}
      \item Compute the GMMN output $\bm{Y}_i^{(b)}=f_{\bm{\theta}}(\bm{Z}_i^{(b)})$, $i=1,\dots,\nbat$.
      \item Compute the gradient
        $\frac{\partial}{\partial\bm{\theta}}\MMD(X^{(b)},Y^{(b)})$ from the
        samples $X^{(b)}$ (stacking $\bm{X}_1^{(b)},\dots,\bm{X}_{\nbat}^{(b)}$)
        and $Y^{(b)}$ (stacking $\bm{Y}_1^{(b)},\dots,\bm{Y}_{\nbat}^{(b)}$) via
        automatic differentiation.
      \item Take a gradient step to update $\bm{\theta}$ with the Adam optimizer popularized by
        \cite[Algorithm~1]{kingma2014b}.
      \end{enumerate}
    \end{enumerate}
  \item Return $\hat{\bm{\theta}}=\bm{\theta}$; the fitted GMMN is then $f_{\hat{\bm{\theta}}}$.
  \end{enumerate}
\end{algorithm}

\subsection{Pseudo- and quasi-random sampling by GMMN}\label{sec:GMMN:sampling}
The following algorithm describes how to obtain a pseudo-random sample of $\bm{Y}$ via the trained GMMN
$f_{\hat{\bm{\theta}}}$ from a pseudo-random sample $\bm{Z}\sim F_{\bm{Z}}$.
\begin{algorithm}[Pseudo-random sampling by GMMN]\label{algorithm:GMMN:prng}
  \begin{enumerate}
  \item Fix the number $\ngen$ of samples to generate from $\bm{Y}$.
  \item Draw $\bm{Z}_i\isim F_{\bm{Z}}$,
    $i=1,\dots,\ngen$, for example, via $\bm{Z}_i=F_{\bm{Z}}^{-1}(\bm{U}'_i)$,
    $i=1,\dots,\ngen$, where $\bm{U}'_1,\dots,\bm{U}'_{\ngen}\isim\U(0,1)^p$.
  \item Return $\bm{Y}_i=f_{\hat{\bm{\theta}}}(\bm{Z}_i)$, $i=1,\dots,\ngen$; to obtain a sample from $C$, return
    the pseudo-observations of $\bm{Y}_1,\dots,\bm{Y}_{\ngen}$ \parencite{genestghoudirivest1995}.
  \end{enumerate}
\end{algorithm}
To obtain quasi-random samples from $F_{\bm{X}}$ with underlying copula $C$, we replace
$\bm{U}'_1,\dots,\bm{U}'_{\ngen}\isim\U(0,1)^p$ in
Algorithm~\ref{algorithm:GMMN:prng} by an RQMC point set to obtain the following
algorithm; the randomization is done to obtain unbiased QMC estimators and
estimates of their variances.
\begin{algorithm}[Quasi-random sampling by GMMN]\label{algorithm:GMMN:qrng}
  \begin{enumerate}
  \item Fix the number $\ngen$ of samples to generate from $\bm{Y}$.
  \item Compute an RQMC point set
    $\tilde{P}_{\ngen}=\{\tilde{\bm{v}}_1,\dots,\tilde{\bm{v}}_{\ngen}\}$ (for
    example, a randomized Sobol' or a generalized Halton sequence) and
    $\bm{Z}_i=F^{-1}_{\bm{Z}}(\tilde{\bm{v}}_i)$, $i=1,\dots,\ngen$.
  \item Return $\bm{Y}_i=f_{\hat{\bm{\theta}}}(\bm{Z}_i)$, $i=1,\dots,\ngen$;
    to obtain a sample from $C$, return the pseudo-observations of $\bm{Y}_1,\dots,\bm{Y}_{\ngen}$.
  \end{enumerate}
\end{algorithm}

Note that $\tilde{P}_{\ngen}$ mimics $\U(0,1)^p$, and not $C$.  As mentioned in
the introduction, \cite{cambou2017} presented transformations to convert
$\tilde{P}_{\ngen}$ to samples which mimic samples from $C$ but locally provide
a more homogeneous coverage. Unfortunately, these transformations are only
available for a few specific cases of $C$ and their numerical
evaluation in a fast and robust way is even more challenging.

To avoid these problems, we suggest to utilize the GMMN $f_{\hat{\bm{\theta}}}$
trained as in Algorithm~\ref{algorithm:GMMN:train}. Besides a straightforward evaluation,
this allows us to generate quasi-random samples from $F_{\bm{X}}$ with any
underlying copula $C$, by training a GMMN on pseudo-random samples generated
from $F_{\bm{X}}$. Alternatively, quasi-random samples which follow the same
empirical distribution as any given dataset can be obtained by
training a GMMN on the given dataset itself.  An additional advantage is that GMMNs
provide a sufficiently smooth map from the RQMC point set to the target
distribution which helps preserve the low-discrepancy of the point
set upon transformation and hence guarantees the improved performance of RQMC
estimators compared to the MC estimator (see Section~\ref{sec:conv:analysis} and Appendix~\ref{sec:appendix}).

With the mapping $F_{\bm{Z}}^{-1}(\bm{u}) = (F_{Z_1}^{-1}(u_1),\dots,F_{Z_p}^{-1}(u_p))$ to the
input distribution and the trained GMMN $f_{\hat{\bm{\theta}}}$ at hand, define a transform
\begin{align*}
  q(\bm{u}) = f_{\hat{\bm{\theta}}}\circ F_{\bm{Z}}^{-1}(\bm{u}),\quad \bm{u}\in (0,1)^p.
\end{align*}
Based on the RQMC point set
$\tilde{P}_{\ngen}=\{\tilde{\bm{v}}_1,\dots,\tilde{\bm{v}}_{\ngen}\}$ of size $\ngen$,
we can then obtain quasi-random samples by
\begin{align*}
  \bm{Y}_i=q(\tilde{\bm{v}}_i),\quad i=1,\dots,\ngen,
\end{align*}
(compare with~\eqref{def:Yi}) and define a
\emph{GMMN RQMC estimator} of \eqref{eq:prblm:X} by
\begin{align}
  \varhat{\mu}{\ngen}{NN} = \frac{1}{\ngen} \sum_{i=1}^{\ngen} \Psi(\bm{Y}_i) = \frac{1}{\ngen} \sum_{i=1}^{\ngen} \Psi(q(\tilde{\bm{v}}_i))=\frac{1}{\ngen} \sum_{i=1}^{\ngen} \Psi\bigl(f_{\hat{\bm{\theta}}}(F_{\bm{Z}}^{-1}(\tilde{\bm{v}}_i))\bigr).\label{eq:GMMN:C:RQMC}
\end{align}

We thus have the approximations
\begin{align}
  \E(\Psi(\bm{X}))\approx \E(\Psi(\bm{Y}))\approx \varhat{\mu}{\ngen}{NN}.\label{eq:approxis}
\end{align}
The error in the first approximation is small if the GMMN is trained well and
the error in the second approximation is small if the unbiased estimator
$\varhat{\mu}{\ngen}{NN}$ has a small variance. The primary
\emph{bottleneck} in this setup is the error in the first approximation which is
determined by the size $\ntrn$ of the training dataset and, in particular, by
the batch size $\nbat$ which is the major factor determining training efficiency
of the GMMN we found in all our numerical studies. Given a sufficiently large $\nbat$ and, by
extension, $\ntrn$, the GMMN is trained well, which renders the first
approximation error in~\eqref{eq:approxis} negligible.  However, in practice the
batch size $\nbat$ is constrained by the quadratically increasing memory demands
to compute the MMD cost function of the GMMN. For a theoretical result regarding
this approximation error, see \cite{dziugaite2015} where a bound on the error
between optimizing a sample version and a population version of $\MMD(X,Y)$ was
investigated. Finally, let us note that the task of GMMN training and generation
are separate steps which ensures that, once trained, generating quasi-random
GMMN samples is comparably fast; see Appendix~\ref{sec:timings}.

The error in the second approximation in~\eqref{eq:approxis} is small if the
composite function $\Psi\circ q$ is sufficiently smooth. The transform $q$ is
sufficiently smooth for GMMNs $f_{\hat{\bm{\theta}}}$ constructed using standard
activation functions and commonly used input distributions; see the discussion
following Corollary~\ref{corollary:rqmc:scr}. Given a sufficiently smooth
$\Psi$, we can establish a rate of convergence $O(\ngen^{-3}(\log \ngen)^{p-1})$ for the variance (and $O(\ngen^{-3/2}(\log \ngen)^{(p-1)/2})$ for the
approximation error) of the GMMN RQMC estimator $\varhat{\mu}{\ngen}{NN}$
constructed using scrambling as randomization; see
Appendix~\ref{appendix:rqmc:scr:analysis}.  With a stronger assumption on the
behavior of the composite function $\Psi\circ q$, we can show that the
Koksma--Hlawka bound on the error between the (non-randomized) GMMN QMC
estimator $\frac{1}{\ngen} \sum_{i=1}^{\ngen} \Psi(q(\bm{v}_i))$ and
$\E(\Psi(\bm{Y}))$ is satisfied which in turn implies a rate of convergence $O(\ngen^{-1}(\log \ngen)^p)$ for the
(non-randomized) GMMN QMC estimator; see
Appendix~\ref{appendix:qmc:analysis}. If the Koksma--Hlawka bound holds, we can
also establish a rate of convergence $O(\ngen^{-2}(\log \ngen)^{2p})$ for the variance of GMMN RQMC estimators
constructed using the digital shift method as randomization technique; see
Appendix~\ref{appendix:rqmc:shift:analysis}.
\section{GMMN pseudo- and quasi-random samples for copula models}\label{sec:GMMN:copula}
In this section we assess the quality of pseudo-random samples and quasi-random
samples generated from GMMNs. In both cases we train GMMNs on pseudo-random
samples $\bm{U}_1,\dots,\bm{U}_{\ntrn}\sim C$ from the respective copula $C$. We
start by addressing key implementation details and hyperparameters of
Algorithm~\ref{algorithm:GMMN:train} that we used in all examples thereafter. By
utilizing this algorithm to train $f_{\bm{\theta}}$ for a wide variety of copula
families, we then investigate the quality of the samples $\bm{Y}_1,\dots,\bm{Y}_{\ngen}$,
once generated by Algorithm~\ref{algorithm:GMMN:prng} and once by
Algorithm~\ref{algorithm:GMMN:qrng}.

\subsection{GMMN architecture, choice of kernel and training setup}\label{sec:details}
We find a single hidden layer architecture ($L=1$) to be sufficient for all the examples we considered. This is because, in this paper, we largely consider the cases of $d \in \{2,...,10\}$. Learning an entire distribution nonparametrically for $d>10$ would most likely require $L>1$, but it would also require a much larger sample size $\ntrn$ and become much more challenging computationally for GMMNs --- recall from Section~\ref{sec:cost:training} that the cost function requires $\binom{\ntrn}{2}$ evaluations.
After experimentation, we fix $d_1=300$, $\phi_1$ to be ReLU (it offers computational
efficiency via non-expensive and non-vanishing gradients) and $\phi_2$ to be
sigmoid (to obtain outputs in $[0,1]^d$).

To avoid the need of fine-tuning the bandwidth parameter, we follow \cite{li2015} and use a mixture of Gaussian kernels with different
bandwidth parameters as our kernel function for the $\MMD$ statistic
in~\eqref{def:MMD}; specifically,
\begin{align}
  K(\bm{x},\bm{y}) =\sum_{i=1}^{\nkrn} K(\bm{x},\bm{y};\sigma_i),\label{eq:kernel}
\end{align}
where $\nkrn$ denotes the number of mixture components and
$K(\bm{x},\bm{y};\sigma) = \exp(-\lVert \bm{x}-\bm{y}\rVert_2^2/(2\sigma^2))$ is
the Gaussian kernel with bandwidth parameter $\sigma>0$. After experimentation,
we fix $\nkrn=6$ and choose
$(\sigma_1,\dots,\sigma_6)=(0.001,0.01,0.15,0.25,0.50,0.75)$; note that
copula samples are in $[0,1]^d$.

Unless otherwise specified, we use the following setup across all examples.  We
use $\ntrn=60\,000$ training data points and find this to be sufficiently large
to obtain reliable $f_{\hat{\bm{\theta}}}$. As dimension of the input
distribution $F_{\bm{Z}}$, we choose $p=d$, that is, the GMMN $f_{\bm{\theta}}$
is set to be a $d$-to-$d$ transformation.
For $F_{\bm{Z}}$, we choose
$\bm{Z}\sim \N(\bm{0},I_d)$, where $I_d$ denotes the identity matrix in
$\IR^{d\times d}$, so $\bm{Z}$ consists of independent standard normal random
variables; this choice worked better than $\U(0,1)^d$ in practice despite the fact that
$\N(\bm{0},I_d)$ does not satisfy the assumptions of
Proposition~\ref{prop:bound:ass}. We choose a batch size of $\nbat=5000$ in Algorithm~\ref{algorithm:GMMN:train}; this decision is motivated from a practical
trade-off that a small $\nbat$ will lead to poor estimates of the population
MMD cost function but a large $\nbat$ will incur quadratically growing
memory requirements due to \eqref{def:MMD}. As the number of epochs we choose
$\nepo=300$ which is generally sufficient in our experiments to obtain
accurate results. The tuning parameters of the Adam optimizer is set to the
default values reported in \cite{kingma2014b}.

All results in this section, Section~\ref{sec:conv:analysis} and
Appendix~\ref{appendix:rqmc:shift:analysis} are reproducible with the demo
\texttt{GMMN\_QMC\_paper} of the \R\ package \texttt{gnn}. Our implementation
utilizes the \R\ packages \texttt{keras} and \texttt{tensorflow} which serve as
\R\ interfaces to the corresponding namesake Python libraries.  Furthermore, all
GMMN training is carried out on one NVIDIA Tesla P100 GPU. To generate the RQMC
point set in Algorithm~\ref{algorithm:GMMN:qrng}, we use scrambled nets
\parencite{owen1995}; see also Appendix~\ref{appendix:rqmc}. Specifically, we use
the implementation \texttt{sobol(, randomize = "Owen")} from the \R\ package
\texttt{qrng}. Finally, our choice of \R\ as programming language for this work
was motivated by the fact that contributed packages providing functionality for
copula modeling and quasi-random number generation --- two of the three
major fields of research (besides machine learning) this work touches upon --- exist in \R.

\subsection{Visual assessments of GMMN samples}\label{sec:GMMN:visual}
In this section we primarily focus on the bivariate case but include an example involving a trivariate copula; for higher-dimensional copulas, see Sections~\ref{sec:GMMN:accuracy} and \ref{sec:conv:analysis}. For all
one-parameter copulas considered, the single parameter will be chosen such that
Kendall's tau, denoted by $\tau$, is equal to 0.25 (weak dependence), 0.50
(moderate dependence) or 0.75 (strong dependence); clearly, this only applies to
copula families where there is a one-to-one mapping between the copula parameter
and $\tau$.

\subsubsection{$t$, Archimedean copulas and their associated mixtures}\label{sec:t:AC:mix}
First, we consider Student $t$ copulas, Archimedean copulas, and their mixtures.

Student $t$ copulas are prominent members of the elliptical class of copulas and
are given by $C(\bm{u})= t_{\nu,P}(t^{-1}_{\nu}(u_1),\dots,t^{-1}_{\nu}(u_d))$,
$\bm{u}\in[0,1]^d$, where $t_{\nu,P}$ denotes the distribution function of the
$d$-dimensional $t$ distribution with $\nu$ degrees of freedom, location vector
$\bm{0}$ and correlation matrix $P$, and $t_{\nu}^{-1}$ denotes
the quantile function of the univariate $t$ distribution with $\nu$ degrees of
freedom. For all $t$ copulas considered in this work, we fix $\nu=4$. Student $t$ copulas have explicit inverse Rosenblatt transforms,
so one can utilize the CDM for generating quasi-random samples from them~\parencite{cambou2017}.

Archimedean copulas are copulas of the form
\begin{align*}
C(\bm{u})= \psi(\psi^{-1}(u_1),\dots,\psi^{-1}(u_d)),\quad\bm{u}\in[0,1]^d,
\end{align*}
for an Archimedean generator $\psi$ which is a continuous, decreasing
function $\psi:[0,\infty]\rightarrow[0,1]$ that satisfies $\psi(0)=1$,
$\psi(\infty)=\lim_{t\rightarrow\infty}\psi(t)=0$ and that is strictly
decreasing on $[0, \inf{t: \psi(t)=0}]$. Examples of Archimedean generators include
$\psi_C(t)=(1+t)^{-1/\theta}$ (for $\theta>0$) and
$\psi_G(t)=\exp(-t^{1/\theta})$ (for $\theta\ge 1$), generating Clayton and
Gumbel copulas, respectively.
While the inverse Rosenblatt transform and thus the CDM is available
analytically for Clayton copulas, this is not the case for Gumbel;
in Appendix~\ref{sec:timings} we used numerical root finding to include
the latter case for the purpose of timings only.

We additionally consider equally-weighted two-component mixture copulas in which one component is a $90$-degree-rotated $t_4$ copula with $\tau=0.50$ and the other component is either a Clayton copula ($\tau=0.50$) or a Gumbel copula ($\tau=0.50$). The two mixture copula models are referred to as Clayton-$t(90)$ and Gumbel-$t(90)$ copulas, respectively.

The top rows of Figures~\ref{fig:t:example}--\ref{fig:gumbel:example} display contour plots of true $t$, Clayton and Gumbel copulas respectively, with $\tau=0.25$ (left), $0.50$ (middle) and $0.75$ (right) along with contours of empirical copulas
based on GMMN pseudo-random and GMMN quasi-random samples corresponding to each true copula $C$. The top row of Figure~\ref{fig:mixtures:example} displays similar plots for Clayton-$t(90)$ (left) and Gumbel-$t(90)$ (right) copulas. In each plot, across all figures described above, we observe that the contour of the empirical copula based on GMMN pseudo-random samples is visually fairly similar to the contour of $C$, thus indicating that the 11 GMMNs have been trained sufficiently well. We also see that the contours of the empirical copulas based on GMMN quasi-random samples better approximate the contours of $C$ than the contours of the empirical copulas based on the corresponding pseudo-random samples. This observation indicates that, at least visually, the 11 GMMN transforms (corresponding to each $C$) have preserved the low-discrepancy of the input RQMC point sets.

The bottom rows of Figures~\ref{fig:t:example}--\ref{fig:mixtures:example} display Rosenblatt transformed GMMN quasi-random samples, corresponding to each of the 11 true copulas $C$ under consideration. The Rosenblatt transform for a bivariate copula $C$ maps $(U_1,U_2)\sim C$ to $(R_1,R_2)=(U_1,C_{2|1}(U_2\,|\,U_1))$, where $C_{2|1}(u_2\,|\,u_1)$ denotes the conditional distribution function of $U_2$ given $U_1=u_1$ under $C$. We exploit the fact that $(R_1,R_2)\sim\U(0,1)^2$ if and only if $(U_1,U_2)\sim C$. Moreover, Rosenblatt transforming the GMMN quasi-random samples should yield a more homogeneous coverage of $[0,1]^2$. From each of the scatter plots in Figures~\ref{fig:t:example}--\ref{fig:mixtures:example}, we observe no significant departure from $\U(0,1)^2$, thus indicating that the GMMNs have learned sufficient approximations to the corresponding true copulas $C$. Furthermore, the lack of gaps or clusters in the scatter plots provides some visual confirmation of the low-discrepancy of the Rosenblatt-transformed GMMN quasi-random samples.

\begin{figure}[htbp]
  \centering
  \includegraphics[width=0.32\textwidth]{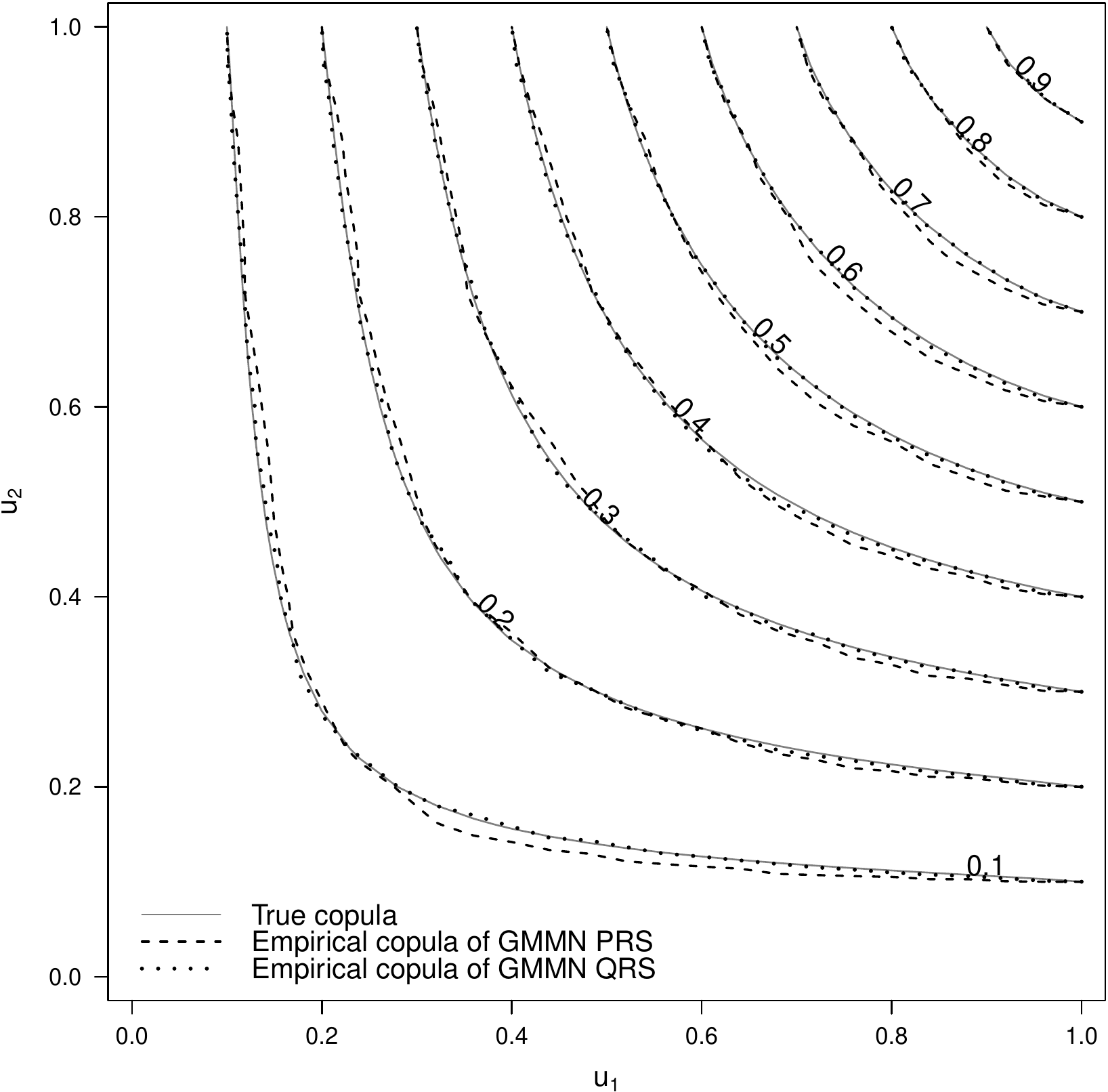}\hfill
  \includegraphics[width=0.32\textwidth]{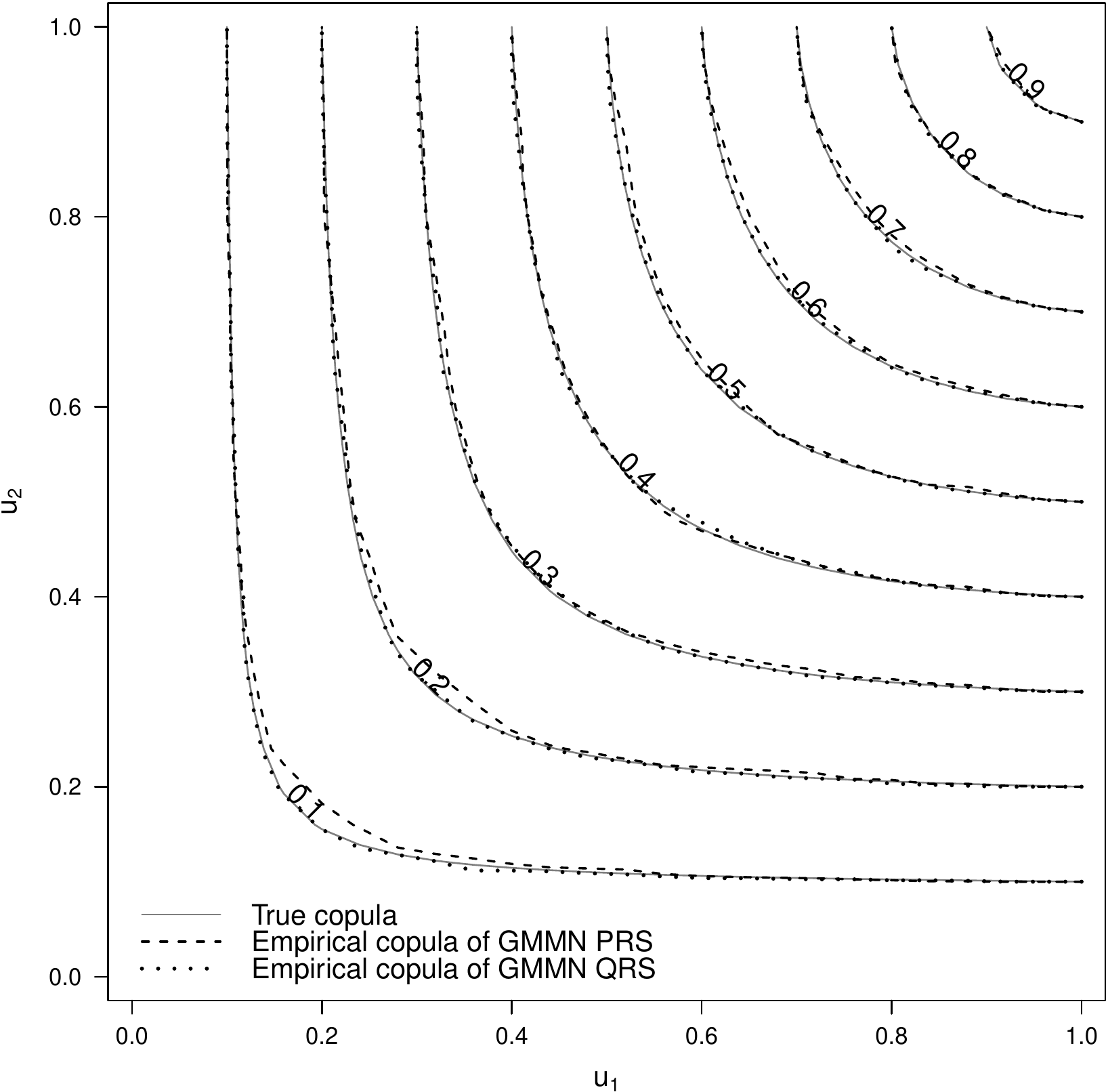}\hfill
  \includegraphics[width=0.32\textwidth]{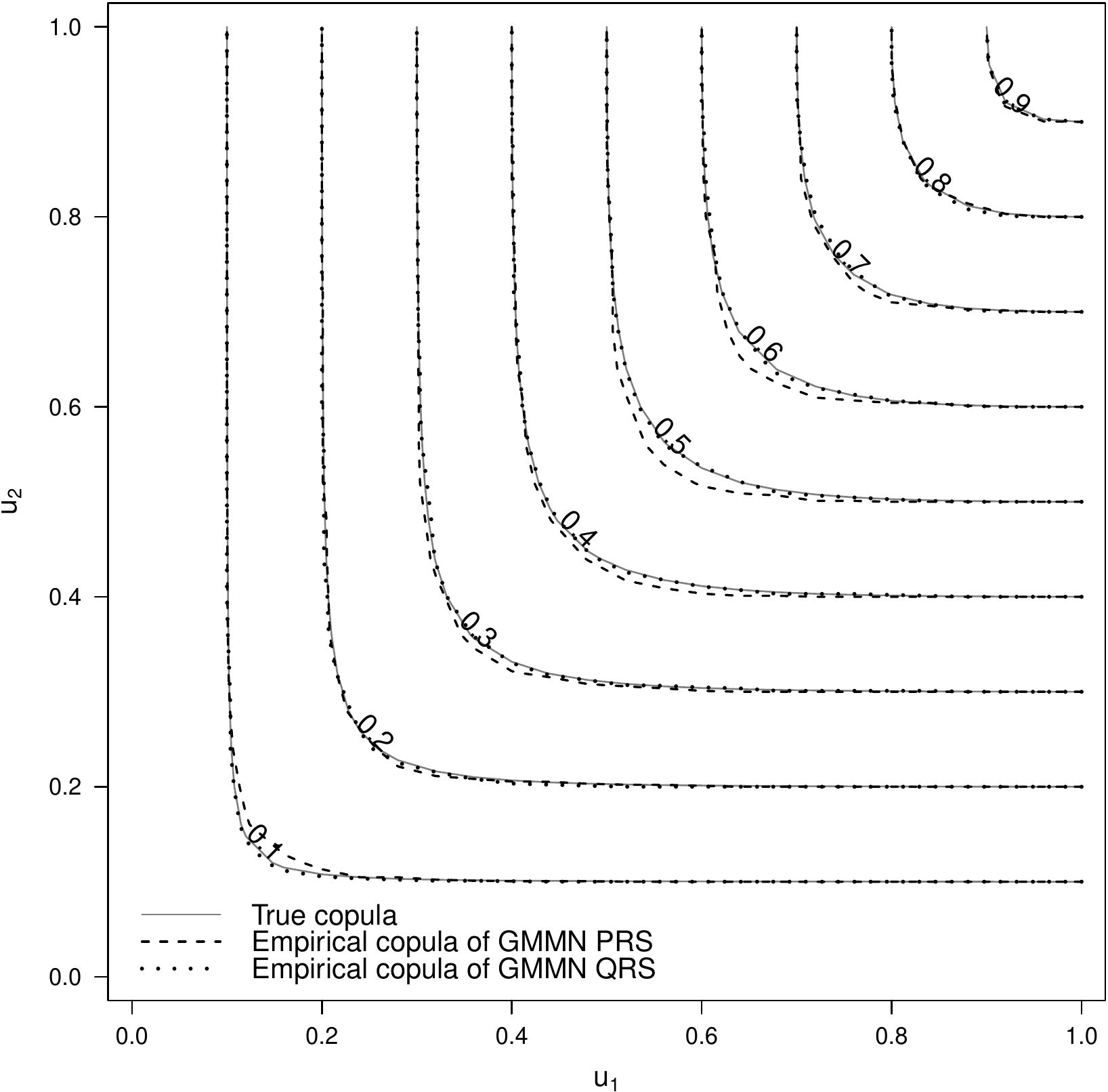}
  \includegraphics[width=0.32\textwidth]{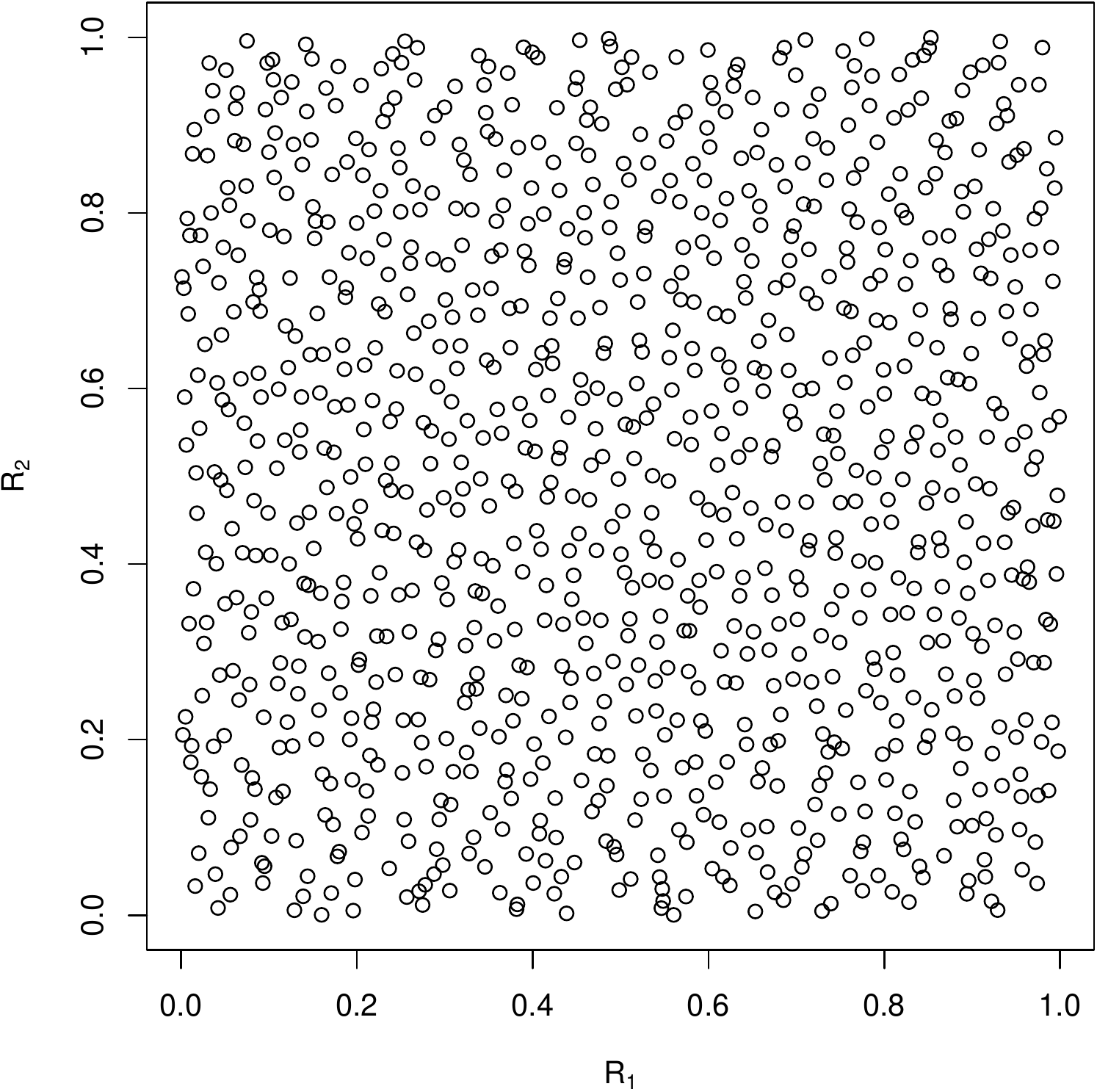}\hfill
  \includegraphics[width=0.32\textwidth]{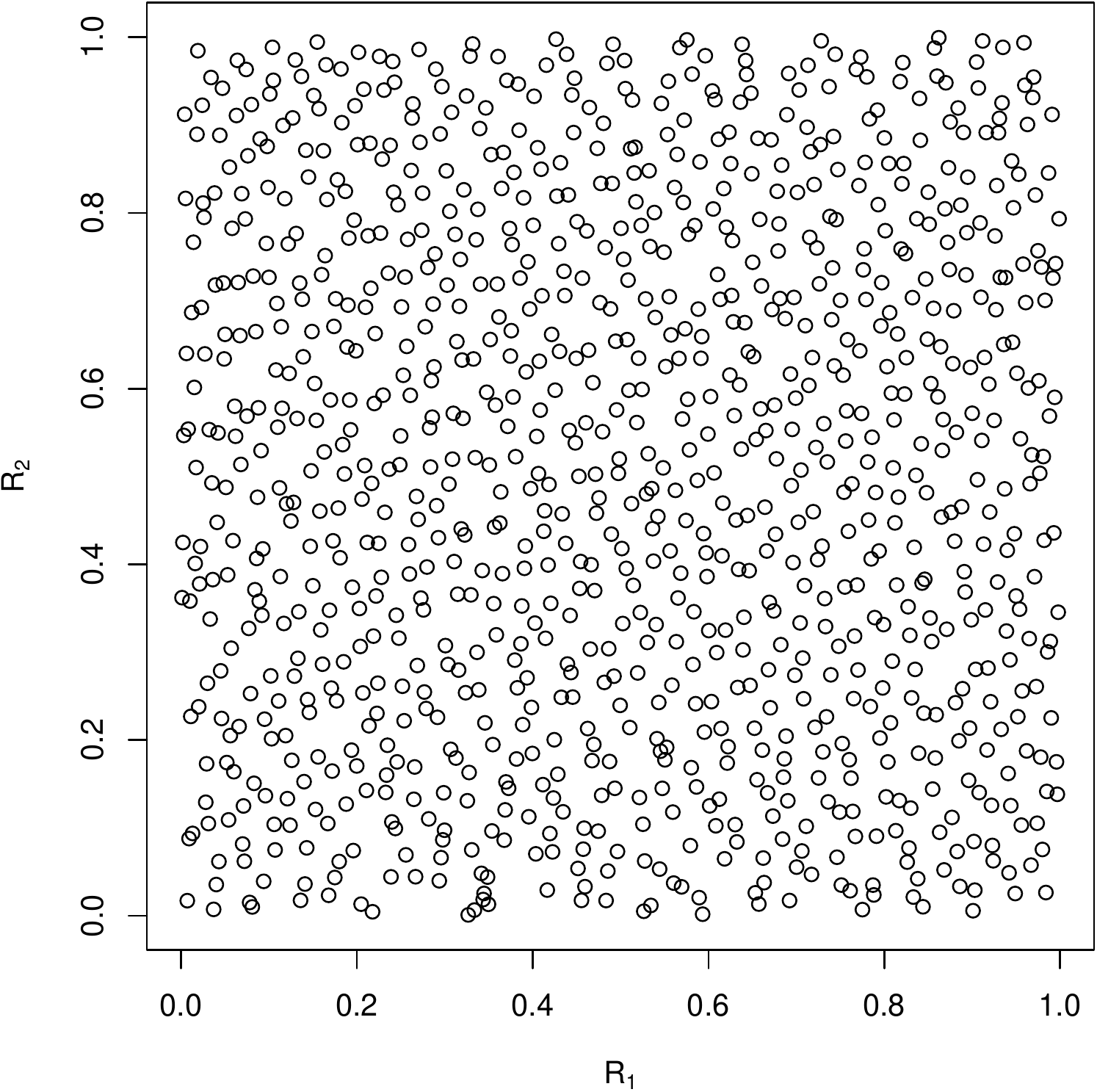}\hfill
  \includegraphics[width=0.32\textwidth]{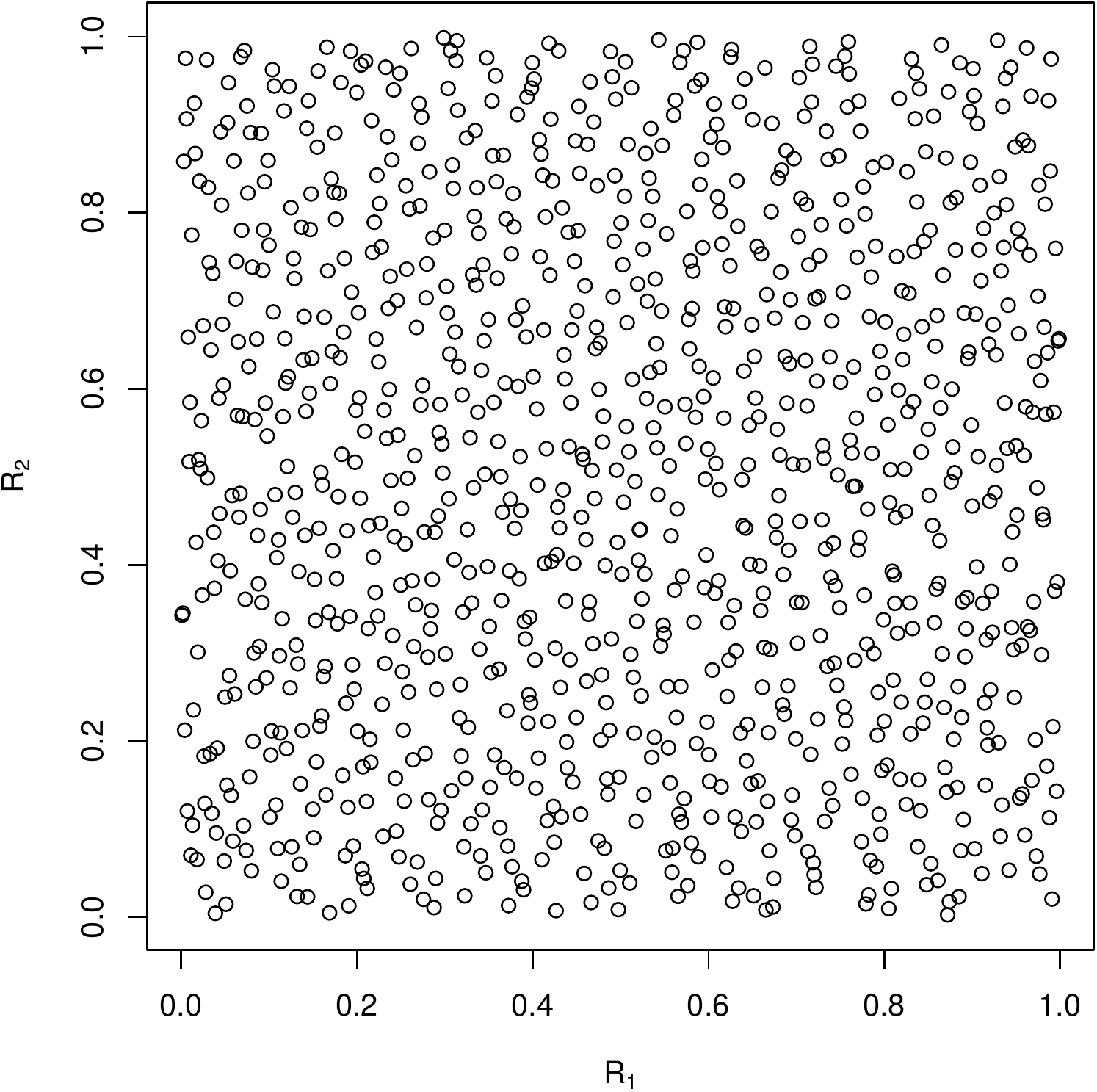}
  \caption{Top row contains contour plots of true $t_4$ copulas with $\tau=0.25$ (left), $0.50$ (middle) and $0.75$ (right) along with the corresponding contour plots of empirical copulas based on GMMN pseudo-random and GMMN quasi-random samples (respectively, GMMN PRS and GMMN QRS), both of size $\ngen=1000$. Bottom row contains Rosenblatt-transformed GMMN QRS corresponding to the same three $t_4$ copulas.
  }\label{fig:t:example}
\end{figure}

\begin{figure}[htbp]
	\centering
	\includegraphics[width=0.32\textwidth]{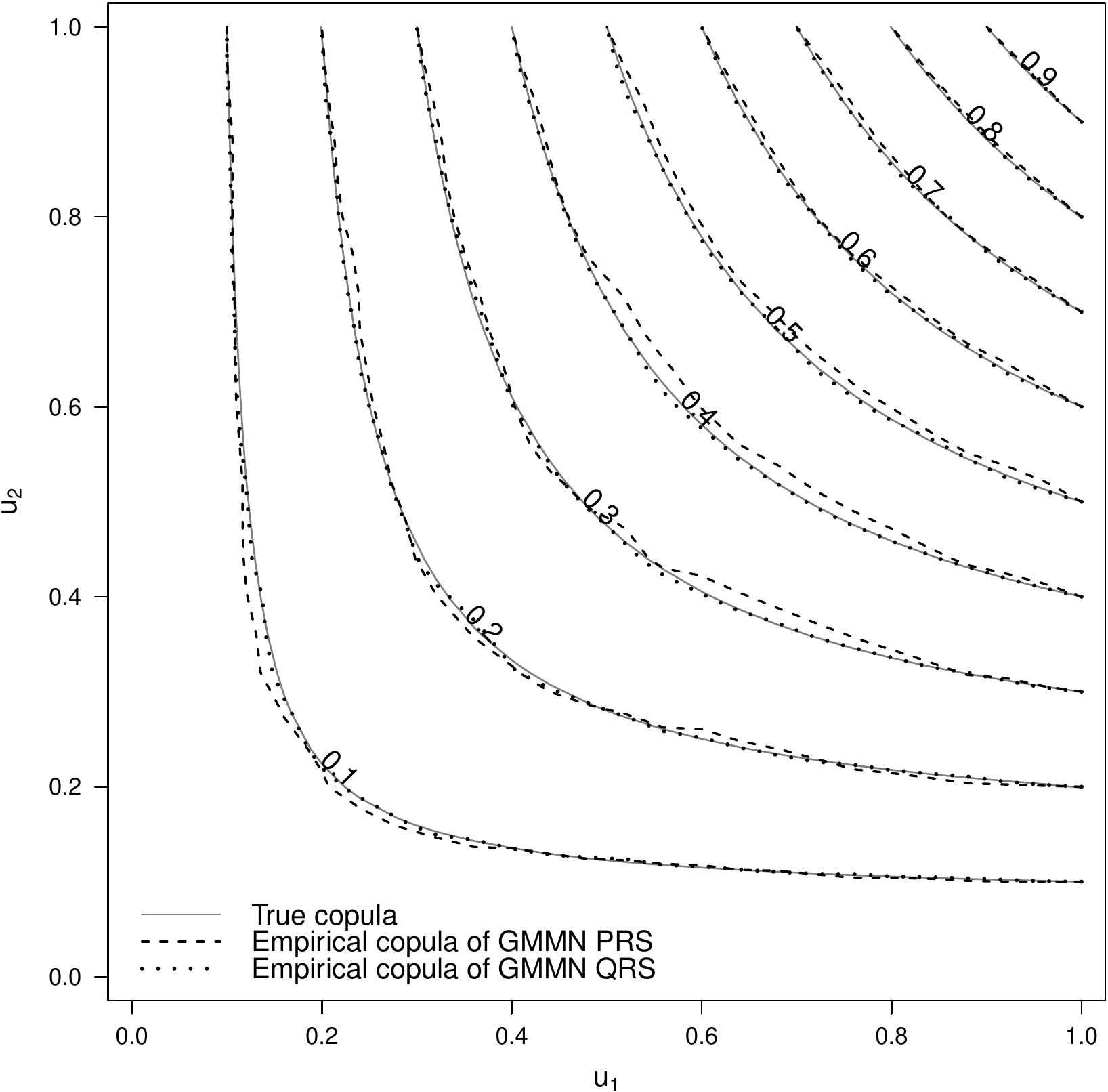}\hfill
	\includegraphics[width=0.32\textwidth]{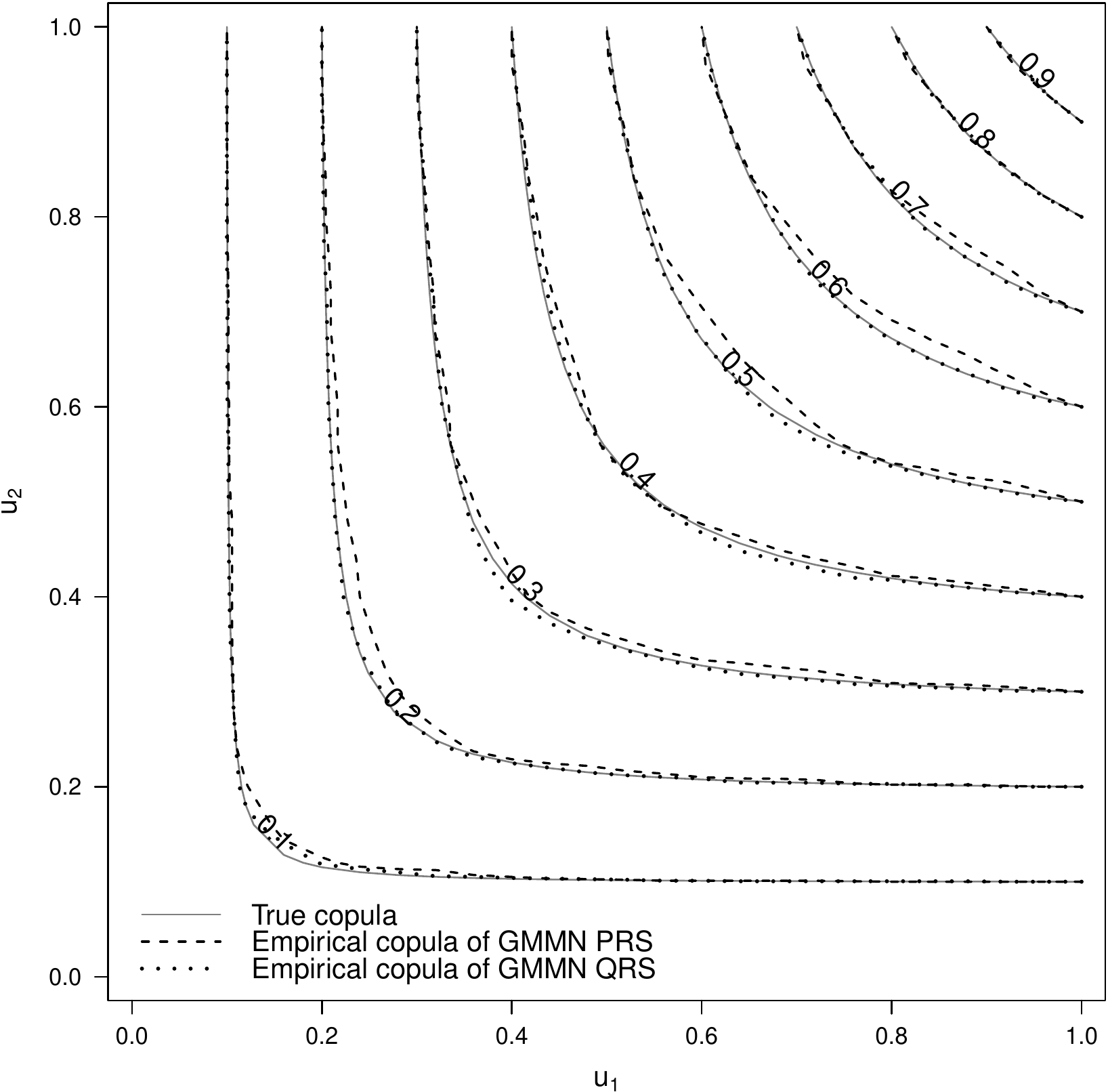}\hfill
	\includegraphics[width=0.32\textwidth]{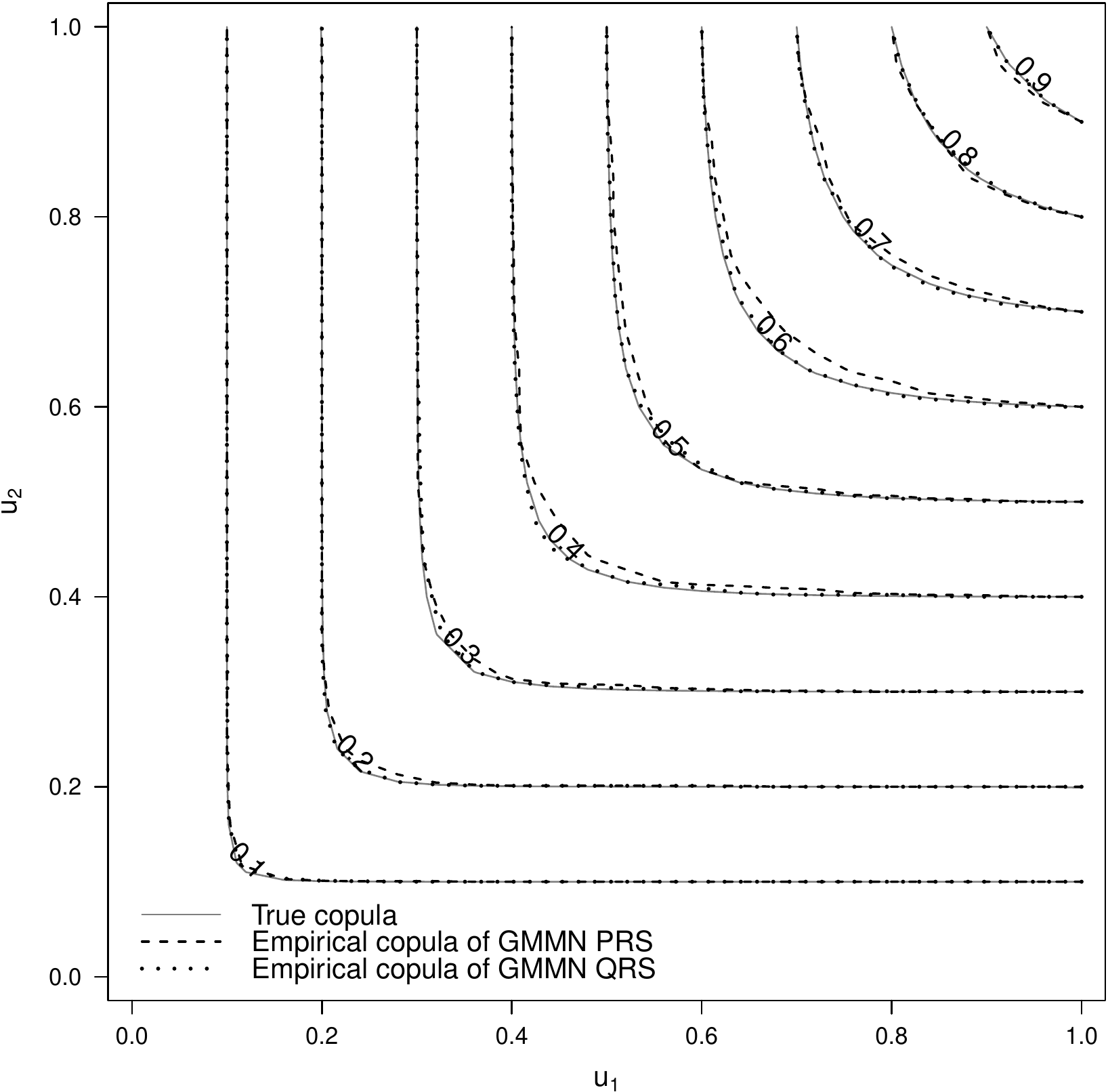}
	\includegraphics[width=0.32\textwidth]{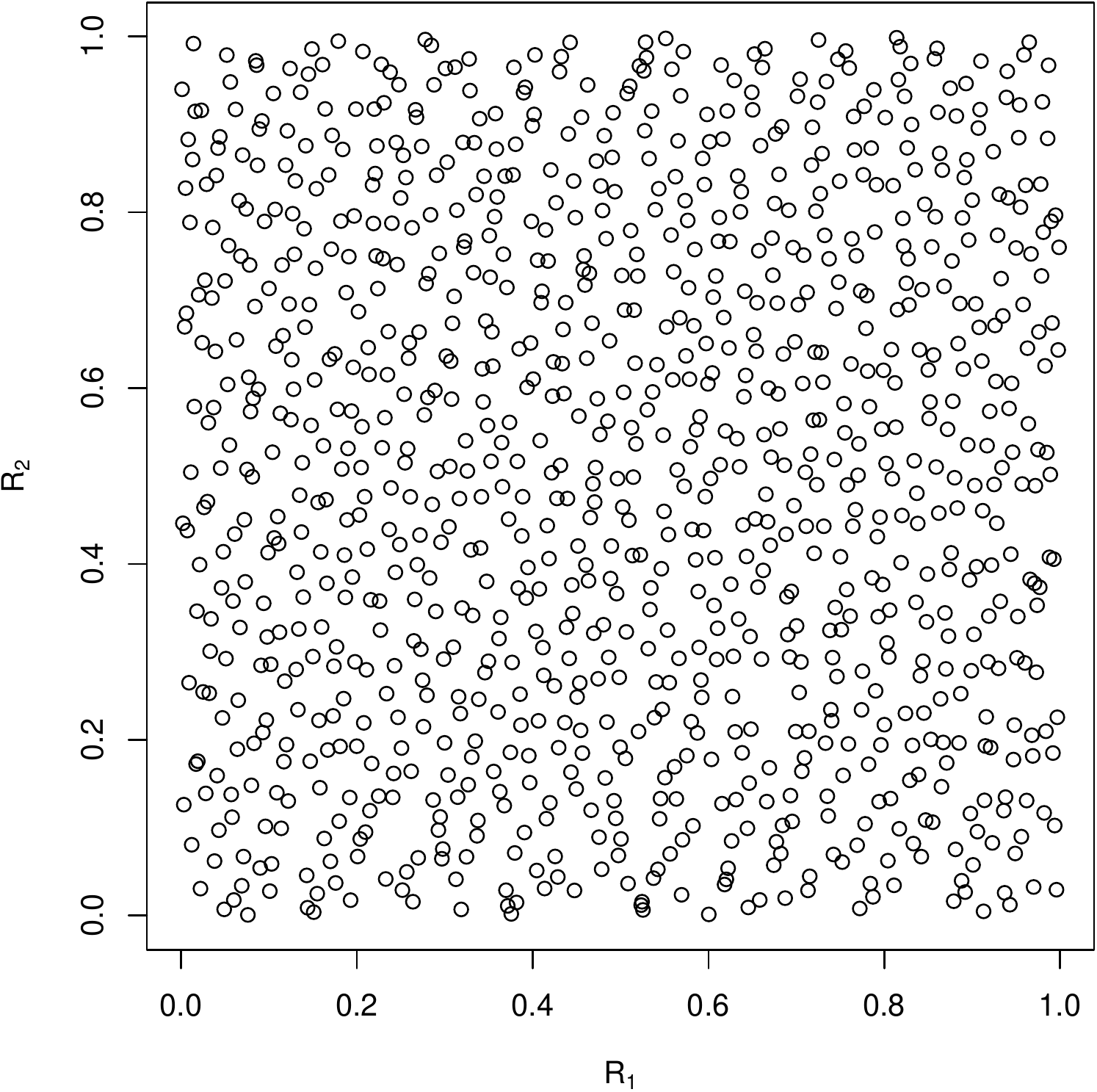}\hfill
        \includegraphics[width=0.32\textwidth]{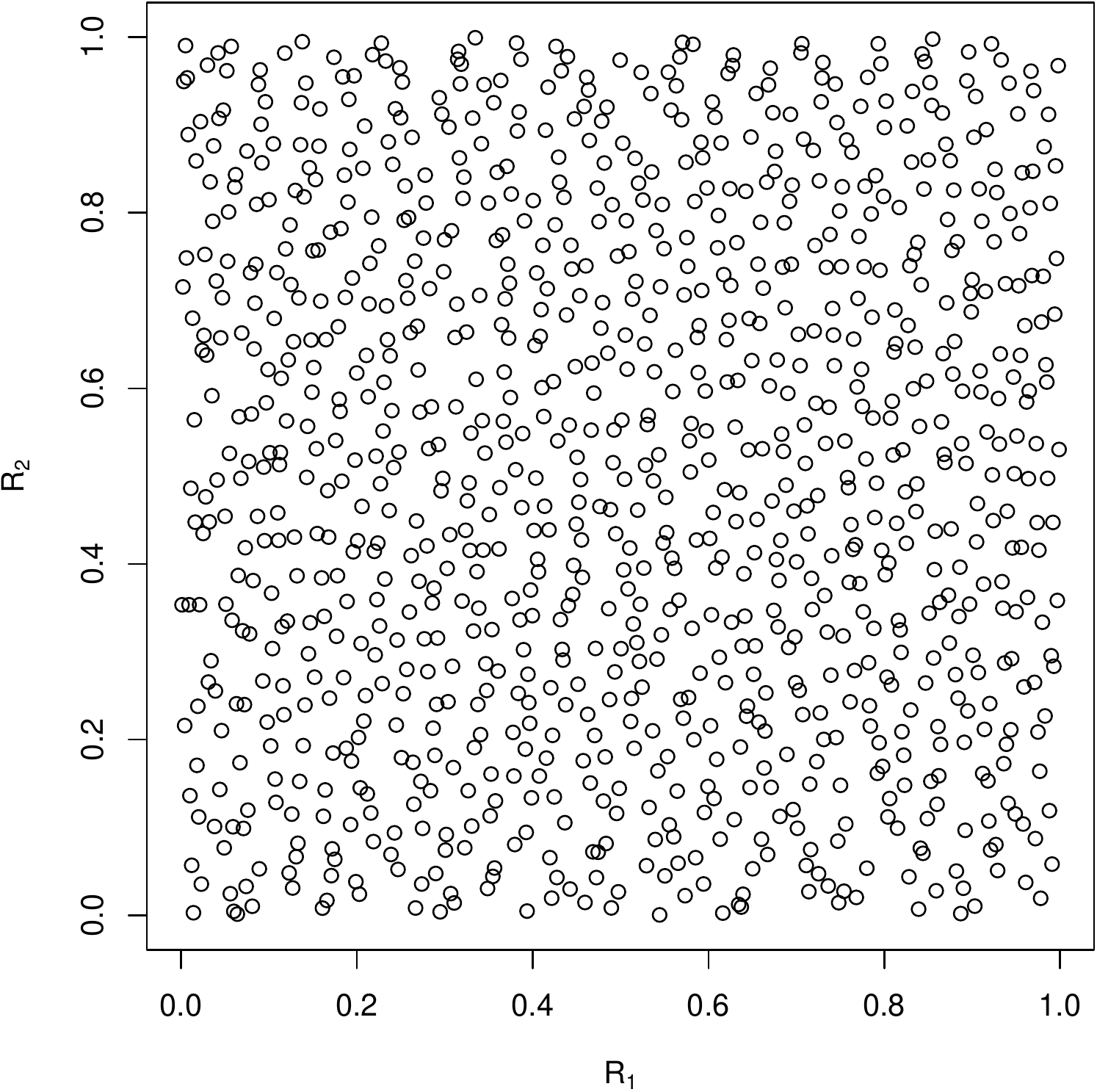}\hfill
        \includegraphics[width=0.32\textwidth]{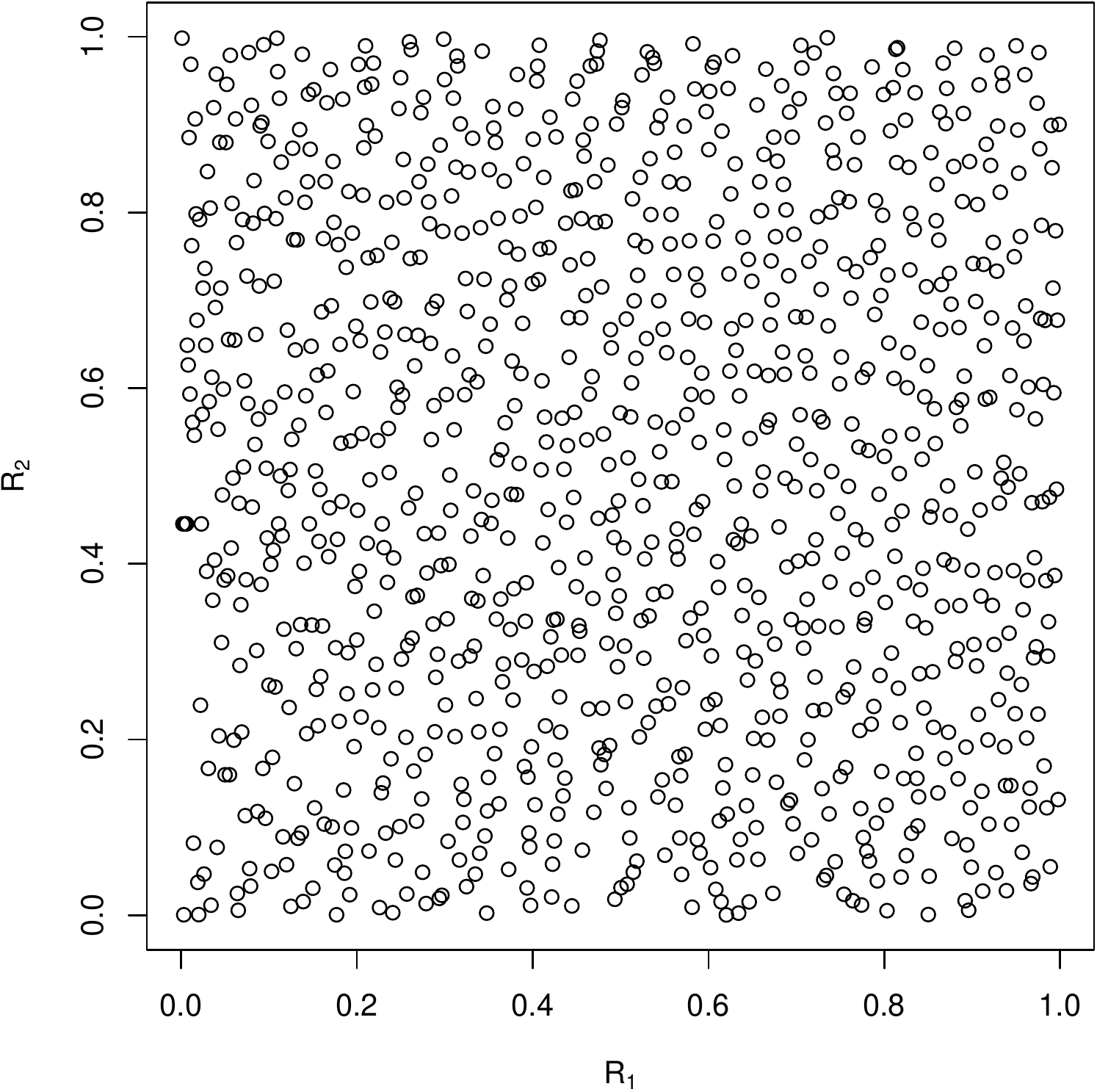}
	\caption{Top row contains contour plots of true Clayton copulas with $\tau=0.25$ (left), $0.50$ (middle) and $0.75$ (right) along with the corresponding contour plots of empirical copulas based on GMMN PRS and GMMN QRS, both of size $\ngen=1000$. Bottom row contains Rosenblatt-transformed GMMN QRS corresponding to the same three Clayton copulas.
	}\label{fig:clayton:example}
\end{figure}

\begin{figure}[htbp]
	\centering
	\includegraphics[width=0.32\textwidth]{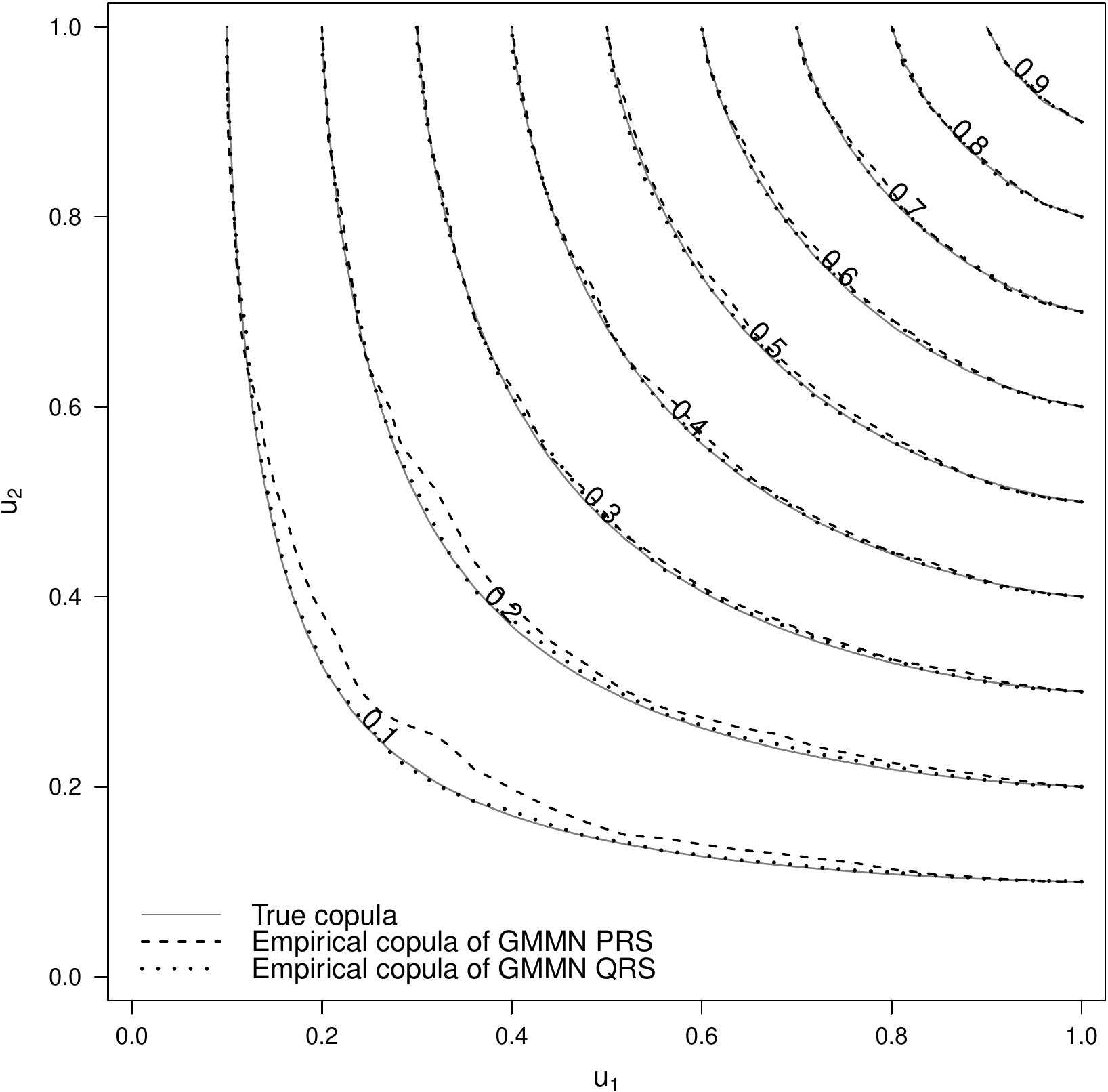}\hfill
	\includegraphics[width=0.32\textwidth]{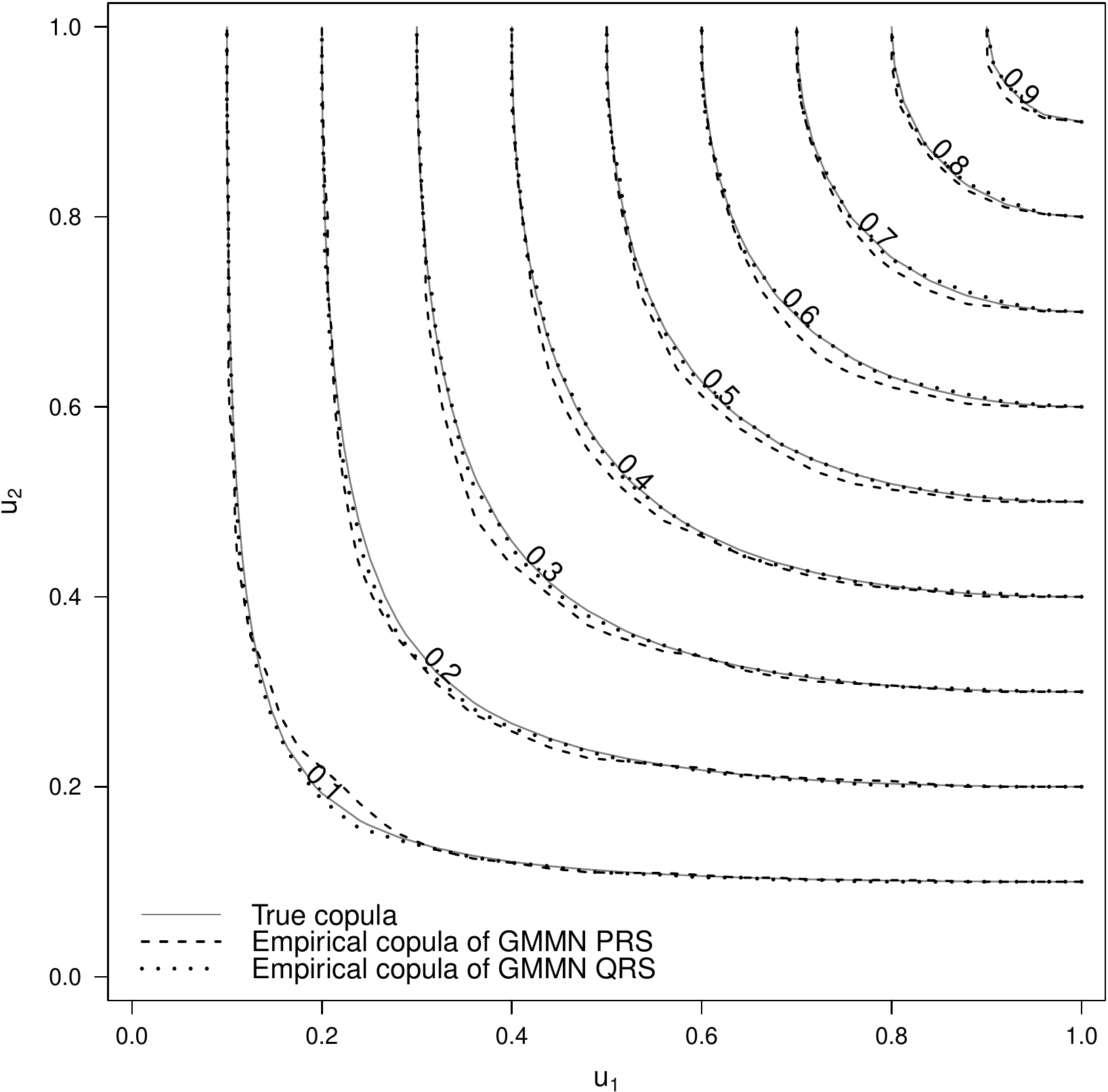}\hfill
	\includegraphics[width=0.32\textwidth]{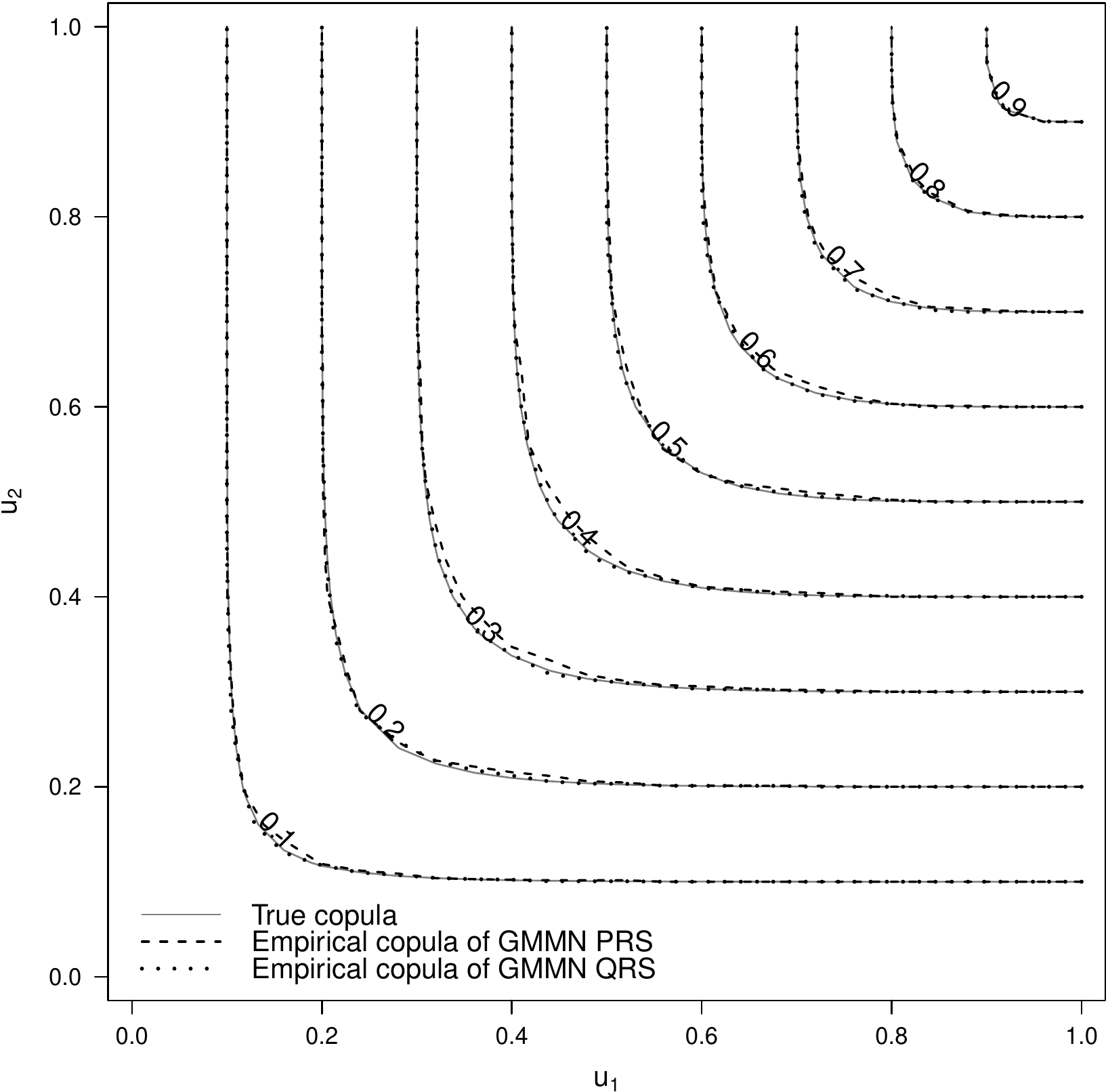}
	\includegraphics[width=0.32\textwidth]{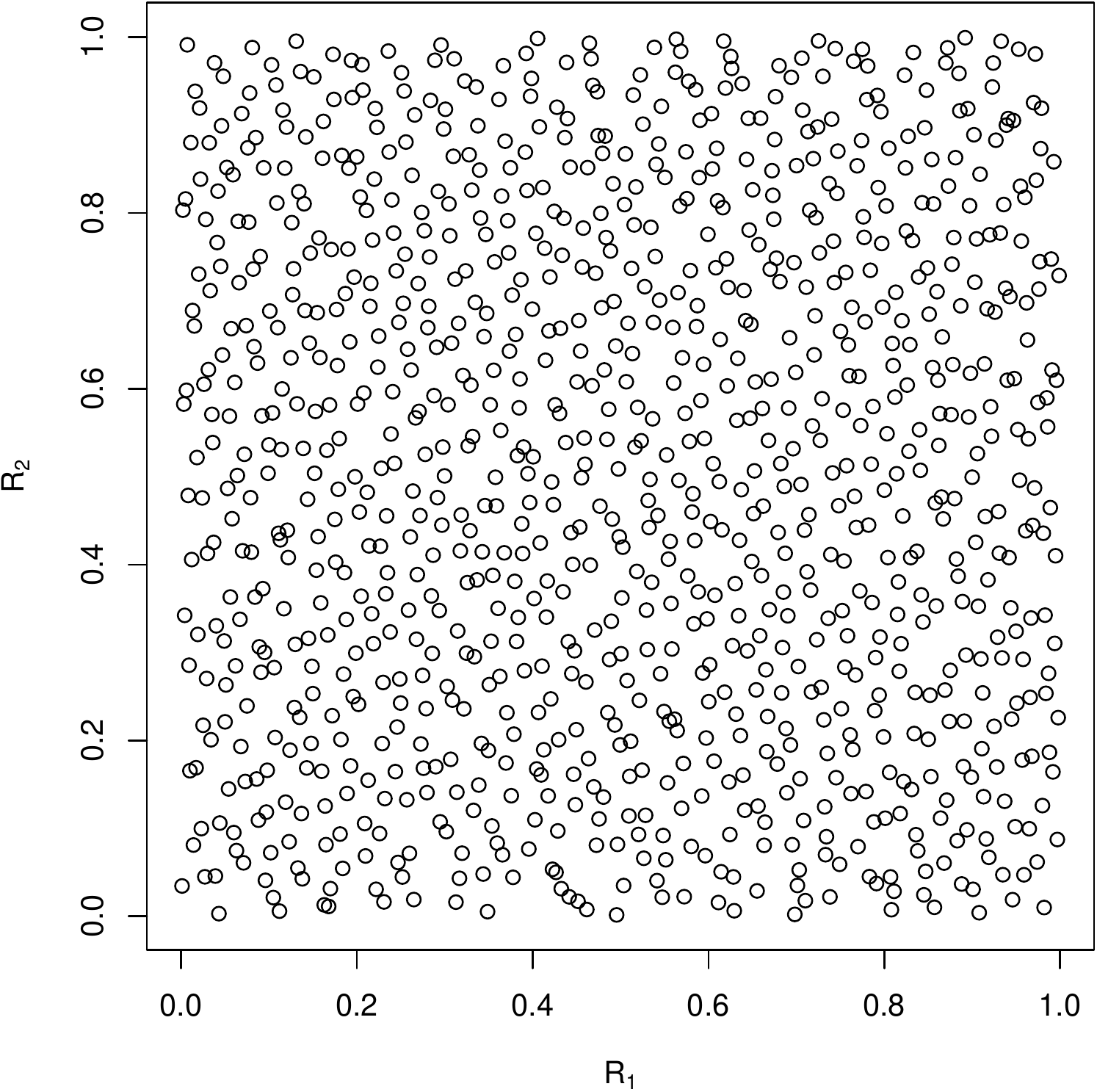}\hfill
        \includegraphics[width=0.32\textwidth]{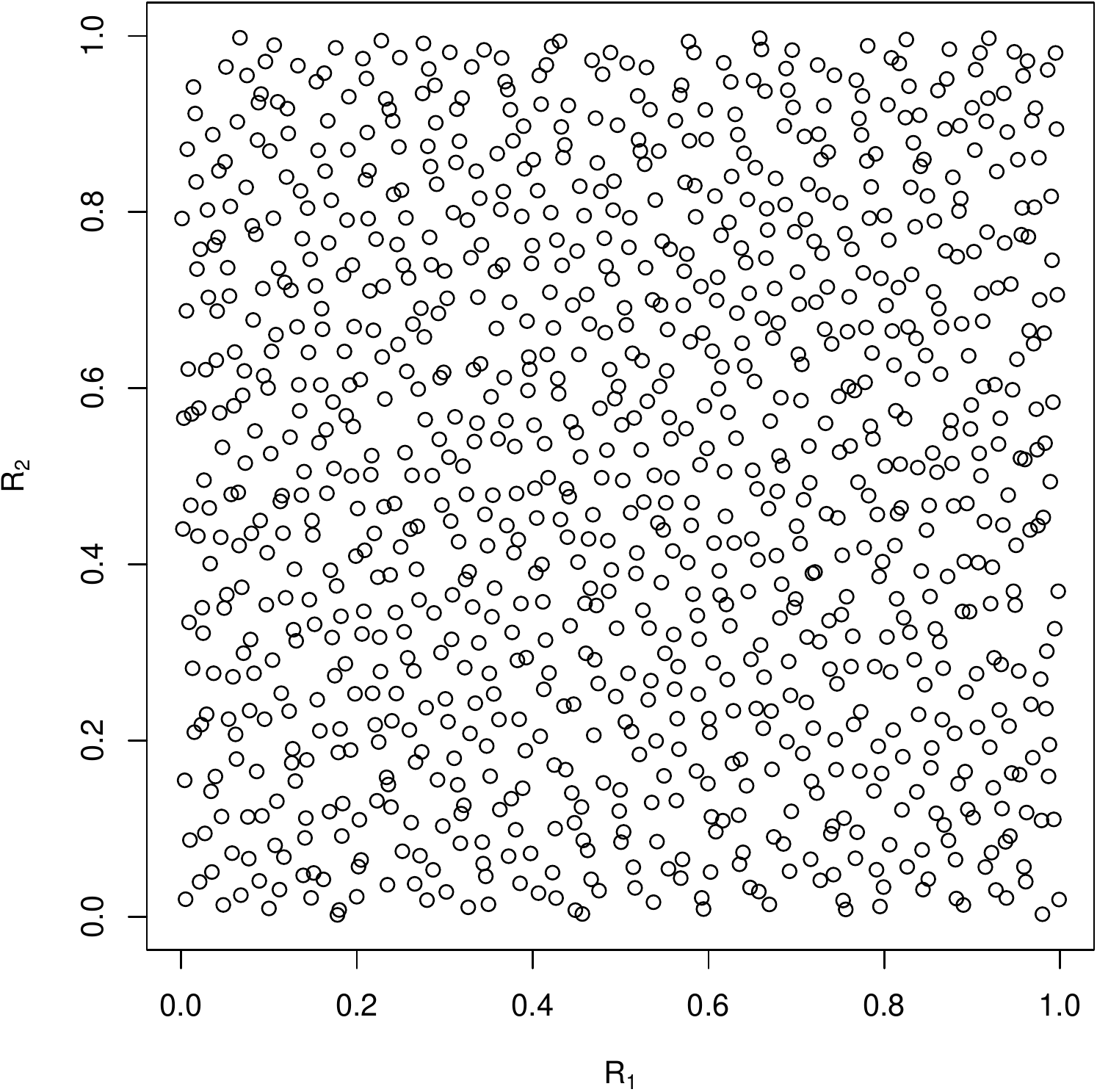}\hfill
        \includegraphics[width=0.32\textwidth]{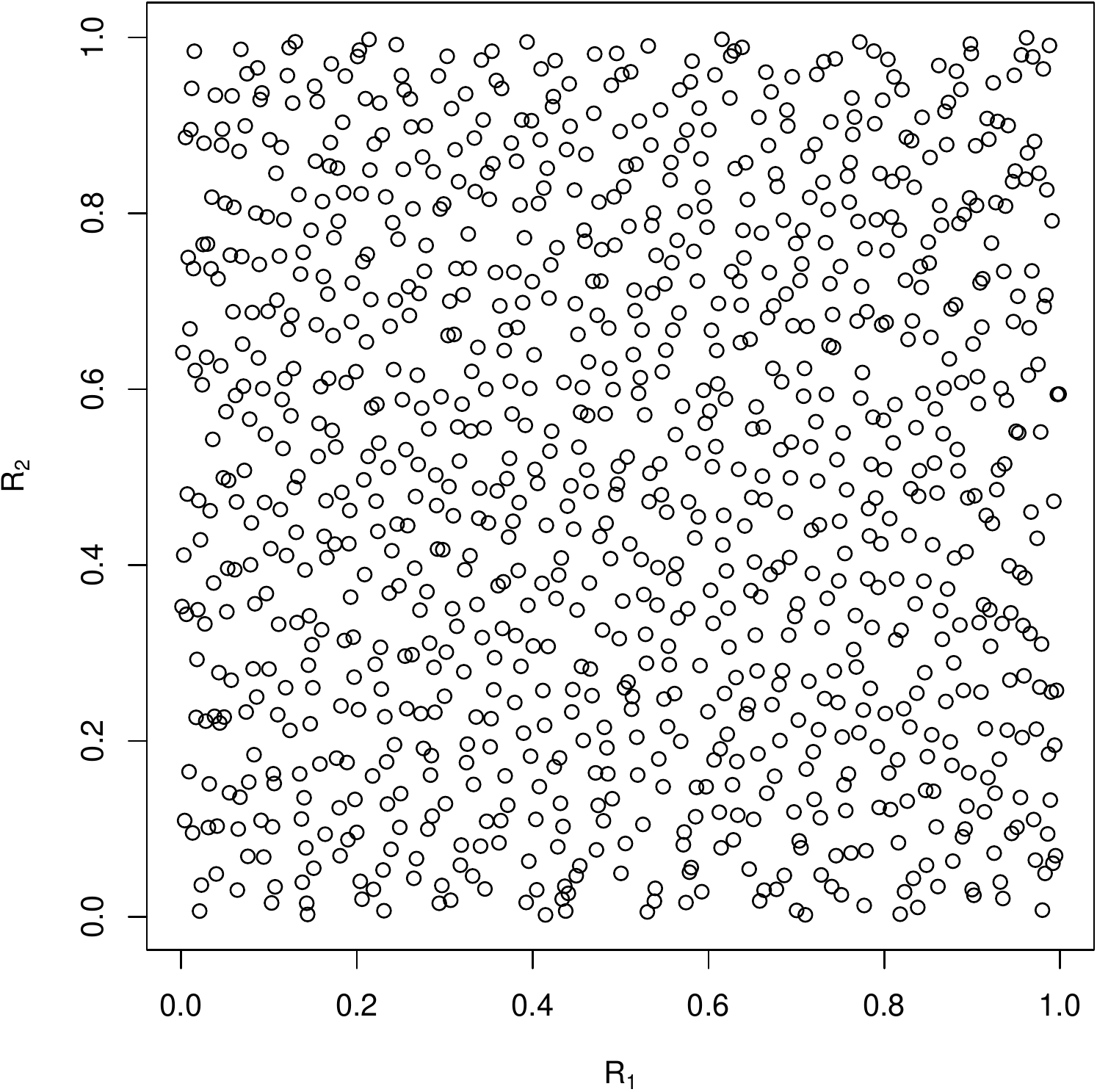}
	\caption{Top row contains contour plots of true Gumbel copulas with $\tau=0.25$ (left), $0.50$ (middle) and $0.75$ (right) along with the corresponding contour plots of empirical copulas based on GMMN PRS and GMMN QRS, both of size $\ngen=1000$. Bottom row contains Rosenblatt-transformed GMMN QRS corresponding to the same three Gumbel copulas.
	}\label{fig:gumbel:example}
\end{figure}

\begin{figure}[htbp]
  \centering
  \includegraphics[width=0.32\textwidth]{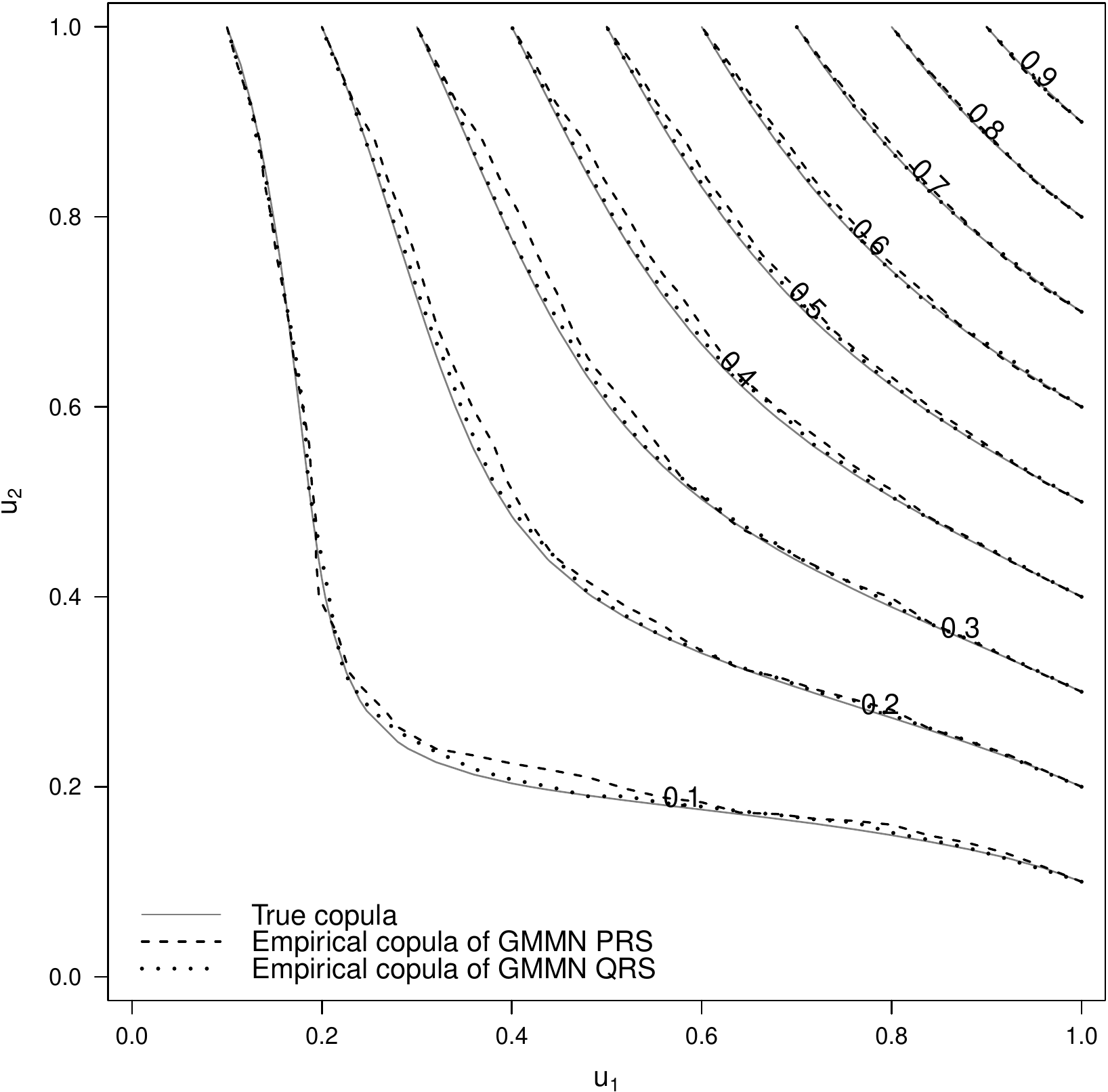}\qquad
  \includegraphics[width=0.32\textwidth]{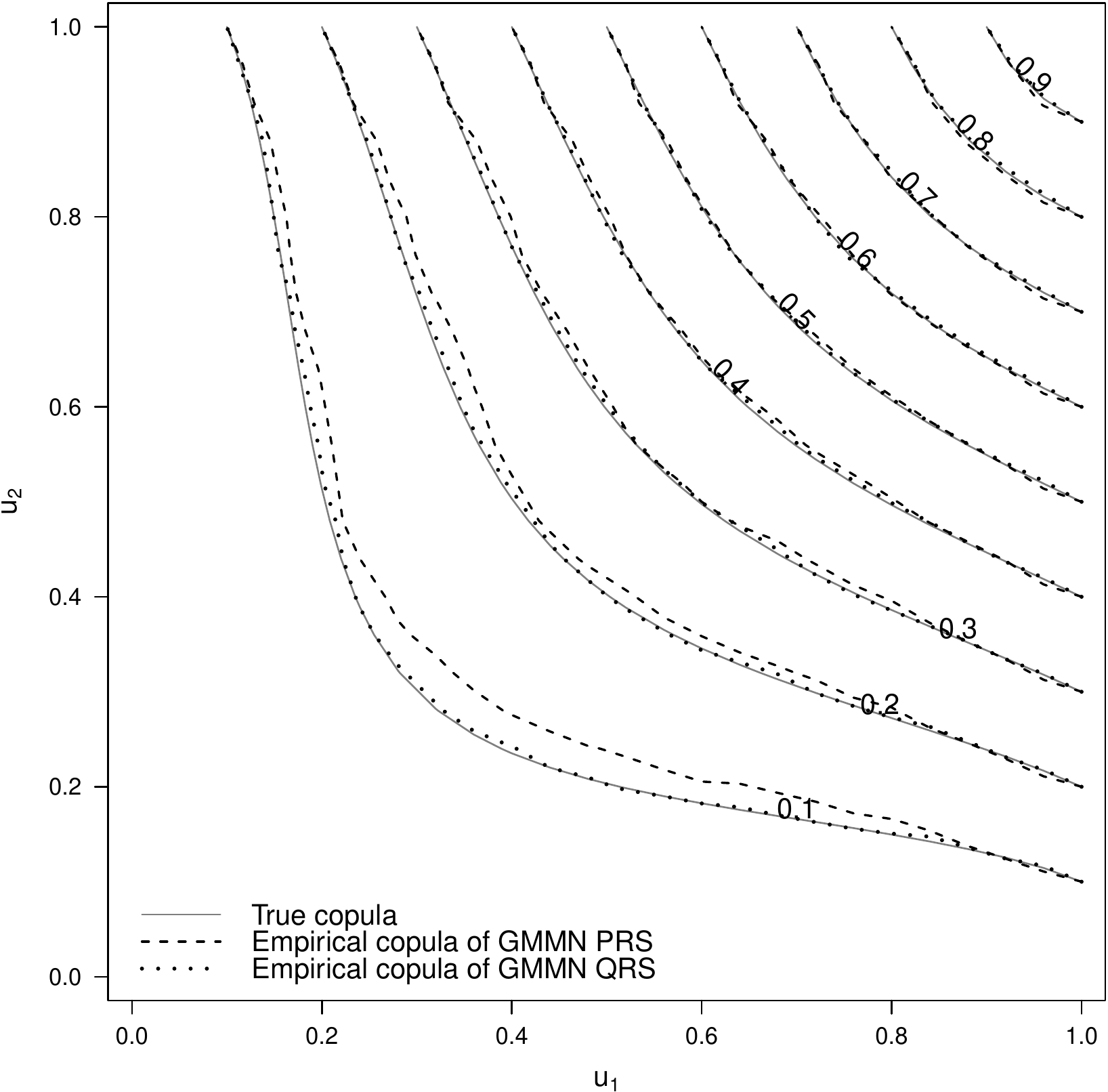}\\
  \includegraphics[width=0.32\textwidth]{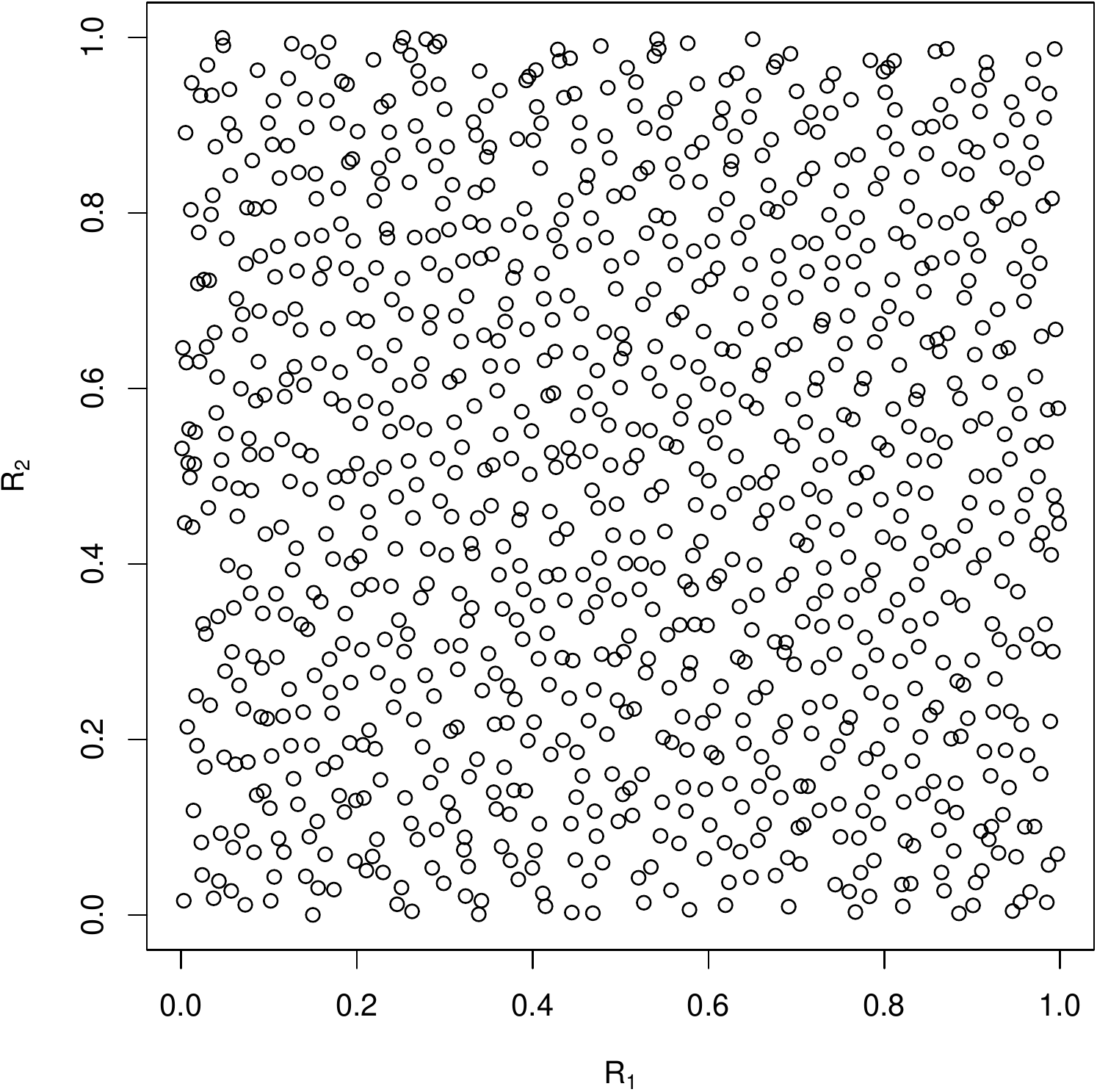}\qquad
  \includegraphics[width=0.32\textwidth]{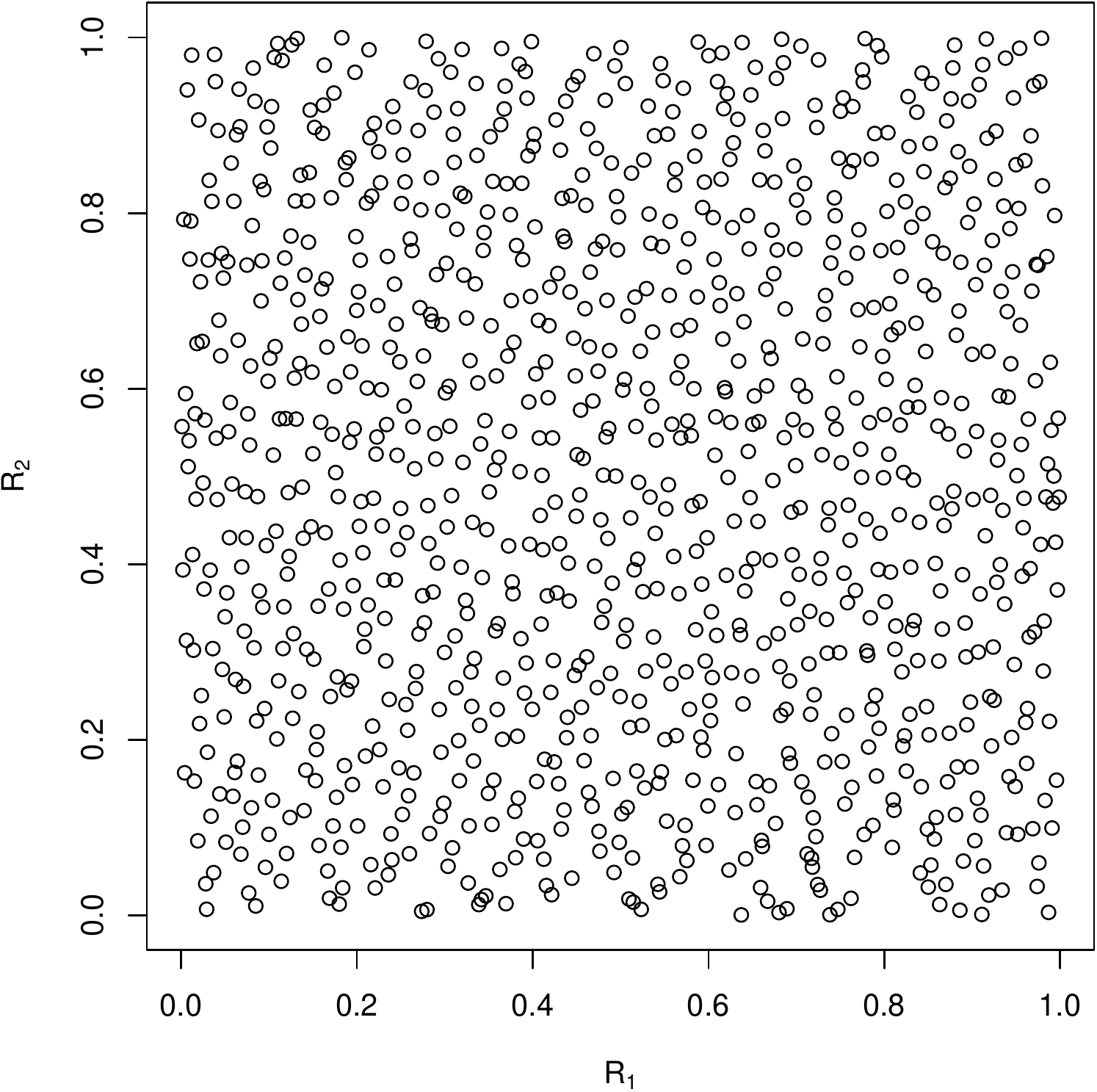}%
  \caption{Top row contains contour plots of true Clayton-$t(90)$ (left) and Gumbel-$t(90)$ (right) mixture copulas along with the corresponding contour plots of empirical copulas based on GMMN PRS and GMMN QRS, both of size $\ngen=1000$. Bottom row contains Rosenblatt-transformed GMMN QRS corresponding to the same two mixture copulas.
  	 }\label{fig:mixtures:example}
\end{figure}

\subsubsection{Nested Archimedean, Marshall--Olkin and mixture copulas}
Next, we consider more complex copulas such as nested Archimedean copulas and
Marshall--Olkin copulas. We also re-consider the two mixture copulas introduced
in the previous section along with an additional mixture copula. To better
showcase the complexity of these dependence structures, we use scatter plots
instead of contour plots to display copula and GMMN-generated samples. We omit
the plots containing the Rosenblatt transformed samples since they are harder to
obtain for the copulas we investigate in this section.

Nested Archimedean copulas (NACs) are Archimedean copulas with arguments
possibly replaced by other NACs; see \cite{mcneil2008} or \cite{hofert2012b}. In
particular, this class of copulas allows us to construct asymmetric extensions of
Archimedean copulas. Important to note here is that NACs are copulas for which
there is no known (tractable) CDM. To demonstrate the ability of GMMNs to capture such
dependence structures, we consider the simplest three-dimensional copula for
visualization and investigate higher-dimensional NACs in
Sections~\ref{sec:GMMN:accuracy} and \ref{sec:conv:analysis}. The
three-dimensional NAC we consider here is
\begin{align}
  C(\bm{u})=C_0(C_1(u_1,u_2),u_3), \quad \bm{u}\in [0,1]^3,\label{eq:NAC}
\end{align}
where $C_0$ is a Clayton copula with Kendall's tau $\tau_{0}=0.25$ and $C_1$ is a Clayton
copula with Kendall's tau $\tau_{1}=0.50$. In Sections~\ref{sec:GMMN:accuracy} and
\ref{sec:conv:analysis}, we will present examples of five- and ten-dimensional
NACs.

Bivariate Marshall--Olkin copulas are of the form
\begin{align*}
C(u_1,u_2)=\min\{u_1^{1-\alpha_1}u_2,u_1u_2^{1-\alpha_2}\},\quad u_1,u_2\in[0,1],
\end{align*}
where $\alpha_1, \alpha_2 \in [0,1]$. A notable feature of Marshall--Olkin
copulas is that they have both an absolutely continuous component and a singular
component. In particular, the singular component is determined by all points
which satisfy $u_1^{\alpha_1}=u_2^{\alpha_2}$. Accurately capturing this
singular component may present a different challenge for GMMNs, which is why we
included this copula despite the fact that there also exists
a CDM for this copula; see \cite{cambou2017}. As an example for visual
assessment, we consider a Marshall--Olkin copula with $\alpha_1=0.75$ and
$\alpha_2=0.60$.

We also consider three mixture models, all of which are equally weighted two-component mixture copulas with one component being a 90-degree-rotated $t_4$ copula with $\tau=0.50$. The first two models are the Clayton-$t(90)$ and Gumbel-$t(90)$ mixture copulas as previously introduced. The second component in the third model is a Marshall--Olkin copula with parameters $\alpha_1=0.75$ and $\alpha_2=0.60$. We refer to this third model as the MO-$t(90)$ copula.

Figures~\ref{fig:NAC:claytonexample}--\ref{fig:mixtures:example2} display pseudo-random samples (left column) from a $(2,1)$-nested Clayton copula as in~\eqref{eq:NAC}, a MO copula, and the three mixture copulas, respectively, along with GMMN pseudo-random samples (middle column) and GMMN quasi-random samples (right column) corresponding to each copula $C$. The similarity between the GMMN pseudo-random samples in the middle column and the pseudo-random samples in the left column indicate that the copulas $C$ were learned sufficiently well by their corresponding GMMNs. Note that in the case of the nested Clayton copula example, we can only comment on how well the bivariate margins of the copula $C$ were learned. From the right columns, we can mainly observe that the GMMN quasi-random samples contain less gaps and clusters when compared with the corresponding pseudo-random and GMMN pseudo-random samples. The fact that GMMNs were capable of learning the main features, including the singular components, of the MO copula and the MO-$t(90)$ mixture copula is particularly noteworthy given how challenging it seems to be to learn a Lebesgue null set from a finite amount of samples.

\begin{figure}[htbp]
  \centering
  \includegraphics[width=0.32\textwidth]{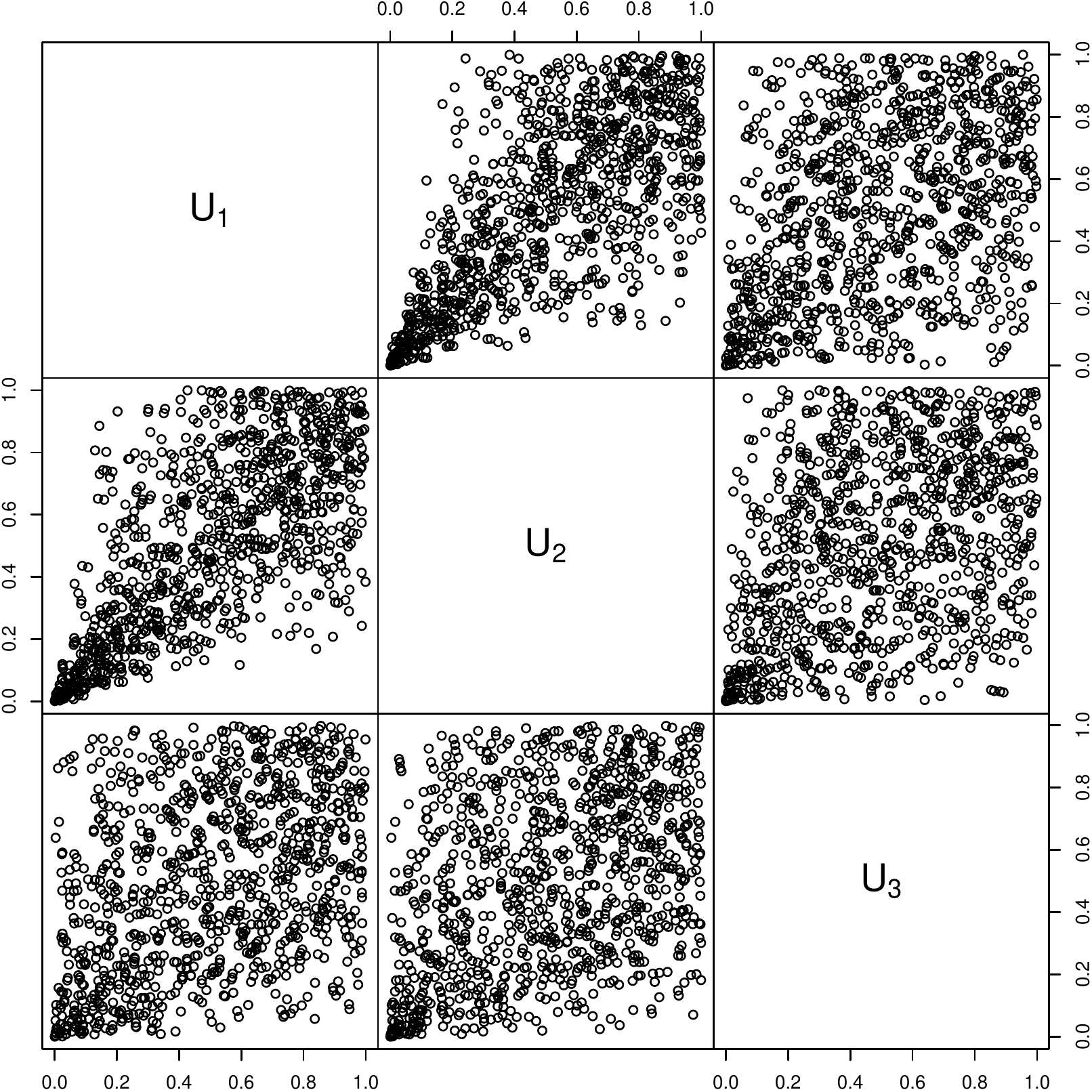}\hfill
  \includegraphics[width=0.32\textwidth]{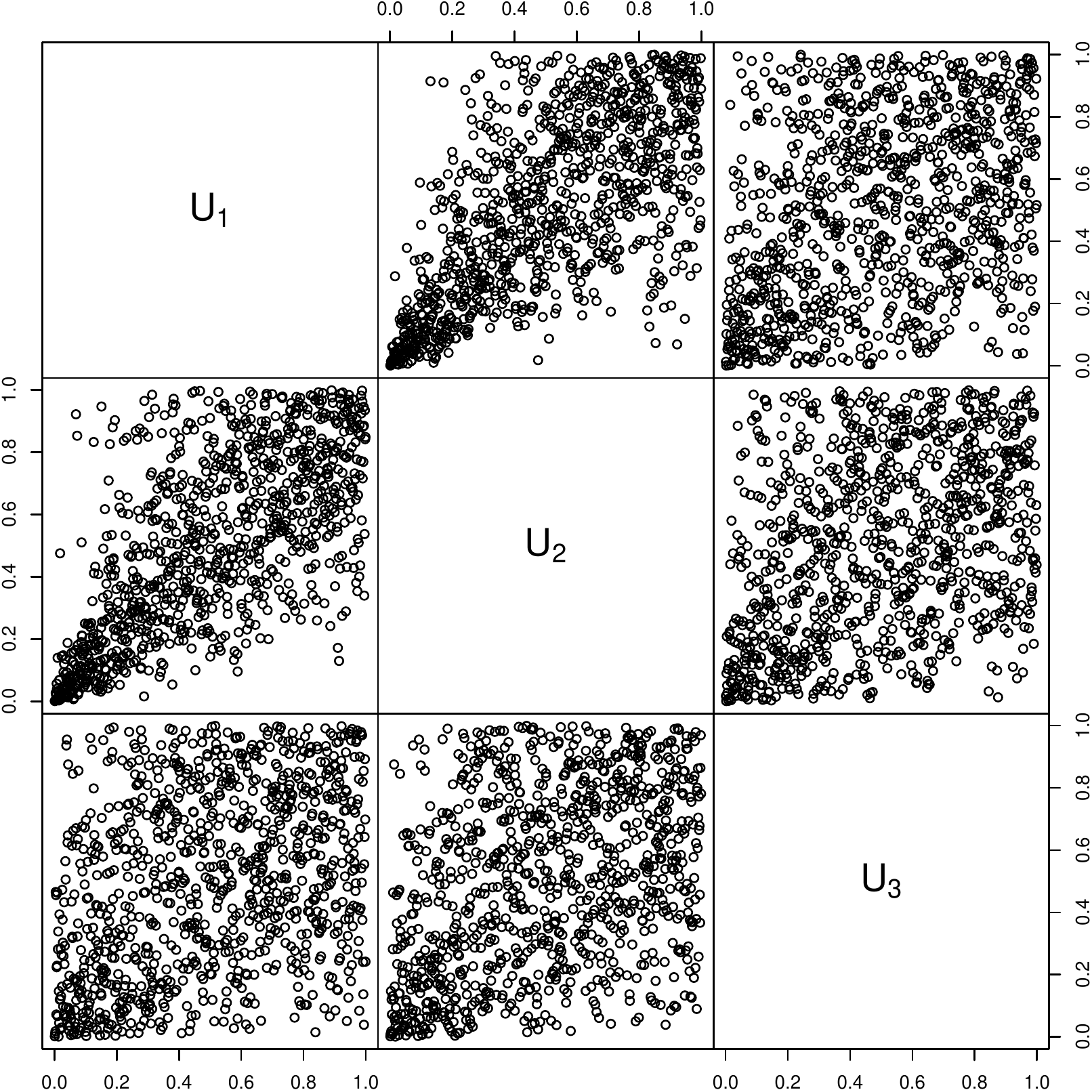}\hfill
  \includegraphics[width=0.32\textwidth]{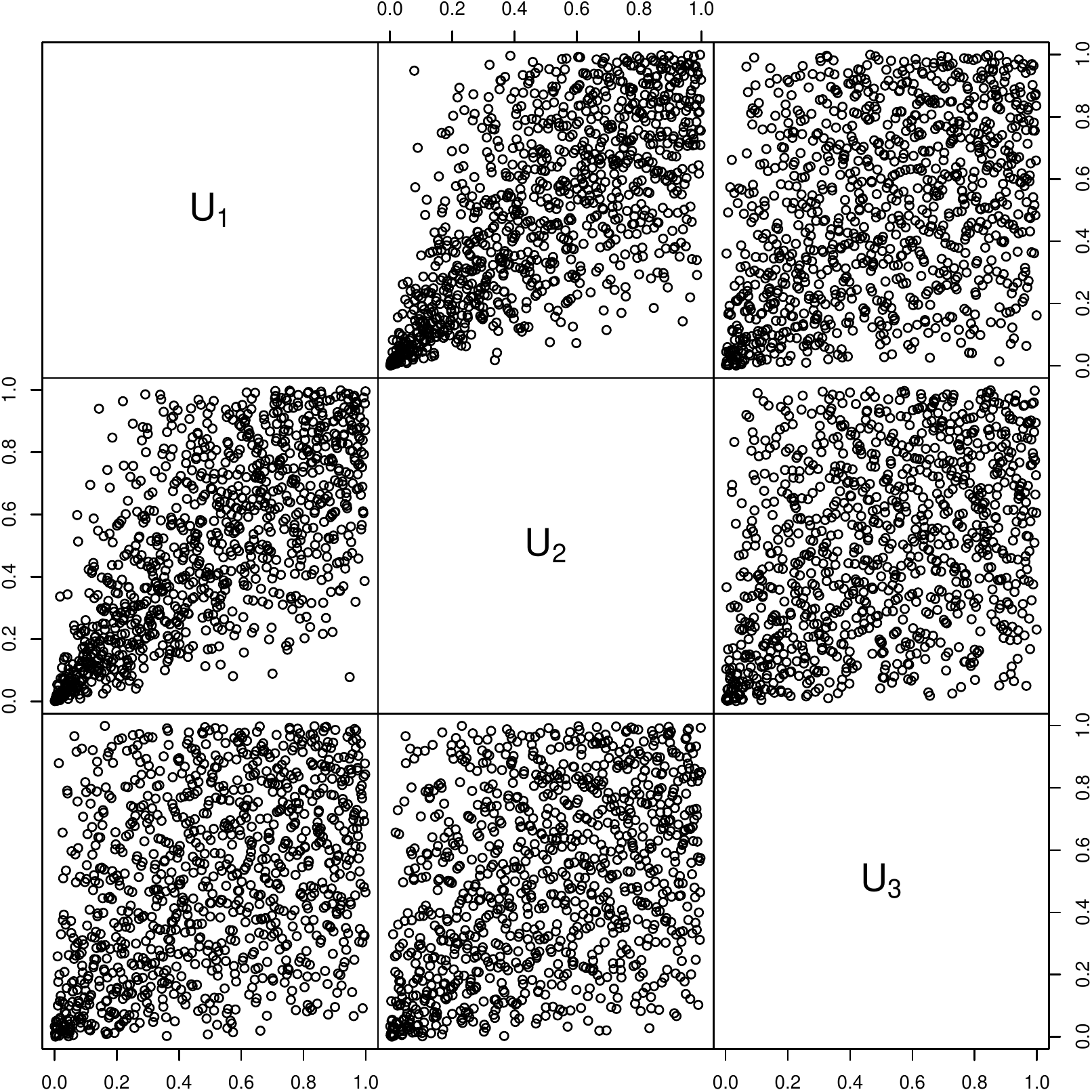}
  \caption{Pseudo-random samples (PRS; left), GMMN pseudo-random samples (GMMN PRS; middle) and
    GMMN quasi-random samples (GMMN QRS; right), all of size $\ngen=1000$, from a (2,1)-nested
    Clayton copula as in~\eqref{eq:NAC} with $\tau_{0}=0.25$ and $\tau_{1}=0.50$.}\label{fig:NAC:claytonexample}
\end{figure}

\begin{figure}[htbp]
  \centering
  \includegraphics[width=0.32\textwidth]{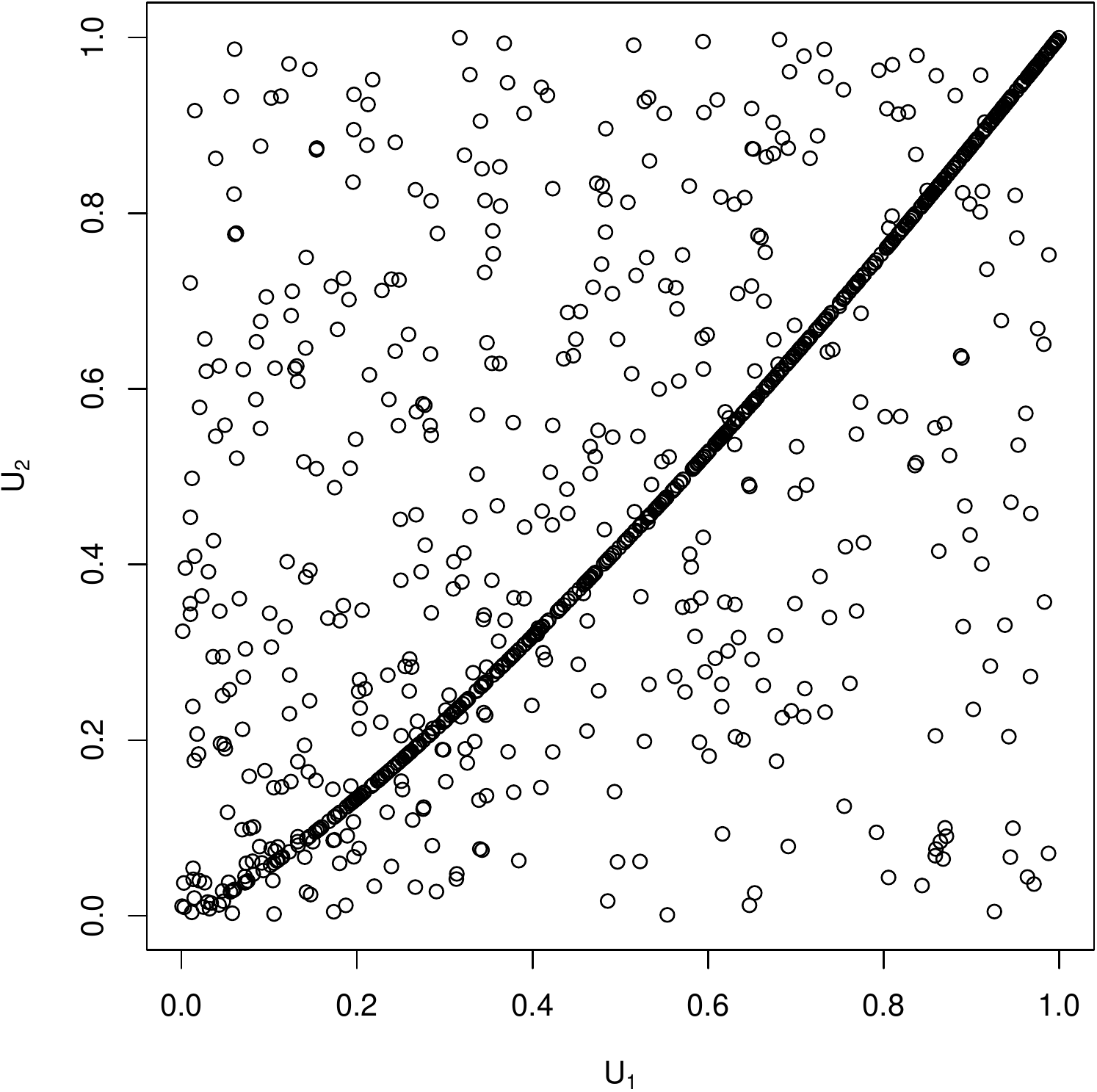}\hfill
  \includegraphics[width=0.32\textwidth]{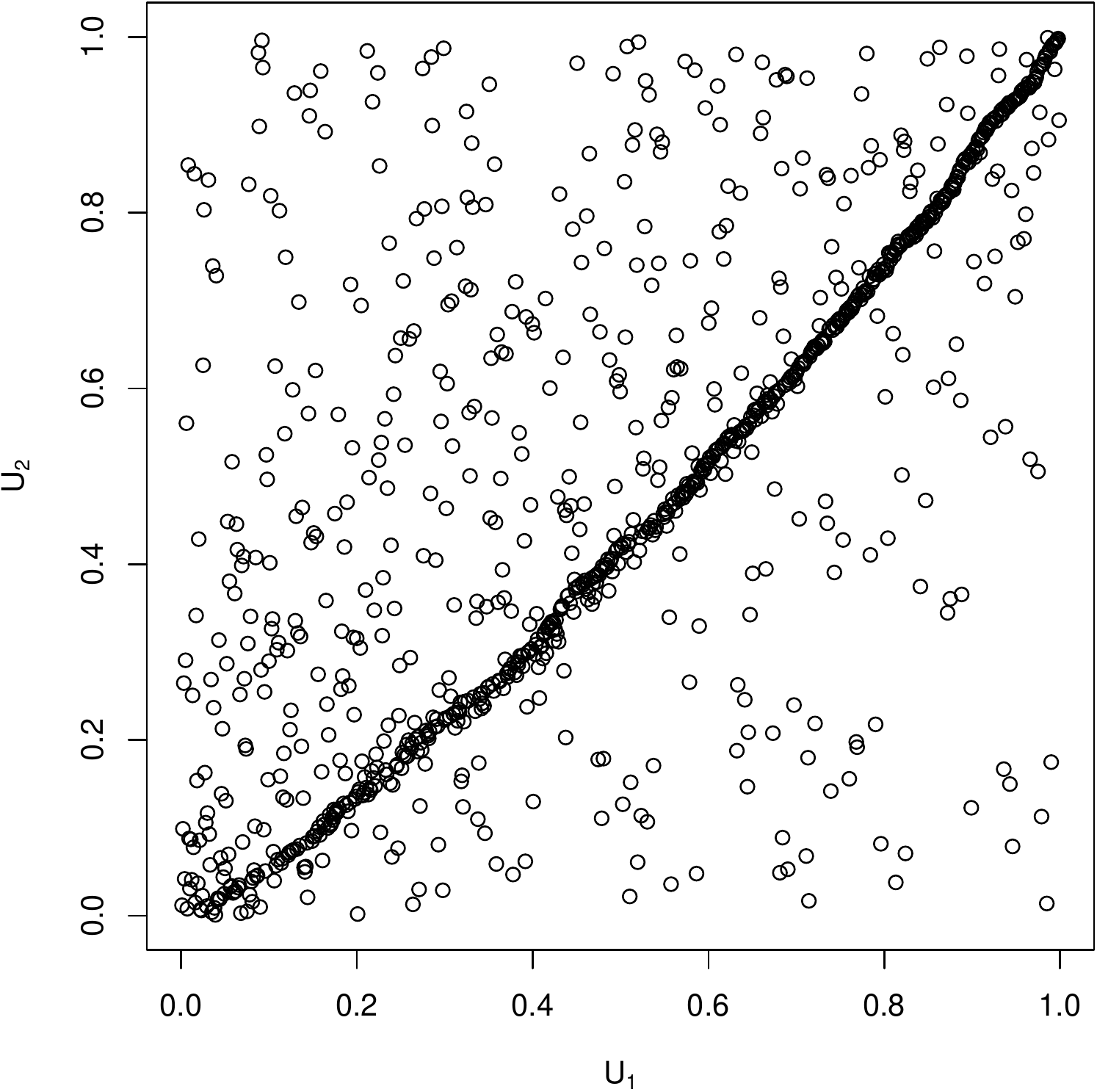}\hfill
  \includegraphics[width=0.32\textwidth]{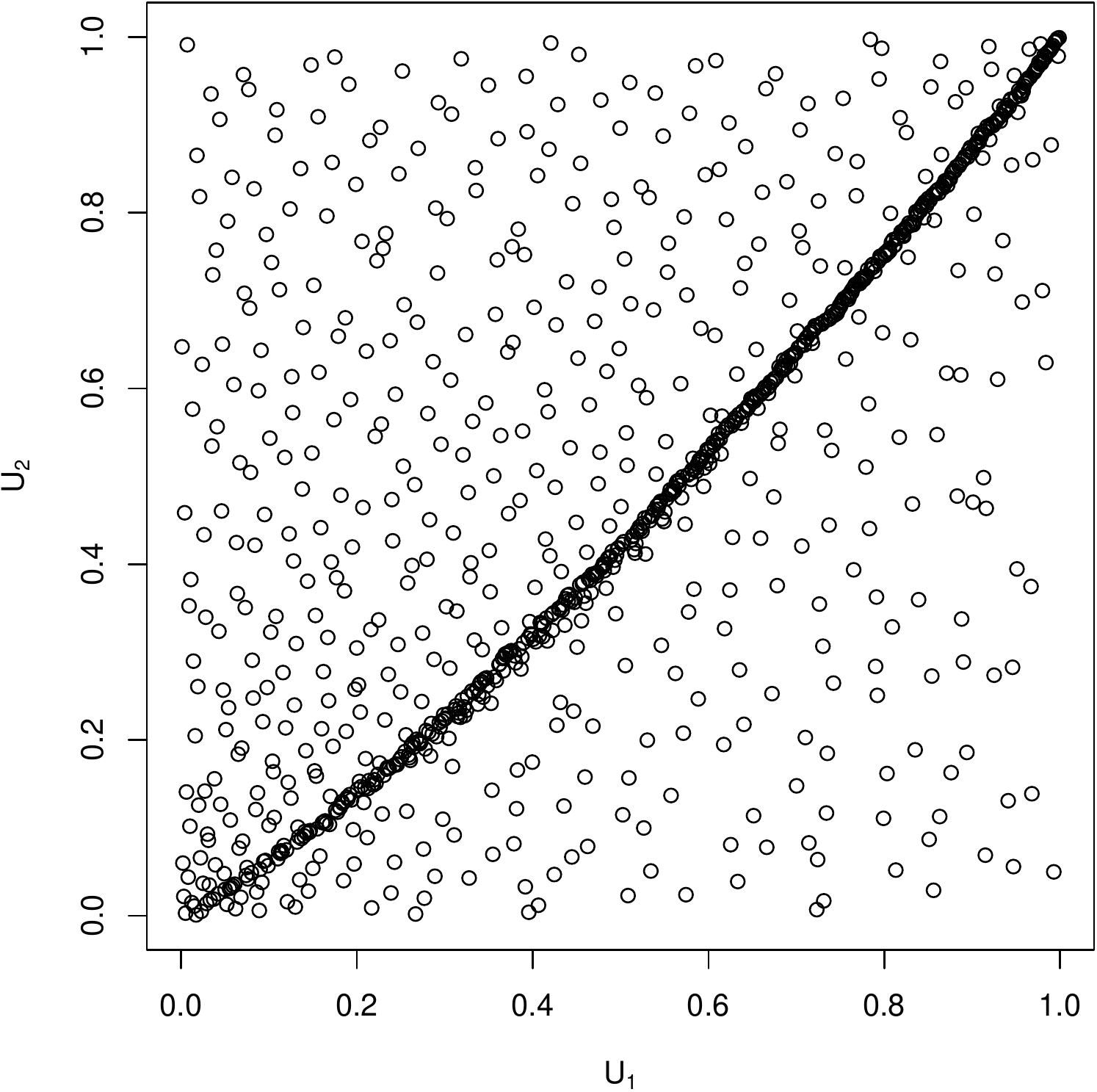}
  \caption{PRS (left), GMMN PRS (middle) and
    GMMN QRS (right), all of size $\ngen=1000$, from a Marshall--Olkin copula with $\alpha_{1}=0.75$
    and $\alpha_{2}=0.60$ (Kendall's tau equals $0.5$).}\label{fig:MOexample}
\end{figure}

\begin{figure}[htbp]
  \centering
  \includegraphics[width=0.32\textwidth]{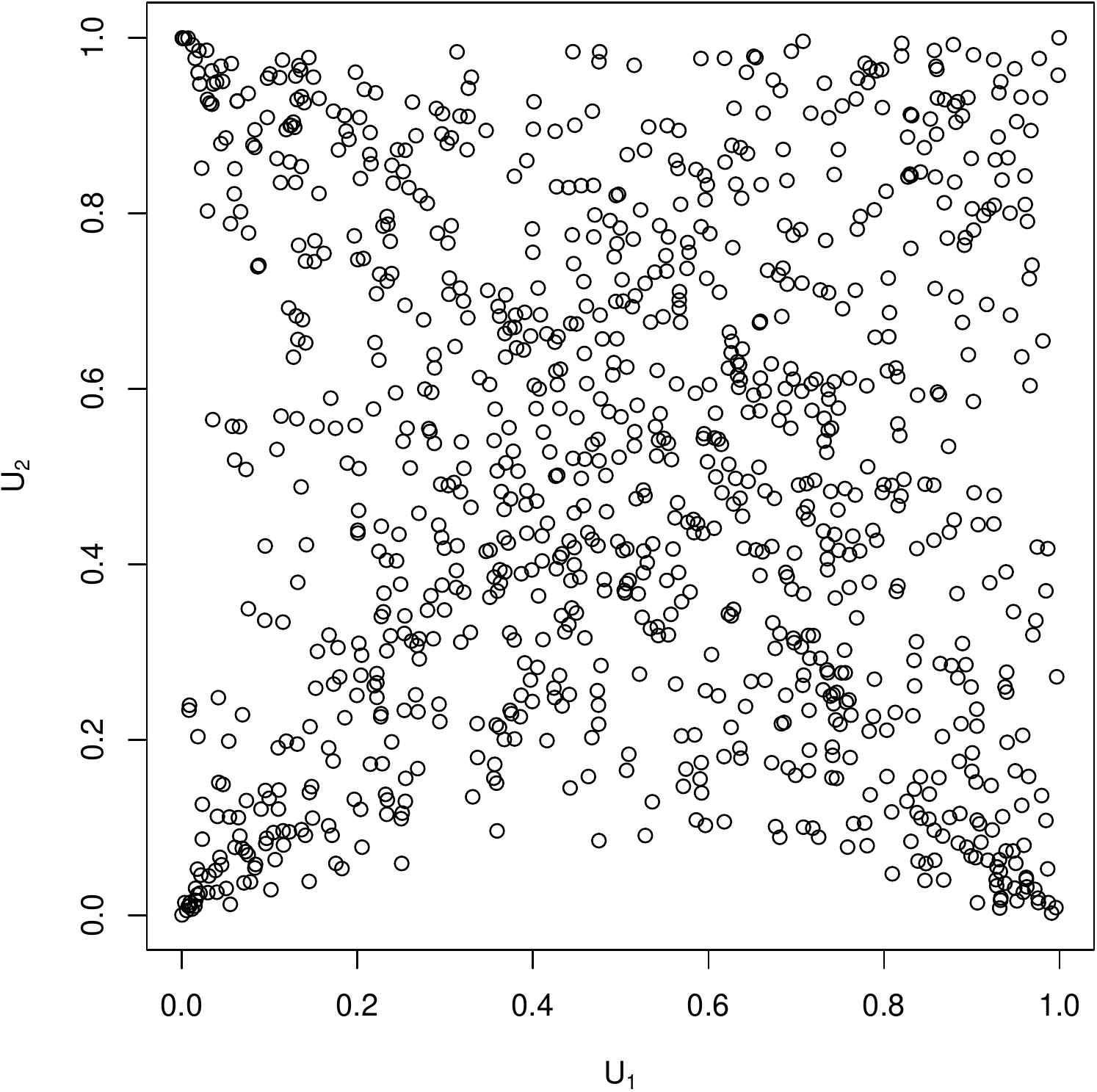}\hfill
  \includegraphics[width=0.32\textwidth]{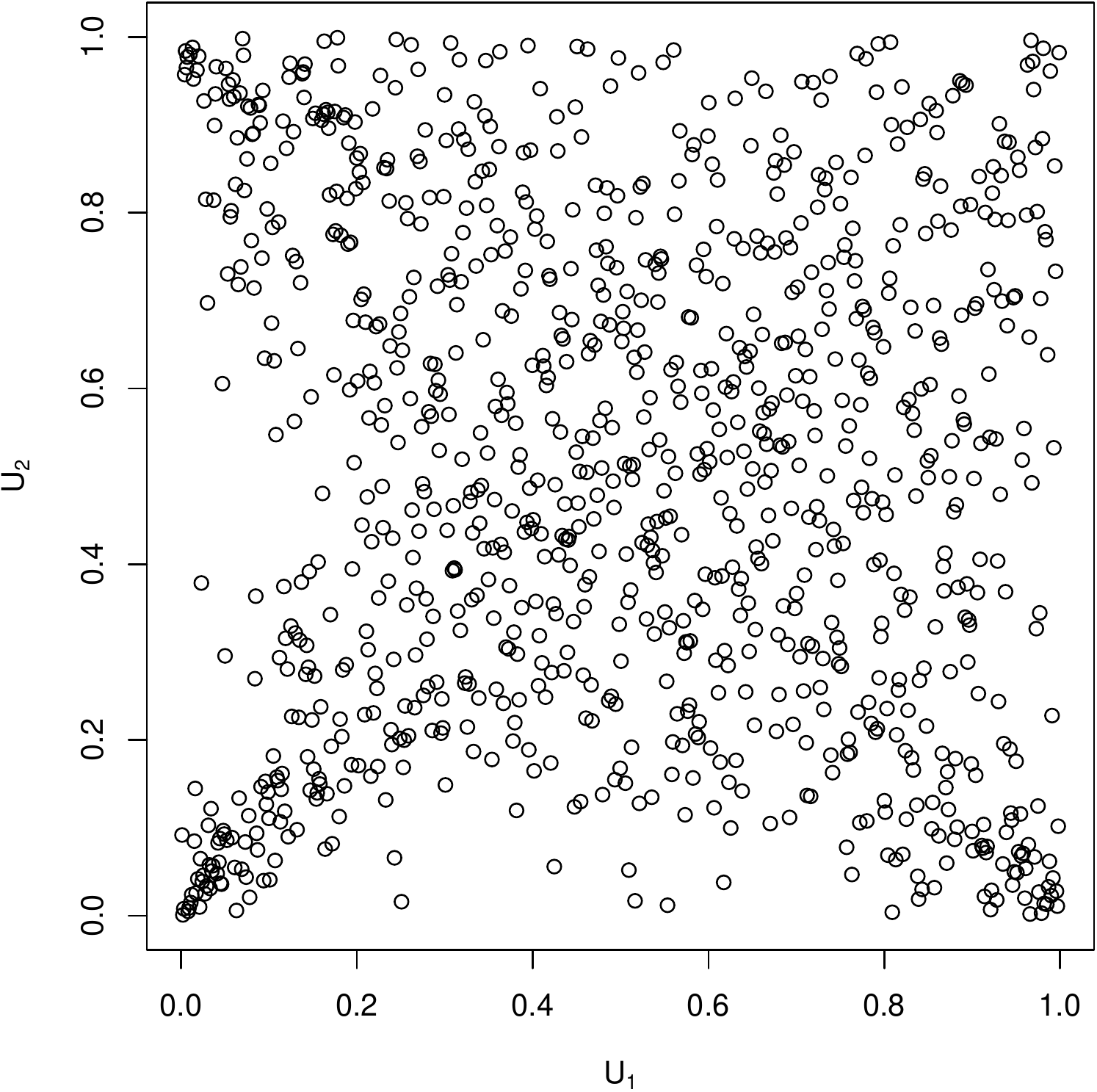}\hfill
  \includegraphics[width=0.32\textwidth]{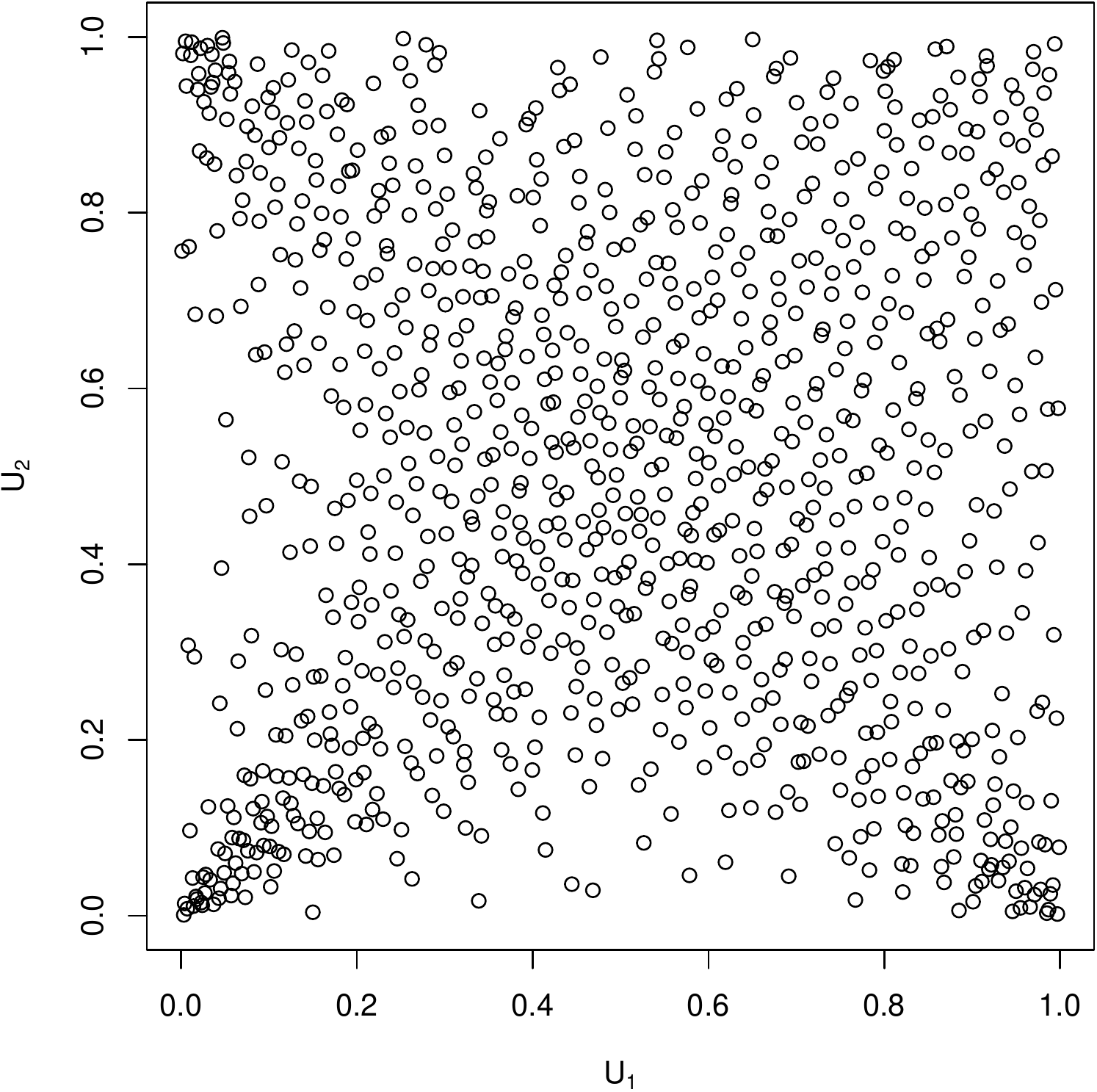}
  \includegraphics[width=0.32\textwidth]{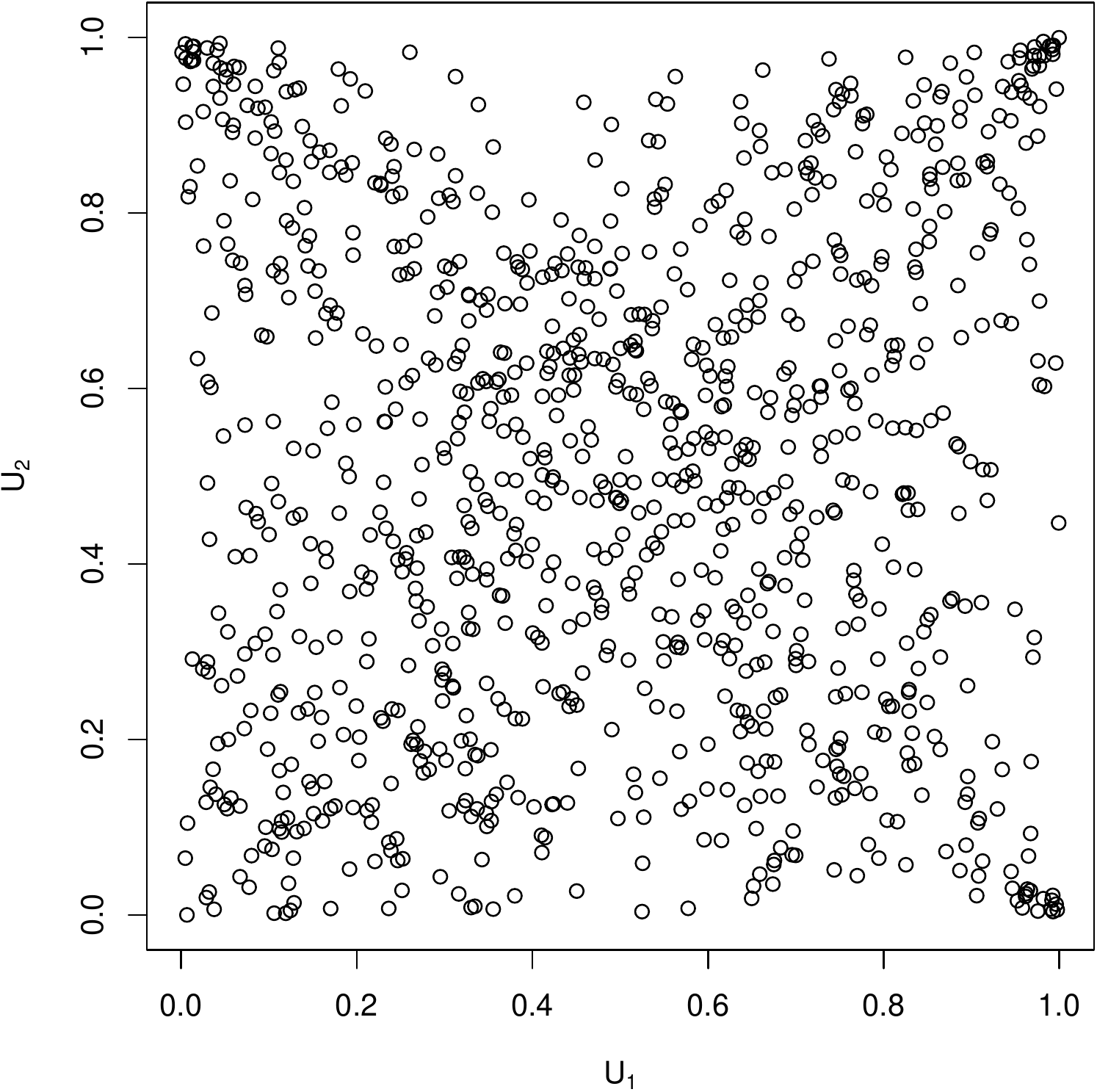}\hfill
  \includegraphics[width=0.32\textwidth]{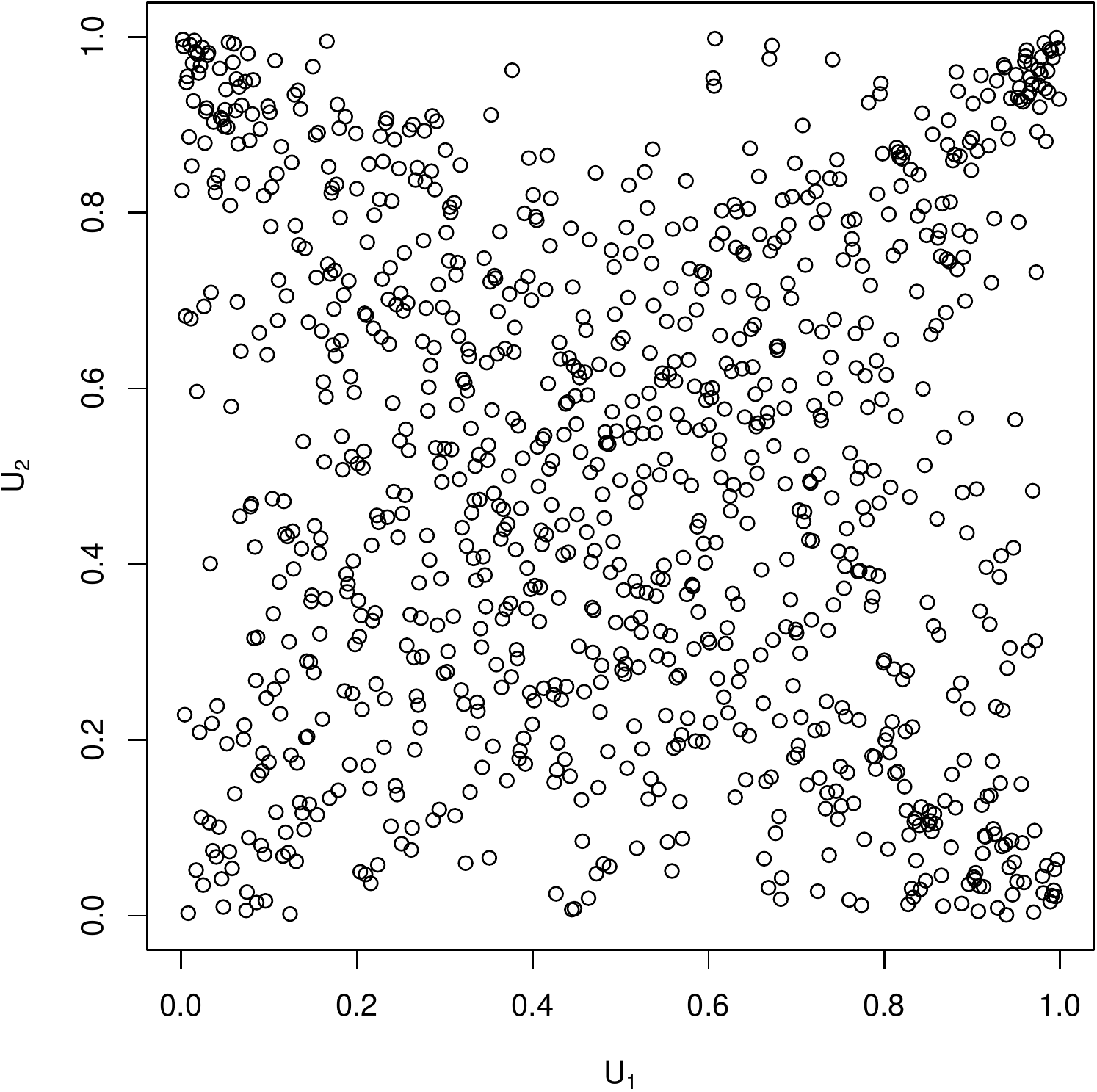}\hfill
  \includegraphics[width=0.32\textwidth]{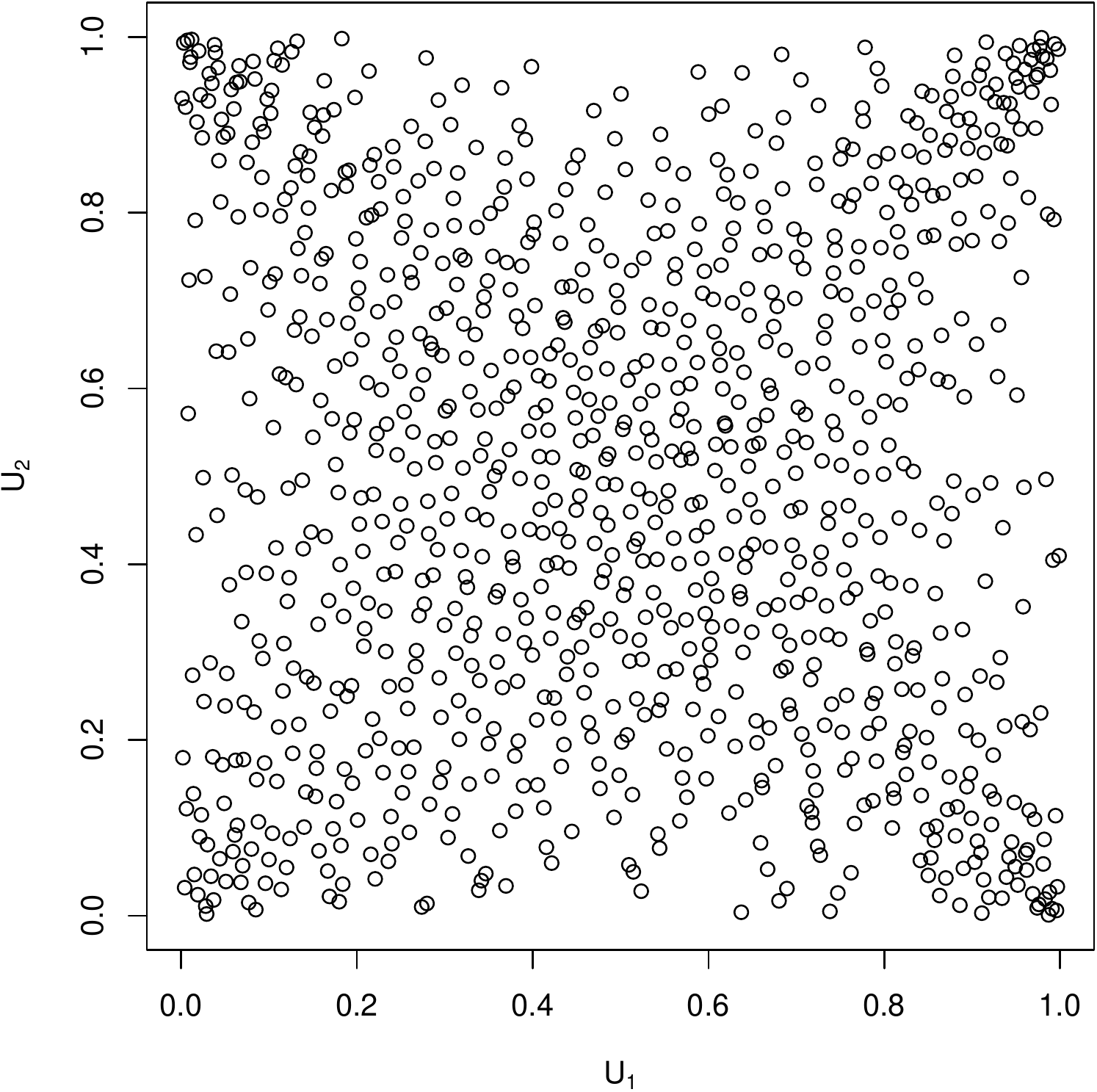}
  \includegraphics[width=0.32\textwidth]{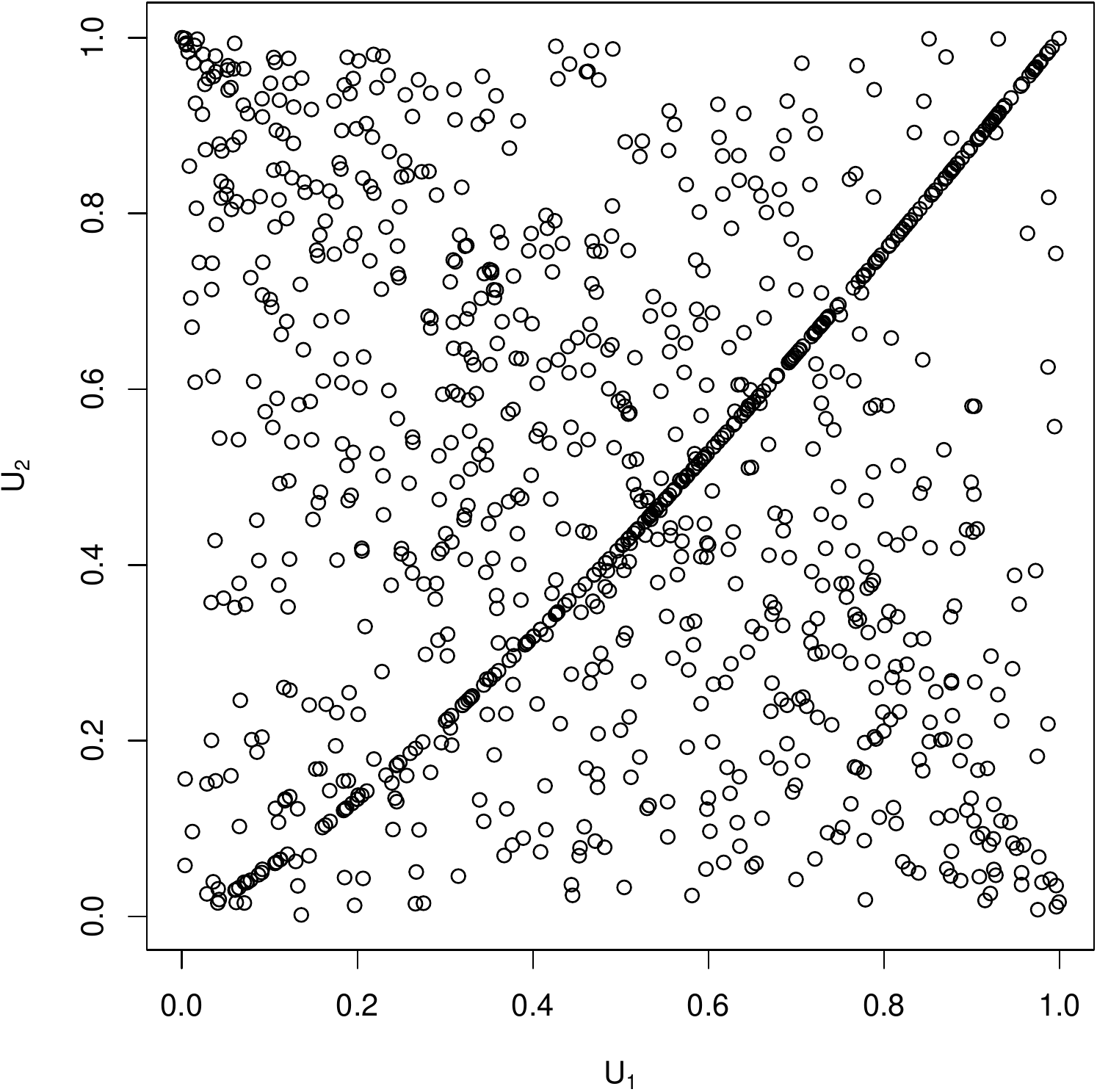}\hfill
  \includegraphics[width=0.32\textwidth]{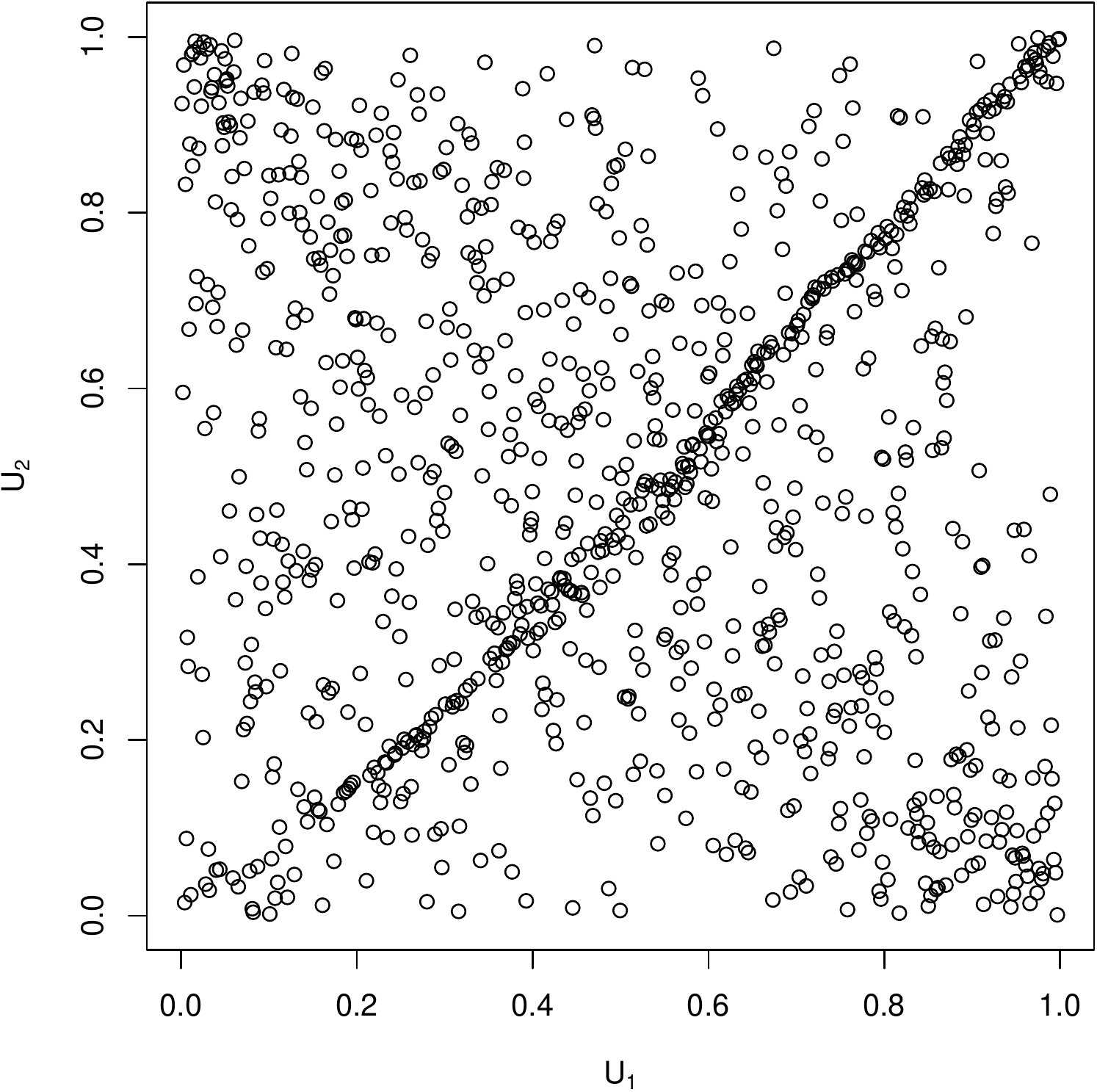}\hfill
  \includegraphics[width=0.32\textwidth]{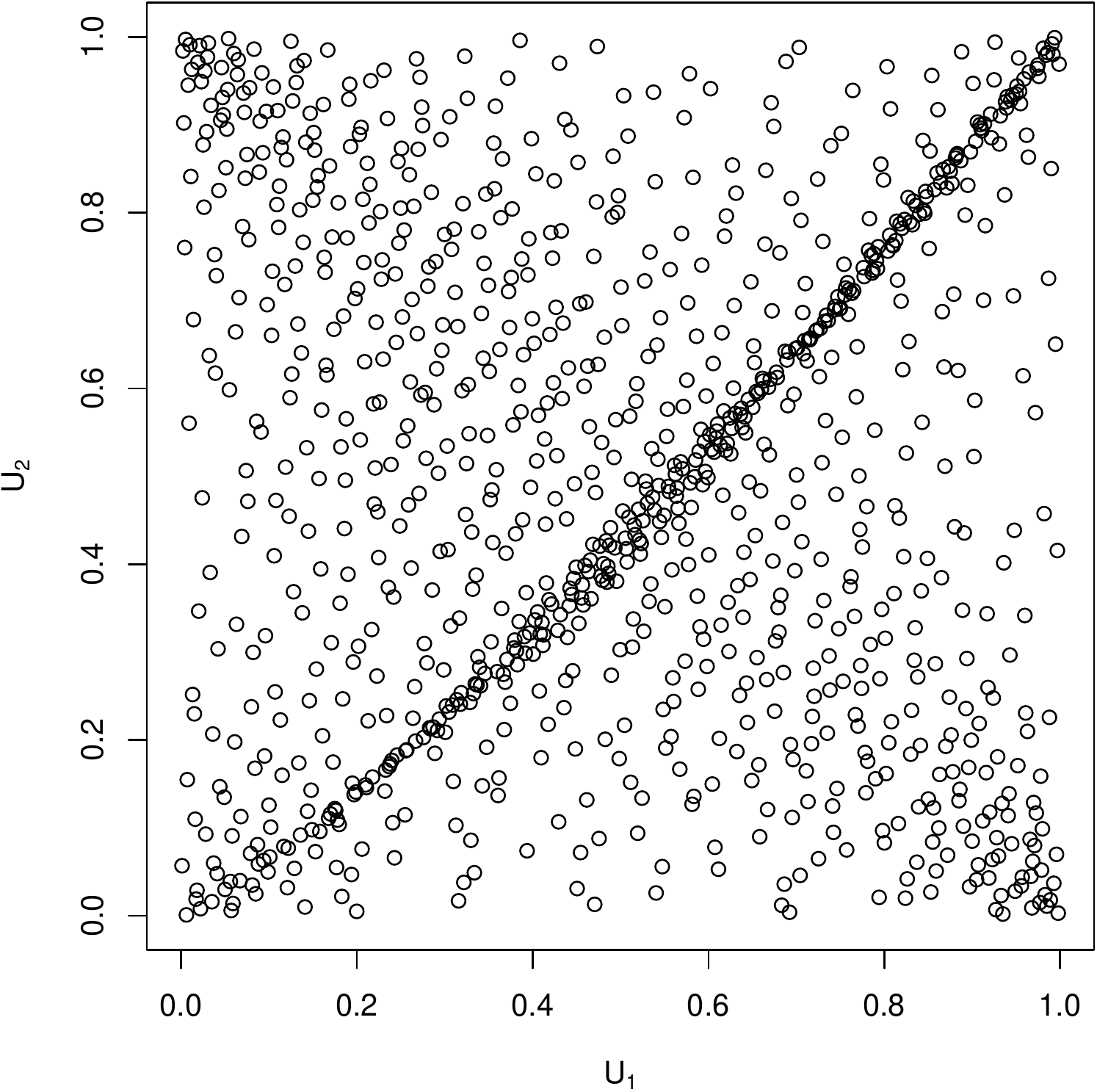}
  \caption{PRS (left column), GMMN PRS (middle column) and GMMN QRS (right
    column), all of size $\ngen=1000$, from a Clayton-$t(90)$ (top row),
    Gumbel-$t(90)$ (middle row) and a MO-$t(90)$ mixture (bottom row) copula.}
  \label{fig:mixtures:example2}
\end{figure}

\subsection{Assessment of GMMN samples by the Cram\'{e}r-von Mises statistic}\label{sec:GMMN:accuracy}
After a purely visual inspection of the generated samples, we now assess the
quality of GMMN pseudo-random and GMMN quasi-random samples more formally with the help of a
goodness-of-fit statistic. Since bivariate copulas have been investigated in
detail in the previous section, we focus on higher-dimensional copulas in this
section.

Specifically, we use the Cram\'er--von Mises statistic \parencite{genest2009},
\begin{align*}
  S_{\ngen}= \int_{[0,1]^{d}} \ngen(C_{\ngen}(\bm{u})-C(\bm{u}))^2\,\rd C_{\ngen}(\bm{u}),
\end{align*}
where the empirical copula
\begin{align}
  C_{\ngen}(\bm{u})=\frac{1}{\ngen} \sum_{i=1}^{\ngen} \I\{\hat{U}_{i1}\leq u_1,\dots,\hat{U}_{id}\leq u_d\},\quad\bm{u}\in[0,1]^d, \label{eq:emp:copula}
\end{align}
is the empirical distribution function of the pseudo-observations. For $\ngen =1000$ and each copula
$C$, we compute $B=100$ realizations of $S_{\ngen}$ three times ---
once for the case where $\hat{\bm{U}}_i$, $i=1,\dots,\ngen$, are pseudo-observations of the true underlying copula (as benchmark), once for GMMN pseudo-random samples and once for GMMN quasi-random samples. %
We then use box plots to depict the distribution of $S_{\ngen}$ in each
case. Figure~\ref{fig:cvm1} displays these box plots for $t_4$ (top row),
Clayton (middle row) and Gumbel (bottom row) copulas of dimensions $d=5$ (left
column), $d=10$ (right column) and $\tau=0.50$. Similarly, Figure~\ref{fig:cvm2}
displays such box plots for $d$-dimensional nested Clayton (left column) and
nested Gumbel (right column) copulas for $d=3$ (top row), $d=5$ (middle row) and
$d=10$ (bottom row). The three-dimensional NACs have a structure as given by
\eqref{eq:NAC} with $\tau_0=0.25$ and $\tau_1=0.50$; the five-dimensional NACs
have structure $C_0(C_1(u_1,u_2),C_2(u_3,u_4,u_5))$ with corresponding
$\tau_0=0.25$, $\tau_1=0.50$ and $\tau_2=0.75$; and the ten-dimensional NACs
have structure $C_0(C_1(u_1,\dots,u_5),C_2(u_6,\dots,u_{10}))$ with
corresponding $\tau_0=0.25$, $\tau_1=0.50$ and $\tau_2=0.75$.

We can observe from both figures that the distributions of $S_{\ngen}$ for pseudo-random samples
from $C$ and from the GMMN are similar, with slightly higher
$S_{\ngen}$ values for the GMMN pseudo-random samples, especially for $d=10$.
Additionally, we can observe that the distribution of $S_{\ngen}$ based on the
GMMN quasi-random samples is closer to zero than that of the GMMN pseudo-random
samples. This provides some evidence that the low-discrepancy of input RQMC
points sets has been preserved under the respective (trained) GMMN transform.

We also see that $S_{\ngen}$ values based on the GMMN quasi-random samples are clearly lower
than $S_{\ngen}$ values based on the copula pseudo-random samples, with the exception of
(some) copulas for $d=10$ where the distributions of $S_{\ngen}$ are more
similar.

\begin{figure}[htbp]
  \centering
  \includegraphics[width=0.41\textwidth]{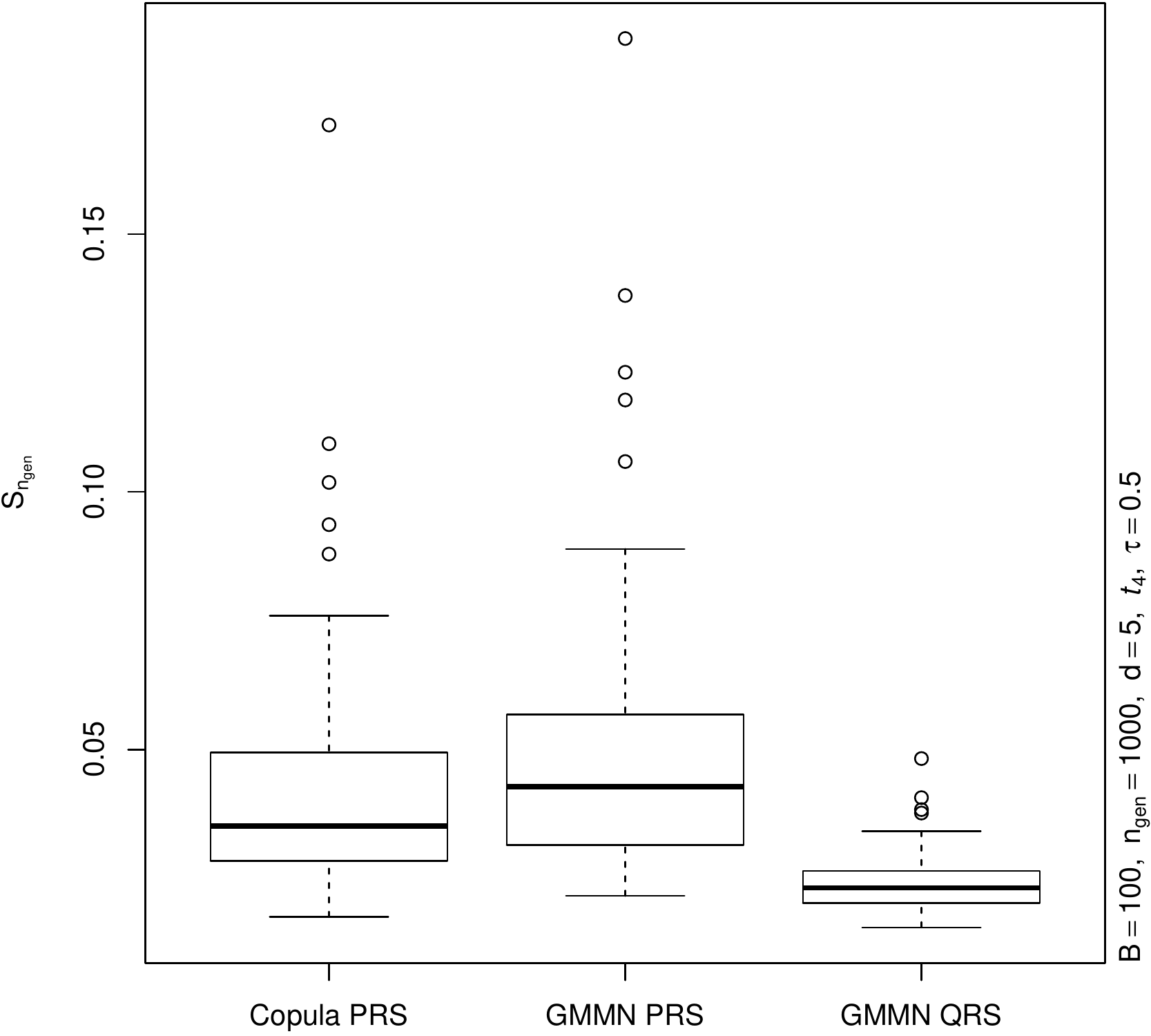}\hspace{5mm}
  \includegraphics[width=0.41\textwidth]{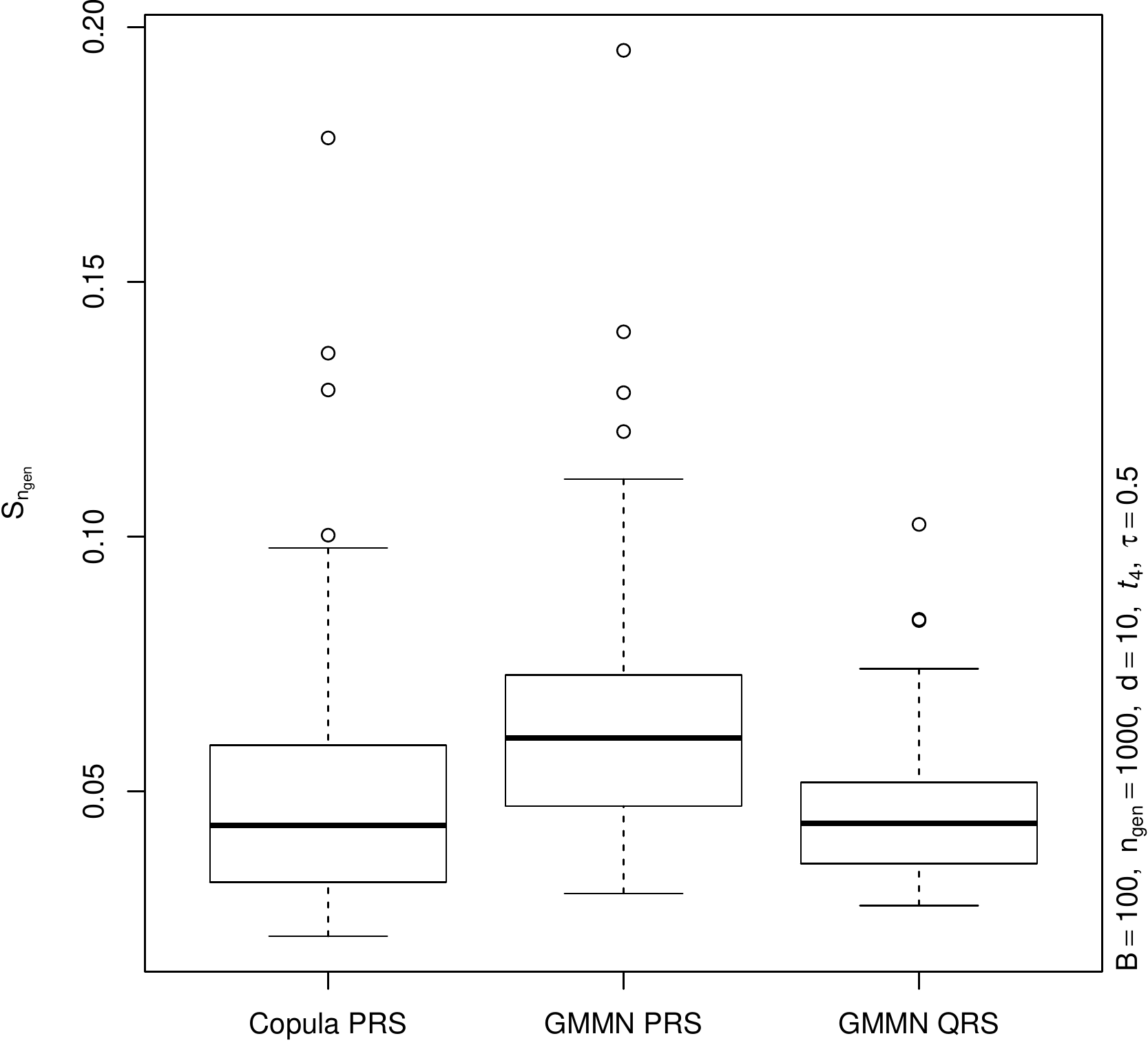}\\[2mm]
  \includegraphics[width=0.41\textwidth]{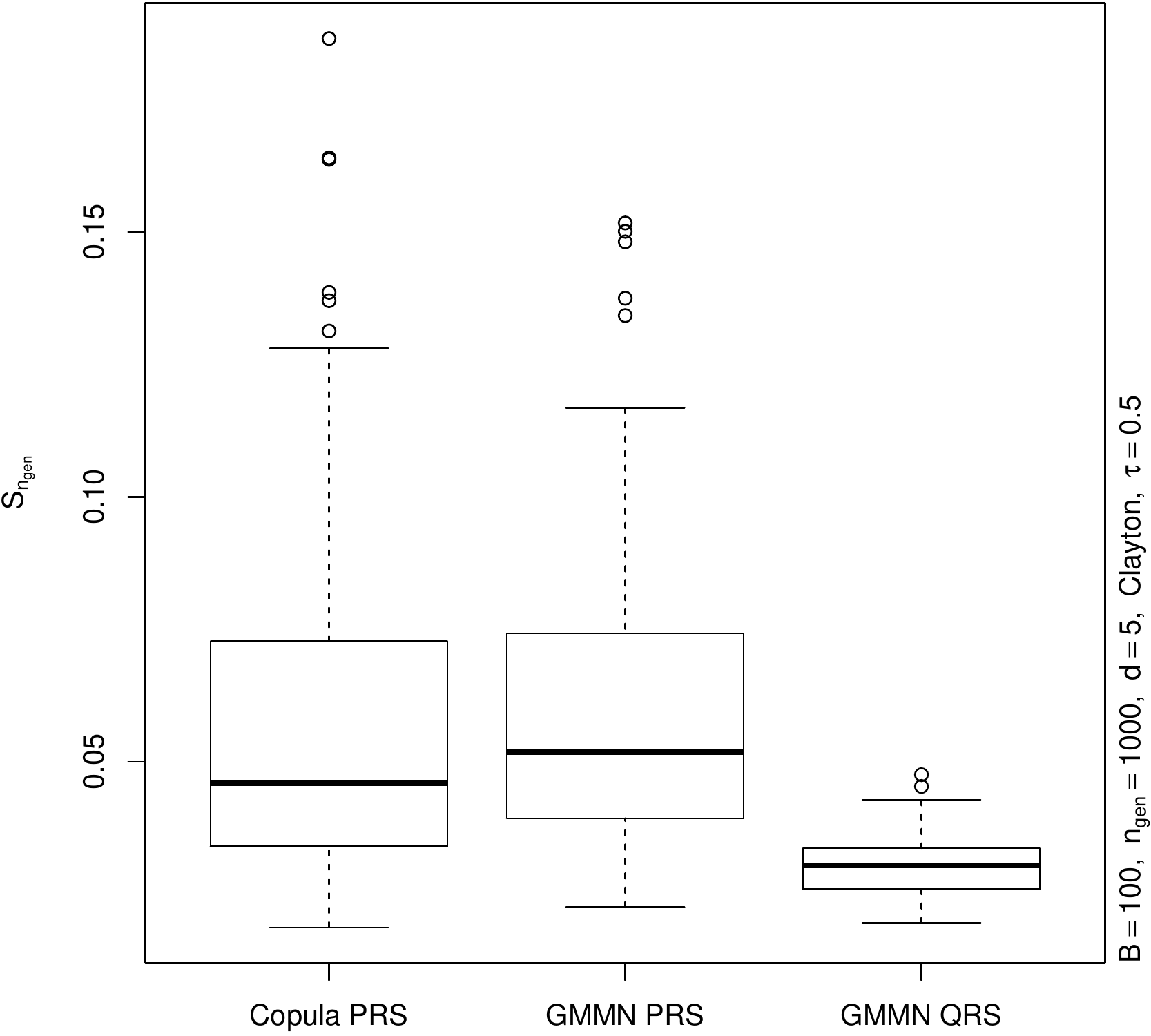}\hspace{5mm}
  \includegraphics[width=0.41\textwidth]{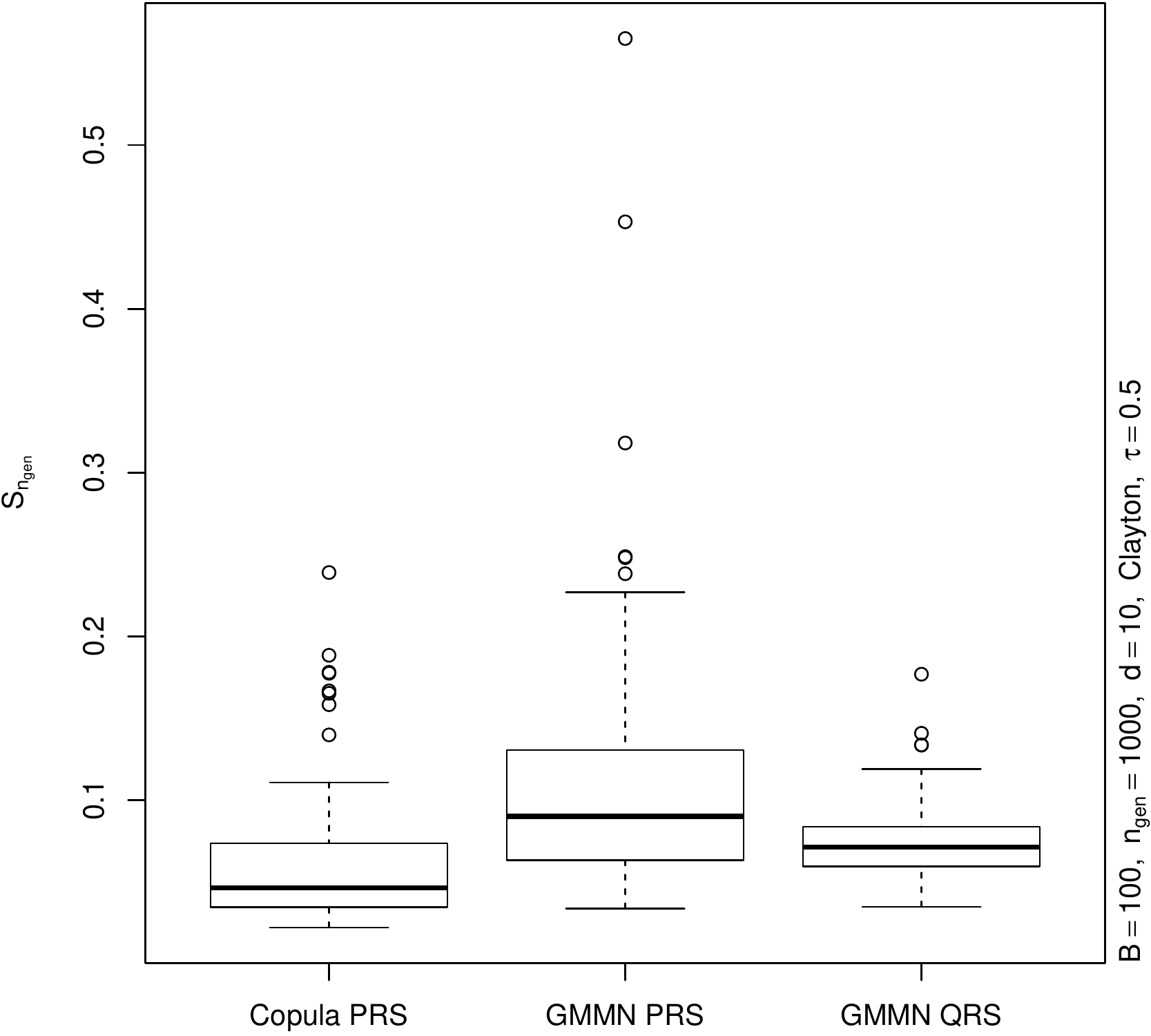}\\[2mm]
  \includegraphics[width=0.41\textwidth]{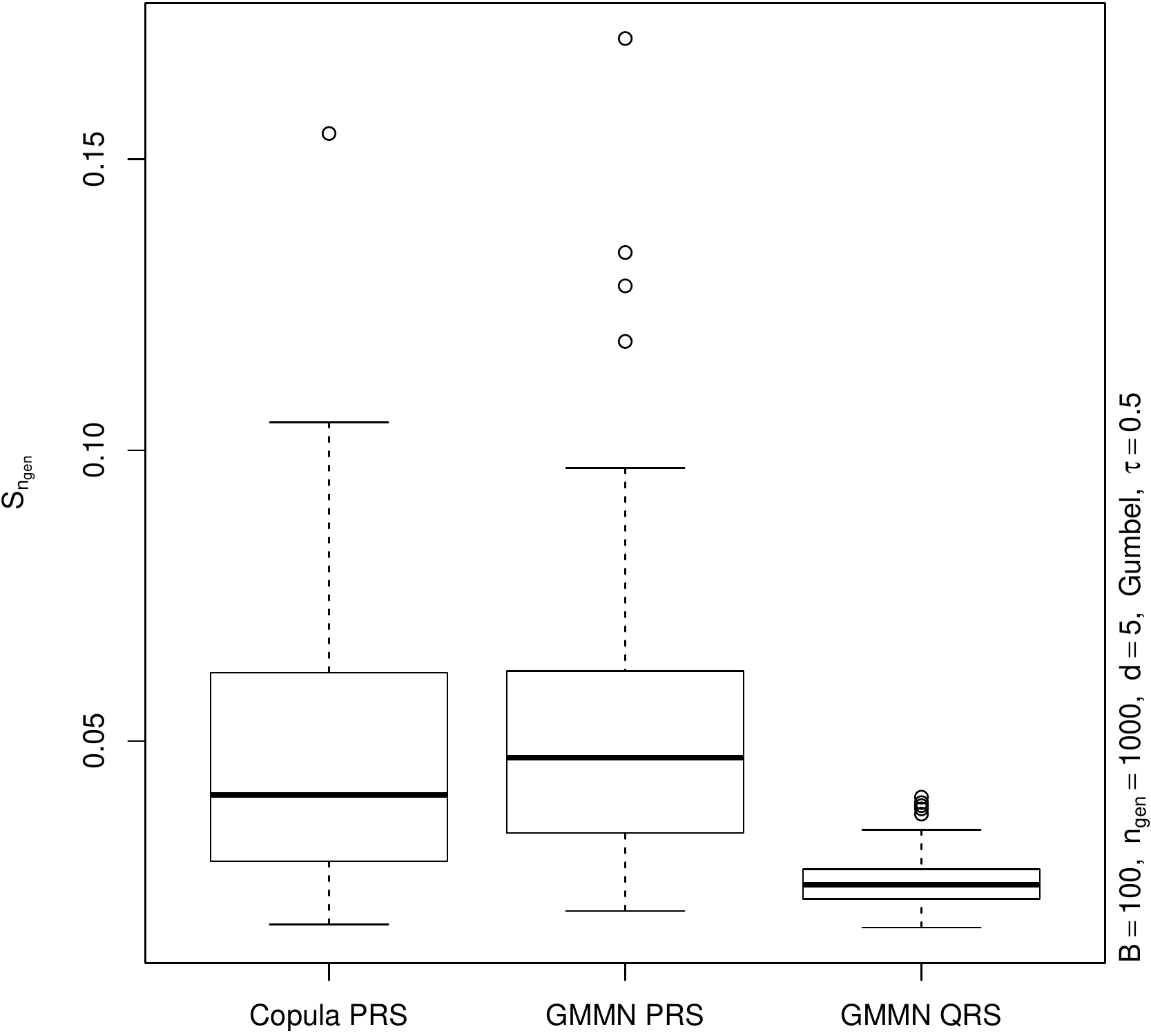}\hspace{5mm}
  \includegraphics[width=0.41\textwidth]{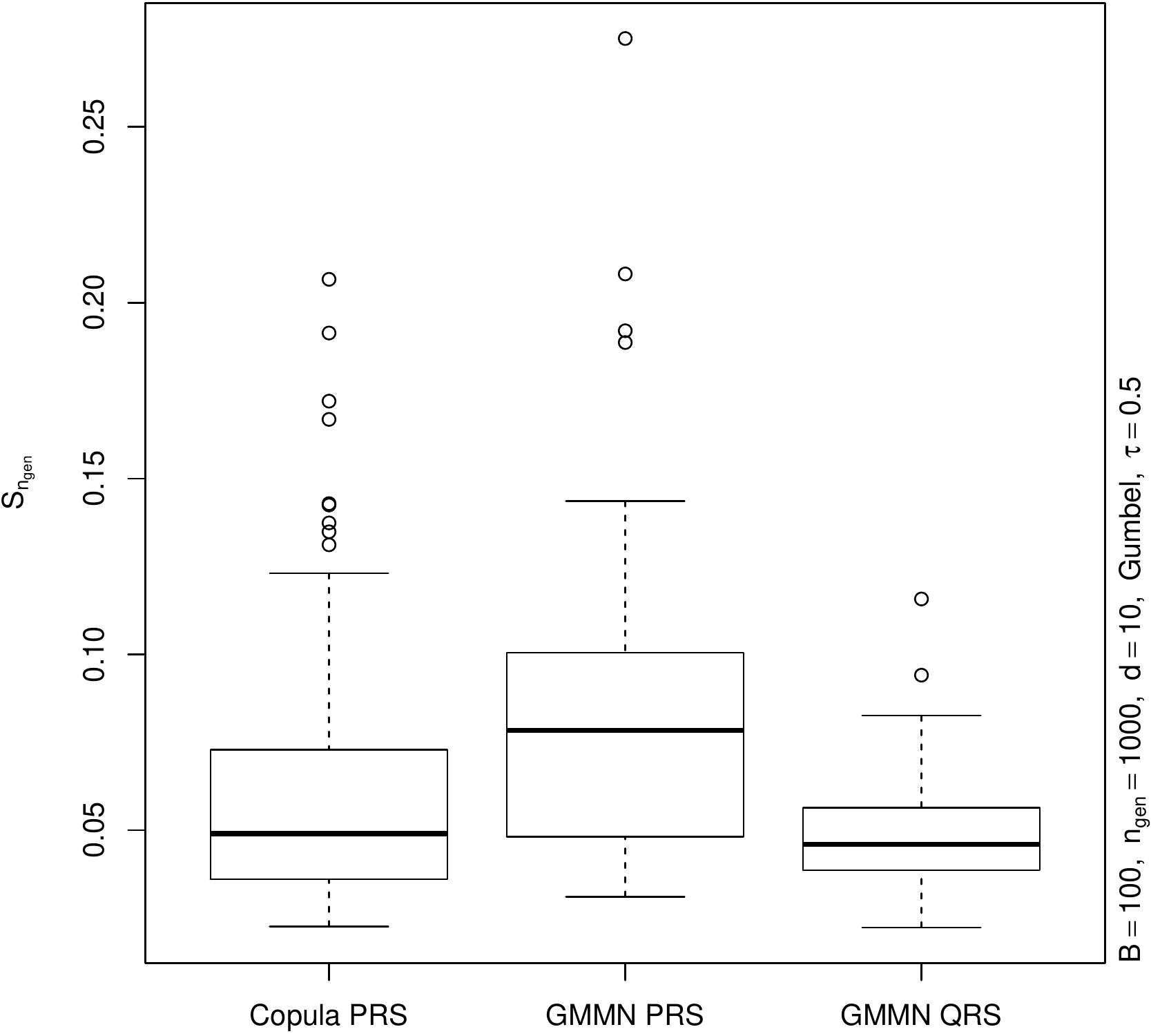}
  \caption{Box plots based on $B=100$ realization of $S_{\ngen}$ computed from
    (i) a pseudo-random sample (PRS) of $C$ (denoted by Copula PRS), (ii) a GMMN pseudo-random sample
    (denoted by GMMN PRS) and (iii) a GMMN quasi-random sample (denoted by GMMN
    QRS) --- all of size $\ngen=1000$ --- for a $t_4$ (top row), Clayton (middle
    row) and Gumbel (bottom row) copula with $\tau=0.5$ as well as $d=5$ (left
    column) and $d=10$ (right column).}\label{fig:cvm1}
\end{figure}

\begin{figure}[htbp]
  \centering
  \includegraphics[width=0.41\textwidth]{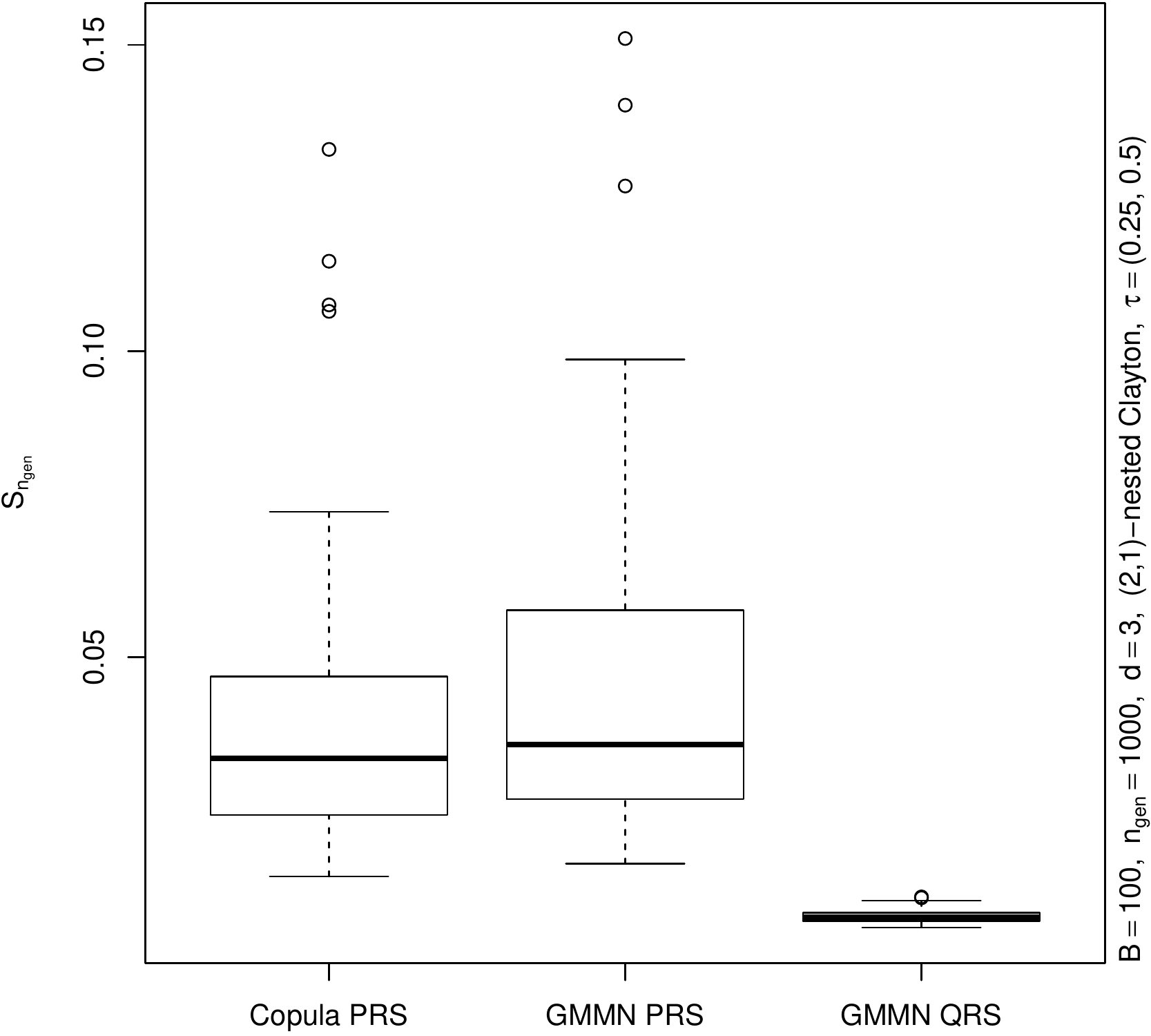}\hspace{5mm}
  \includegraphics[width=0.41\textwidth]{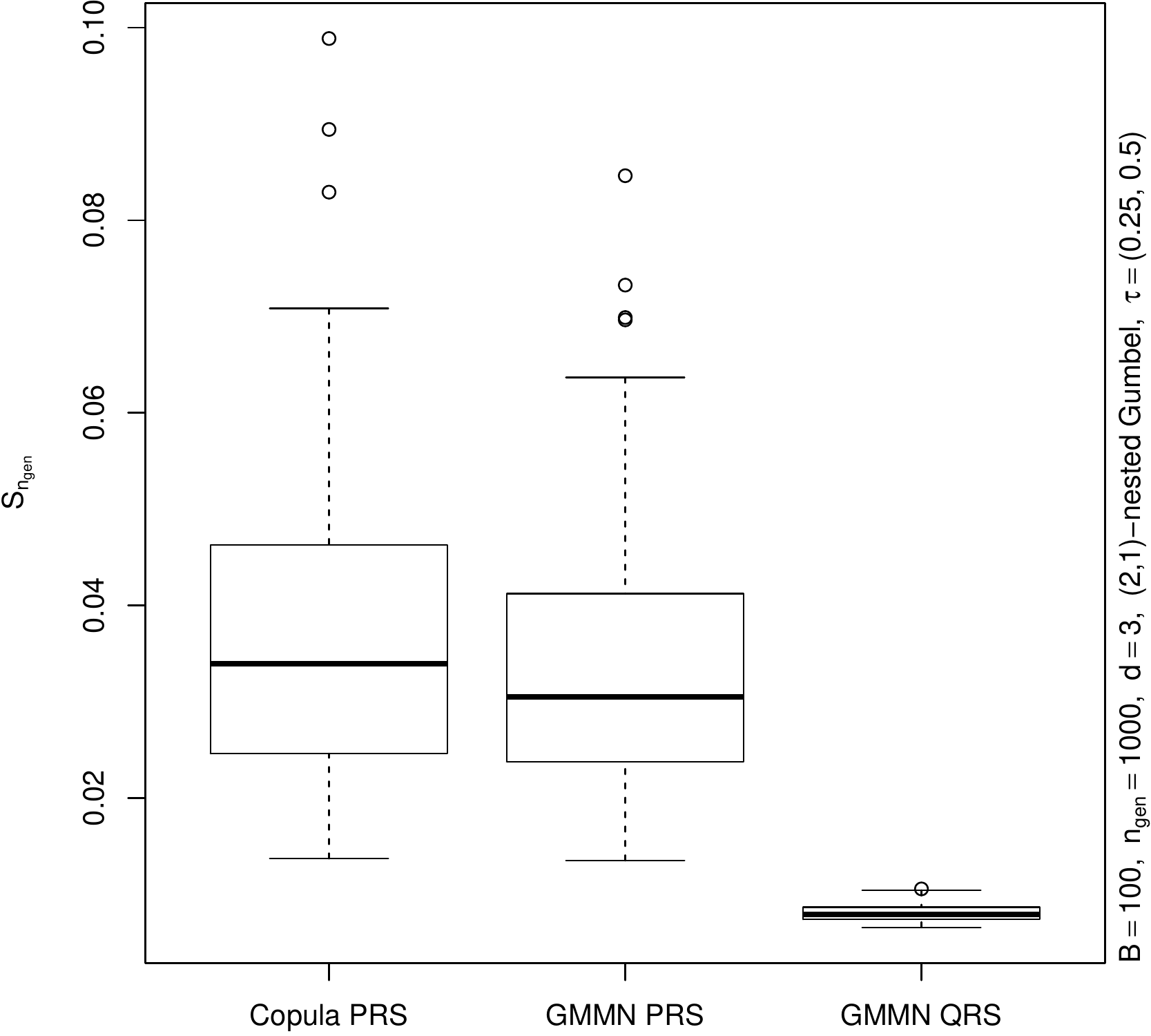}\\[2mm]
  \includegraphics[width=0.41\textwidth]{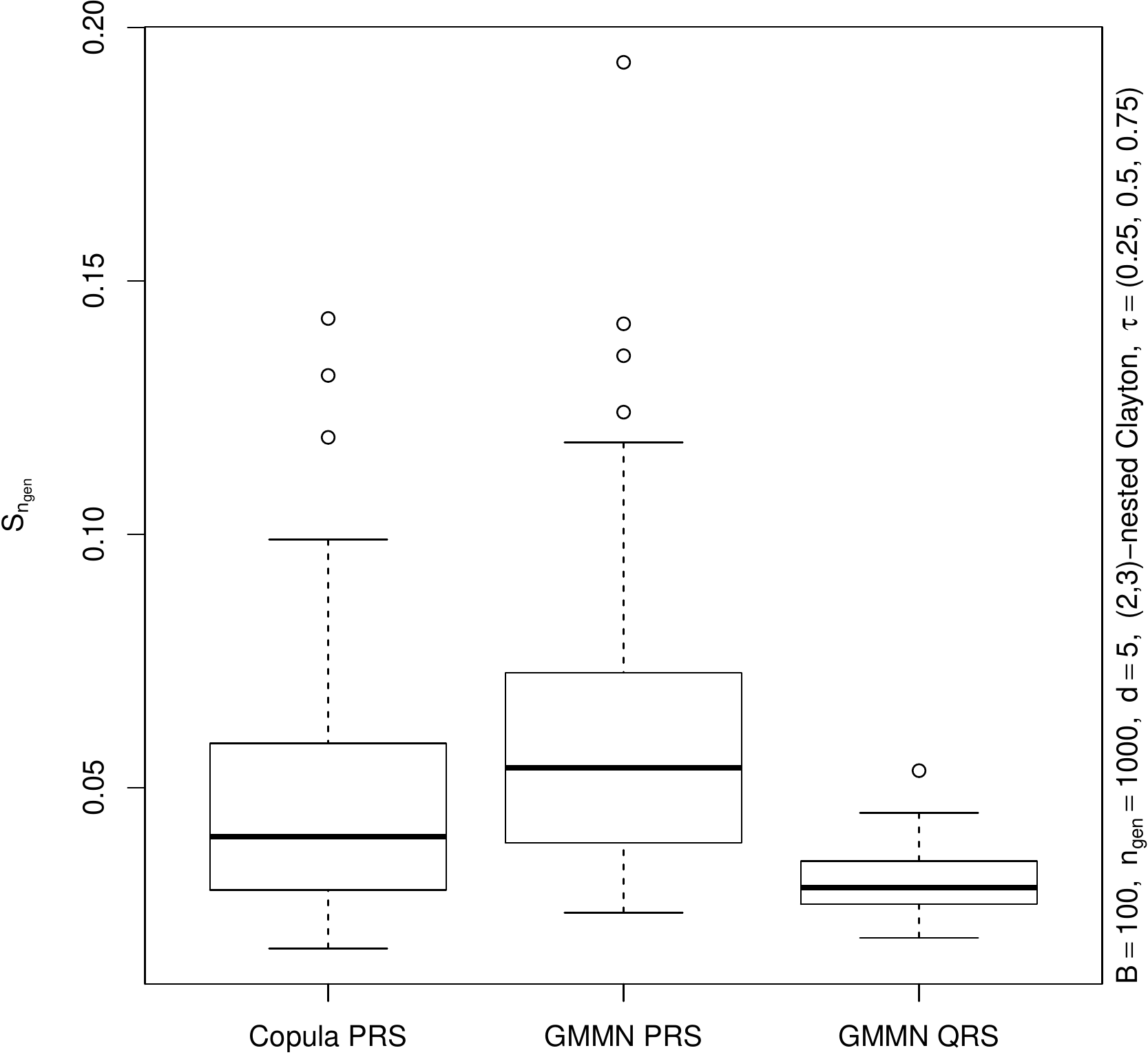}\hspace{5mm}
  \includegraphics[width=0.41\textwidth]{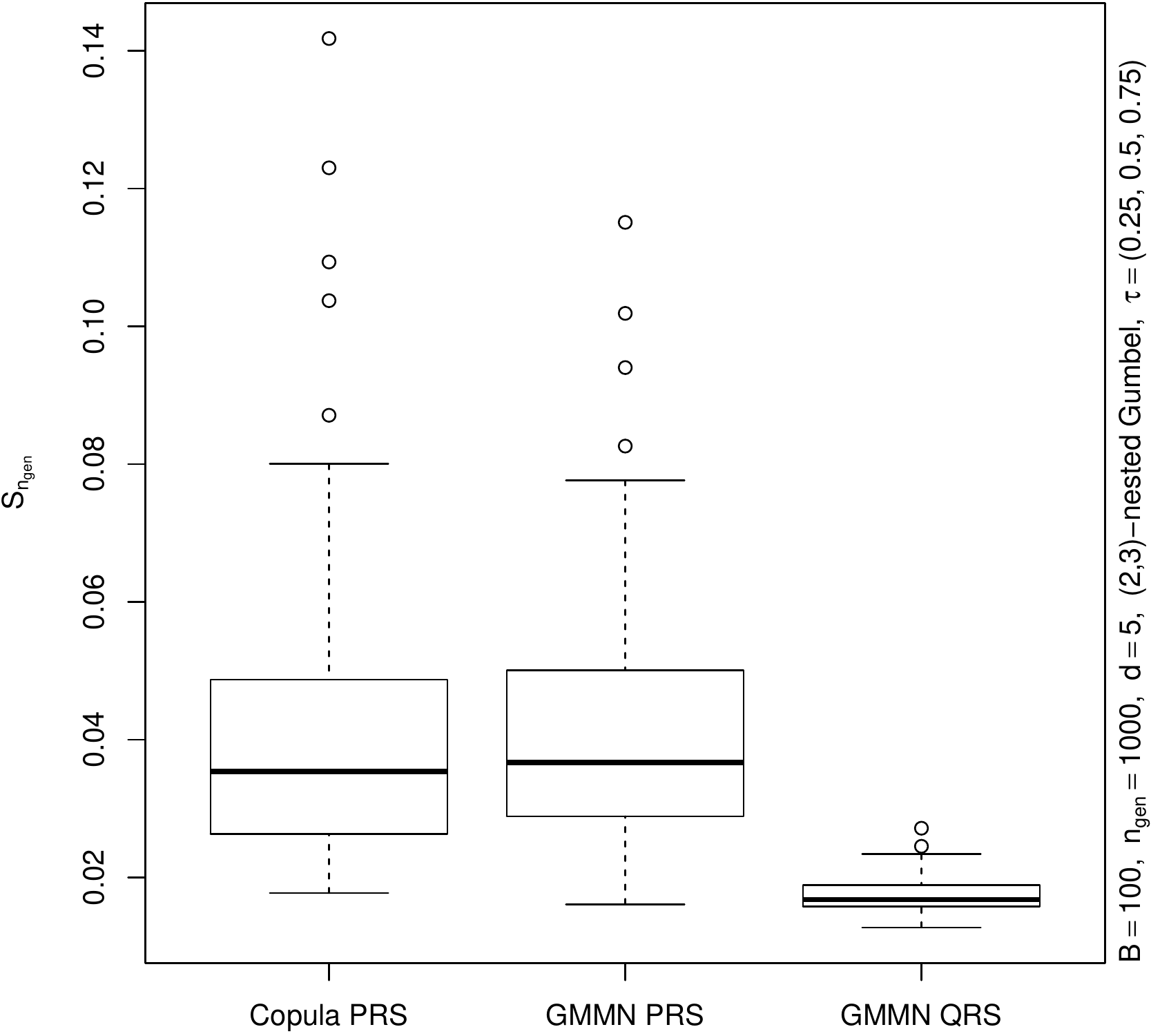}\\[2mm]
  \includegraphics[width=0.41\textwidth]{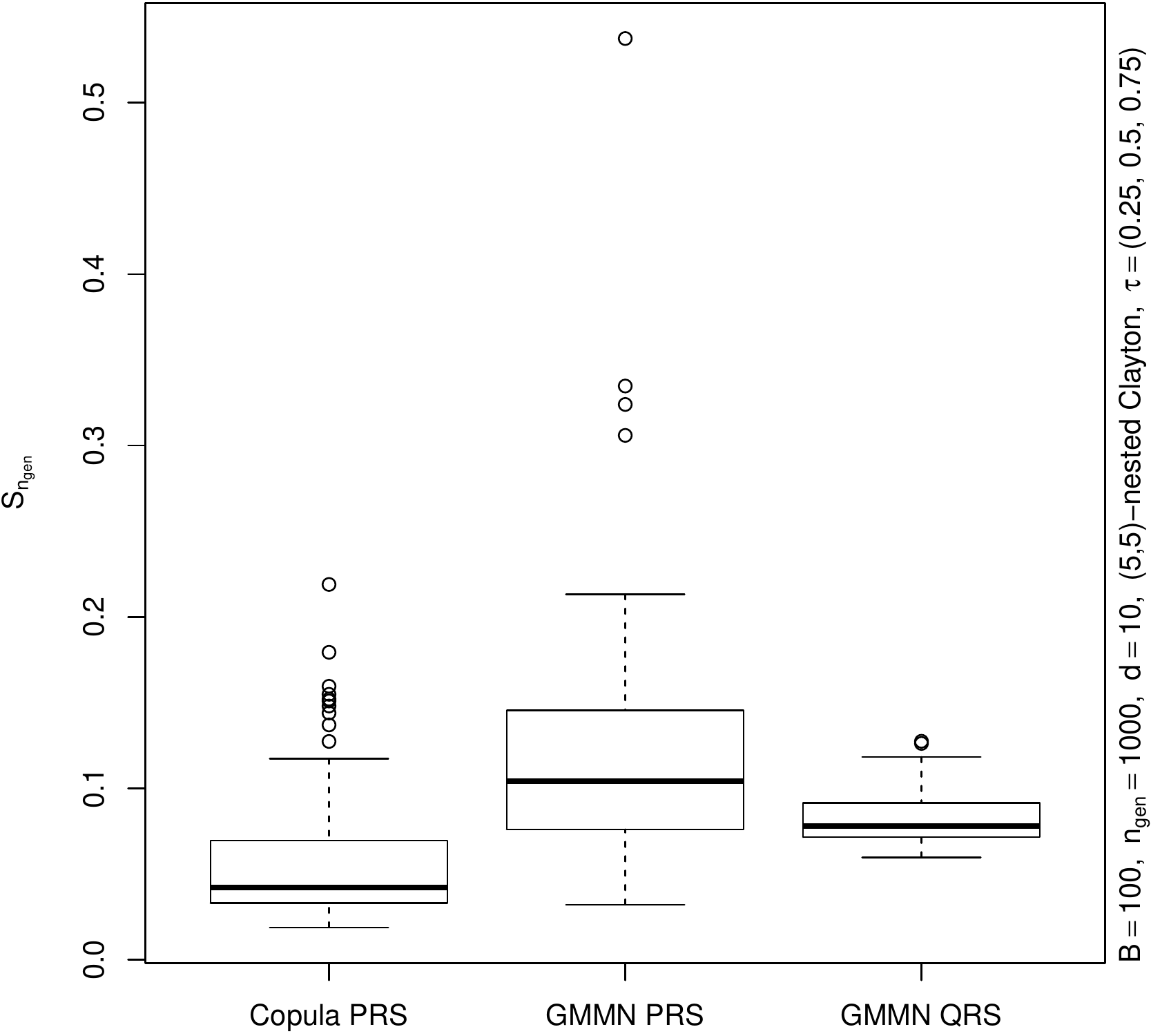}\hspace{5mm}
  \includegraphics[width=0.41\textwidth]{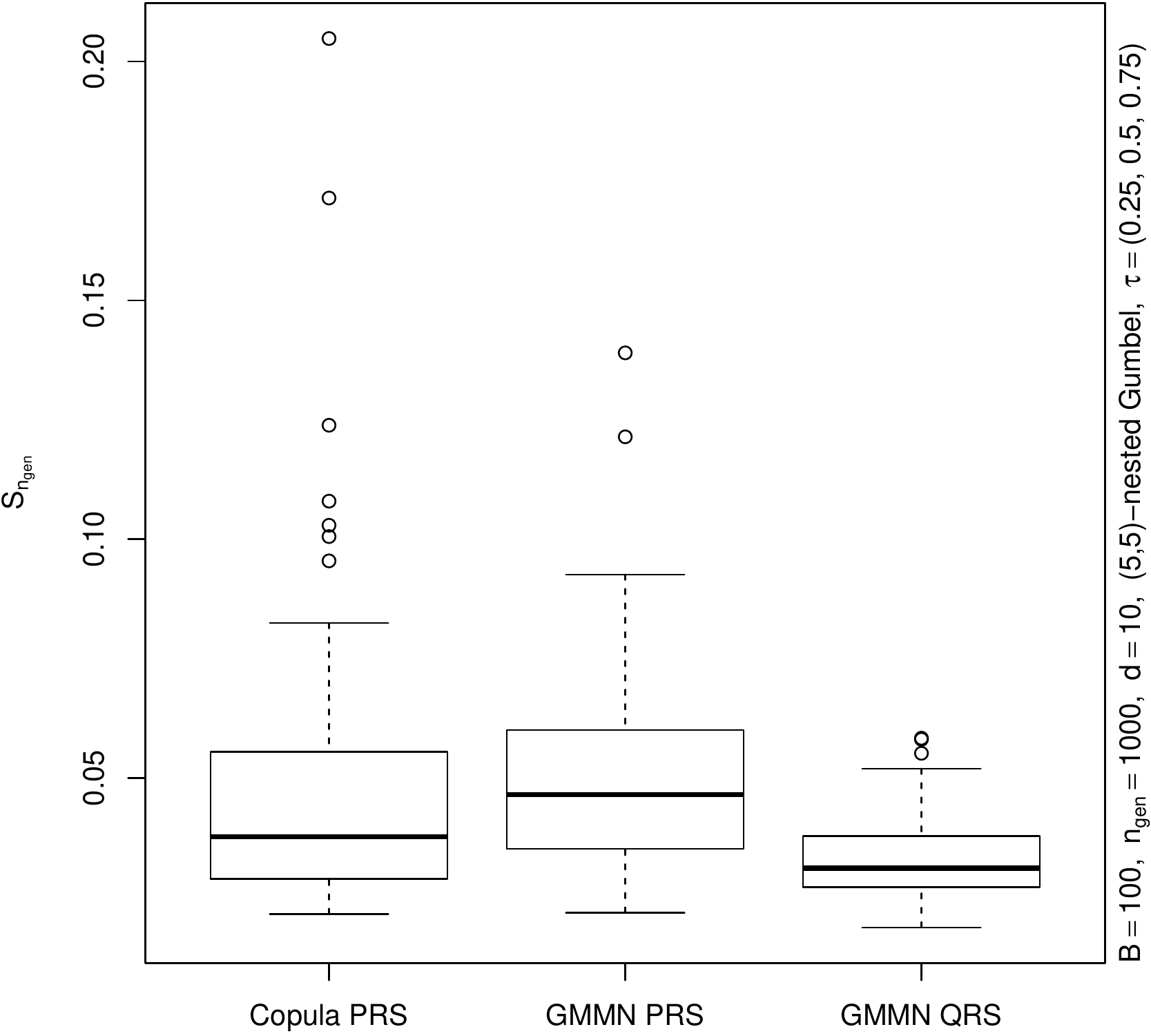}
  \caption{As Figure~\ref{fig:cvm1} but for nested Clayton (left column) and
    nested Gumbel (right column) copulas and for $d=3$ (top row), $d=5$ (middle
    row) and $d=10$ (bottom row).}\label{fig:cvm2}
\end{figure}

\section{Convergence analysis of the RQMC estimator}\label{sec:conv:analysis}
In this section we numerically investigate the variance-reduction properties of
the GMMN RQMC estimator $\varhat{\mu}{\ngen}{NN}$ in~\eqref{eq:GMMN:C:RQMC} for
two functions $\Psi$ and transforms
$q=f_{\hat{\bm{\theta}}}\circ\Phi^{-1}$ corresponding to different copulas
$C$. We compare $\varhat{\mu}{\ngen}{NN}$ with estimators based on
  standard copula pseudo-random and, where available, copula quasi-random
  samples. For the latter, we follow \cite{cambou2017} but note that
  quasi-random sampling procedures are only available for some of the copulas we
  consider here; for others, the procedures are either too slow (e.g., for
  Gumbel copulas; see Appendix~\ref{sec:timings}) or not known at all (e.g.,
  for nested Clayton or Gumbel copulas).

We consider two different types of functions $\Psi$.  The first is a \emph{test
  function} primarily used in the QMC literature to test the performance of
$\varhat{\mu}{\ngen}{NN}$ in terms of its ability to preserve the
low-discrepancy of $\tilde{P}_{\ngen}$.
The second function $\Psi$ is motivated from a practical application in
risk management. For both functions,
standard deviation estimates will be
computed to compare convergence rates,
based on $B=25$
randomized point sets $\tilde{P}_{\ngen}$ for each of
$\ngen \in \{2^{10},2^{10.5},\dots,2^{18}\}$ to help roughly gauge the convergence
rate for all estimators. Furthermore, regression coefficients $\alpha$ (obtained by
regressing the logarithm of the standard deviation on the logarithm of $\ngen$)
are computed and displayed to allow for an easy comparison of the corresponding
convergence rates $O(\ngen^{-\alpha})$ with the theoretical convergence rate $O(\ngen^{-0.5})$ of
the Monte Carlo estimator's standard deviation. %
For RQMC estimators one can expect $\alpha$ to be larger than $0.5$, but with an upper bound of $1.5-\eps$, where $\eps$ increases with dimension $d$; see Theorem~\ref{thm:rqmc} for further details.

\subsection{A test function}\label{sec:testfun}
The test function we consider is the \emph{Sobol' g} function
\parencite{faurelemieux2009} based on the Rosenblatt transform and is given
by
\begin{align*}
  \Psi_1(\bm{U})= \prod_{j=1}^{d} \frac{|4R_j-2|+j}{1+j},
\end{align*}
where $R_1=U_1$ and, for $j=2,\dots,d$ and if $\bm{U}\sim C$,
$R_{j}=C_{j|1,\dots,j-1}(U_j\,|\,U_{j-1},\dots,U_1)$ denotes the conditional
distribution function of $U_j$ given $U_1,\dots,U_{j-1}$.

\begin{figure}[htbp]
  \centering
  \includegraphics[width=0.32\textwidth]{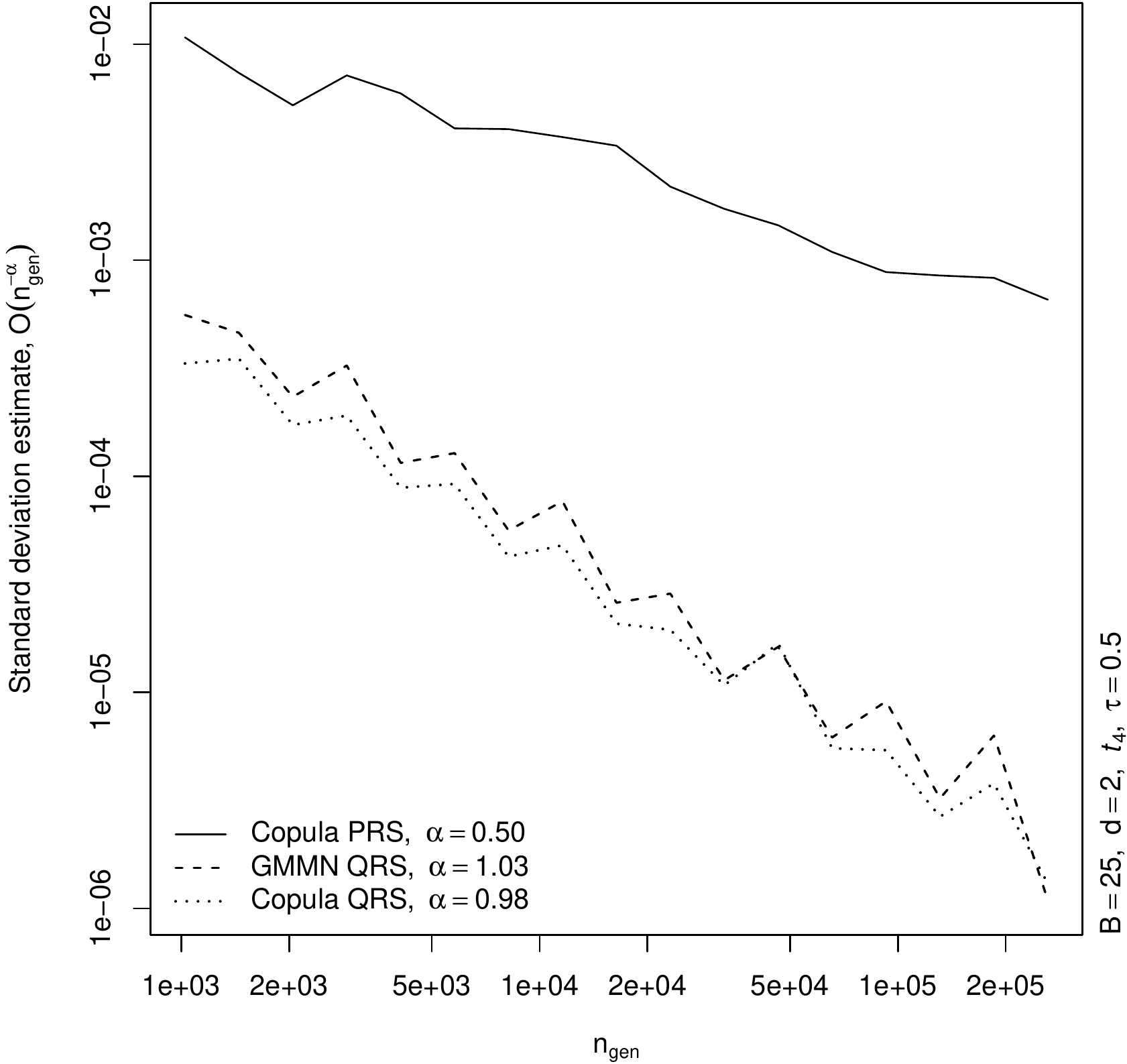}\hfill
  \includegraphics[width=0.32\textwidth]{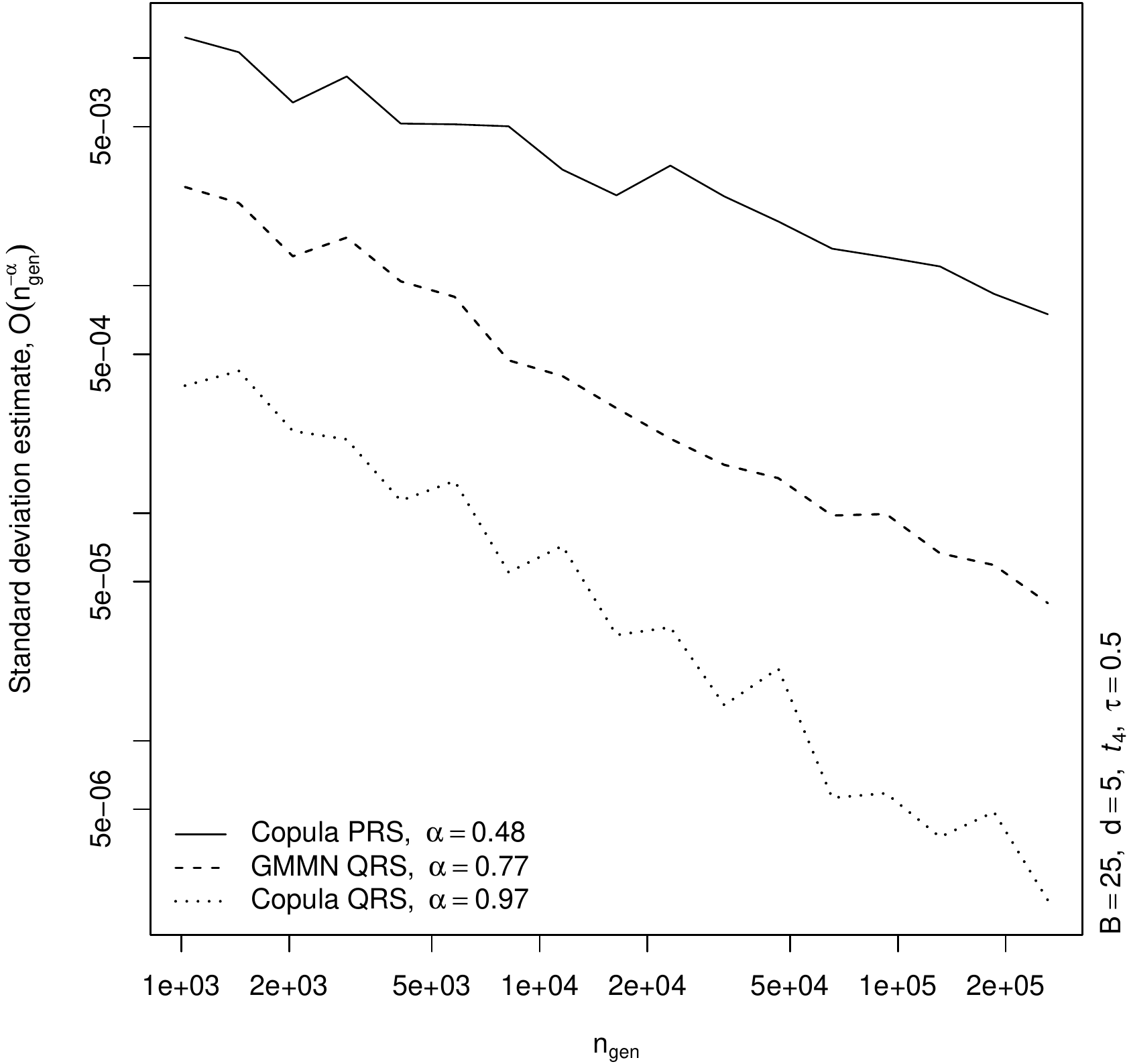}\hfill
  \includegraphics[width=0.32\textwidth]{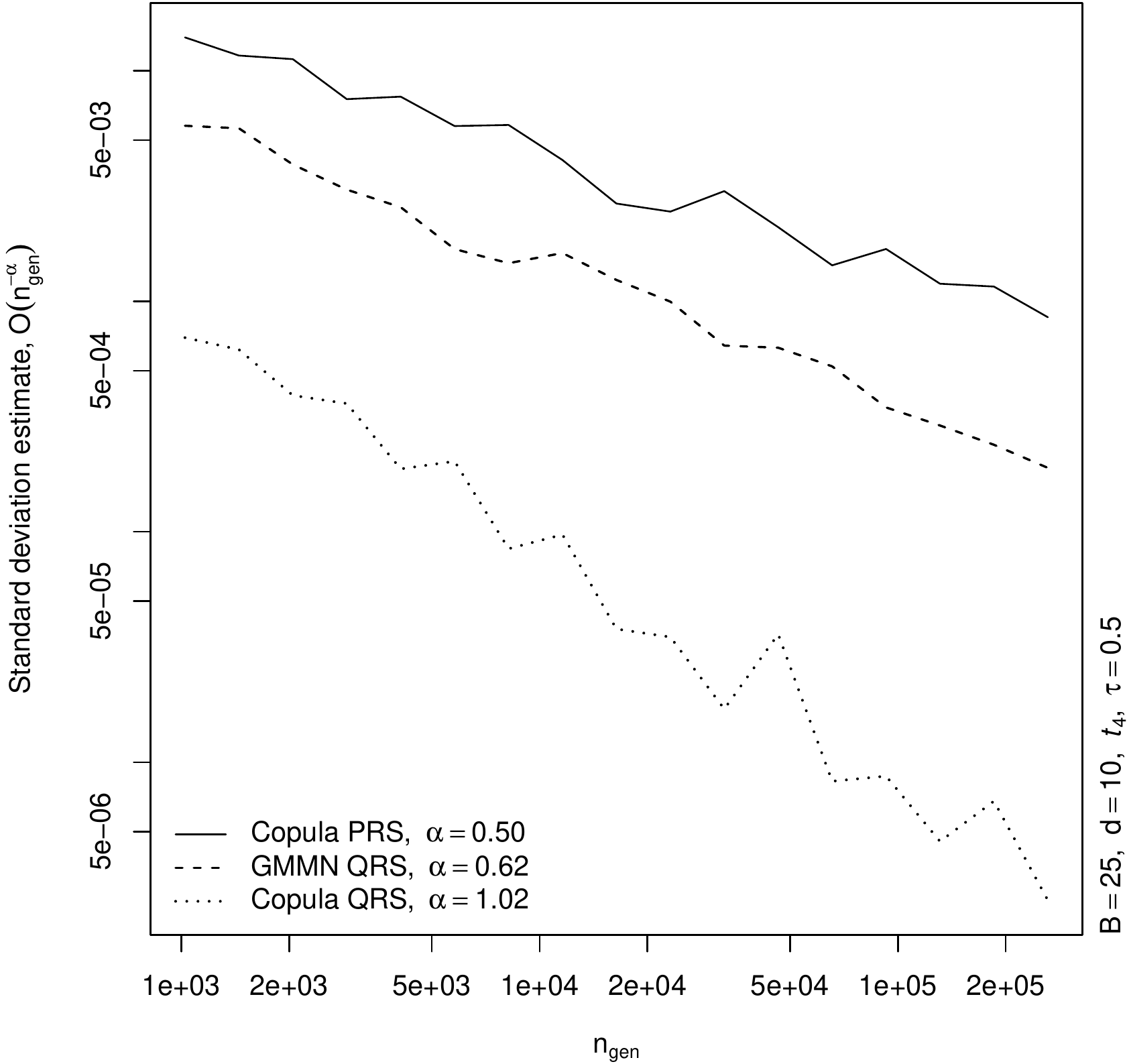}
  \includegraphics[width=0.32\textwidth]{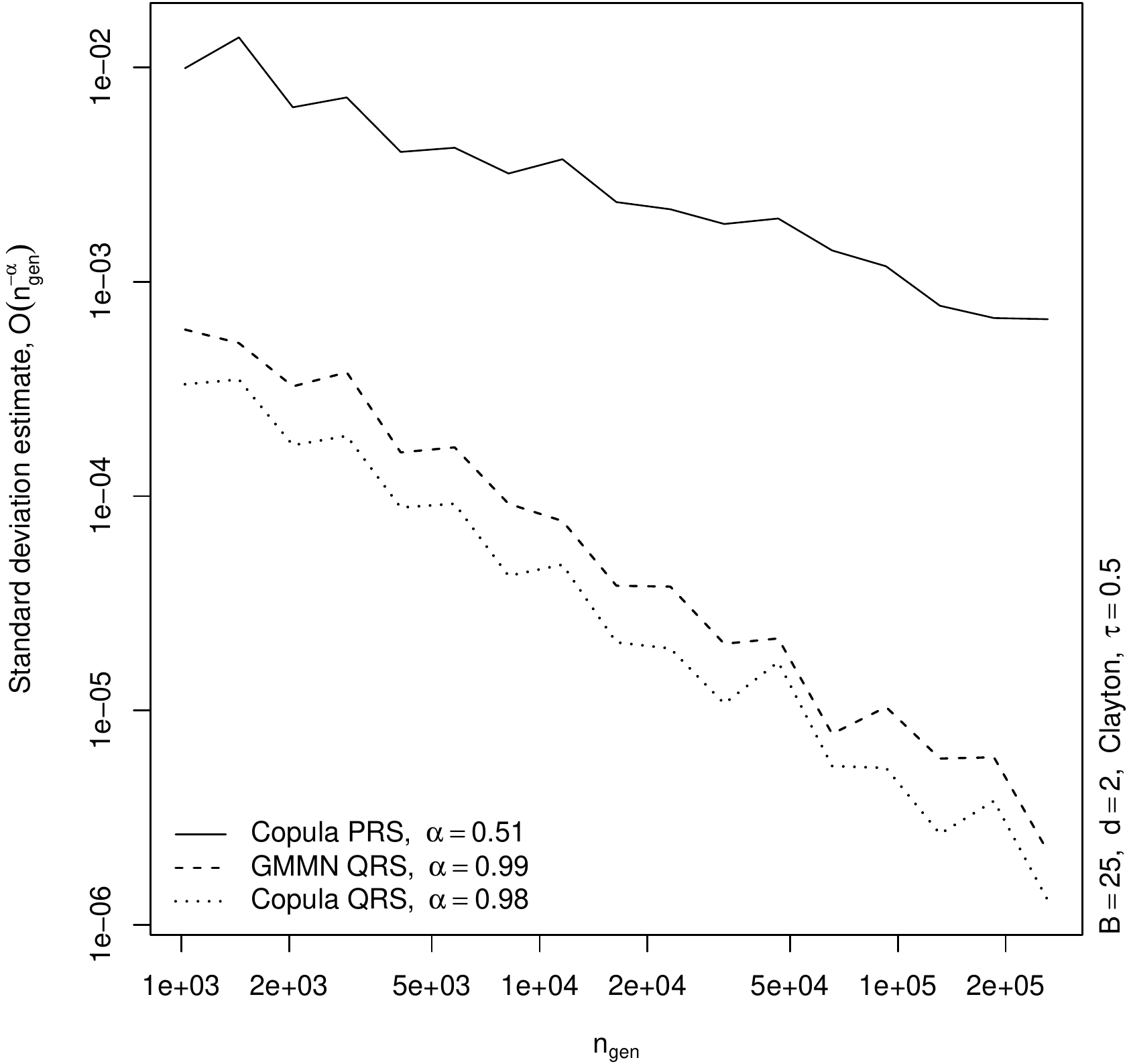}\hfill
  \includegraphics[width=0.32\textwidth]{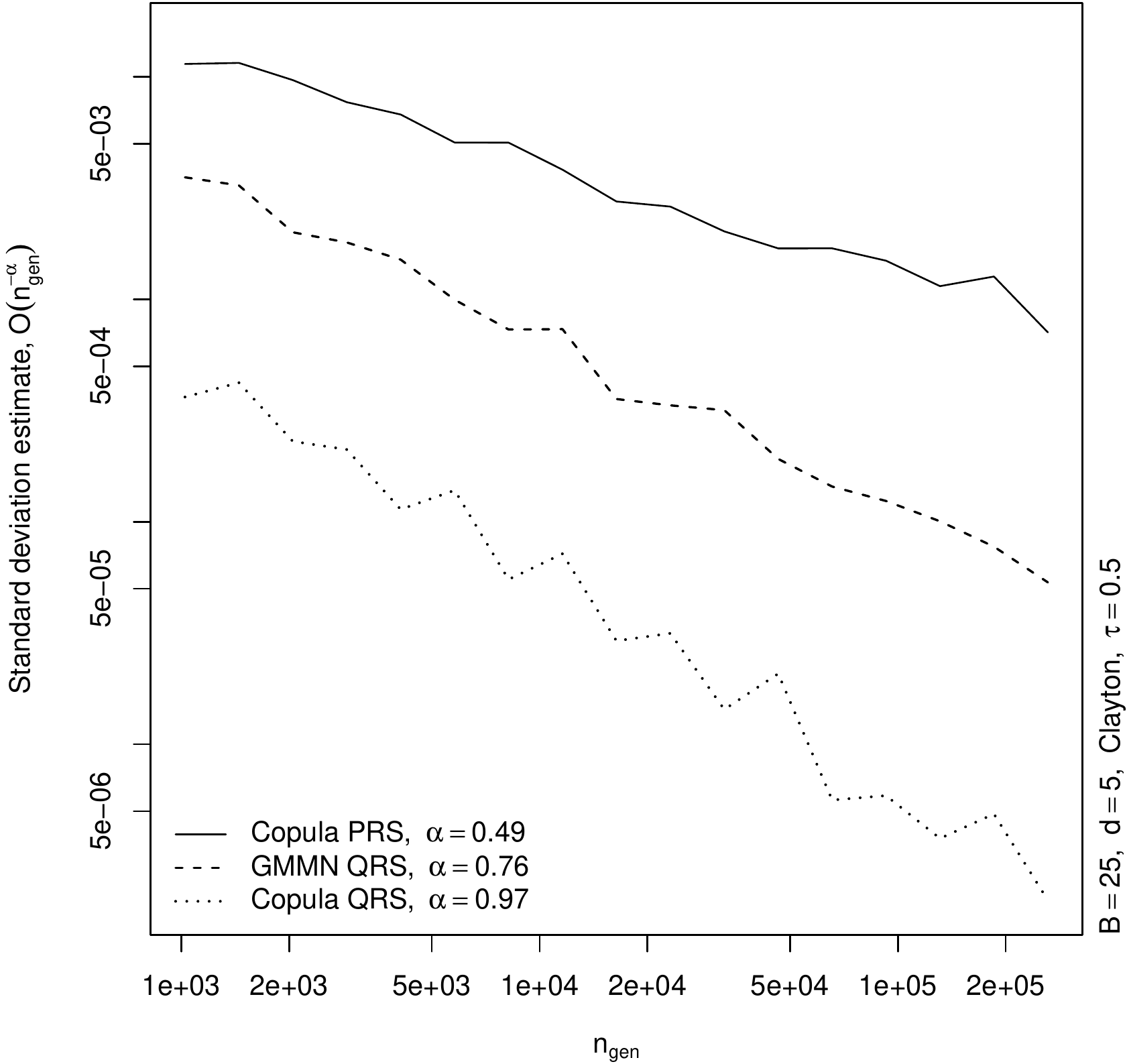}\hfill
  \includegraphics[width=0.32\textwidth]{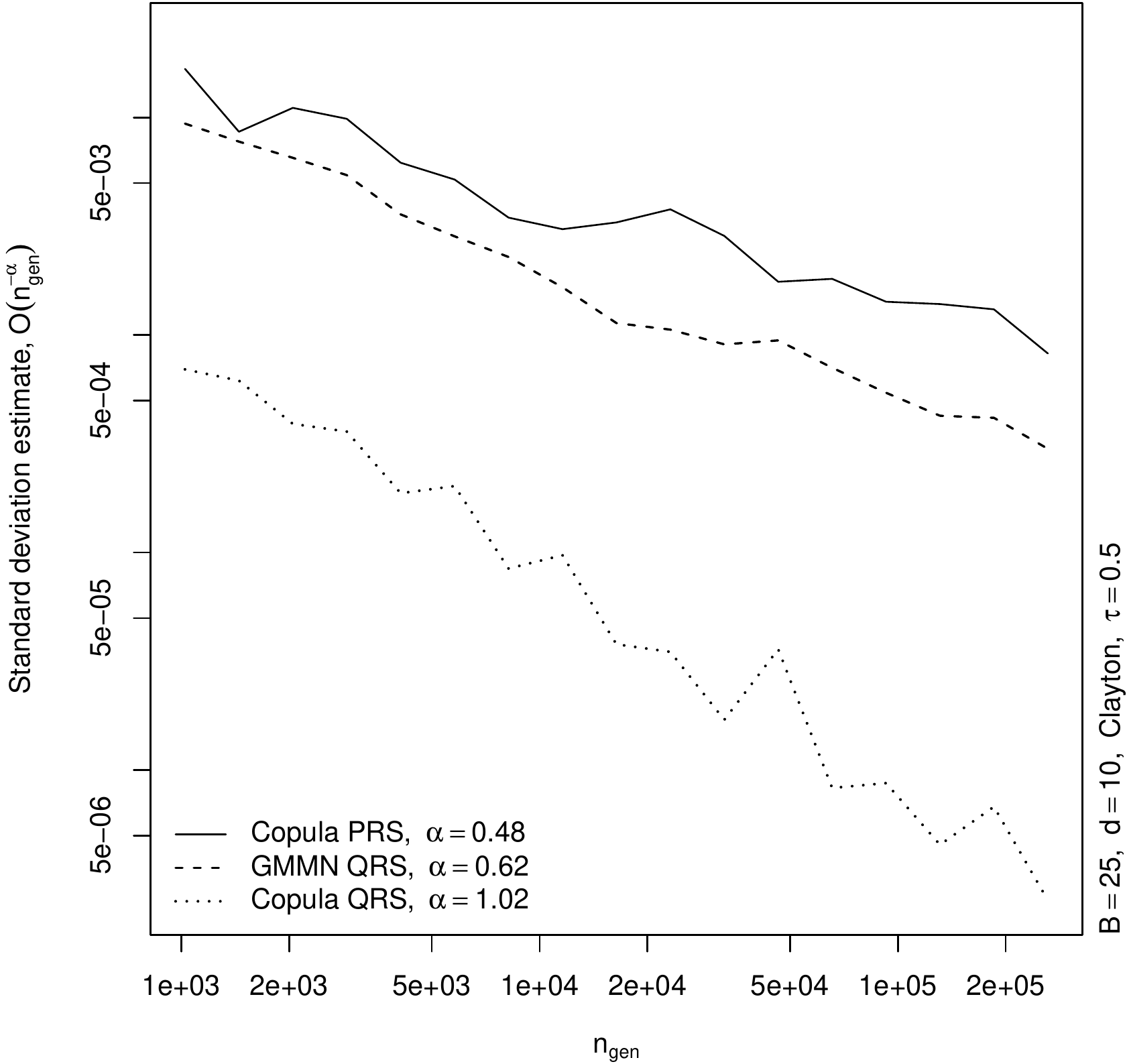}
  \includegraphics[width=0.32\textwidth]{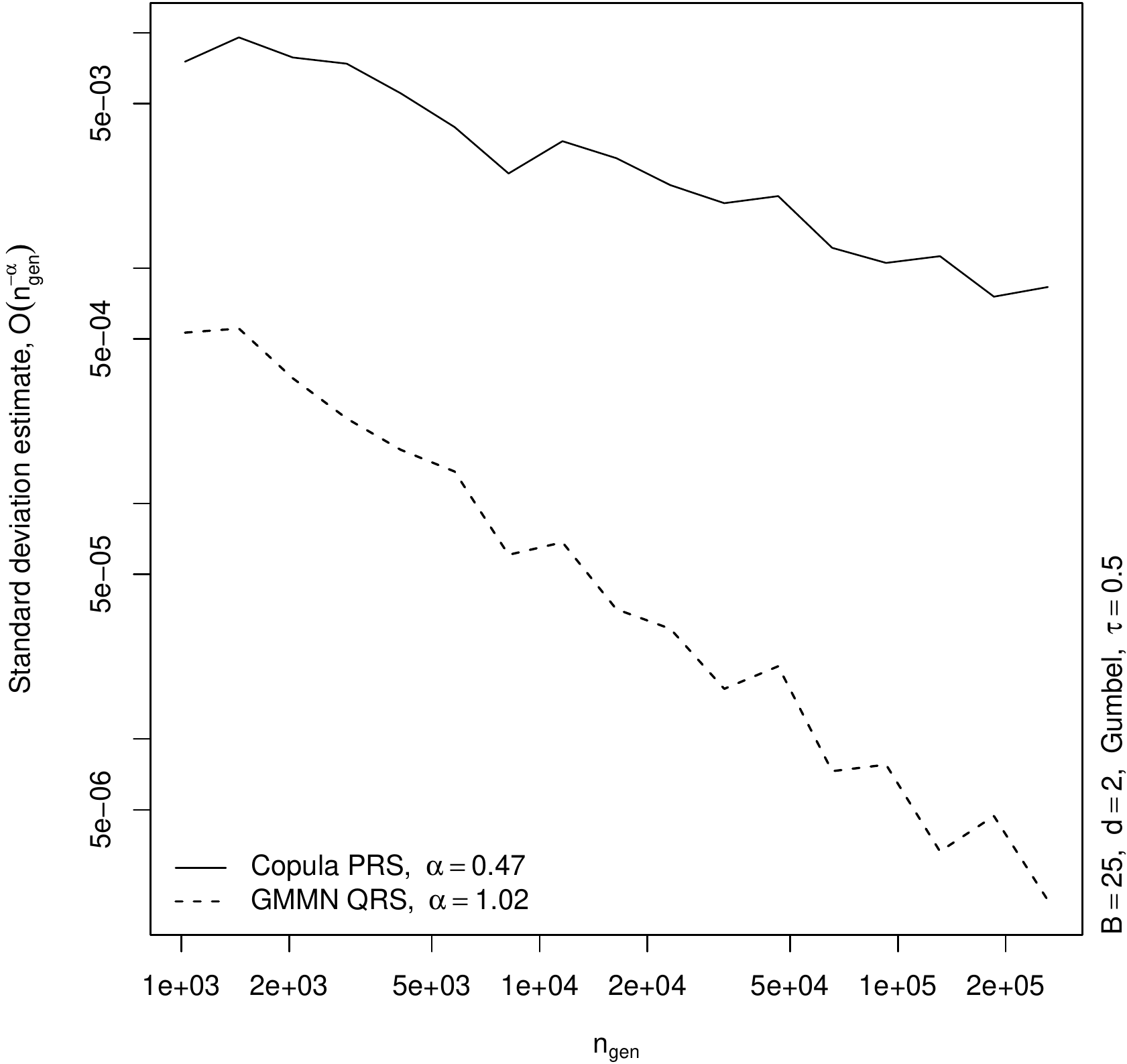}\hfill
  \includegraphics[width=0.32\textwidth]{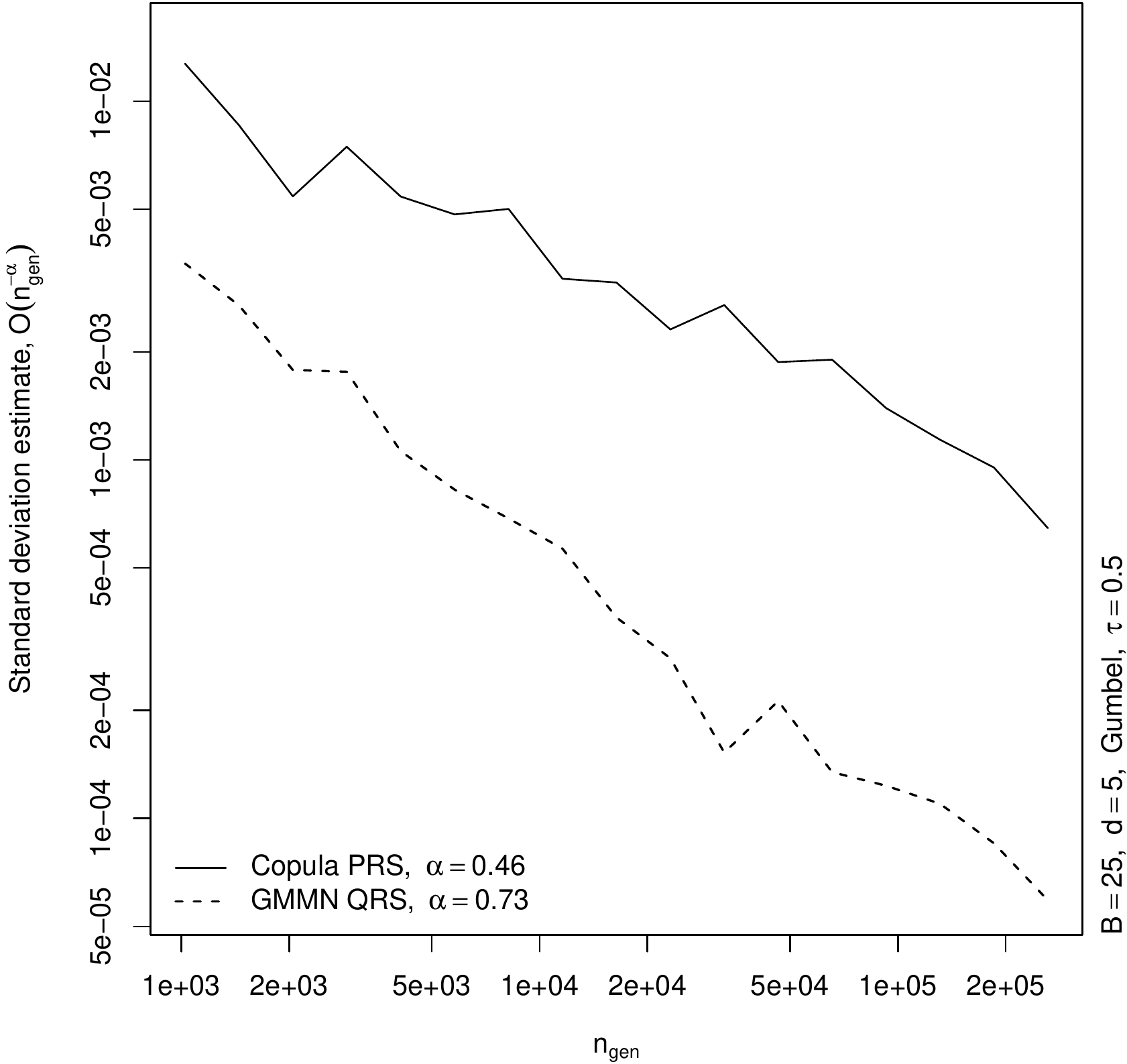}\hfill
  \includegraphics[width=0.32\textwidth]{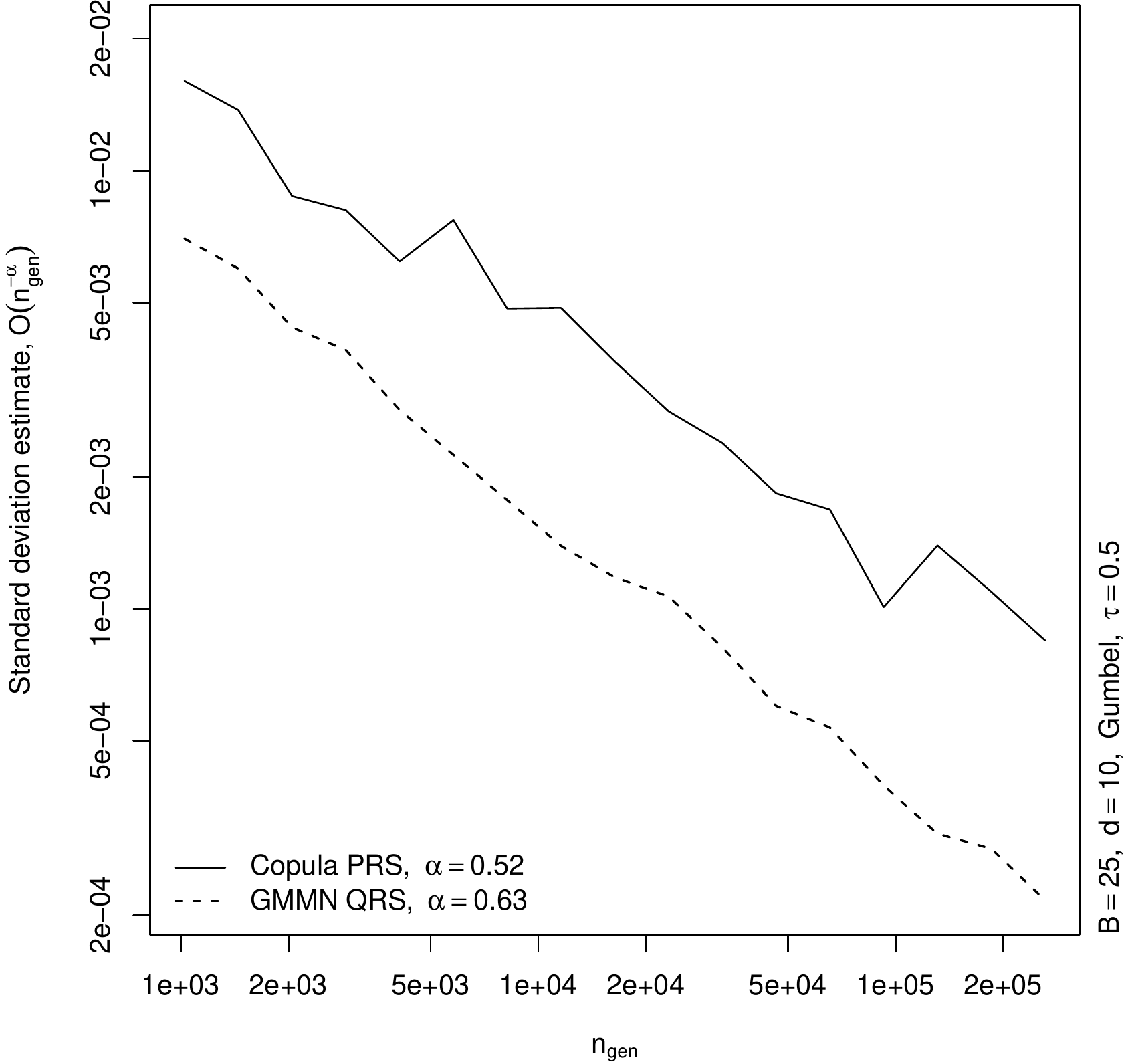}
  \caption{Standard deviation estimates based on $B=25$ replications
    for estimating $\E(\Psi_1(\bm{U}))$, expectation of the Sobol' g function,
    via MC based on a pseudo-random sample (PRS), via the copula RQMC estimator
    (whenever available; rows 1--2 only) and via the GMMN RQMC estimator.  Note
    that in rows 1--3, $d\in\{2,5,10\}$.}\label{fig:testfn2:scramble}
\end{figure}

Figure~\ref{fig:testfn2:scramble} shows plots of standard deviation
estimates for estimating $\E(\Psi_1(\bm{U}))$ for $t_4$ copulas (top row),
Clayton (middle row) and Gumbel (bottom row) copulas in dimensions $d=2$ (left
column), $d=5$ (middle column) and $d=10$ (right column). For the $t_4$ and
Clayton copulas we numerically compare the efficiency of the GMMN RQMC estimator
(with legend label ``GMMN QRS'') with the copula RQMC estimator based on the CDM
method (with legend label ``Copula QRS'') and the MC estimator (with legend
label ``Copula PRS''). For the Gumbel copula, however, the CDM approach
(``Copula QRS'') is computationally not feasible; see
  Section~\ref{sec:t:AC:mix} and Appendix~\ref{sec:timings}.  The legend of
each plot also provides the regression coefficient $\alpha$ which indicates the
convergence rate of each estimator.

From Figure~\ref{fig:testfn2:scramble}, we observe that the GMMN RQMC estimator
clearly outperforms the MC estimator. Naturally, so does the copula RQMC
estimator for the copulas for which it is available. On the one hand, the rate
of convergence of the GMMN RQMC estimator reduces with increasing copula
dimensions; see also the decreasing regression coefficients $\alpha$
when moving from the two- to the ten-dimensional case. On the other hand, the
GMMN RQMC estimator still outperforms the MC estimator.

\subsection{An example from risk management practice}\label{sec:ex:QRM}

Consider modeling the dependence of $d$ risk-factor changes (for example,
logarithmic returns) of a portfolio; see \cite[Chapters~2, 6 and
7]{mcneilfreyembrechts2015}. We now demonstrate the efficiency of our GMMN RQMC
estimator by considering the expected shortfall of the aggregate loss, a popular
risk measure in quantitative risk management practice.

Specifically, if $\bm{X}=(X_1,\dots,X_d)$ denotes a random vector of risk-factor
changes with $\N(0,1)$ margins, the aggregate loss is $S=\sum_{j=1}^{d}
X_j$. The \emph{expected shortfall} $\ES_{0.99}$ at level $0.99$ of $S$ is given by
\begin{align*}
  \ES_{0.99}(S)=\frac{1}{1-0.99} \int_{0.99}^{1}F^{-1}_{S}(u)\,\rd u=\E\bigl(S\,|\,S>F_S^{-1}(0.99)\bigr)=\E(\Psi_2(\bm{X})),
\end{align*}
where $F^{-1}_{S}$ denotes the quantile function of $S$. As done previously,
various copulas will be used to model the dependence between the components of
$\bm{X}$.

\begin{figure}[htbp]
  \centering
  \includegraphics[width=0.31\textwidth]{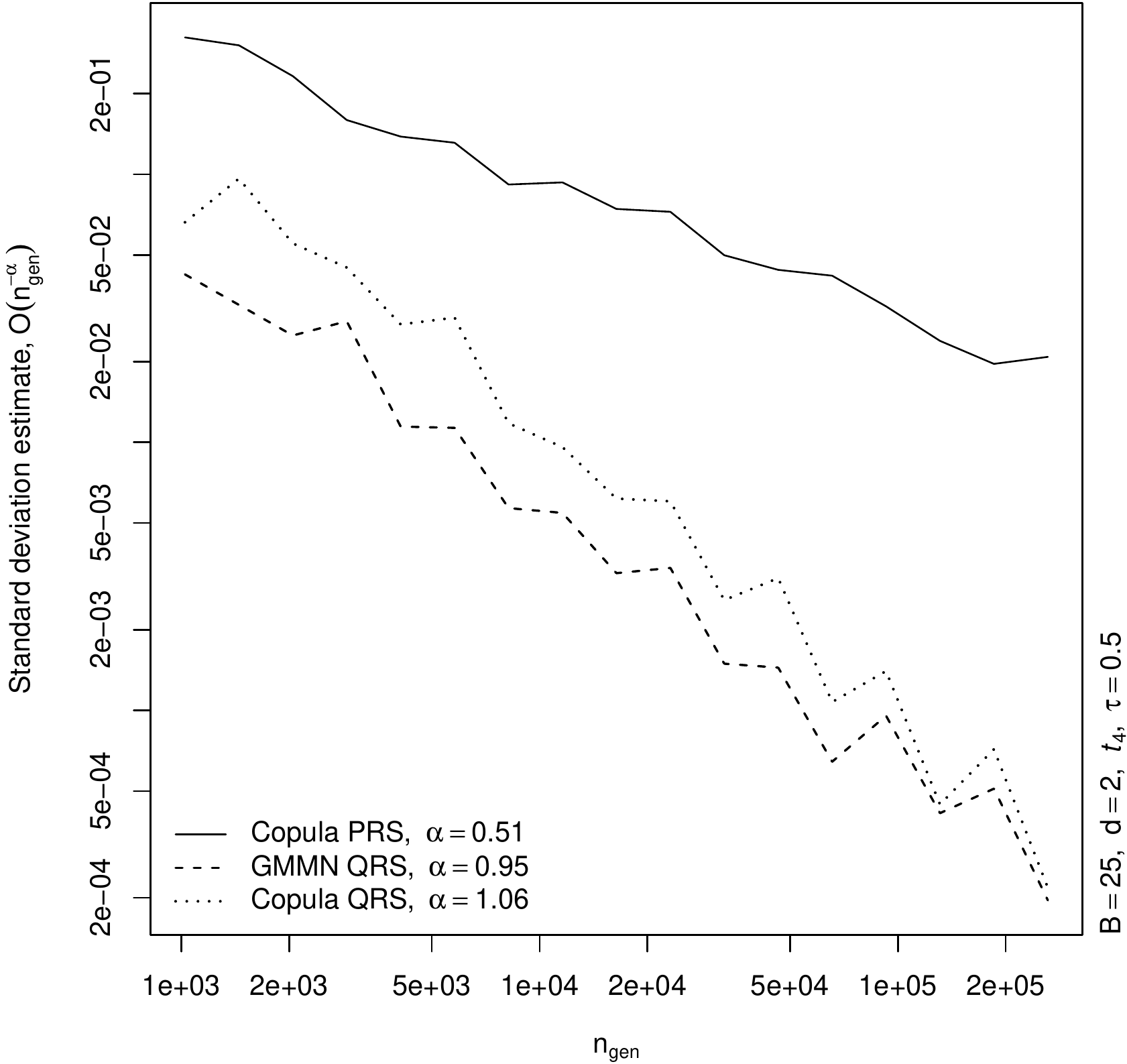}\hfill
  \includegraphics[width=0.31\textwidth]{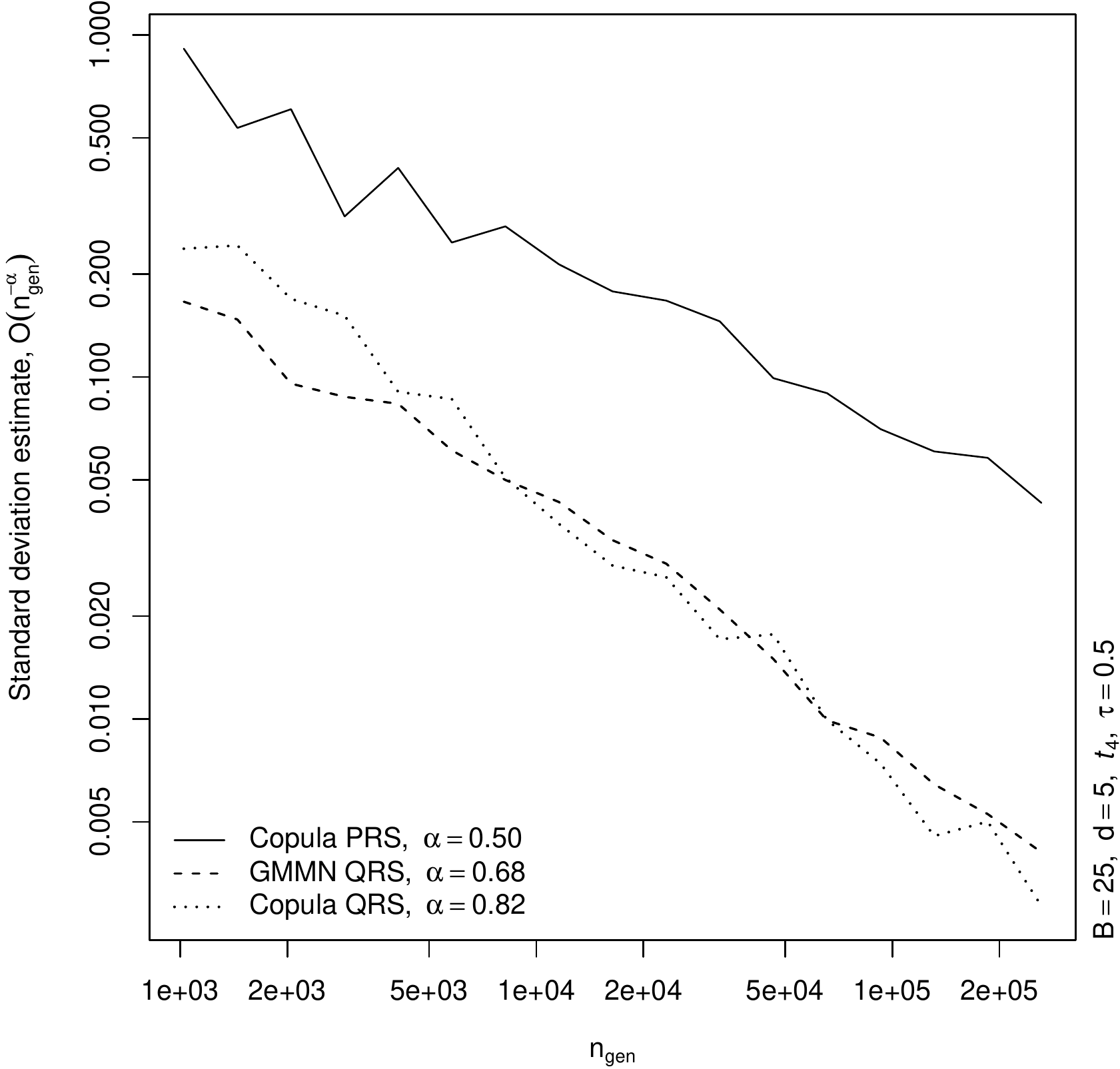}\hfill
  \includegraphics[width=0.31\textwidth]{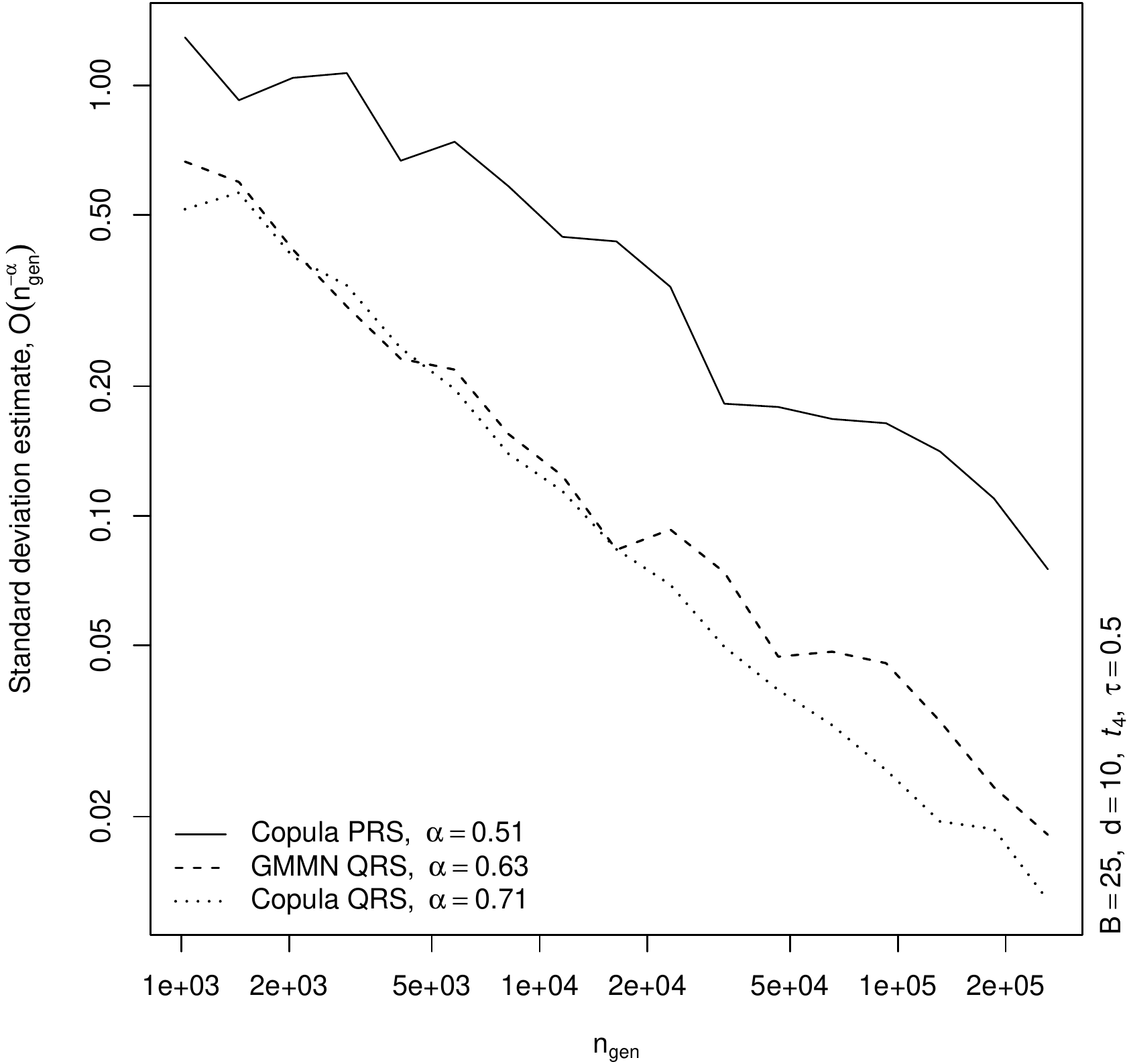}
  \includegraphics[width=0.31\textwidth]{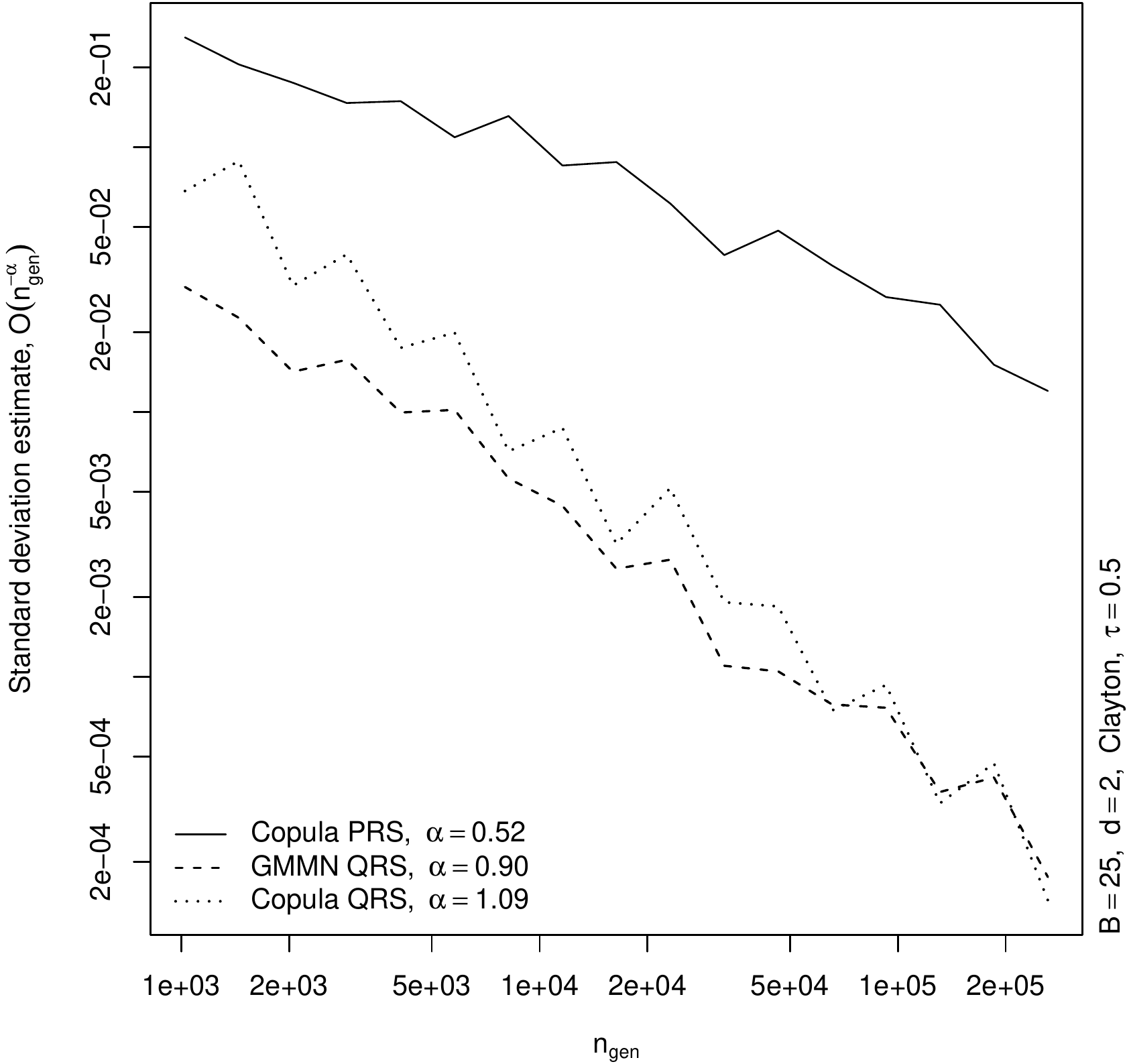}\hfill
  \includegraphics[width=0.31\textwidth]{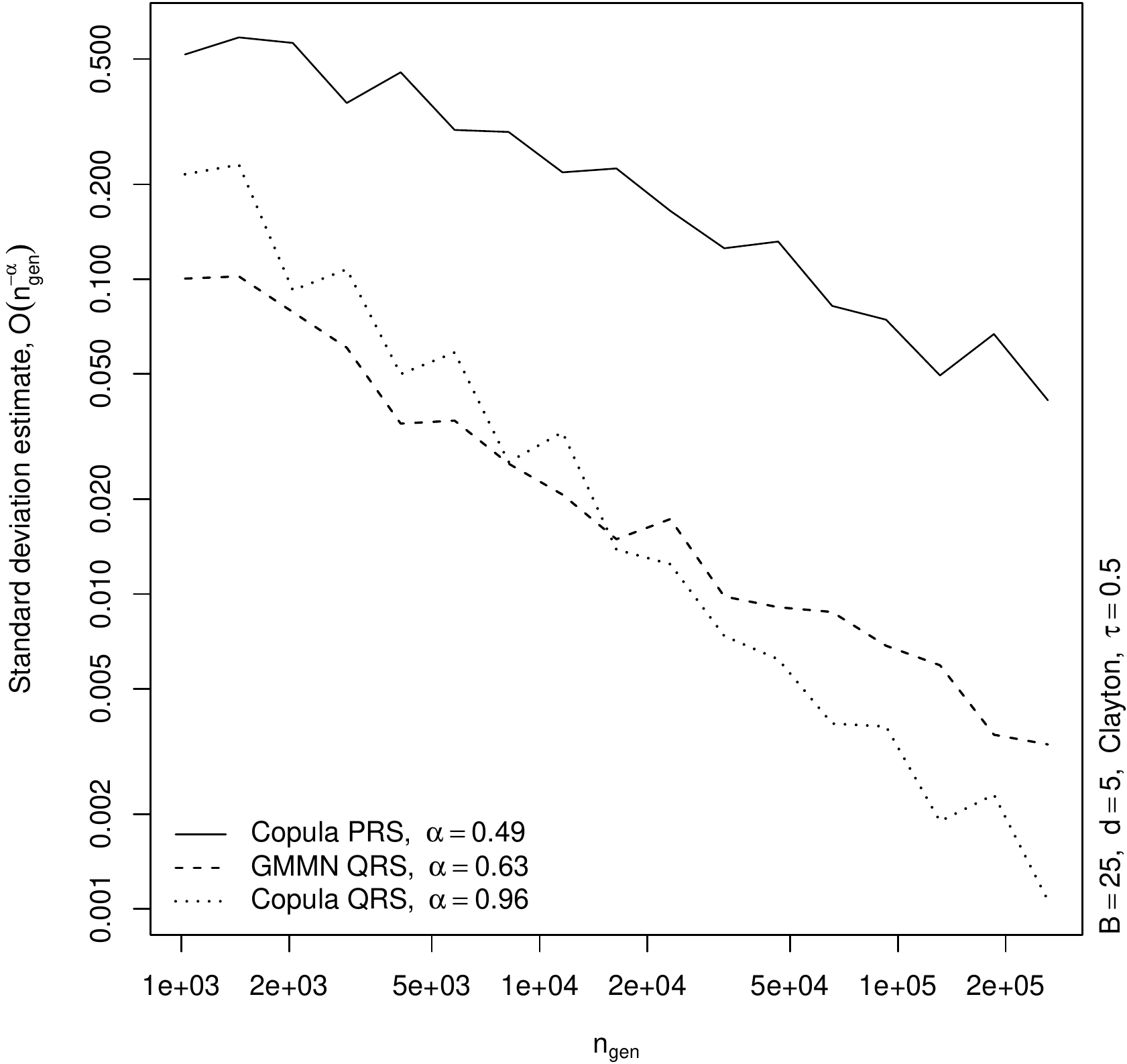}\hfill
  \includegraphics[width=0.31\textwidth]{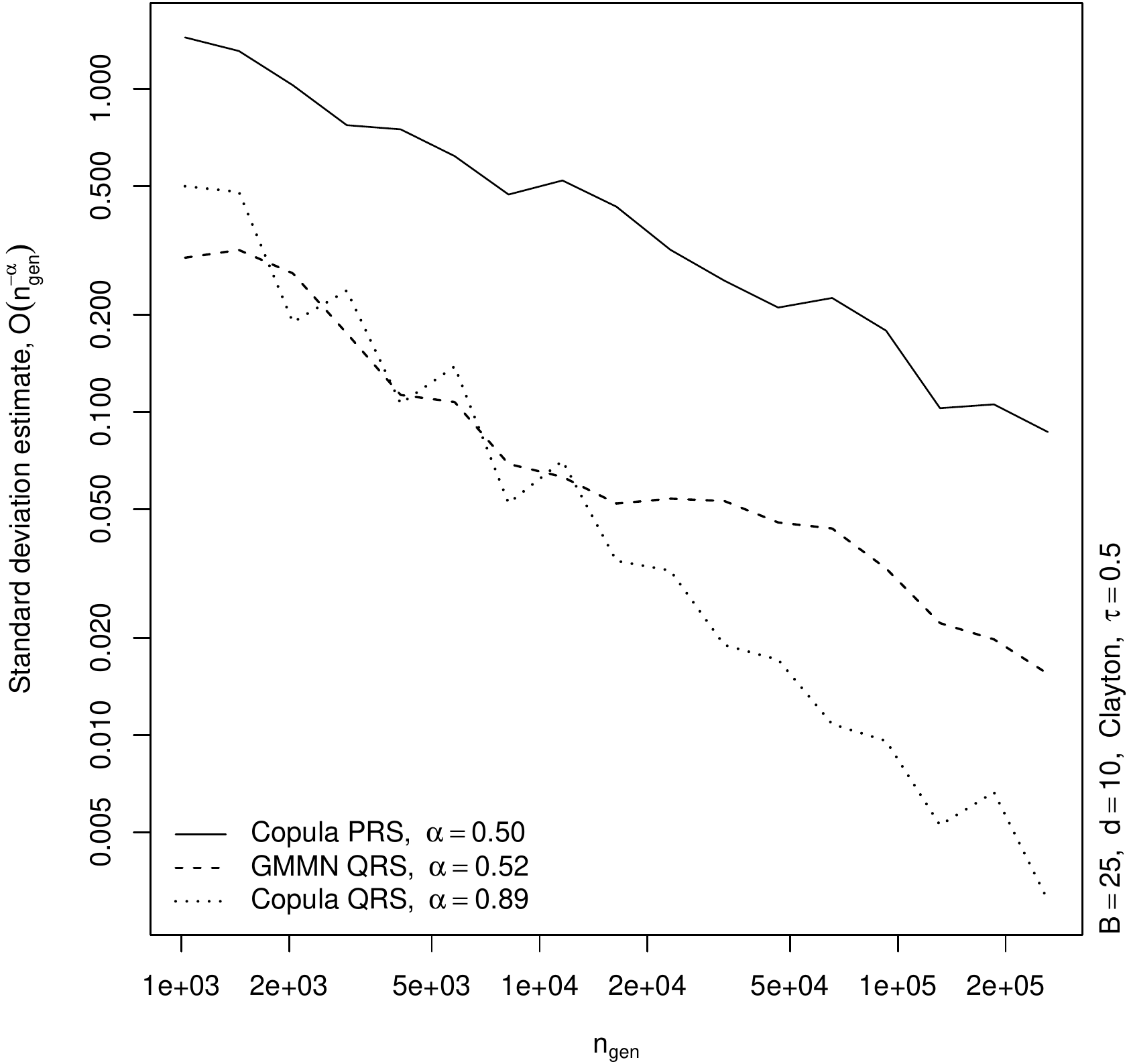}
  \includegraphics[width=0.31\textwidth]{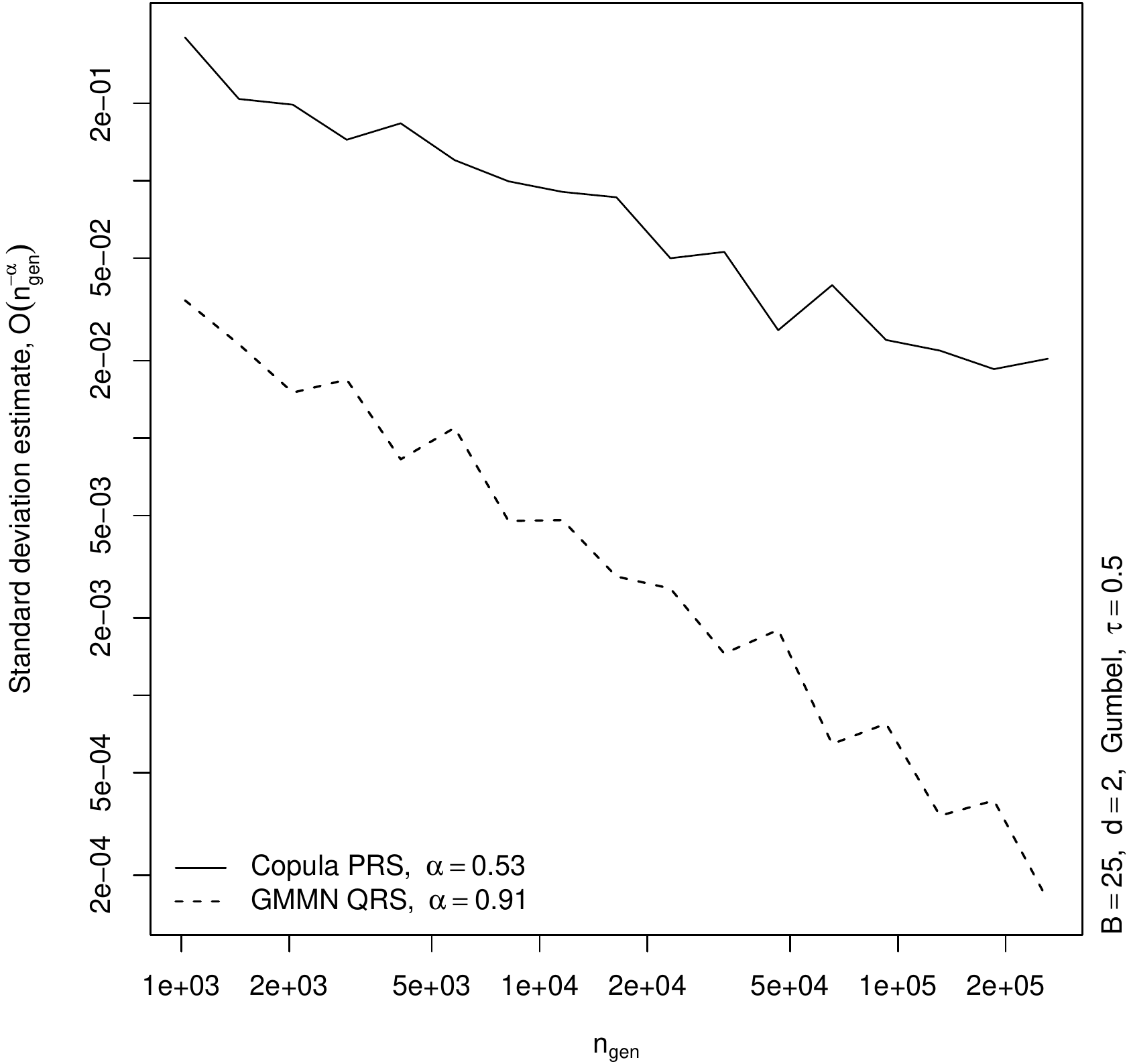}\hfill
  \includegraphics[width=0.31\textwidth]{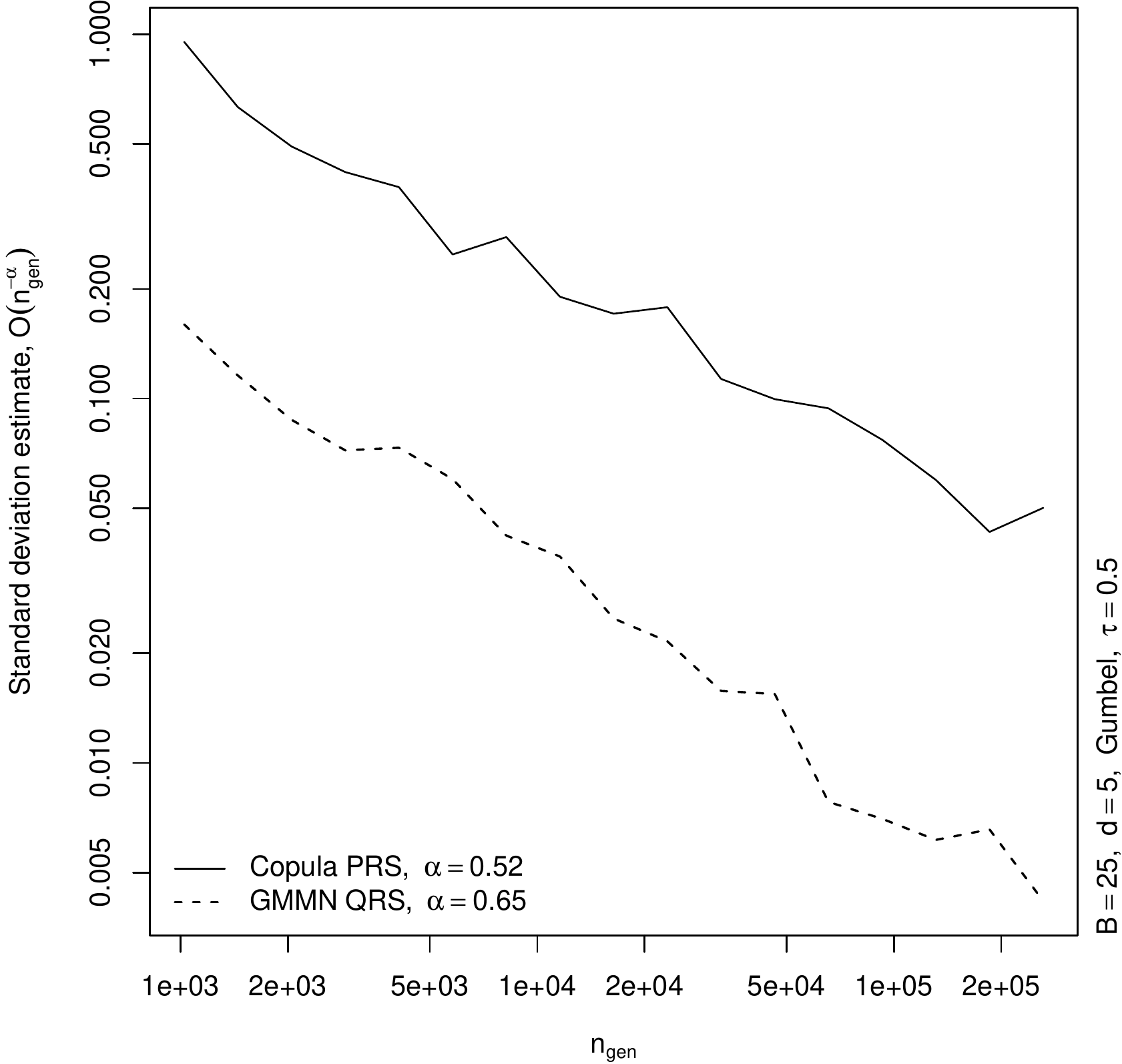}\hfill
  \includegraphics[width=0.31\textwidth]{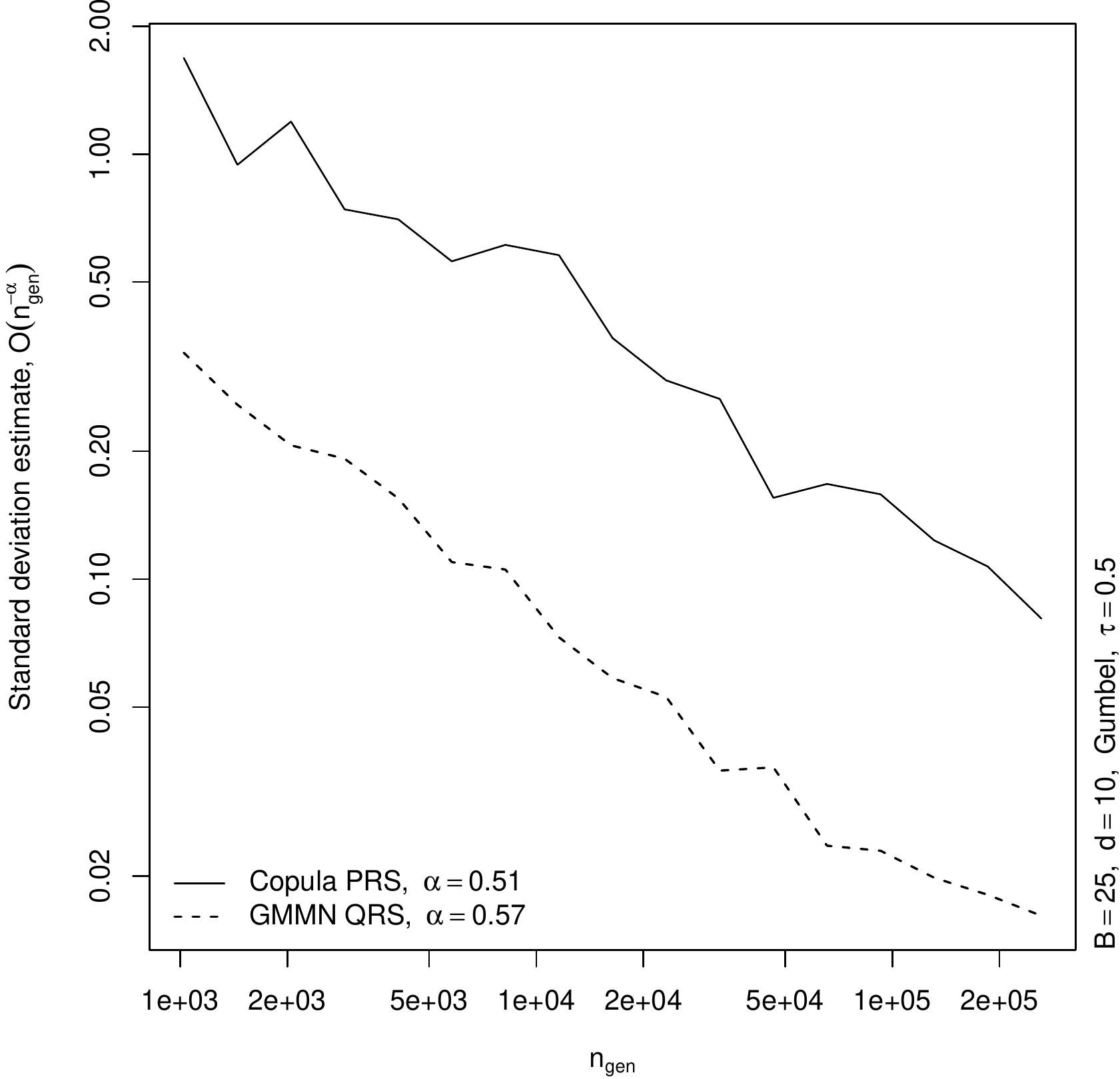}
  \includegraphics[width=0.31\textwidth]{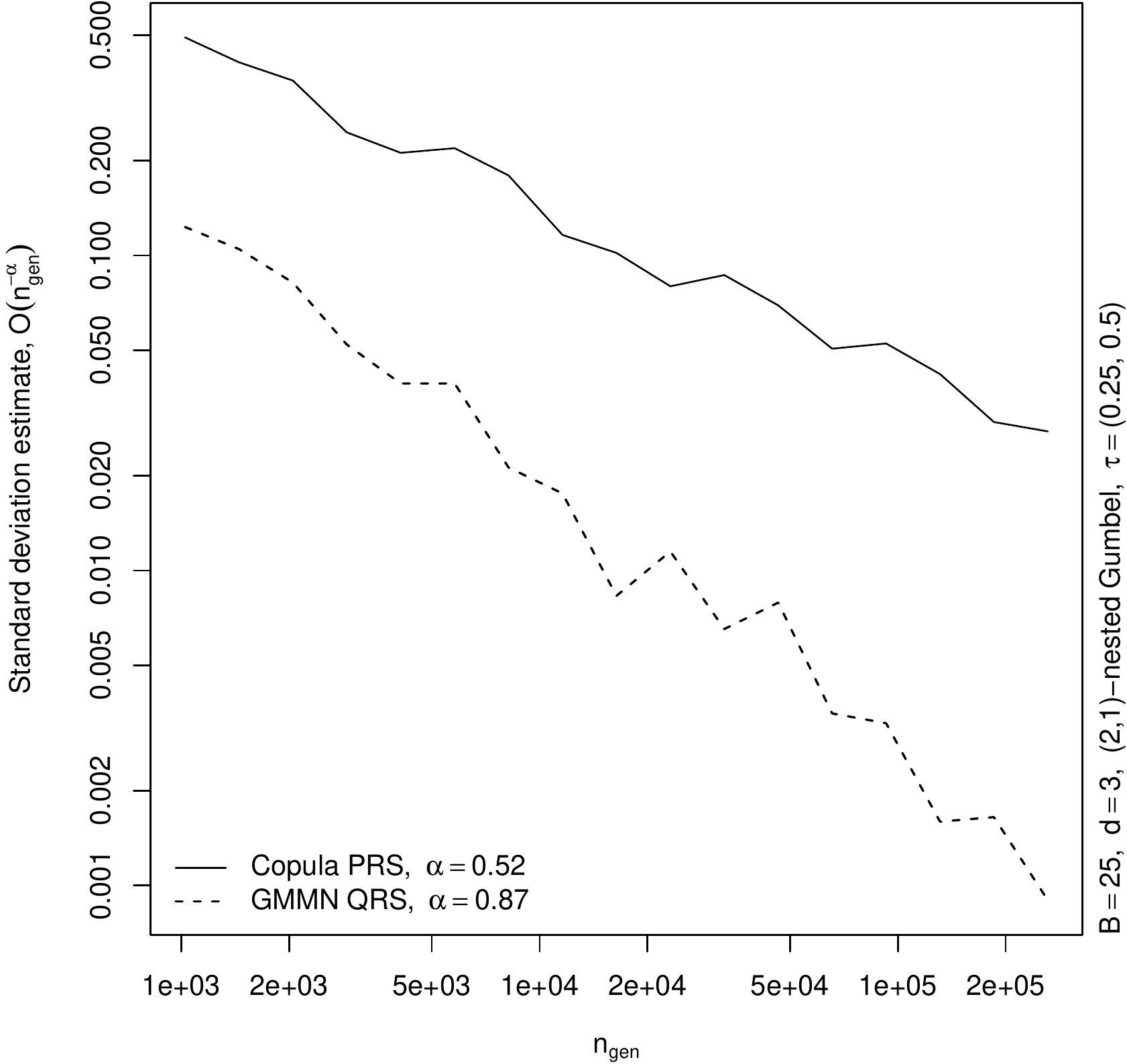}\hfill
  \includegraphics[width=0.31\textwidth]{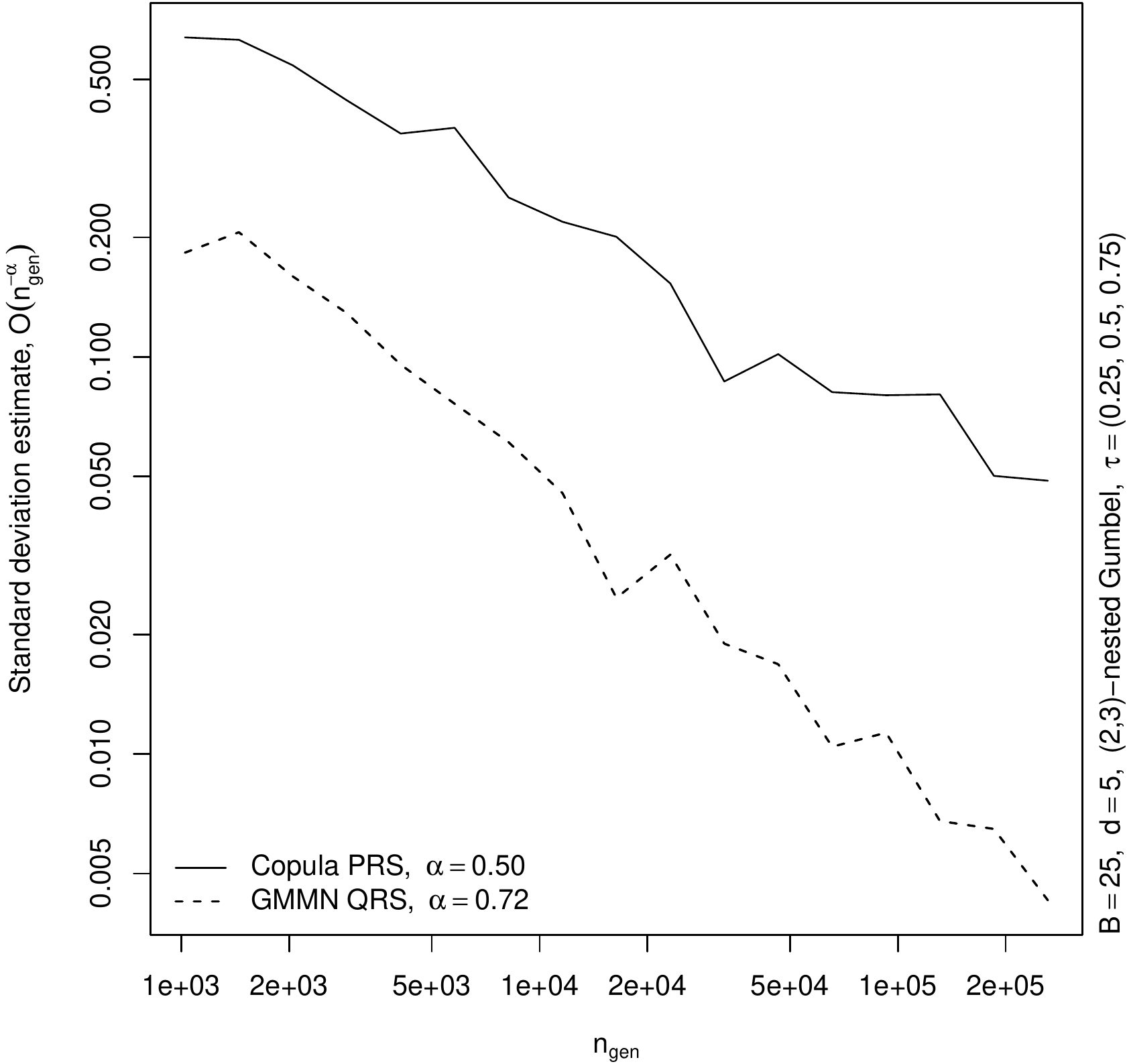}\hfill
  \includegraphics[width=0.31\textwidth]{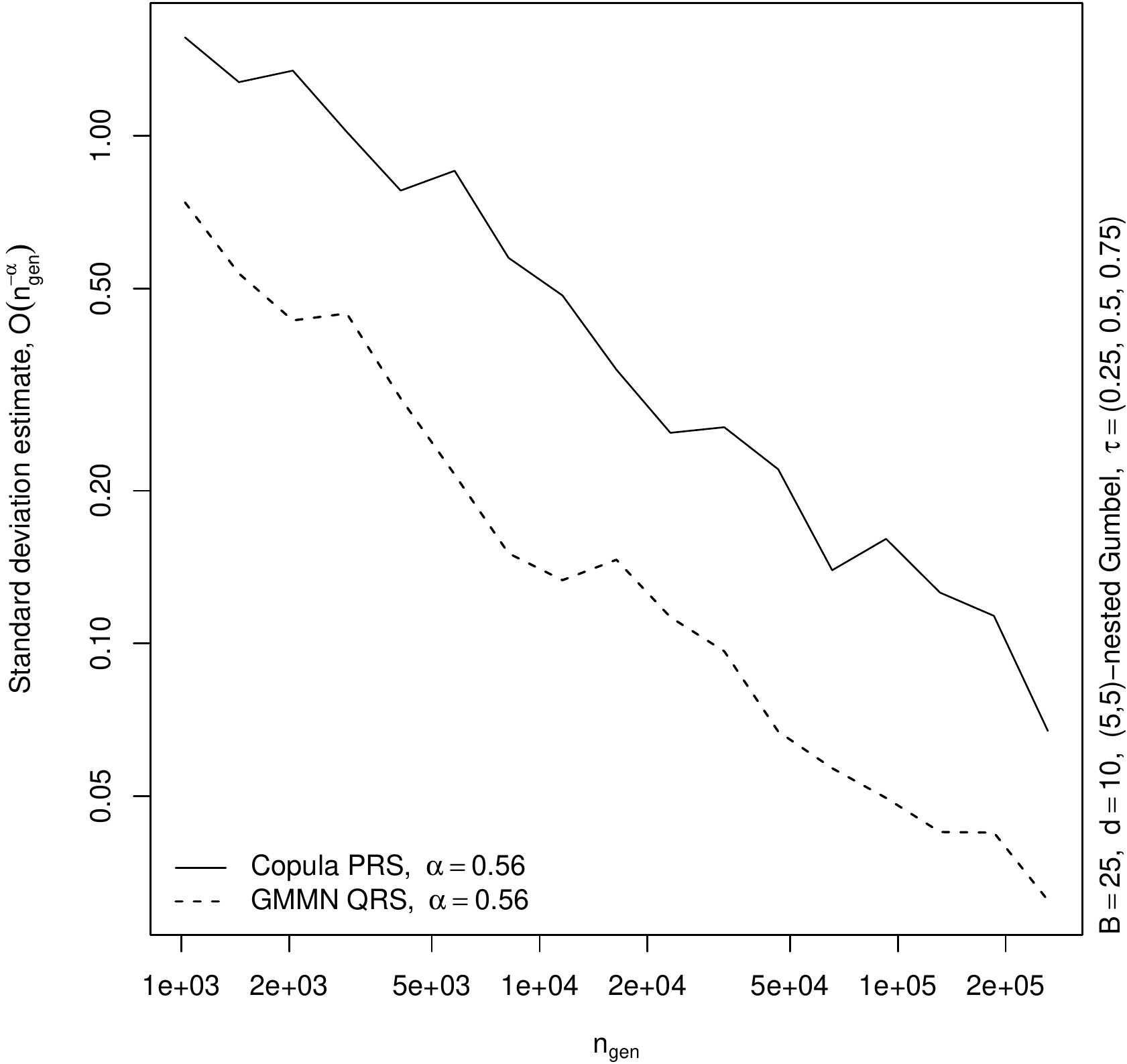}
  \caption{Standard deviation estimates based on $B=25$ replications for
    estimating $\E(\Psi_2(\bm{X}))$, the expected shortfall $\ES_{0.99}(S)$, via MC based on a PRS, via the copula RQMC
    (whenever available; rows 1--2 only) and via the GMMN RQMC estimator.  Note
    that in rows 1--3, $d\in\{2,5,10\}$, whereas in row 4,
    $d\in\{3,5,10\}$.}\label{fig:aggregateES:scramble}
\end{figure}

Figure~\ref{fig:aggregateES:scramble} shows plots of standard deviation
estimates for estimating $\E(\Psi_2(\bm{X}))$. The first three rows contain
results for the same copula models as considered in Section~\ref{sec:testfun}. The
fourth row contains results for nested Gumbel copula models with dimension
$d = 3$ (left column), $d = 5$ (middle column) and $d = 10$ (right column). The
specific hierarchical structures and parameterization have been described
earlier in Section \ref{sec:GMMN:accuracy};
note that there is no quasi-random sampling procedure known for these
  copulas.  We can observe from the plots that the GMMN RQMC
estimator outperforms the MC estimator. Similar as before, we see a decrease in
the convergence rate of the GMMN RQMC estimator as the copula dimension
increases, although it still outperforms the MC estimator.

\section{A real-data example}\label{sec:realdata}
In this section, we present examples based on real-data to show how our
  method can be useful in practice. To this end, we consider applications from
  finance and risk management. Such applications often involve the modeling of
dependent multivariate return data in order to estimate various quantities
$\mu$ of interest. In this context, utilizing GMMNs for dependence
modeling can yield two key advantages. Firstly, GMMNs are highly flexible and
hence can model dependence structures not adequately captured by prominent
parametric copula models; see, e.g., \cite{hofertoldford2018} for the latter
point.  Secondly, as demonstrated in Sections~\ref{sec:GMMN:copula}
and~\ref{sec:conv:analysis}, one can readily generate GMMN quasi-random samples
to achieve variance reduction when estimating $\mu$; this is especially
advantageous as oftentimes oversimplified parametric models are chosen just so
that this can be achieved.  In this section, we model asset portfolios
consisting of S\&P 500 constituents to showcase these advantages. All results
are reproducible with the demo \texttt{GMMN\_QMC\_data} of the \R\ package
\texttt{gnn}.

\subsection{Portfolios of S\&P 500 constituents}
We consider 10 constituent time series from the S\&P 500 in the time period from
1995-01-01 to 2015-12-31. The selected constituents include three stocks from
the information technology sector --- Intel Corp.\ (INTC), Oracle Corp.\ (ORCL)
and International Business Machines Corp.\ (IBM); three stocks from the
financial sector --- Capital One Financial Corp.\ (COF), JPMorgan Chase \& Co.\
(JPM) and American International Group Inc (AIG); and four stocks from the
industrial sector --- 3M Company (MMM), Boeing Company (BA), General Electric
(GE) and Caterpillar Inc.\ (CAT). We also investigate sub-portfolios of stocks
with dimensions $d=5$ (consisting of INTC, ORCL, IBM, COF and AIG) and $d=3$
(consisting of INTC, IBM and AIG). The data are obtained from the \R\ package
\texttt{qrmdata}.

To account for marginal temporal dependencies, we follow the copula--GARCH
approach \parencite{jondeaurockinger2006,patton2006} and model each marginal time
series of log-returns by a $\ARMA(1,1)$--$\GARCH(1,1)$
model with standardized $t$ innovation distributions (\emph{deGARCHing}).
We then extract the marginal standardized residuals (i.e., the realizations of
the standardized $t$ innovations) and compute, for each of the three portfolios,
their pseudo-observations for the purpose of modeling the cross-sectional
dependence among the corresponding portfolio's log-return series.

\subsection{Assessing the fit of dependence models}
As models for the pseudo-observations of each of the three portfolios we use
prominent parametric copulas (Gumbel, Clayton, exchangeable normal,
unstructured normal, exchangeable $t$ and unstructured $t$) and GMMNs of the
same architecture and with the same training setup as detailed in
Section~\ref{sec:details}. The rather small number of training data points
($\ntrn=5287$) allows us to use $\nbat=\ntrn$ here and hence directly train with the entire dataset.
All parametric copulas are fitted using the maximum
pseudo-likelihood method; see \cite[Section~4.1.2]{hofertkojadinovicmaechleryan2018}.

To evaluate the fit of a dependence model, we use a Cram\'{e}r-von-Mises type
test statistic introduced by \cite{remillardscaillet2009} to assess
the equality of two empirical copulas. This statistic is defined as
\begin{align*}
  S_{\ntrn,\ngen} = \int_{[0,1]^d} \bigg(\sqrt{\frac{1}{\ngen}+\frac{1}{\ntrn}}\bigg)^{-1}\bigg(C_{\ngen}(\bm{u})-C_{\ntrn}(\bm{u})\bigg)^2\,\rd \bm{u},
\end{align*}
where $C_{\ngen}(\bm{u})$ and $C_{\ntrn}(\bm{u})$ are the empirical copulas, defined according to \eqref{eq:emp:copula}, of the $\ngen$ samples generated from the fitted
dependence model and the $\ntrn$ pseudo-observations used to fit the dependence
model, respectively. For how $S_{\ntrn,\ngen}$ is evaluated, see
\cite[Section~2]{remillardscaillet2009}.

For each of the three portfolios and each of the seven dependence models
considered, we compute $B$ realizations of $S_{\ntrn,\ngen}$ based on
$\ngen=10\,000$ pseudo-random samples generated from the fitted
dependence model and the $\ntrn=5287$ pseudo-observations of each
portfolio considered. Figure~\ref{fig:gof:data} displays box plots depicting the distribution of
$S_{\ntrn,\ngen}$ for each portfolio and dependence
model. Across all three portfolios, we can observe that the distribution of
$S_{\ntrn,\ngen}$ based on GMMN models is concentrated closer to zero than those
of each fitted parametric copula. In fact, the difference in distributions of
$S_{\ntrn,\ngen}$ realizations between GMMN models and the best parametric
copula model (a $t$-copula with unstructured correlation matrix) is most noticeable
for $d=10$, where an adequate fit becomes more challenging for these
parametric copulas. For each of the three portfolios, a GMMN provides the
best fit. Hence, we use these fitted GMMNs to model the
  underlying dependence structure for the three portfolios in each of three
  applications considered next.
\begin{figure}[htbp]
  \centering
  \includegraphics[width=0.32\textwidth]{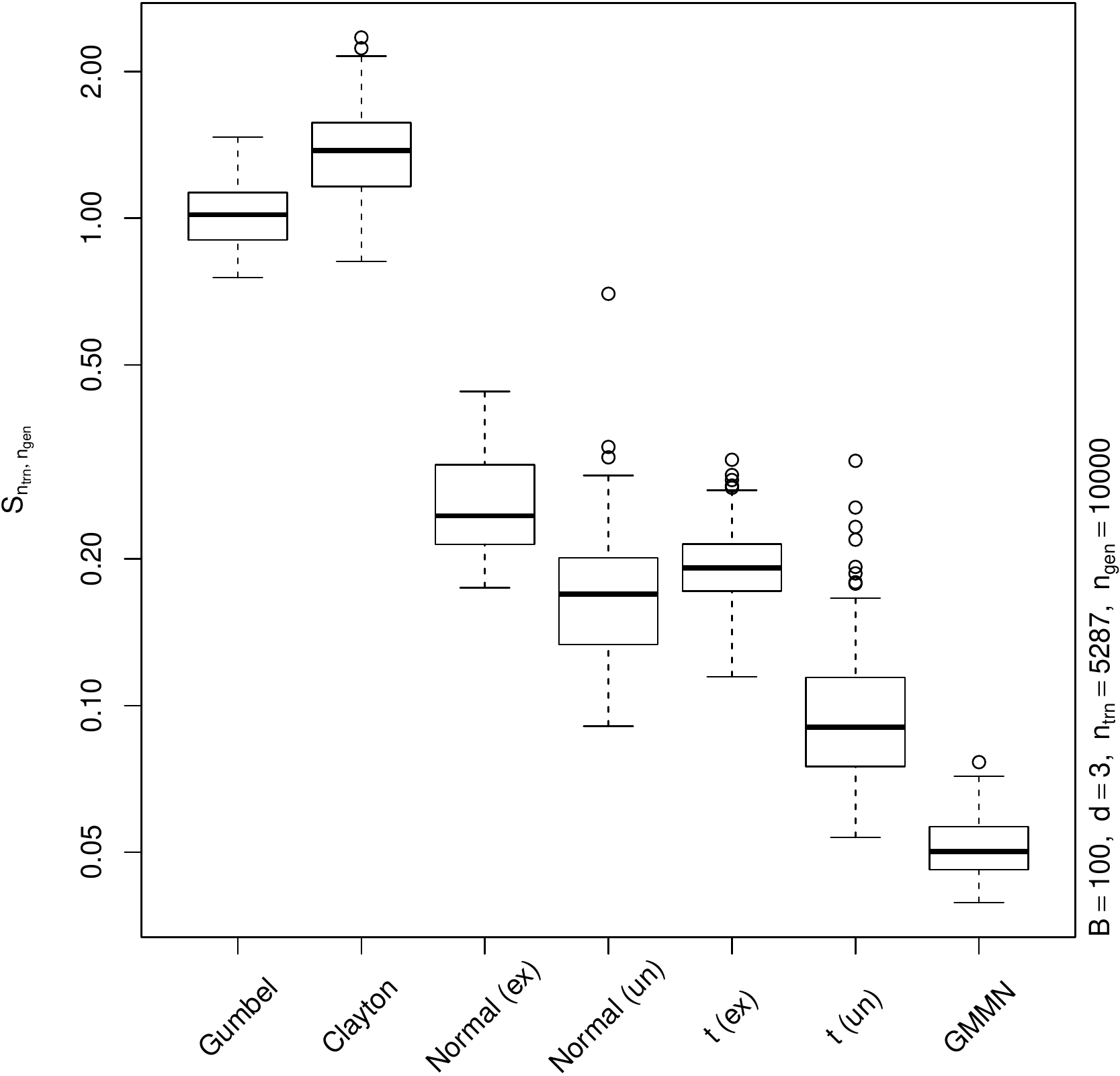}\hfill
  \includegraphics[width=0.32\textwidth]{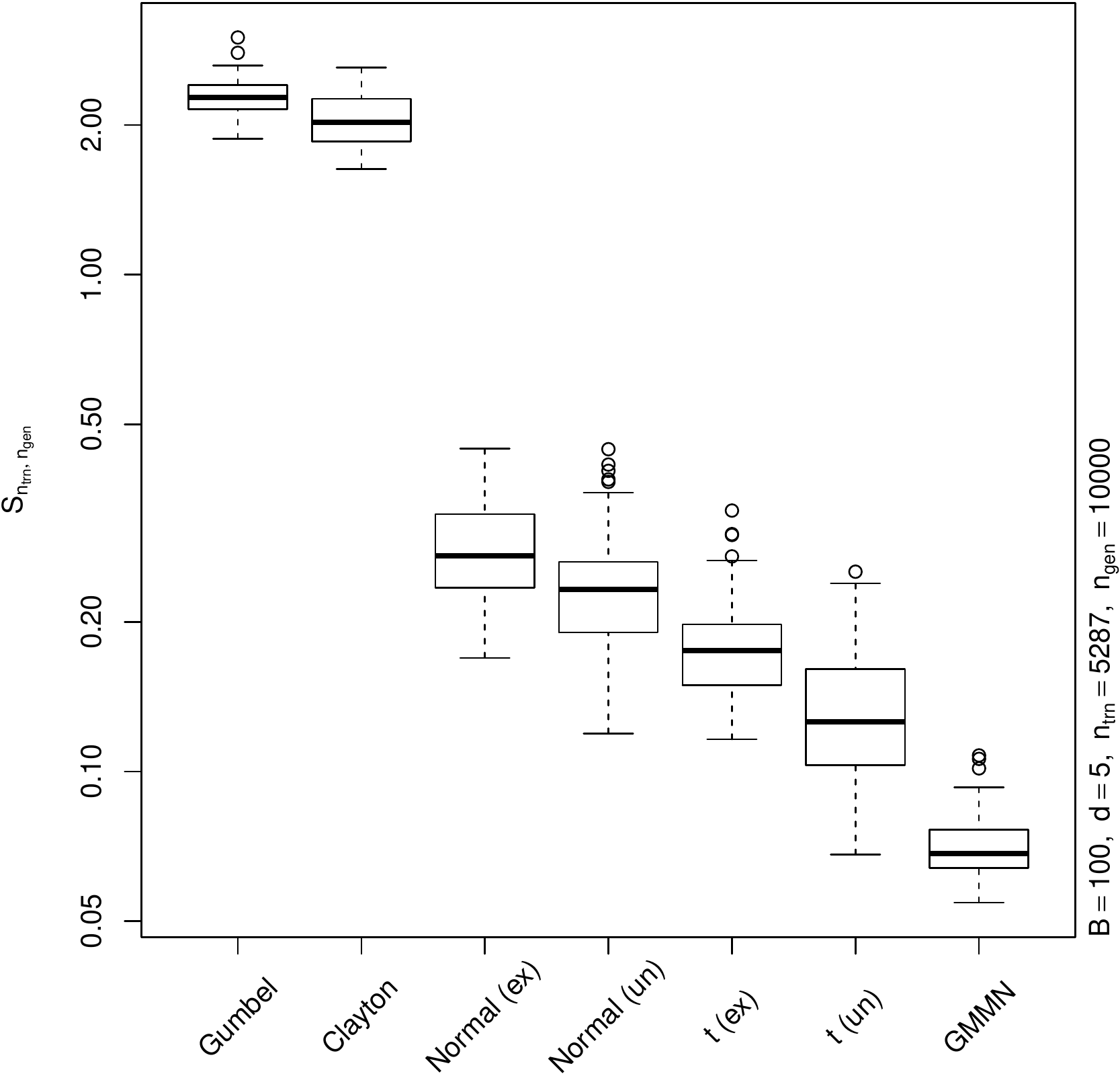}\hfill
  \includegraphics[width=0.32\textwidth]{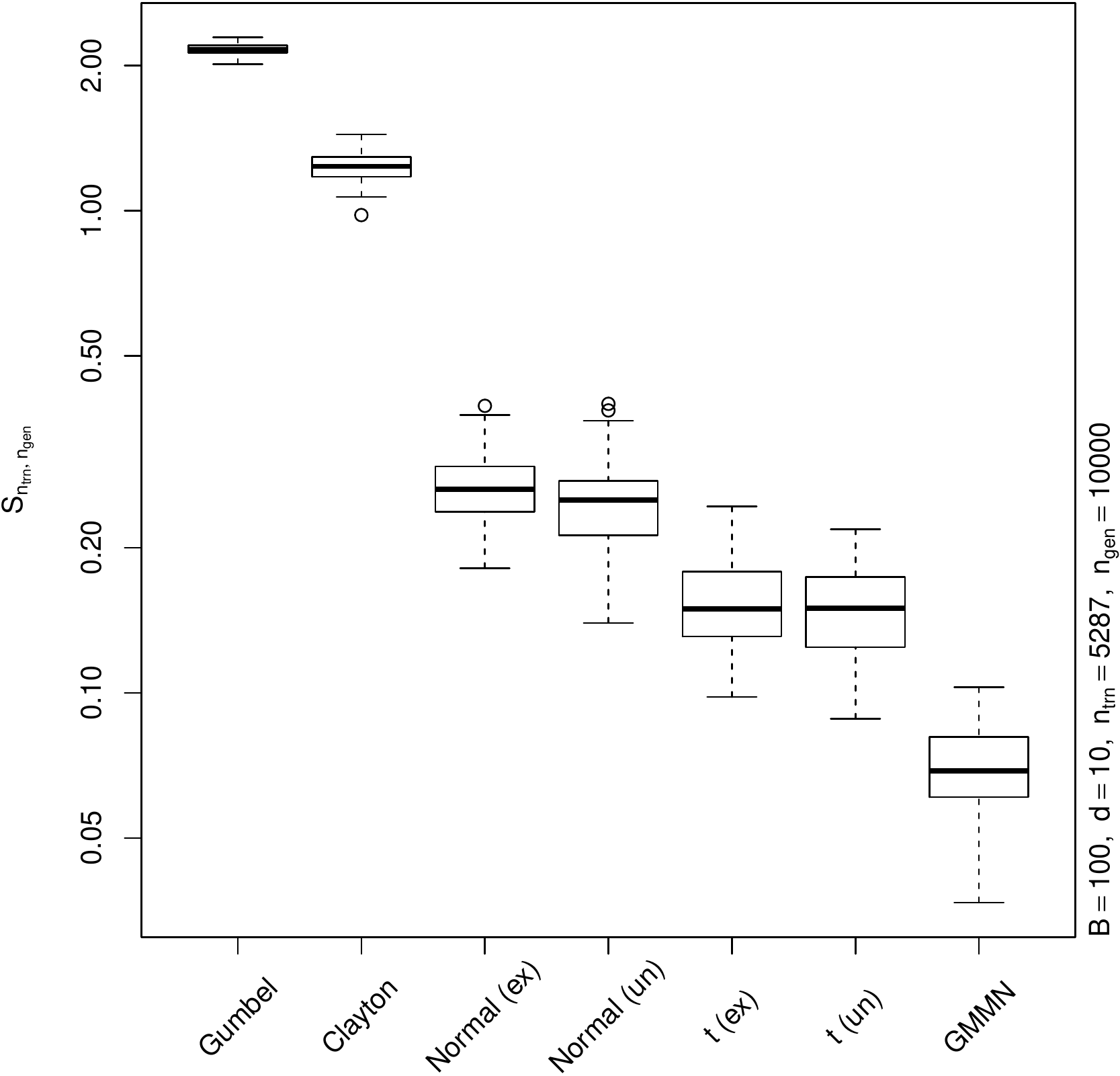}
  \caption{Box plots based on $B=100$ realizations of $S_{\ntrn,\ngen}$ computed
    for portfolios of dimensions $d=3$ (left), $d=5$ (middle) and $d=10$
    (right) and for each fitted dependence model using a pseudo-random sample of
    size $\ngen=10\,000$ from each corresponding fitted model.}\label{fig:gof:data}
\end{figure}

\subsection{Assessing the variance reduction effect}\label{sec:data:vr}
In three applications we study the variance reduction effect of our GMMN RQMC
estimator $\hat{\mu}^{\text{NN}}_{\ngen}$ computed from quasi-random samples in comparison to the GMMN MC estimator
$\hat{\mu}^{\text{NN},\text{MC}}_{\ngen}$ computed from pseudo-random samples.

Our first application concerns the estimation of the expected shortfall
$\mu=\ES_{0.99}(S)$ for $S=\sum_{j=1}^dX_j$ as in Section~\ref{sec:ex:QRM},
where the margins of $\bm{X}=(X_1,\dots,X_d)$ are now the fitted standardized $t$ distributions
as obtained by deGARCHing and the dependence structure is the previously fitted
GMMN. This is a classical task in risk management practice according to the
Basel guidelines. As a second application we consider a capital allocation
problem which concerns estimating how to allocate an amount of risk capital
(e.g., computed as $\ES_{0.99}(S)$) to each of $d$ business lines. Without loss
of generality, we consider one business line, the first, and estimate the
\emph{expected shortfall contribution}
$\mu=\AC_{1,0.99}=\E(X_1\,|\,S>F_S^{-1}(0.99))$ according to the Euler
principle; see \cite[Section~8.5]{mcneilfreyembrechts2015}. Our third
application comes from finance and concerns the estimation of the expected
payoff $\mu=\E(\exp(-r(T-t))\max\{(\sum_{j=1}^d S_{T,j})-K,0\})$ of a European
basked call option, where $r$ denotes the continuously compounded annual
risk-free interest rate, $t$ denotes the current time point, $T$ the maturity in years
and $K$ the strike price. We assume a Black--Scholes framework for the marginal stock prices
$(S_{T,1},\dots,S_{T,d})$ at maturity $T$, so
$S_{T,j}\sim\LN(\log(S_{t,j})+(r-\sigma_j^2/2)(T-t),\ \sigma_j^2(T-t))$, where
$S_{t,j}$ denotes the last available stock price of the $j$th constituent (i.e.,
the close price on 2015-12-31) and $\sigma_j$ denotes the volatility of the
$j$th constituent (estimated by the standard deviation of its log-returns over
the time period from 2014-01-01 to 2015-12-31). The dependence structure of
$(S_{T,1},\dots,S_{T,d})$ is modeled by the previously fitted GMMN.
Furthermore, we chose $t=0$ to be the last available point in the
data period considered (i.e., 2015-12-31), $T=1$ and $r=0.01$. The strike prices $K$
were chosen about 0.5\% higher than the average stock price of all stocks
in the respective portfolio at $t=0$.

For each of the three portfolios and for each of the three expectations $\mu$
considered, we compute $B=200$ realizations of the GMMN MC estimator
$\hat{\mu}^{\text{NN},\text{MC}}_{\ngen}$ and the GMMN RQMC estimator
$\hat{\mu}^{\text{NN}}_{\ngen}$, using $\ngen=10^5$ samples for both
estimators. Figure~\ref{fig:VRF:data} displays box plots of these realizations
of $\hat{\mu}^{\text{NN},\text{MC}}_{\ngen}$ (with x-axis label ``GMMN PRS'')
and of $\hat{\mu}^{\text{NN}}_{\ngen}$ (with x-axis label ``GMMN QRS'') for
$\ES_{0.99}$ (left column), $\AC_{1,0.99}$ (middle column), the expected payoff
of the basket call option (right column) and for the portfolio in dimension
$d=3$ (top row), $d=5$ (middle row) and $d=10$ (bottom row). Additionally, to
quantify the variance reduction effect of $\hat{\mu}^{\text{NN}}_{\ngen}$ over
$\hat{\mu}^{\text{NN},\text{MC}}_{\ngen}$, we report in the secondary y-axis of
each box plot the estimated variance reduction factor (VRF) --- namely, the sample variance of $\hat{\mu}^{\text{NN},\text{MC}}_{\ngen}$ over the sample variance of
$\hat{\mu}^{\text{NN}}_{\ngen}$ --- and the corresponding improvement in percentages.

From Figure~\ref{fig:VRF:data}, we observe that $\hat{\mu}^{\text{NN}}_{\ngen}$
is able to reduce the variance in all considered applications and across all
dimensions. While variance reduction is diminished in higher dimensions ($d=10$),
the GMMN RQMC estimator is still immensely useful in estimating
expectations $\mu$ for three reasons. Firstly, as demonstrated in the previous
section, GMMNs best fit the underlying dependence structure of the
data. Secondly, unlike many parametric copulas, we can generate quasi-random samples
independent of the type of dependence structure observed in the data. Finally,
we can generate GMMN quasi-random samples at no additional cost over
GMMN pseudo-random samples; see also Appendix~\ref{sec:timings}.

\begin{figure}[htbp]
  \centering
  \includegraphics[width=0.32\textwidth]{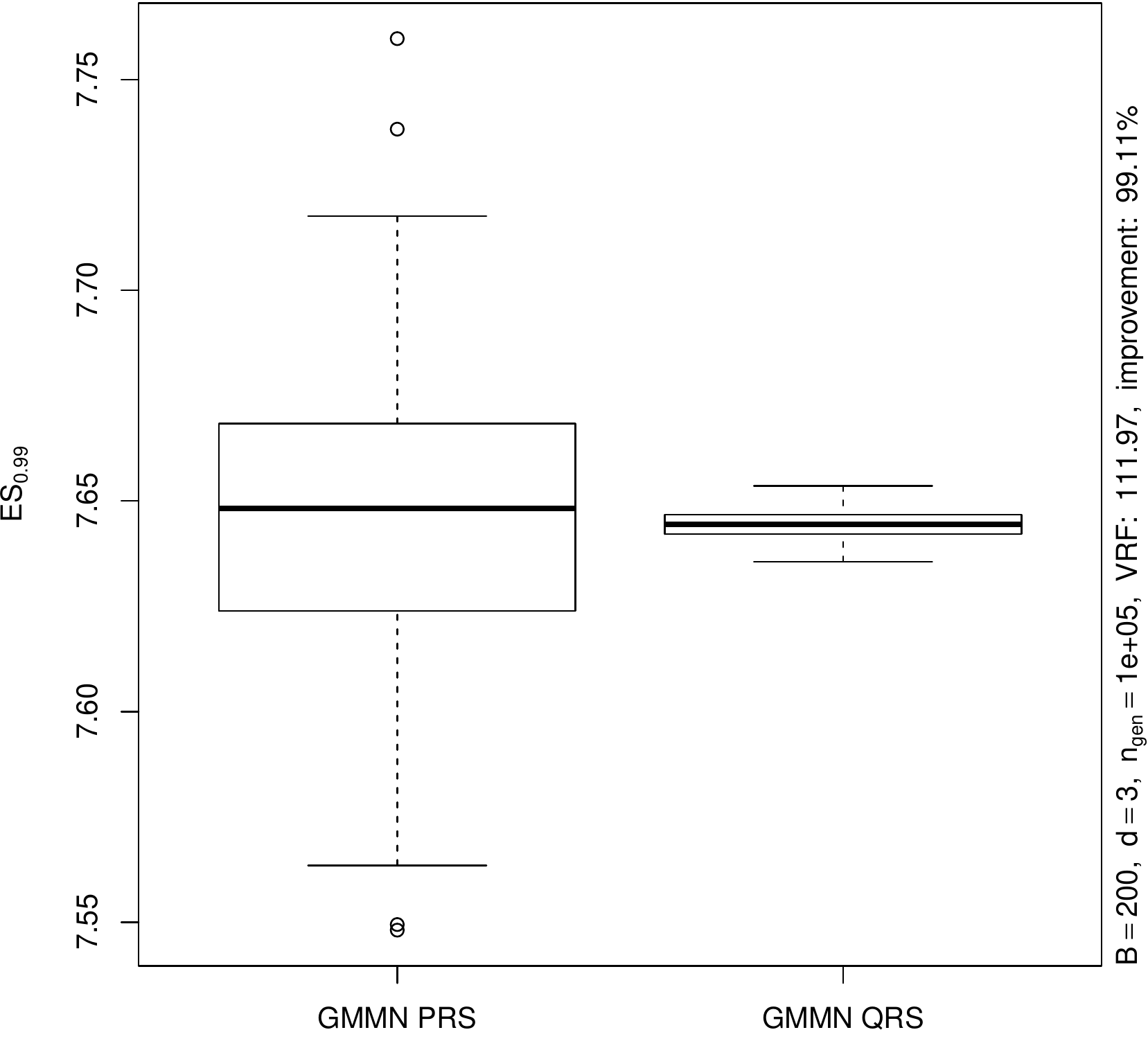}\hfill
  \includegraphics[width=0.32\textwidth]{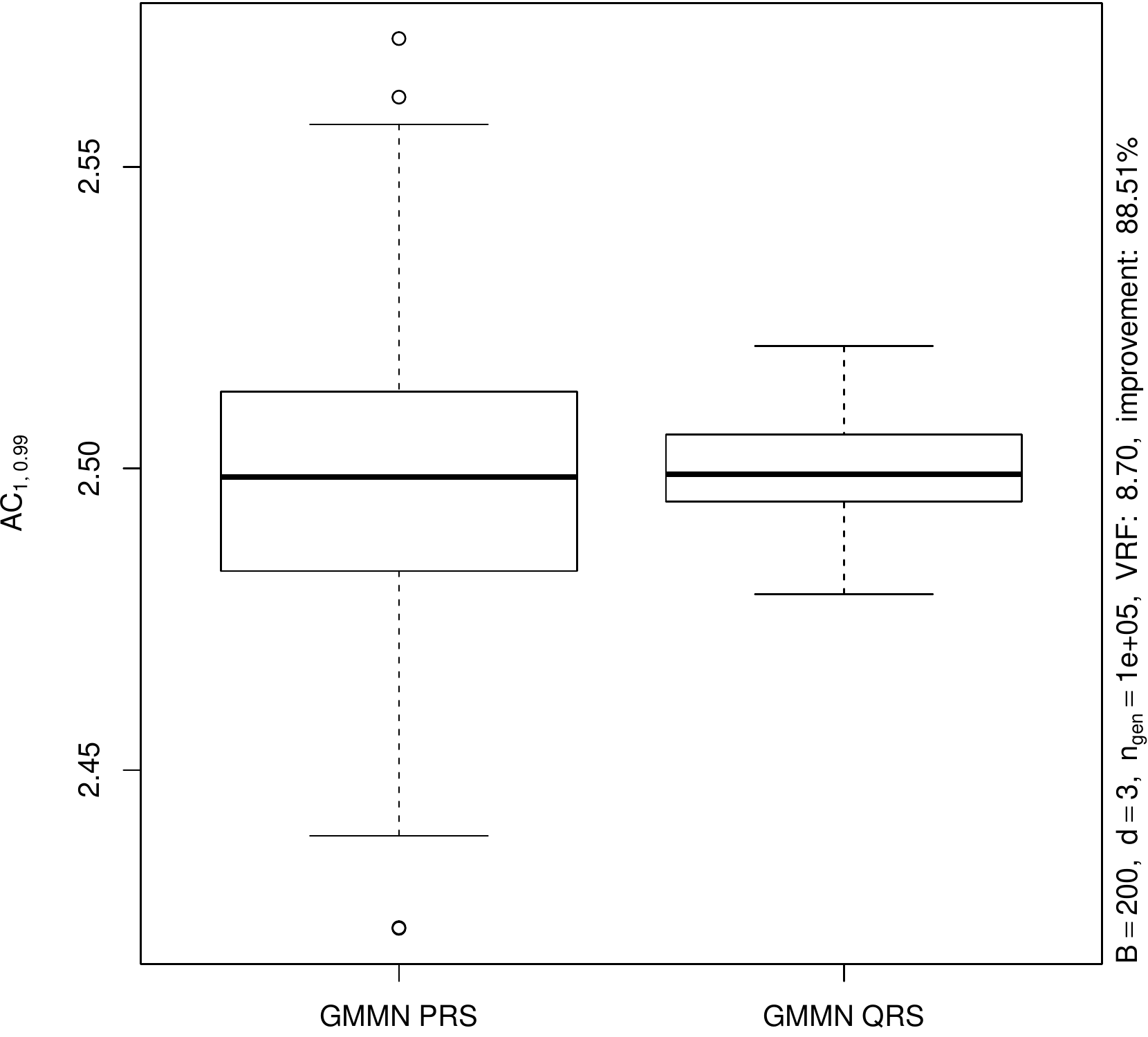}\hfill
  \includegraphics[width=0.32\textwidth]{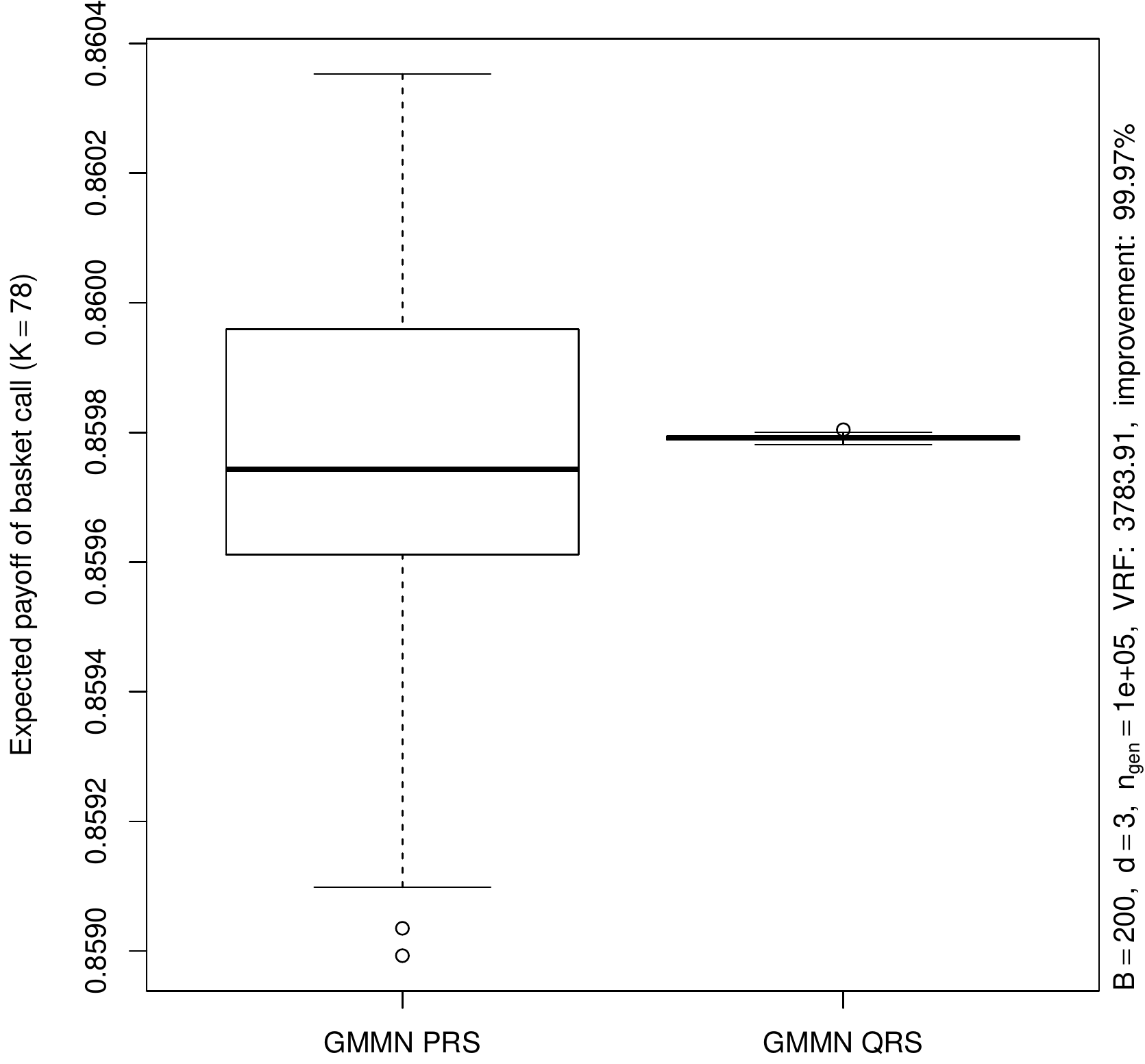}
  \includegraphics[width=0.32\textwidth]{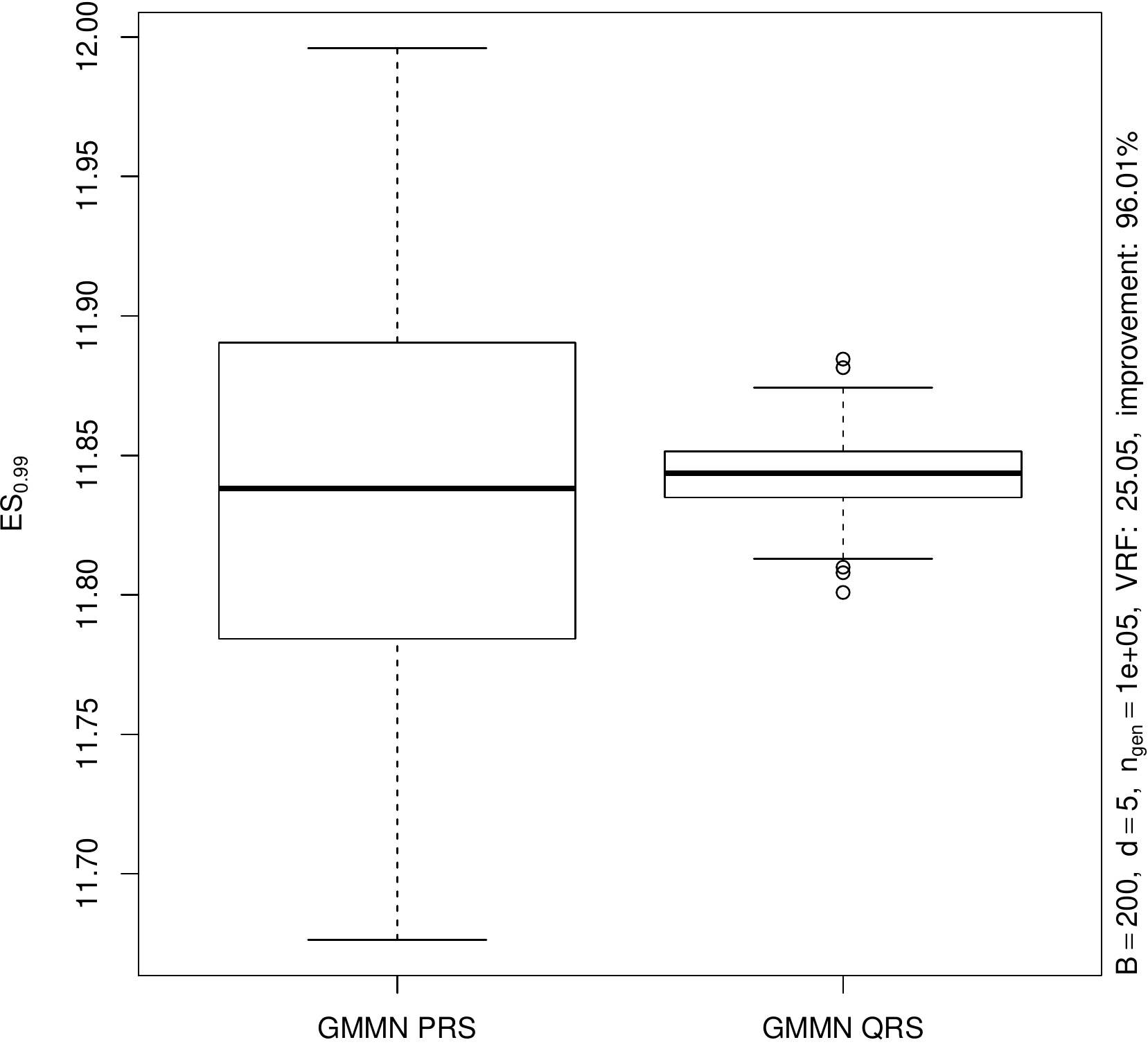}\hfill
  \includegraphics[width=0.32\textwidth]{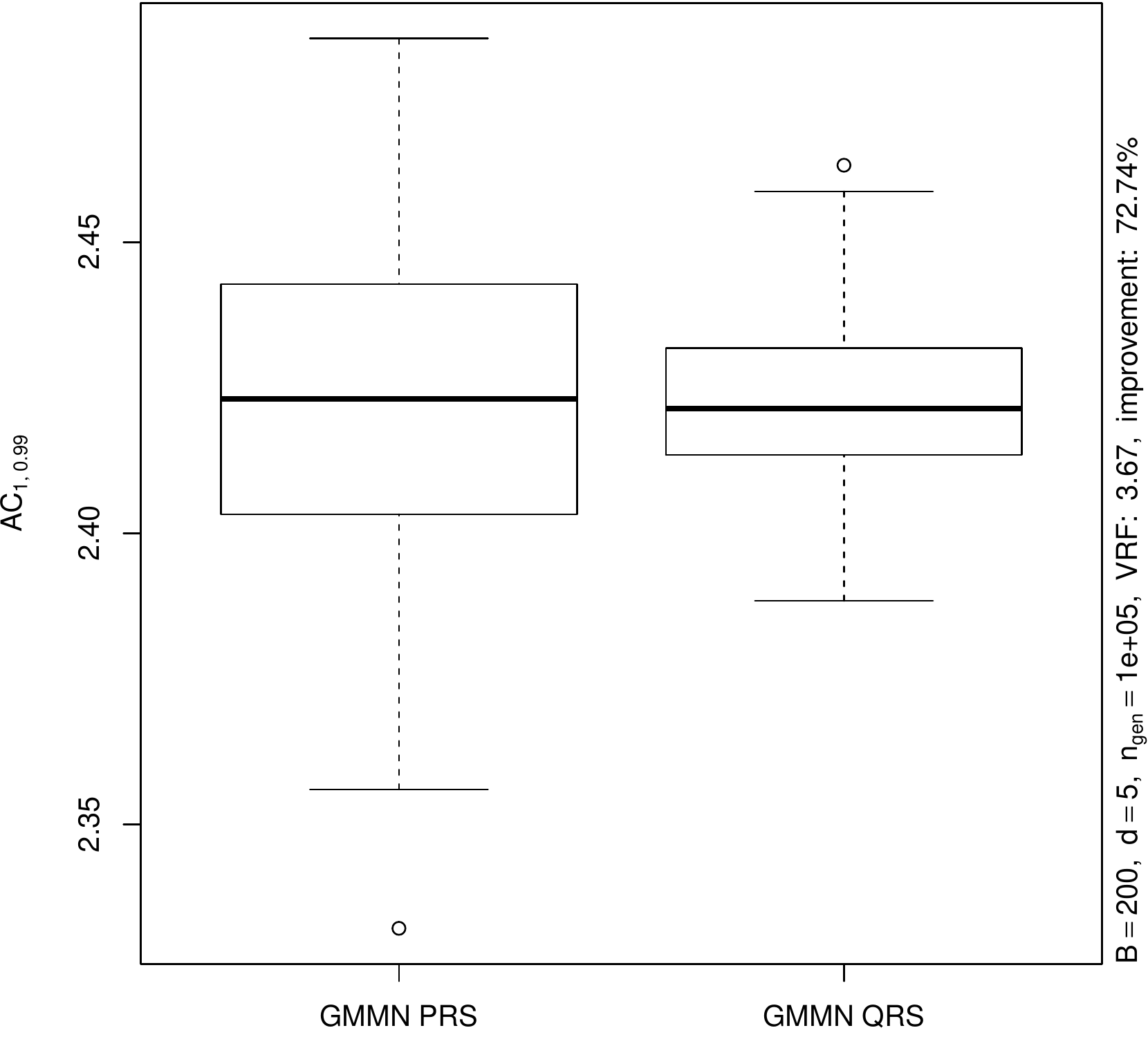}\hfill
  \includegraphics[width=0.32\textwidth]{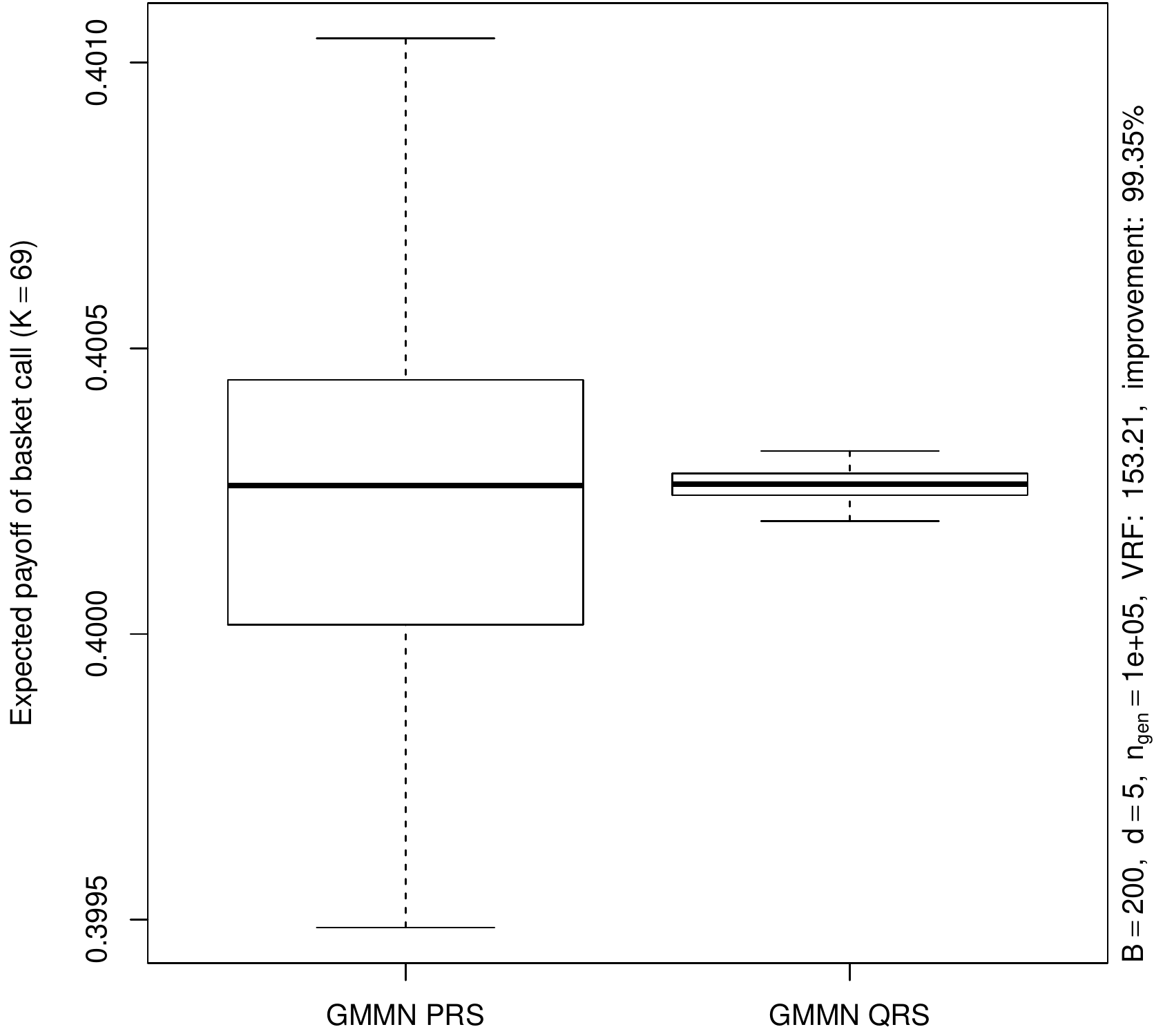}
  \includegraphics[width=0.32\textwidth]{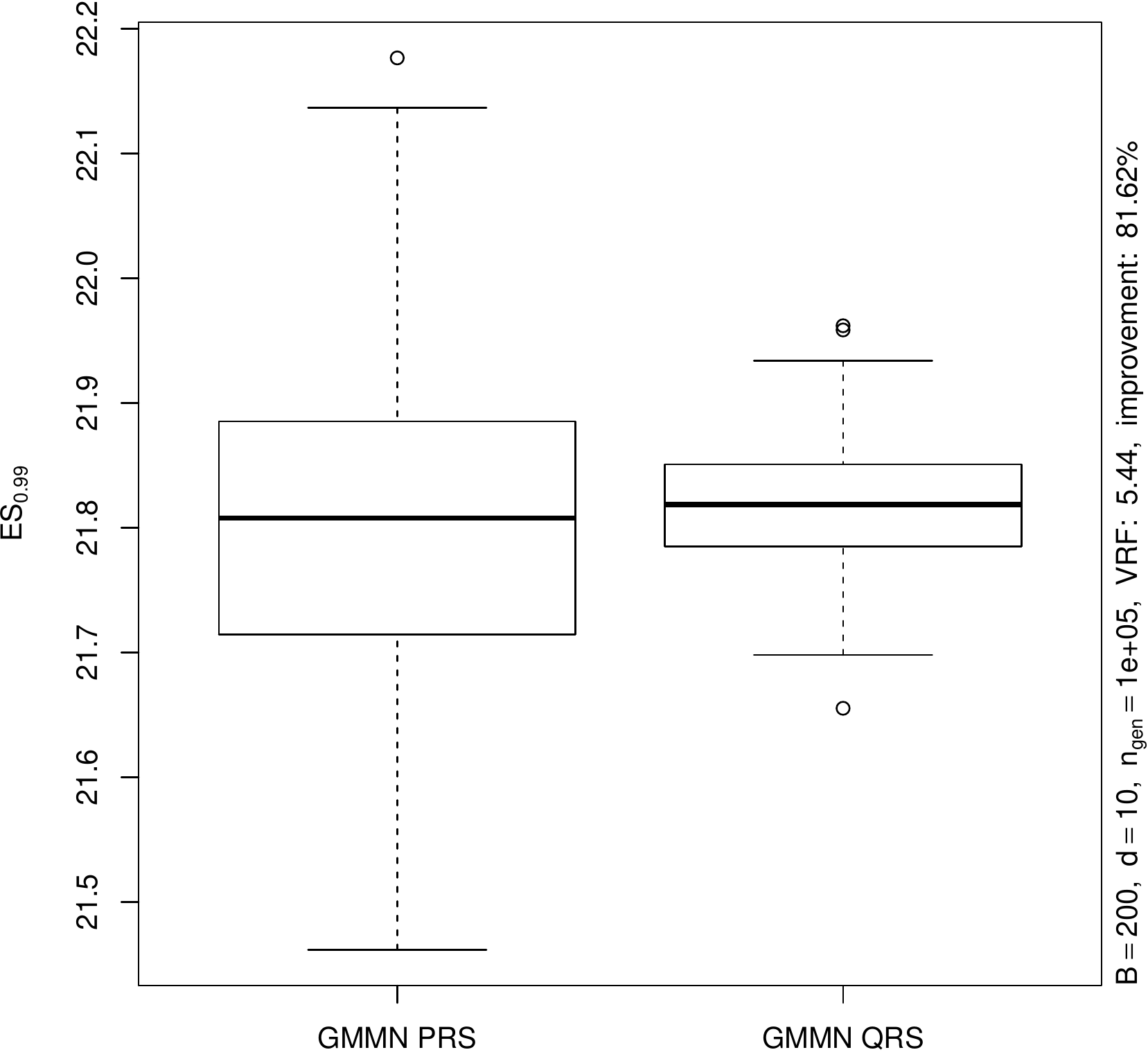}\hfill
  \includegraphics[width=0.32\textwidth]{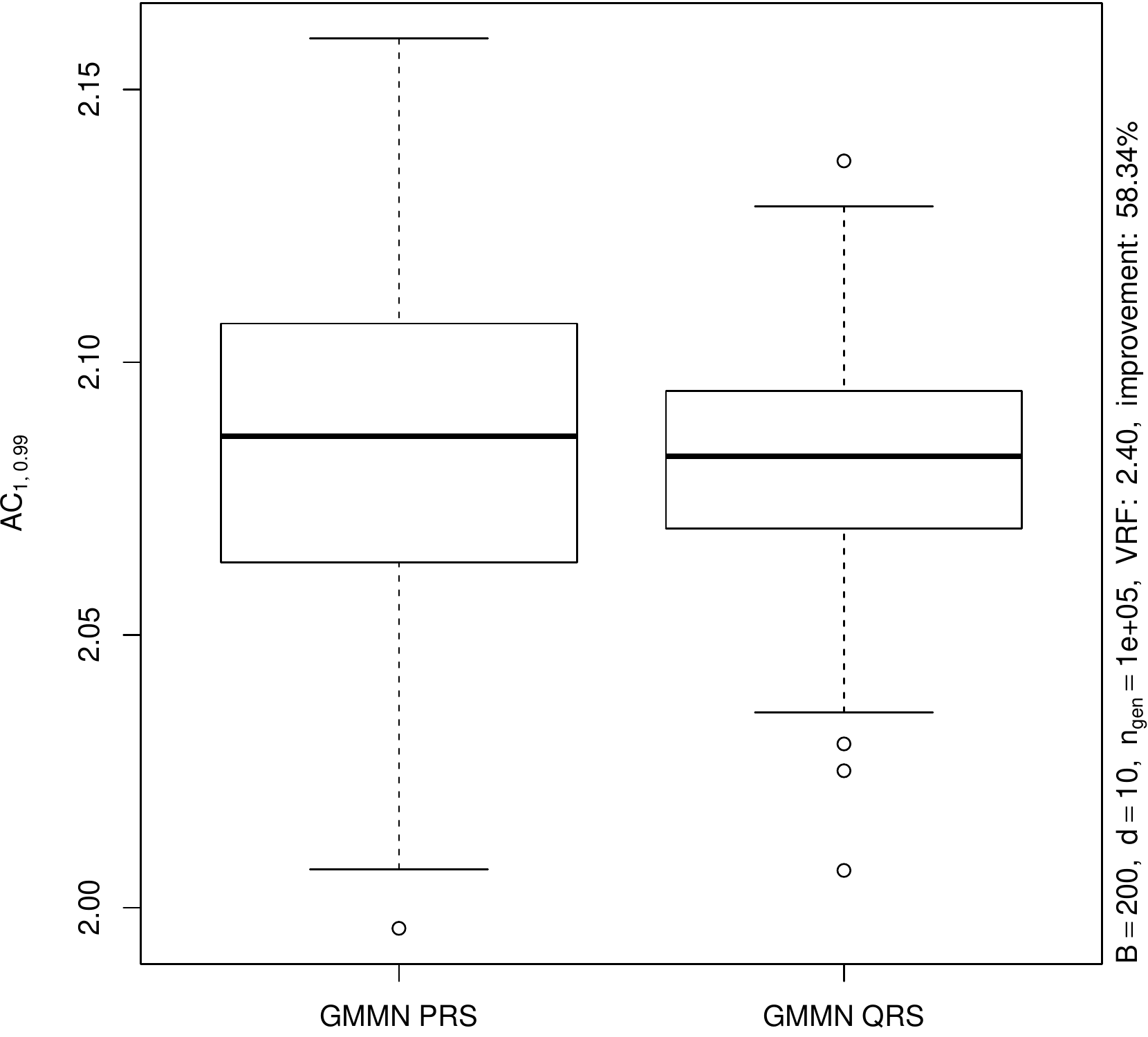}\hfill
  \includegraphics[width=0.32\textwidth]{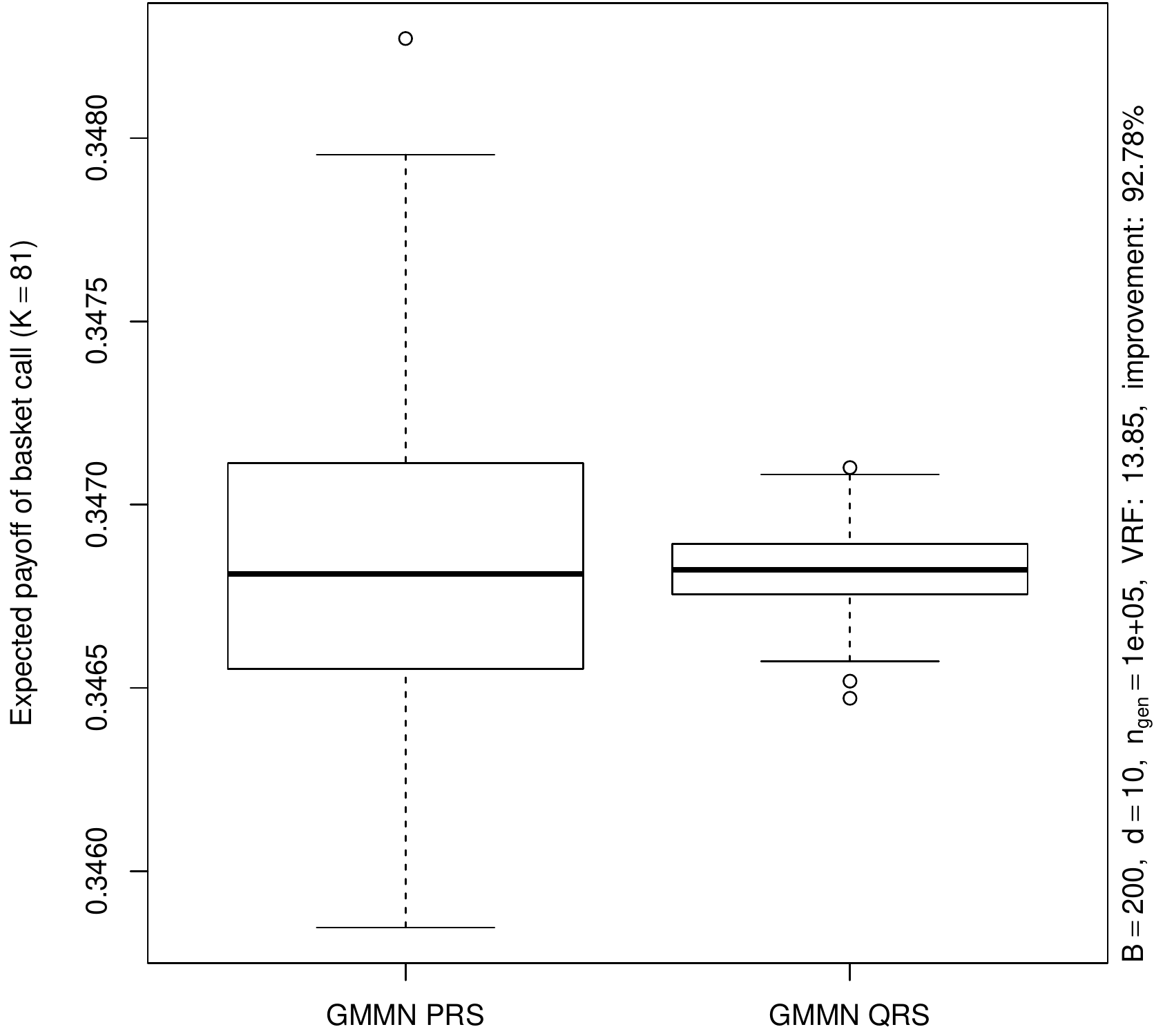}
  \caption{Box plots based on $B=200$ realizations of the GMMN MC estimator $\hat{\mu}^{\text{NN},\text{MC}}_{\ngen}$ (label ``GMMN
    PRS'') and the GMMN RQMC estimator $\hat{\mu}^{\text{NN}}_{\ngen}$ (label ``GMMN QRS'') of $\ES_{0.99}$ (left
    column), $\AC_{1,0.99}$ (middle column) and the expected payoff of a basket
    call (right column) for portfolio with dimensions $d=3$ (top row), $d=5$
    (middle row) and $d=10$ (bottom), using $\ngen=10^5$ samples for both
    estimators.}\label{fig:VRF:data}
\end{figure}

\section{Discussion}\label{sec:discussion}
This work has been inspired by the simple question of how to obtain quasi-random
samples for a large variety of multivariate distributions.  Until recently, this
was only possible for a few multivariate distributions with specific underlying
copulas. In general, for the vast majority of multivariate distributions,
obtaining quasi-random samples is a hard problem~\parencite{cambou2017}. Our
approach based on GMMNs provides a first universal method for doing so. It
depends on first learning a generator $f_{\hat{\bm{\theta}}}$ such that, given
$\bm{Z}$ (with independent components from some known distribution
such as the standard uniform or standard normal),
$f_{\hat{\bm{\theta}}}(\bm{Z})$ follows the targeted multivariate distribution.
Conditional on this first step being successful, we can then replace $\bm{Z}$
with $F_{\bm{Z}}^{-1}(\tilde{\bm{v}}_i)$, $i=1,\dots,\ngen$, where $\{\tilde{\bm{v}}_1,\dots,\tilde{\bm{v}}_{\ngen}\}$ is an RQMC point set, to
generate quasi-random samples from $\bm{X}$.

It is generally difficult to assess the low-discrepancy property of non-uniform
quasi-random samples. To evaluate the quality of our GMMN quasi-random samples,
we used visualization tools (Section~\ref{sec:GMMN:visual}), goodness-of-fit
statistics (Section~\ref{sec:GMMN:accuracy}), and investigated
variance reduction effects (Section~\ref{sec:conv:analysis}) when estimating
$\mu=\E(\Psi(\bm{X}))$ for a test function and for expected shortfall. As dependence
structures among the components of $\bm{X}$, we included various known copulas,
some of which allowed for quasi-random sampling which allowed us
to statistically assess the performance of our GMMN quasi-random samples. However,
we emphasize that the key feature of our method is that, given a sufficiently
large dataset with dependence structure not well described by any known parametric
copula model for which quasi-random sampling is available, we are now able to generate quasi-random samples from its
empirical distribution. We demonstrated this with a real dataset
in Section~\ref{sec:realdata}. Not only does a GMMN provide the best fitting model
in this application, allowing us to avoid the tedious and often computationally challenging
search that is typically required in classical copula modeling for an adequate dependence model,
we also obtain, at no additional cost, quasi-random samples from this GMMN --- a whole other challenge in classical copula modeling. This universality and
computability is an attractive feature of GMMNs for multivariate modeling.

However, this does not mean that the problem of quasi-random sampling for
multivariate distributions is completely solved. In high dimensions learning an
entire distribution is a hard problem, and so is learning the generator
$f_{\hat{\bm{\theta}}}$. At a superficial level, the literature on generative
NNs --- and the many headlines covering them --- may give the impression that
such NNs are now capable of generating samples from very high-dimensional
distributions.
This, of course, is not true; see, for example,
\cite{arjovsky2017, arora2018, tolstikhin2017}. In particular, while available
evidence is convincing that any \emph{specific} generated sample
$f_{\hat{\bm{\theta}}}(\bm{Z}_1)$, typically an image, can be very realistic in
the sense that it looks just like a typical training sample, this is not the
same as saying that the \emph{entire collection} of generated samples
$\{f_{\hat{\bm{\theta}}}(\bm{Z}_1), f_{\hat{\bm{\theta}}}(\bm{Z}_2), \dots\}$
will have the same distribution as the training sample. The latter is a much
harder problem to solve.
Unlike widely cited generative NNs such as variational autoencoders and generative adversarial networks,
GMMNs are capable of learning entire distributions, because they rely on the
$\MMD$-criterion as the cost function rather than, for example, the mean squared
error which does not measure the discrepancy between entire distributions.
Even so, this still does not mean GMMNs are practical for very high dimensions yet,
simply because the fundamental curse of dimensionality cannot be avoided
easily. At the moment, it is simply not realistic to hope that one can learn an
entire distribution in high dimensions from a training sample of only moderate
size.

Going forward there are two primary impediments to quasi-random sampling from
higher-dimensional copulas and distributions. Firstly, the problem of
distribution learning via generative NNs remains a challenging task. We may also
consider using other goodness-of-fit statistics for multivariate distributions
rather than the MMD as the cost function (provided that the statistic is
differentiable in order to train a generative NN). Secondly, we discovered from
our empirical investigation in Section~\ref{sec:conv:analysis} that the
convergence rates of GMMN RQMC estimators decrease with increasing
dimension. Preserving the low-discrepancy of RQMC point sets upon
transformations in high dimensions remains an open problem in this regard.

\appendix

\section{Analyzing GMMN QMC and GMMN RQMC estimators}
\label{sec:appendix}

\subsection{QMC point sets}
The idea behind quasi-random numbers is to replace pseudo-$U(0,1)^p$ random numbers with low-discrepancy points sets $P_{\ngen}$ to produce a more homogeneous coverage of $[0,1]^p$ in comparison to pseudo-random numbers. That is, with respect to a certain \emph{discrepancy measure}, the empirical
distribution of the $P_{\ngen}$ is closer to the uniform distribution
$\U(0,1)^p$ than a pseudo-random sample.

Established notions of the discrepancy of a point set $P_{\ngen}$ are as follows.
The \emph{discrepancy function} of $P_{\ngen}$ in an interval
$I=[\bm{0},\bm{b})=\prod_{j=1}^{p}[0,b_j)$, $b_j\in(0,1]$, $j=1,\dots,p$, is defined by
\begin{align*}
  D(I;P_{\ngen})= \frac{1}{\ngen} \sum_{i=1}^{\ngen} \I_{\{\bm{v}_i \in I\}} - \lambda(I),
\end{align*}
where $\lambda(I)$ is the $p$-dimensional Lebesgue measure of $I$. Thus the
discrepancy function is the difference between the number of points of $P_{\ngen}$ in
$I$ and the probability of a $p$-dimensional standard uniform random vector to
fall in $I$. For $\mathcal{A}=\{[\bm{0},\bm{b}):\bm{b}\in(0,1]^p\}$,
the \emph{star discrepancy} of $P_{\ngen}$ is defined by
\begin{align*}
  D^{*}(P_{\ngen}) = \sup_{I\in\mathcal{A}}|D(I;P_{\ngen})|.
\end{align*}
If $P_{\ngen}$ satisfies the condition
$D^{*}(P_{\ngen}) \in O(\ngen^{-1}(\log \ngen)^p)$, it is called a
\emph{low-discrepancy sequence} \parencite[][p.~143]{lemieux2009}.

 There are different approaches to construct (deterministic)
 low-discrepancy sequences; see \cite[Chapters~5--6]{lemieux2009}. The two main approaches involve either lattices (grids which behave well under projections) or digital nets/sequences. In our numerical investigations presented in Sections~\ref{sec:GMMN:copula}--\ref{sec:conv:analysis}, we worked with a type of digital net constructed using the Sobol' sequence; see \cite{sobol1967}.

\subsection{Analyzing the GMMN QMC estimator}\label{appendix:qmc:analysis}

In this section, we will  derive conditions
under which the (non-randomized) GMMN QMC estimator
\begin{align*}
  \frac{1}{\ngen} \sum_{i=1}^{\ngen} \Psi(q(\bm{v}_i))=\frac{1}{\ngen} \sum_{i=1}^{\ngen} h(\bm{v}_i),
\end{align*}
where $q=f_{\hat{\bm{\theta}}} \circ F^{-1}_{\bm{Z}}$ and $h=\Psi\circ q=\Psi \circ f_{\hat{\bm{\theta}}} \circ F^{-1}_{\bm{Z}}$, has a small error when approximating $\E(\Psi(\bm{Y}))$. In the following analysis, we need to further assume that $\supp(F_{\bm{X}})$ and $\supp(F_{\bm{Y}})$ are bounded.

The \emph{Koksma--Hlawka inequality} \parencite{niederreiter1992} for a function $g:[0,1]^p\rightarrow \IR$ says that
\begin{align*}
  \Biggl|\frac{1}{\ngen}\sum_{i=1}^{\ngen} g(\bm{v}_i) -\E(g(\bm{U}'))\Biggr|\le V(g) D^{*}(P_{\ngen}),
\end{align*}
where $\bm{U}'\sim\U(0,1)^p$ and the variation $V(g)$ is understood in the sense of Hardy and Krause; we refer to the right-hand side of the inequality as \emph{Koksma--Hlawka bound}. From this Koksma--Hlawka inequality, we can establish that if $g$ has finite bounded variation, that is $V(g)<\infty$, then the convergence rate for $\frac{1}{\ngen}\sum_{i=1}^{\ngen} g(\bm{v}_i)$ is determined by $D^{*}(P_{\ngen})=O(n_{\ngen}^{-1}(\log \ngen)^p)$.

We can use the Koksma--Hlawka inequality to analyze the convergence of the GMMN QMC estimator $\frac{1}{\ngen} \sum_{i=1}^{\ngen} \Psi(\bm{y}_i)$ of $\E(\Psi(\bm{Y}))$, where $\bm{y}_i=q(\bm{v}_i)$, $i=1,\dots,\ngen$ and $\bm{Y}\sim F_{\bm{Y}}$, by establishing the conditions under which $V(h)$ is bounded. To that end, consider the following proposition.

\begin{proposition}[Sufficient conditions for finiteness of the Koksma--Hlawka bound]\label{prop:bound:ass}
  Assume that $\supp(F_{\bm{Y}})$ is bounded and all appearing partial derivatives of $q$ and $\Psi$ exist and are
  continuous. Consider $q=f_{\hat{\bm{\theta}}}\circ F^{-1}_{\bm{Z}}$, the point set
  $P_{\ngen}=\{\bm{v}_1,\dots,\bm{v}_{\ngen}\}\subseteq [0,1)^p$ and let
  $\bm{y}_i=q(\bm{v}_i)$, $i=1,\dots,\ngen$, denote the GMMN quasi-random
  sample. Suppose that
  \begin{enumerate}
  \item\label{prop:bound:ass:1} $\Psi(\bm{y}) < \infty$ for all $\bm{y}\in \supp(F_{\bm{Y}})$ and
    \begin{align*}
      \frac{\partial^{|\bm{\beta}|_1}\Psi(\bm{y})}{\partial^{\beta_1}y_1\dots\partial^{\beta_d}y_d} < \infty,\quad\bm{y}\in \supp(F_{\bm{Y}}),
    \end{align*}
    for all $\bm{\beta}=(\beta_1,\dots,\beta_d)\subseteq\{0,\dots,d\}^d$ and $|\bm{\beta}|_1=\sum_{j=1}^d \beta_j\le d$;
  \item\label{prop:bound:ass:2} there exists an $M>0$ such that $|\D^k F^{-1}_{Z_{j}}| \le M$, for each
    $k,j=1,\dots,p$, where $\D^k$ denotes the $k$-fold derivative of its argument;
  \item\label{prop:bound:ass:3} there exists, for each layer $l=1,\dots,L+1$ of the NN $f_{\hat{\bm{\theta}}}$, an $N_l>0$ such that $|\D^k \phi_l| \le N_l$ for all $k=1,\dots,p$; and
  \item\label{prop:bound:ass:4} the parameter vector $\hat{\bm{\theta}}=(\widehat{W}_{1},\dots,\widehat{W}_{L+1},\hat{\bm{b}}_{1},\dots,\hat{\bm{b}}_{L+1})$ of the fitted NN is bounded.
  \end{enumerate}
  Then there exists a constant $c$ independent of $\ngen$, but possibly depending on $\Psi$,
  $\hat{\bm{\theta}}$, $M$ and $N_{1},\dots, N_{L+1}$, such that
  \begin{align*}
    \Biggl|\frac{1}{\ngen} \sum_{i=1}^{\ngen}\Psi(\bm{y}_i) - \E(\Psi(\bm{Y}))\Biggr|\le c D^{*}(P_{\ngen}).
  \end{align*}
\end{proposition}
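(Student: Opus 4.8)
The plan is to reduce the claim to a bound on the variation (in the sense of Hardy and Krause) of the composite integrand $h=\Psi\circ q=\Psi\circ f_{\hat{\bm\theta}}\circ F_{\bm Z}^{-1}$ on $[0,1]^p$ and then to invoke the Koksma--Hlawka inequality stated above with $g=h$. Recall a classical sufficient condition: $V(h)<\infty$ as soon as, for every $m\le p$ and all distinct $j_1,\dots,j_m\in\{1,\dots,p\}$, the mixed derivative $\partial_{u_{j_1}}\cdots\partial_{u_{j_m}}h$ exists and is bounded on $[0,1]^p$, because the integral representation of the Hardy--Krause variation then writes $V(h)$ as a finite sum of integrals of the absolute values of these derivatives over faces of the unit cube, each of which is finite \cite{niederreiter1992}. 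Existence and continuity of all partials appearing below are already part of the hypotheses, so the only real work is a \emph{uniform} bound on these mixed derivatives with the claimed dependence on $\Psi$, $\hat{\bm\theta}$, $M$ and $N_1,\dots,N_{L+1}$; the constant $c$ is then a bound on $V(h)$.

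Next I would record the chain-rule structure of these derivatives. Since $F_{\bm Z}^{-1}$ acts coordinatewise, a derivative that is first order in each of $u_{j_1},\dots,u_{j_m}$ hits the $j$th inner factor only as a single first derivative $(F_{Z_j}^{-1})'(u_j)$. Hence, iterating the chain and product rules (the multivariate Fa\`a di Bruno formula applied to $\Psi\circ f_{\hat{\bm\theta}}$ and, inside it, to the layered composition $f_{L+1}\circ\cdots\circ f_1$), the derivative $\partial_{u_{j_1}}\cdots\partial_{u_{j_m}}h(\bm u)$ is a finite linear combination --- with coefficients depending only on $p$ and $d$ --- of products of three kinds of factors: (a) a partial derivative of $\Psi$ of total order at most $m\le p$ (recall $p=d$ in our setup, so these are the partials of assumption~\ref{prop:bound:ass:1}), evaluated at $q(\bm u)$; (b) input-partials of components of $f_{\hat{\bm\theta}}$ of total order at most $p$, evaluated at $F_{\bm Z}^{-1}(\bm u)$, each of which further expands into a polynomial in the entries of $\widehat W_1,\dots,\widehat W_{L+1}$ and in the values $\phi_l^{(k)}$, $0\le k\le p$, taken at the layer preactivations; and (c) first derivatives $(F_{Z_j}^{-1})'(u_j)$ for $j\in\{j_1,\dots,j_m\}$.

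It then remains to bound each factor uniformly over $\bm u\in[0,1]^p$. For (c): $\lvert(F_{Z_j}^{-1})'\rvert\le M$ by assumption~\ref{prop:bound:ass:2} (the case $k=1$); in particular each $F_{Z_j}^{-1}$ is Lipschitz on $(0,1)$ and extends continuously to $[0,1]^p$ with bounded, hence compact, range, so $q([0,1]^p)=\supp(F_{\bm Y})$ is compact. For (b): assumptions~\ref{prop:bound:ass:3} and~\ref{prop:bound:ass:4} bound the derivatives $\phi_l^{(k)}$ ($k\ge1$) and all network weights, while the activation \emph{values} $\phi_l^{(0)}$ are bounded because their arguments (the preactivations) range over a bounded set --- this is precisely where activations with unbounded values such as ReLU, and input laws whose quantile functions have unbounded derivatives such as $\N(\bm 0,I_d)$, fall outside the hypotheses --- so (b) is bounded by a constant depending only on $\hat{\bm\theta}$ and $N_1,\dots,N_{L+1}$. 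For (a): as $q(\bm u)\in\supp(F_{\bm Y})$, a compact set on which $\Psi$ and its partials of total order $\le d$ are finite (assumption~\ref{prop:bound:ass:1}) and continuous (hypothesis), factor (a) is bounded. Multiplying, every $\partial_{u_{j_1}}\cdots\partial_{u_{j_m}}h$ is bounded on $[0,1]^p$ by a constant of the advertised dependence, whence $V(h)\le c<\infty$.

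Finally, for $\bm U'\sim\U(0,1)^p$ we have $F_{\bm Z}^{-1}(\bm U')\sim F_{\bm Z}$, hence $q(\bm U')=f_{\hat{\bm\theta}}(F_{\bm Z}^{-1}(\bm U'))\sim F_{\bm Y}$ and $\E(h(\bm U'))=\E(\Psi(\bm Y))$; applying the Koksma--Hlawka inequality with $g=h$ yields
\[
  \Biggl|\frac{1}{\ngen}\sum_{i=1}^{\ngen}\Psi(\bm y_i)-\E(\Psi(\bm Y))\Biggr|
  =\Biggl|\frac{1}{\ngen}\sum_{i=1}^{\ngen}h(\bm v_i)-\E(h(\bm U'))\Biggr|
  \le V(h)\,D^{*}(P_{\ngen})\le c\,D^{*}(P_{\ngen}).
\]
I expect the main obstacle to be the combinatorial bookkeeping in the two nested Fa\`a di Bruno expansions: one must check that no term ever requires a partial of $\Psi$ of total order above $d$, a derivative of some $\phi_l$ of order above $p$, or a derivative of $F_{Z_j}^{-1}$ of order above one, and --- more delicately --- that the \emph{pointwise finiteness} granted by assumptions~\ref{prop:bound:ass:1}--\ref{prop:bound:ass:4} upgrades to genuine \emph{uniform boundedness}. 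That upgrade rests on the compactness of $\supp(F_{\bm Y})$ and on the Lipschitz bound of assumption~\ref{prop:bound:ass:2}, which together pin every evaluation point to a fixed compact set.
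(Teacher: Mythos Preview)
Your approach is essentially the paper's: apply the Koksma--Hlawka inequality to $h=\Psi\circ q$, write $V(h)$ via the integral representation in terms of the mixed first-order partials $\partial_{v_{\alpha_1}}\cdots\partial_{v_{\alpha_j}}h$, expand those by the multivariate Fa\`a di Bruno formula, and bound every factor using Assumptions~\ref{prop:bound:ass:1}--\ref{prop:bound:ass:4} together with compactness of $\supp(F_{\bm Y})$. You are more explicit than the paper about the compactness step that upgrades pointwise finiteness to uniform bounds, and you correctly observe that only first derivatives $(F_{Z_j}^{-1})'$ ever appear (since $F_{\bm Z}^{-1}$ is diagonal and the Hardy--Krause integrand differentiates at most once in each coordinate); the paper instead closes with a brief appeal to H\"older's inequality, and your aside about bounding the activation \emph{values} $\phi_l^{(0)}$ is unnecessary since they occur only as arguments of the uniformly bounded $\phi_l^{(k)}$, $k\ge 1$, not as multiplicative factors---all cosmetic differences.
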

\begin{proof}
To begin with note that for any $q$ such that
$q(\bm{U}') \sim F_{\bm{Y}}$, we know that $\bm{Y}$ is in distribution equal to
$q(\bm{U}')$ and thus $\E(\Psi(\bm{Y}))=\E\bigl(\Psi(q(\bm{U}'))\bigr)=\E(h(\bm{U}'))$. Based on this property, we can obtain the Koksma--Hlawka bound $V(h)D^{*}(P_{\ngen})$ for the change of variable $h$.
Following \cite[Section~5.6.1]{lemieux2009}, we can derive an expression for $V(h)$. To this end, let
\begin{align*}
  V^{(j)}(h;\bm{\alpha})=\int_{[0,1)^{j}} \Biggr| \frac{\partial^{j}h^{(\bm{\alpha})}(v_{\alpha_1},\dots,v_{\alpha_{j}})}{\partial v_{\alpha_j}\dots\partial v_{\alpha_1}}\Biggr|\,\rd v_{\alpha_1}\dots\,\rd v_{\alpha_j},
\end{align*}
where $h^{(\bm{\alpha})}(v_{\alpha_1},\dots,v_{\alpha_j})=h(\tilde{v}_1,\dots,\tilde{v}_p)$
for $\tilde{v}_k=v_k$ if $k\in\{\alpha_1,\dots,\alpha_j\}$ and $\tilde{v}_k=1$
otherwise. Then
\begin{align}
V(h)=\sum_{j=1}^{p} \sum_{\bm{\alpha}:|\bm{\alpha}|_1=j}V^{(j)}(h;\bm{\alpha}),\label{eq:bounded:var}
\end{align}
where the inner sum is taken over all $\bm{\alpha}=(\alpha_1,\dots,\alpha_j)$
with $\{\alpha_1,\dots,\alpha_j\}\subseteq\{1,\dots,p\}$ --- see also \cite[pp.~19--20]{niederreiter1992}, \cite[eq.~(4)]{hlawka1961} and \cite[eq.~(4')]{hlawkamuck1972}.
Following \cite{hlawkamuck1972} and \cite[Theorem~2.1]{constantinesavits1996},
we then have
\begin{align} \Biggr|\frac{\partial^{j}h^{(\bm{\alpha})}(v_{\alpha_1},\dots,v_{\alpha_j})}{\partial v_{\alpha_j}\dots\partial v_{\alpha_1}}\Biggr| = \!\!\!\!\sum_{1\leq|\bm{\beta}|_1\le j}\!\frac{\partial^{|\bm{\beta}|_1}\Psi}{\partial^{\beta_1}y_1\dots\partial^{\beta_d}y_d} \sum_{i=1}^{j} \!\sum_{(\bm{\kappa},\bm{k}) \in \pi_i({\bm{\kappa},\bm{k}})} \!\!\!\!\! c_{\bm{\kappa}} \prod_{m=1}^{i} \frac{\partial^{|\bm{\kappa}_m|_1}q_{k_m}^{(\bm{\alpha})}(v_{\alpha_1},\dots,v_{\alpha_{j}})}{\partial^{\kappa_{mj}}v_{\alpha_j}\dots\partial^{\kappa_{m1}}v_{\alpha_{1}}}, \label{eq:decomp:partial}
\end{align}
where $\bm{\beta}\in \IN_{0}^{d}$ and
where $\pi_i(\bm{\kappa},\bm{k})$ denotes the set of pairs
$(\bm{\kappa},\bm{k})$ such that $\bm{k}=(k_1,\dots,k_i)\in\{1,\dots,d\}^i$ and
$\bm{\kappa}=(\bm{\kappa}_1,\dots,\bm{\kappa}_i)$ with
$\bm{\kappa}_m=(\kappa_{m1},\dots,\kappa_{mj})\in\{0,1\}^j$, $m=1,\dots,i$, and
$\sum_{m=1}^{i} \kappa_{mi}=1$ for $i=1,\dots,j$; see
\cite{constantinesavits1996} %
for more details on $\pi_i(\bm{\kappa},\bm{k})$ and the constants
$c_{\bm{\kappa}}$. Furthermore, for index $j=1,\dots,d$, %
$q_{j}^{(\alpha)}(v_{\alpha_1},\dots,v_{\alpha_j})=q_j(\tilde{v}_1,\dots,\tilde{v}_p)$ %
and
$q_j(\tilde{v}_1,\dots,\tilde{v}_p)=\phi_{L+1}(\widehat{W}_{L+1j\cdot}\bm{a}_L+\hat{\bm{b}}_{L+1})$,
where $\bm{a}_l= \phi_l(\widehat{W}_l\bm{a}_{l-1}+\hat{\bm{b}}_l)$ for
$l=1,\dots,L$ with
$\bm{a}_0=F_{\bm{Z}}^{-1}(\tilde{\bm{v}})$ %
and where $\widehat{W}_{L+1j\cdot}$ denotes the $j$th row of $W_{L+1}$. %

Based on the decomposition in~\eqref{eq:decomp:partial}, a sufficient condition
to ensure that $V(h)<\infty$ is that all products of the form
\begin{align*}
 \frac{\partial^{|\bm{\beta}|_1}\Psi}{\partial^{\beta_1}y_1\dots\partial^{\beta_d}y_d}\prod_{m=1}^{i} \frac{\partial^{|\bm{\kappa}_m|_1}q_{k_m}^{(\bm{\alpha})}(v_{\alpha_1},\dots,v_{\alpha_{j}})}{\partial^{\kappa_{mj}}v_{\alpha_j}\dots\partial^{\kappa_{m1}}v_{\alpha_{1}}}, \quad i=1,\dots,j,
\end{align*}
are integrable.

To that end, Assumptions~\ref{prop:bound:ass:2}--\ref{prop:bound:ass:4} imply
  that all mixed partial derivatives of
  $q=f_{\hat{\bm{\theta}}}\circ F^{-1}_{\bm{Z}}$ are bounded. By the assumption
  of continuous partial derivatives of $q$, this implies that finite products of
  the form
  \begin{align*}
    \prod_{m=1}^{i} \frac{\partial^{|\bm{\kappa}_m|_1}q_{k_m}^{(\bm{\alpha})}(v_{\alpha_1},\dots,v_{\alpha_j})}{\partial^{\kappa_{mj}}v_{\alpha_j}\dots\partial^{\kappa_{m1}}v_{\alpha_1}}, \quad i=1,\dots,j,
  \end{align*}
  are integrable. By Assumption~\ref{prop:bound:ass:1},
  decomposition~\eqref{eq:decomp:partial} and H\"{o}lder's inequality, the
  quantity in~\eqref{eq:bounded:var} is bounded. This implies that $h$ has
  bounded variation, so that the Koksma--Hlawka bound is finite.
\end{proof}

The following remark provides insights into
Assumptions~\ref{prop:bound:ass:2}--\ref{prop:bound:ass:4} of
Proposition~\ref{prop:bound:ass}.
\begin{remark}\label{remark:bound:ass}
  $\U(a,b)^p$ for $a<b$, which is a popular choice for the input distribution,
  clearly satisfies Assumption~\ref{prop:bound:ass:2} in
  Proposition~\ref{prop:bound:ass}. %
  Assumption~\ref{prop:bound:ass:3} is satisfied for various commonly
  used activation functions, such as:
  \begin{enumerate}
  \item \emph{Sigmoid.} If $\phi_l(x)=1/(1+\mathrm{e}^{-x})$ for layer $l$, then $N_l=1$.
  \item \emph{ReLU.} If $\phi_l(x)=\max\{0,x\}$ for layer $l$, then $N_{l}=1$. In
  this case, only the first derivative is (partly) non-zero. Additionally, note
  that the ReLU activation function is not differentiable at $x=0$. However,
  even if $\phi_l=\max\{0,x\}$ for all $l=1,\dots,L+1$, the set of all pointwise
  discontinuities of the mixed partial derivatives of $q$ is a null set. Hence,
  the discontinuities do not jeopardize the proof of
  Proposition~\ref{prop:bound:ass}.
  \item \emph{Softplus.} If $\phi_l(x)=\log(1+\mathrm{e}^x)$ for layer $l$, then $N_l=1$. The Softplus activation function can be used as a smooth approximation of the ReLU activation function.
  \item \emph{Linear.} If $\phi_l(x)=x$ for layer $l$, then $N_l=1$. Only the
    first derivative is non-zero.
  \item \emph{Tanh.} If $\phi_l(x)=\tanh(x)$ for layer $l$,
    then $N_l=1$.
  \item \emph{Scaled exponential linear unit (SELU)}; see
    \cite{klambauer2017}. If, for layer $l$,
    \begin{align*}
      \phi_l(x)=\begin{cases*}%
        \lambda\alpha(\exp(-x)-1), & \text{if $x<0$}, \\
        \lambda x, & \text{if $x\ge 0$},
      \end{cases*}
    \end{align*}
    where $\lambda$ and $\alpha$ are prespecified constants, then
    $N_l=\max\{\lambda, \lambda\alpha, 1\}$. The same argument about discontinuities made with the ReLU activation function
    applies equally well to the case of the SELU activation function.
  \end{enumerate}
  Assumption~\ref{prop:bound:ass:4} of
  Proposition~\ref{prop:bound:ass} is satisfied in practice because NNs
  are always trained with regularization on the parameters, which means
  $\hat{\bm{\theta}}$ always lies in a compact set. Additionally note that in
  the general case where $q$ is characterized by a composition of NN layers and
  $F^{-1}_{\bm{Z}}$ with a different (but standard) activation function in each
  layer, all partial derivatives of $q$ exist and are continuous. Moreover, for
  the activation functions and input distributions listed above, all mixed
  partial derivatives of $q$ are bounded.
\end{remark}

\subsection{RQMC point sets} \label{appendix:rqmc}

In Monte Carlo applications, we need to randomize the low-discrepancy sequence $P_{\ngen}$ to obtain unbiased estimators and variance estimates.
To that end, we can randomize $P_{\ngen}$ via a $\bm{U}'\sim\U(0,1)^p$ to obtain
a randomized point set
$\tilde{P}_{\ngen}=\tilde{P}_{\ngen}(\bm{U}')=\{\tilde{\bm{v}}_1,\dots,\tilde{\bm{v}}_{\ngen}\}$,
where $\tilde{\bm{v}}_i=r(\bm{U}',\bm{v}_i)$, $i=1,\dots,\ngen$, for a certain
randomization function $r$. A simple randomization
to obtain an RQMC point set is to consider
$\tilde{\bm{v}}_i=(\bm{v}_i+\bm{U}')\mod 1$, $i=1,\dots,\ngen$, for
$\bm{U}'\sim\U(0,1)^p$, a so-called \emph{random shift}; see
\cite{cranleypatterson1976}.

In practice, more sophisticated alternatives to the random shift are often used.  One such slightly more sophisticated randomization scheme is the \emph{digital shift} method; see \cite[Chapter~6]{lemieux2009} and \cite{cambou2017}. In the same vein as the random shift, one adds a random uniform shift to points in $P_{\ngen}$, but with operations in $\IZ_b$, where $b$ is the base in which the digital net is defined, rather than simply adding two real numbers. We use $\tilde{P}^{\text{ds}}_{\ngen}$ to denote the RQMC point set obtained using the digital shift method.

Another randomization approach is to \emph{scramble} the digital net. This technique was originally proposed by \cite{owen1995}. In particular, the type of scrambling we work with is referred to as the \emph{nested uniform scrambling} (or \emph{full random scrambling}) method; see \cite{owen2003}. Since we primarily use this method throughout the paper, $\tilde{P}_{\ngen}$ will denote specifically the RQMC point set obtained using scrambling. The digital shift method is more computationally efficient in comparison to scrambling but because the distortion of the deterministic point set is fairly simple in the digital shift method, there are \emph{bad} functions one can construct such that the variance of the RQMC estimator is larger than that of the corresponding MC estimator; see \cite[Chapter~6]{lemieux2009}. Furthermore, when RQMC points are constructed with scrambling, we can justify (see Appendix~\ref{appendix:rqmc:analysis}) that an improved rate of $O(\ngen^{-3}(\log \ngen)^{p-1})$ is achievable for $\Var(\varhat{\mu}{\ngen}{NN})$; this translates to $O(\ngen^{-3/2}(\log \ngen)^{(p-1)/2})$ on the root mean squared error (RMSE) scale, which is more directly comparable to the convergence rate of $O(\ngen^{-1}(\log \ngen)^p)$ implied by the Koksma-Hlawka bound for the mean absolute error (MAE) of $\varhat{\mu}{\ngen}{NN}$ using QMC points (see Appendix~\ref{appendix:qmc:analysis}). Hence, even though the aforementioned bad functions do not often arise in practice, we primarily work with the scrambling randomization method to construct our RQMC point sets. Both the scrambling and the digital shift methods are available in the \R\ package \texttt{qrng} and can be accessed via \texttt{sobol(, randomize = "Owen")} and \texttt{sobol(, randomize = "digital.shift")} respectively.

The randomization schemes discussed above preserve the low-discrepancy property of $P_{\ngen}$ and the estimators of interest obtained using each type of RQMC point set are unbiased. Computing the estimator based on $B$ such randomized point sets and computing the sample variance of the resulting $B$ estimates then allows us to estimate the variance of the estimator of interest.

\subsection{Analyzing the GMMN RQMC estimator}\label{appendix:rqmc:analysis}

\subsubsection{GMMN RQMC estimators constructed with scrambled nets} \label{appendix:rqmc:scr:analysis}
For RQMC estimators $\frac{1}{\ngen}\sum_{i=1}^{\ngen} g(\tilde{\bm{v}})$ based on scrambled nets, \cite{owen1997b} initially
derived a convergence rate for the variance of the estimators under a certain
smoothness condition on $g$, where $g:[0,1]^p \rightarrow \IR$. \cite{owen2008}
then generalized his earlier result to allow a weaker smoothness condition for a larger class
of scrambled nets. Specifically, if
$\tilde{P}_{\ngen}=\{\tilde{\bm{v}}_1,\dots,\tilde{\bm{v}}_{\ngen}\}$ is a so-called
relaxed scrambled $(\lambda,q,m,p)$-net in base $b$ with bounded gain
coefficients --- for example, Sobol' sequences randomized using nested uniform
sampling belong to this class --- then we have the following result as a direct consequence of \cite{owen2008}.
\begin{theorem}[\cite{owen2008}]\label{thm:rqmc}
  If all the mixed partial derivatives (up to order $p$) of $g$ exist and are continuous, then
  \begin{align*}
    \Var\biggl(\frac{1}{\ngen}\sum_{i=1}^{\ngen} g(\tilde{\bm{v}}_i)\biggr)= O(\ngen^{-3}(\log \ngen)^{p-1}).
  \end{align*}
\end{theorem}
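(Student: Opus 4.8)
The plan is to obtain the statement as an immediate corollary of the scrambled-net variance bound of \cite{owen2008}, the only real work being to check that the smoothness hypothesis stated here implies the (weaker) hypothesis under which that bound holds. First I would recall the setup made precise in the paragraph preceding the theorem: the point set $\tilde{P}_{\ngen}=\{\tilde{\bm{v}}_1,\dots,\tilde{\bm{v}}_{\ngen}\}$ arises by applying nested uniform scrambling \parencite{owen1995,owen2003} to a digital $(t,m,p)$-net built from the Sobol' sequence, and hence belongs to the class of relaxed scrambled $(\lambda,q,m,p)$-nets in base $b$ with bounded gain coefficients --- exactly the class for which \cite{owen2008} proves that, when the integrand is smooth enough, $\Var\bigl(\frac{1}{\ngen}\sum_{i=1}^{\ngen} g(\tilde{\bm{v}}_i)\bigr)=O(\ngen^{-3}(\log\ngen)^{p-1})$. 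Here ``smooth enough'' means that the top-order mixed partial derivative $\frac{\partial^p g}{\partial u_1\cdots\partial u_p}$ (together with the lower-order mixed partials entering the functional ANOVA decomposition of $g$) is square integrable on $[0,1]^p$; this is the weaker condition, and broader net class, for which \cite{owen2008} established this rate, relaxing the stronger smoothness assumption used in the earlier work of \cite{owen1997b}.

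The verification is then one line. By hypothesis all mixed partial derivatives of $g$ up to order $p$ exist and are continuous on the closed cube $[0,1]^p$; since $[0,1]^p$ is compact, each such derivative is bounded, so in particular $\frac{\partial^p g}{\partial u_1\cdots\partial u_p}$ and every lower-order mixed partial is bounded, hence square integrable on $[0,1]^p$. Thus the hypotheses of \cite{owen2008} are met --- indeed strictly more than met --- and the asserted variance rate follows by direct application of his result. I would add the remark that this continuity-of-partials condition is precisely the form in which the required smoothness will be available in our application, where $g$ is the composite map $h=\Psi\circ f_{\hat{\bm{\theta}}}\circ F_{\bm{Z}}^{-1}$ and its partial derivatives are controlled exactly as in Proposition~\ref{prop:bound:ass}; this is why the theorem is phrased this way rather than in terms of an explicit $L^2$ bound.

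The ``hard part'' is purely a matter of matching conventions rather than of mathematical substance. Two bookkeeping points need care. First, the scrambled-net variance bounds are cleanest along the subsequence $\ngen=b^m$; for general $\ngen$ one either restricts to that subsequence (consistent with the geometric grid of sample sizes used in Section~\ref{sec:conv:analysis}) or appeals to the known extensions of these bounds to arbitrary $\ngen$. Second, one must be sure that the specific randomization employed, \texttt{sobol(, randomize = "Owen")} --- nested uniform scrambling of a Sobol' net --- genuinely produces a relaxed scrambled net with the bounded-gain-coefficient property; but this membership is verified in \cite{owen2008} itself, so it suffices to cite it. Once these are dispatched, the proof reduces to a direct invocation of \cite{owen2008}.
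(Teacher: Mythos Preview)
Your proposal is correct and takes the same approach as the paper: the paper's proof is literally the single line ``See \cite[Theorem~3]{owen2008}.'' Your additional verification that continuity of the mixed partials on the compact cube implies the square-integrability required by Owen, and your bookkeeping remarks about the $\ngen=b^m$ subsequence and the bounded-gain-coefficient property of nested uniform scrambling, are all reasonable elaborations but go beyond what the paper itself provides.
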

\begin{proof}
  See \cite[Theorem~3]{owen2008}.
\end{proof}
Now, for the GMMN RQMC estimator,
$\varhat{\mu}{\ngen}{NN}=\frac{1}{\ngen} \sum_{i=1}^{\ngen}
\Psi(q(\tilde{\bm{v}}_i))=\frac{1}{\ngen}\sum_{i=1}^{\ngen}
h(\tilde{\bm{v}}_i)$, the corollary below naturally follows from
Theorem~\ref{thm:rqmc} with some added analysis of the composite function $h$.
\begin{corollary} \label{corollary:rqmc:scr}
  If all the mixed partial derivatives (up to order $p$) of
  $h=\Psi\circ q= \Psi\circ f_{\hat{\bm{\theta}}}\circ F^{-1}_{\bm{Z}}$ exist
  and are continuous, then $\Var(\varhat{\mu}{\ngen}{NN})= O(\ngen^{-3}(\log \ngen)^{p-1})$.
\end{corollary}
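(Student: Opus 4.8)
The plan is to obtain Corollary~\ref{corollary:rqmc:scr} as an essentially immediate consequence of Theorem~\ref{thm:rqmc}, applied with the integrand taken to be $h=\Psi\circ q=\Psi\circ f_{\hat{\bm{\theta}}}\circ F^{-1}_{\bm{Z}}$ itself. First I would note that, by construction, the GMMN RQMC estimator can be written as $\varhat{\mu}{\ngen}{NN}=\frac{1}{\ngen}\sum_{i=1}^{\ngen}h(\tilde{\bm{v}}_i)$, where $\{\tilde{\bm{v}}_1,\dots,\tilde{\bm{v}}_{\ngen}\}$ is the scrambled Sobol' net used throughout the paper; as recalled in Appendix~\ref{appendix:rqmc}, such a net is a relaxed scrambled $(\lambda,q,m,p)$-net in base $b$ with bounded gain coefficients, i.e.\ exactly the class of randomized point sets covered by Theorem~\ref{thm:rqmc}. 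Since the hypothesis of the corollary is precisely that all mixed partial derivatives of $h$ up to order $p$ exist and are continuous, Theorem~\ref{thm:rqmc} with $g=h$ then gives $\Var(\varhat{\mu}{\ngen}{NN})=O(\ngen^{-3}(\log\ngen)^{p-1})$, which is the claim.

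The ``added analysis of the composite function $h$'' amounts to recording why this smoothness hypothesis is not vacuous and how it decomposes over the ingredients $\Psi$, $f_{\hat{\bm{\theta}}}$ and $F^{-1}_{\bm{Z}}$. To this end I would invoke the multivariate chain-rule (Leibniz-type) decomposition already used in the proof of Proposition~\ref{prop:bound:ass}, cf.~\eqref{eq:decomp:partial}: each mixed partial of $h$ of order at most $p$ is a finite linear combination of products of mixed partials of $\Psi$, evaluated at $q(\bm{v})$, with mixed partials of the coordinate maps $q_j$, $j=1,\dots,d$, of $q=f_{\hat{\bm{\theta}}}\circ F^{-1}_{\bm{Z}}$. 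For the activation functions catalogued in Remark~\ref{remark:bound:ass} and for input distributions with smooth quantile functions (e.g.\ $\U(a,b)^p$, which also satisfies Assumption~\ref{prop:bound:ass:2}), each $q_j$ has continuous mixed partials of all orders; combined with a $\Psi$ whose partials up to the required order are continuous, the composition $h$ inherits continuous mixed partials up to order $p$, so the hypothesis of the corollary --- and hence of Theorem~\ref{thm:rqmc} --- holds in the cases of practical interest.

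I expect the only genuinely delicate point --- and the reason the corollary is stated conditionally on smoothness of $h$ rather than proved from scratch --- to be the boundary behaviour of $F^{-1}_{\bm{Z}}$. For input distributions such as $\N(\bm{0},I_p)$, the marginal quantile functions blow up at $0$ and $1$, so the mixed partials of $h$, although continuous on the open cube $(0,1)^p$, need not extend continuously to $[0,1]^p$; to handle this one must either restrict to bounded-support input distributions (those obeying Assumption~\ref{prop:bound:ass:2}) or check that the variance estimate of \cite[Theorem~3]{owen2008} underlying Theorem~\ref{thm:rqmc} only requires the relevant local smoothness and integrability on $(0,1)^p$. Because the corollary takes smoothness of $h$ as its hypothesis, the formal proof collapses to the one-line invocation of Theorem~\ref{thm:rqmc} described above, with the case-checking relegated to the discussion that follows the corollary.
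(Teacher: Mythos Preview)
Your proposal is correct and matches the paper's approach exactly: the paper does not give a separate proof block for the corollary but simply notes that it ``naturally follows from Theorem~\ref{thm:rqmc}'' applied with $g=h$, and then devotes the ensuing paragraphs to the same componentwise smoothness discussion you outline (input distribution quantiles, activation functions via Remark~\ref{remark:bound:ass}, and $\Psi$). Your added caveat about the boundary behaviour of $F^{-1}_{\bm{Z}}$ for unbounded inputs like $\N(\bm{0},I_p)$ is a fair observation that the paper leaves implicit.
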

  To analyze the mixed partial derivatives of $h$, it suffices to analyze each
component function separately.

For popular choices of input distributions (such as $\U(a,b)^p$ for $a<b$ or
$\N(0,1)^p$), the $k$-fold derivative $\D^k F^{-1}_{Z_{j}}$ exists and is
continuous (on $[a,b]$ or $\IR$ depending on the choice of input distribution) for each
$k,j=1,\dots,p$.

For each layer $l=1,\dots,L+1$ of the NN $f_{\hat{\bm{\theta}}}$, $\D^{k}\phi_l$
exists and is continuous for $k=1,\dots,p$ --- provided that we use (standard) activation
functions; see Remark~\ref{remark:bound:ass} for further details on suitable
activation functions. For NNs constructed using some popular activation
functions such as the ReLU and SELU, note that the set of all pointwise
discontinuities of the mixed partial derivatives of $f_{\hat{\bm{\theta}}}$ is a
set of Lebesgue measure zero and hence the proof of Theorem~\ref{thm:rqmc} holds. Alternatively, we can use the softplus activation function as a smoother
approximation of ReLU. Now in the most general case of NNs
$f_{\hat{\bm{\theta}}}$ being composed of layers with different (but standard)
activation functions, all mixed partial derivatives (up to order $p$) of
$f_{\hat{\bm{\theta}}}$ exist and are continuous almost everywhere.

Finally, it is certainly true that, for many functionals $\Psi$ that we care about in practice, such as those considered in Sections~\ref{sec:conv:analysis} and~\ref{sec:data:vr},
all of its mixed partial derivatives (up to order $p$) exist and are
continuous almost everywhere on $\IR^d$.

\subsubsection{GMMN RQMC estimators constructed with digitally shifted
  nets} \label{appendix:rqmc:shift:analysis} For GMMN RQMC estimators
$\varhat{\mu}{\ngen}{NN,ds}$ constructed using digitally shifted RQMC point sets
$\tilde{P}^{\text{ds}}_{\ngen}$, we can obtain an expression for
$\Var(\varhat{\mu}{\ngen}{NN,ds})$ under the condition that the composite
function $h$ is square integrable; see \cite[Proposition~6]{cambou2017}.

With added assumptions on the smoothness of  $h$, one can obtain improved convergence rates (compared to MC estimators) for $\Var(\varhat{\mu}{\ngen}{NN,ds})$. For example, under the assumptions of Proposition~\ref{prop:bound:ass}, $h$ has finite bounded variation in the sense of Hardy--Krause, which implies that $\Var(\varhat{\mu}{\ngen}{NN,ds})= O(\ngen^{-2}(\log \ngen)^{2p})$; see \cite{ecuyer2016}.

In practice, we observe that GMMN RQMC estimators constructed using both scrambled and digitally shifted nets achieve very similar convergence rates despite differences in the theoretical convergence rates. To that end, Figure~\ref{fig:aggregateES} shows plots of standard deviation estimates for estimating $\E(\Psi_2(\bm{X}))$ where we use the RQMC point sets $\tilde{P}^{\text{ds}}_{\ngen}$ for the same copula models as considered for Figure~\ref{fig:aggregateES:scramble} (which is based on GMMN RQMC estimators constructed using scramble nets) in Section~\ref{sec:conv:analysis}. The approximate convergence rates as implied by the regression coefficients $\alpha$ displayed in both figures are very similar across the various examples.

\begin{figure}[htbp]
  \centering
  \includegraphics[width=0.31\textwidth]{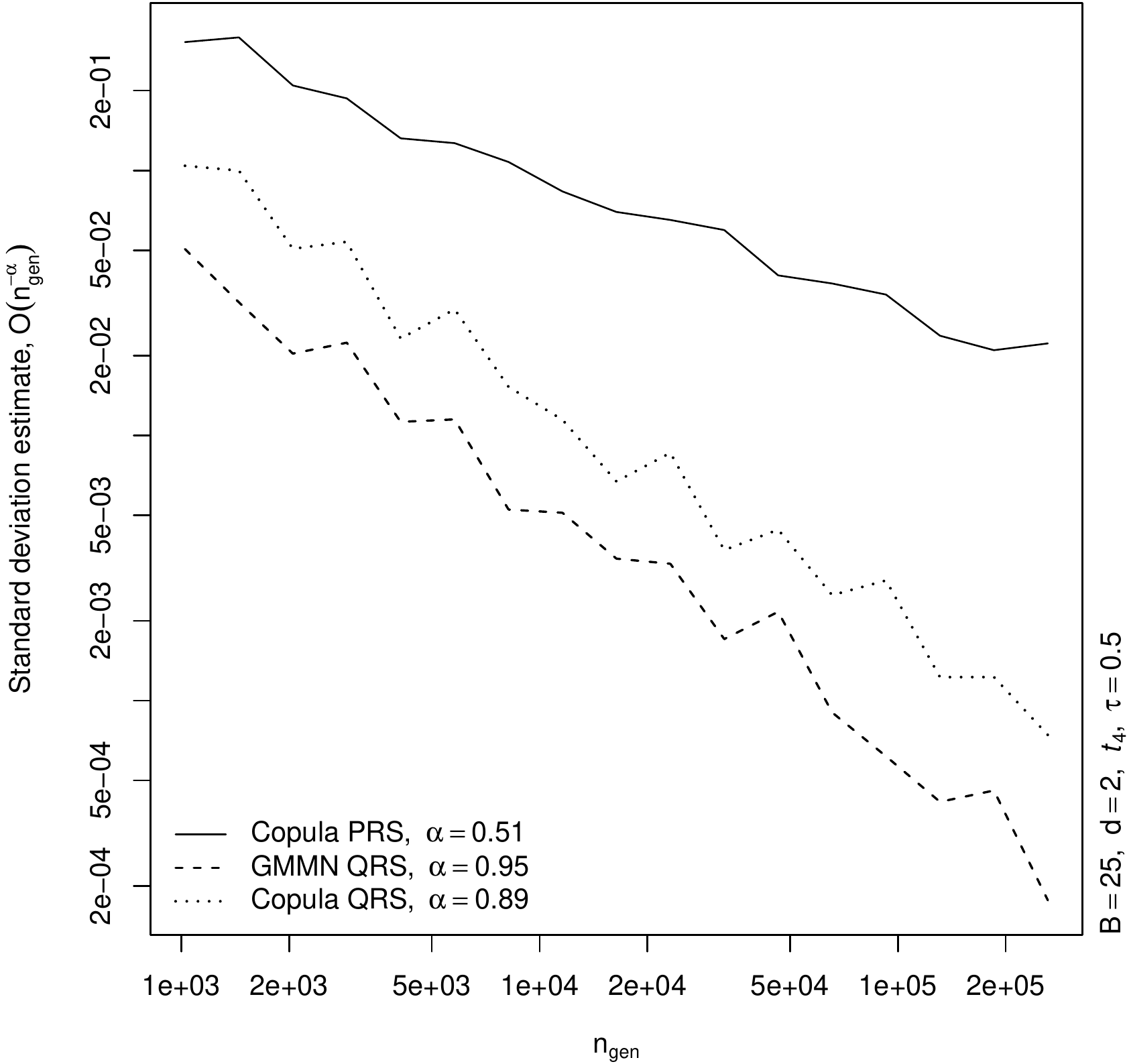}\hfill
  \includegraphics[width=0.31\textwidth]{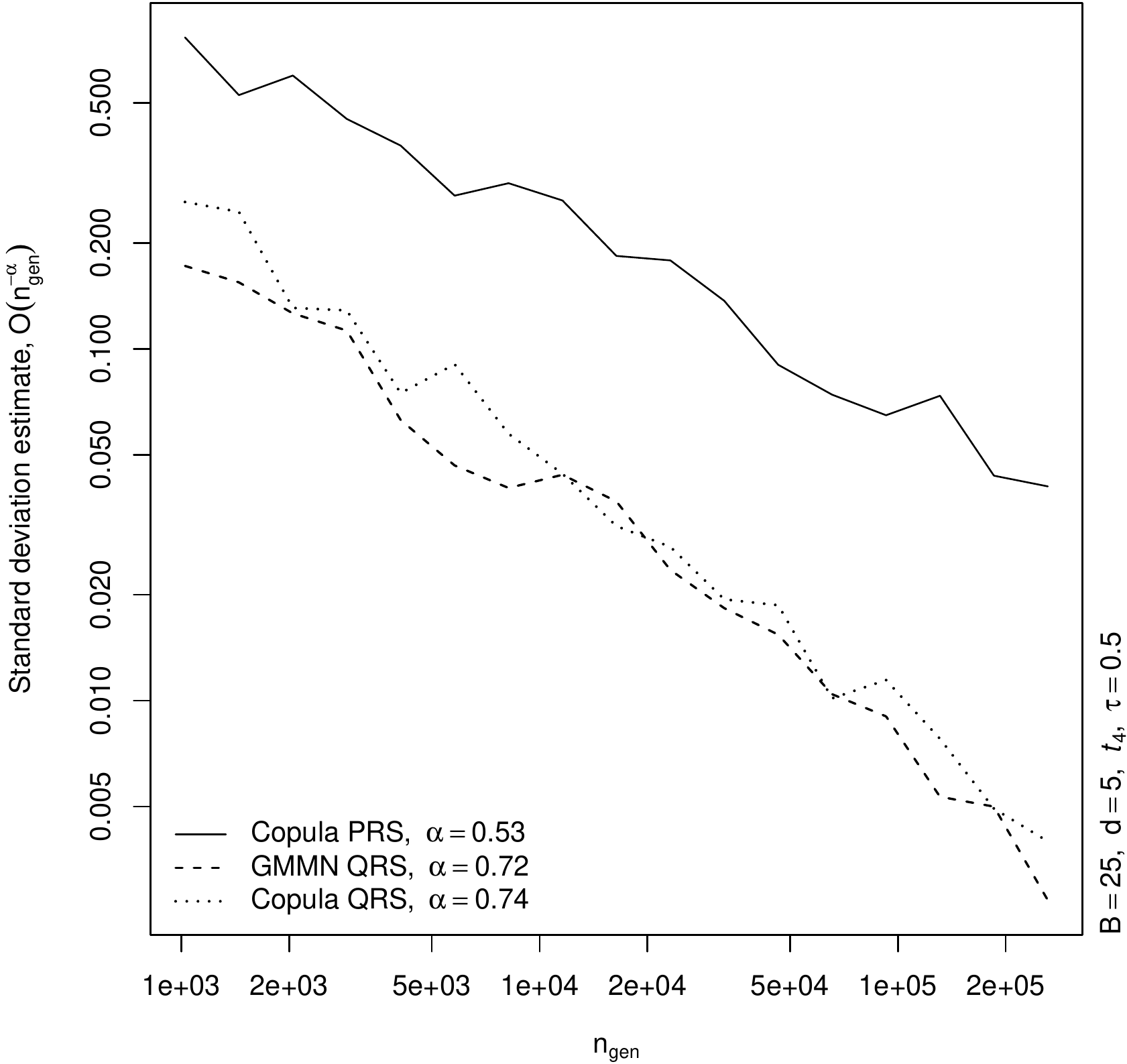}\hfill
  \includegraphics[width=0.31\textwidth]{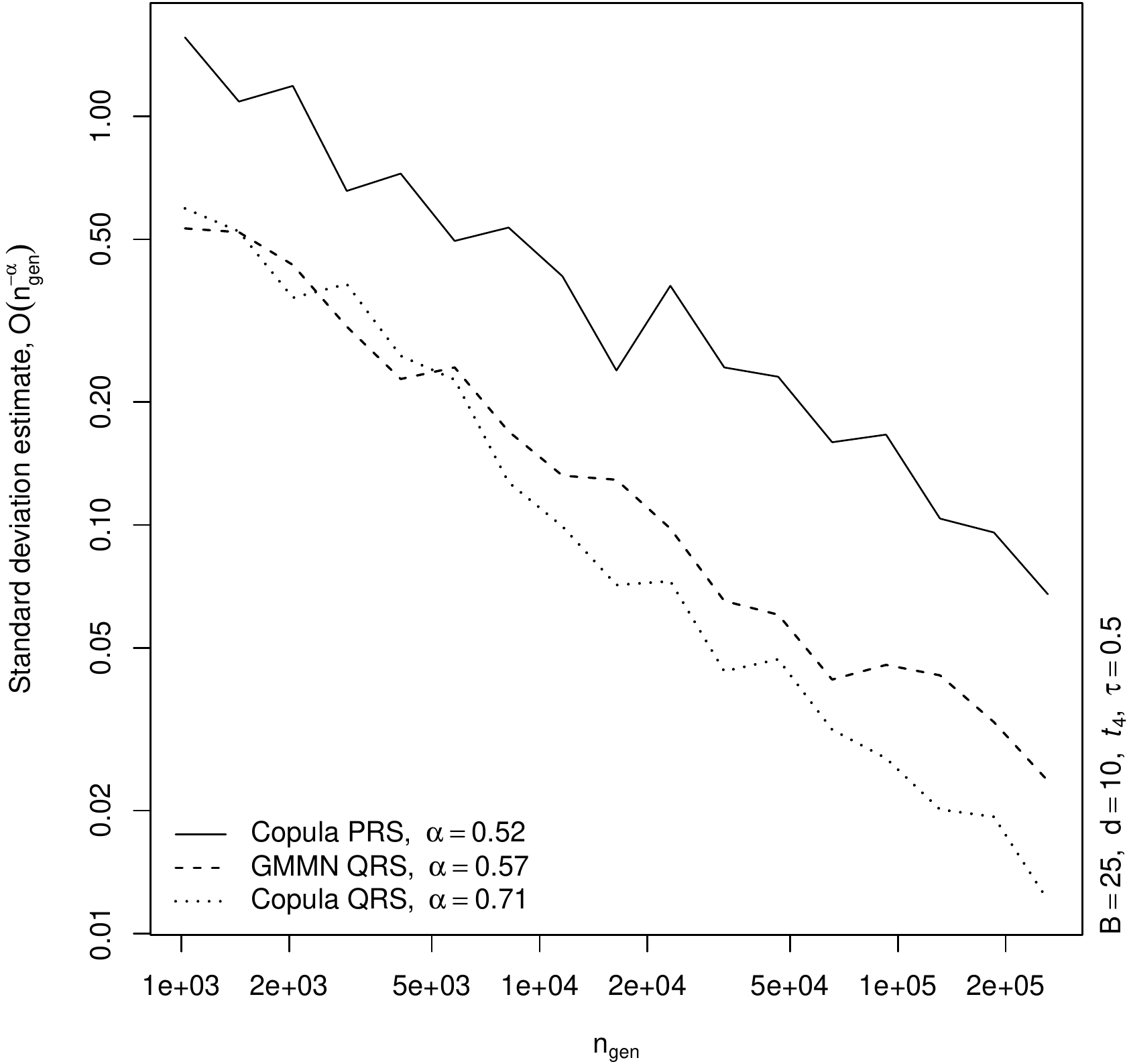}
  \includegraphics[width=0.31\textwidth]{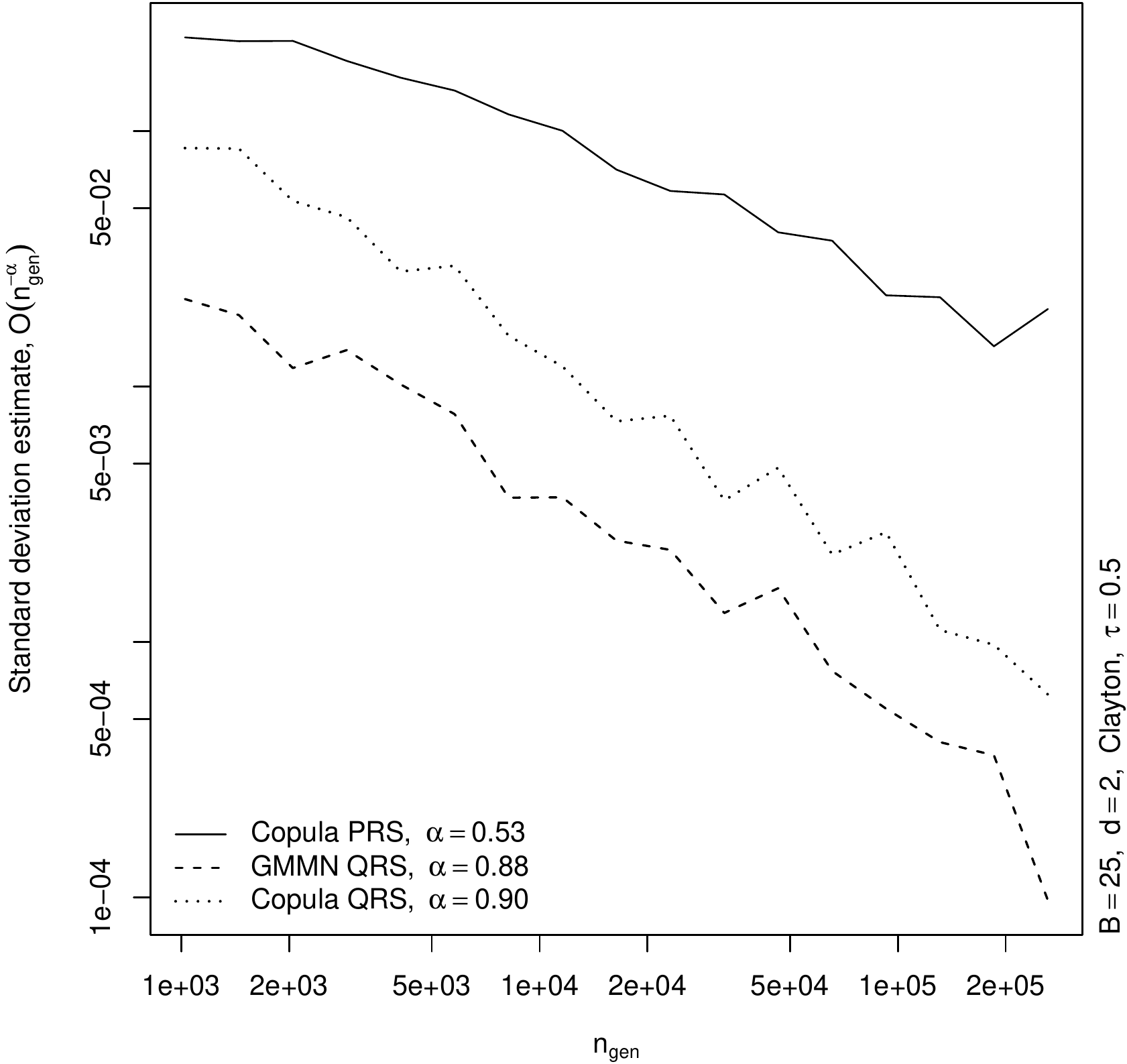}\hfill
  \includegraphics[width=0.31\textwidth]{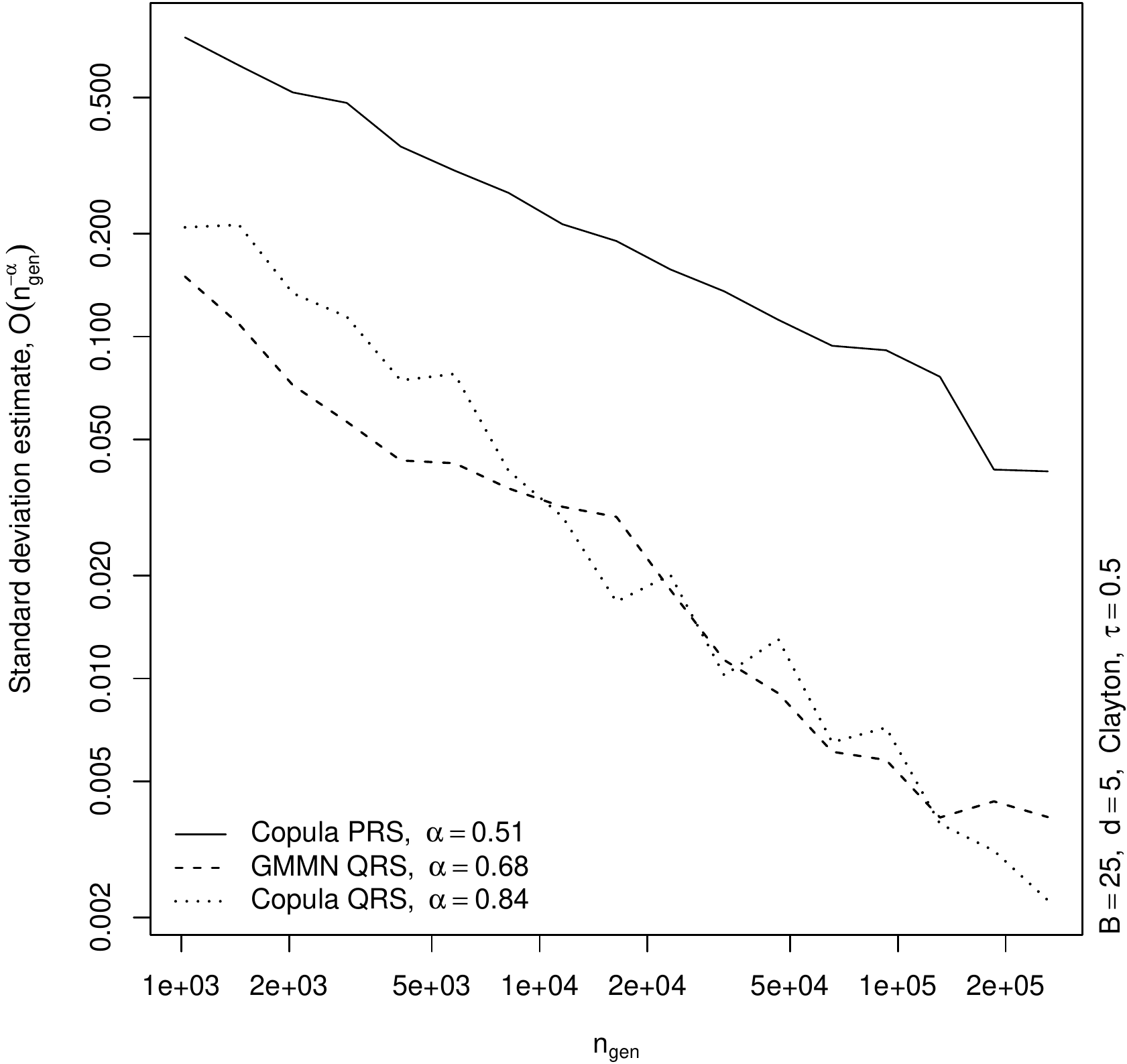}\hfill
  \includegraphics[width=0.31\textwidth]{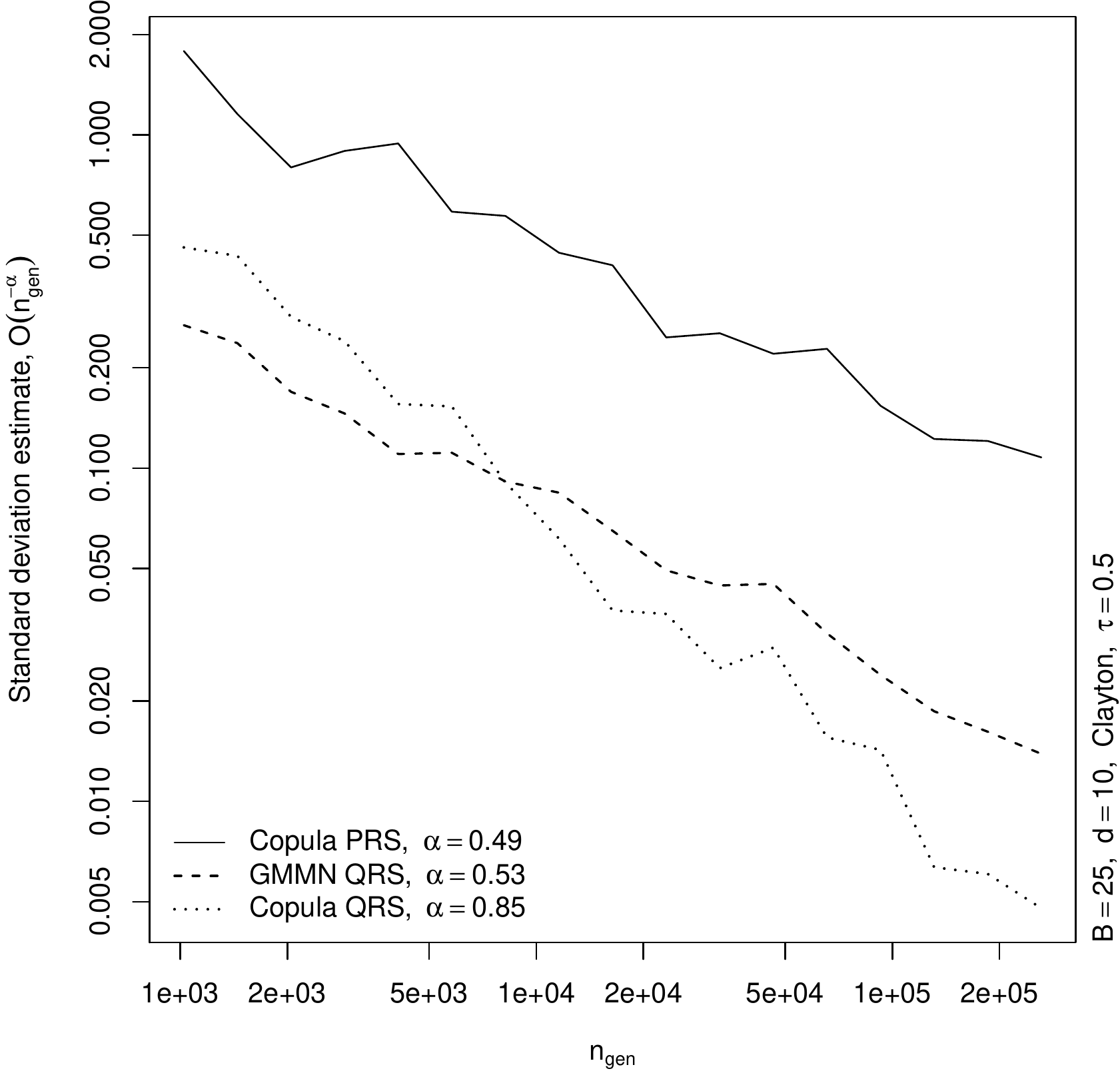}
  \includegraphics[width=0.31\textwidth]{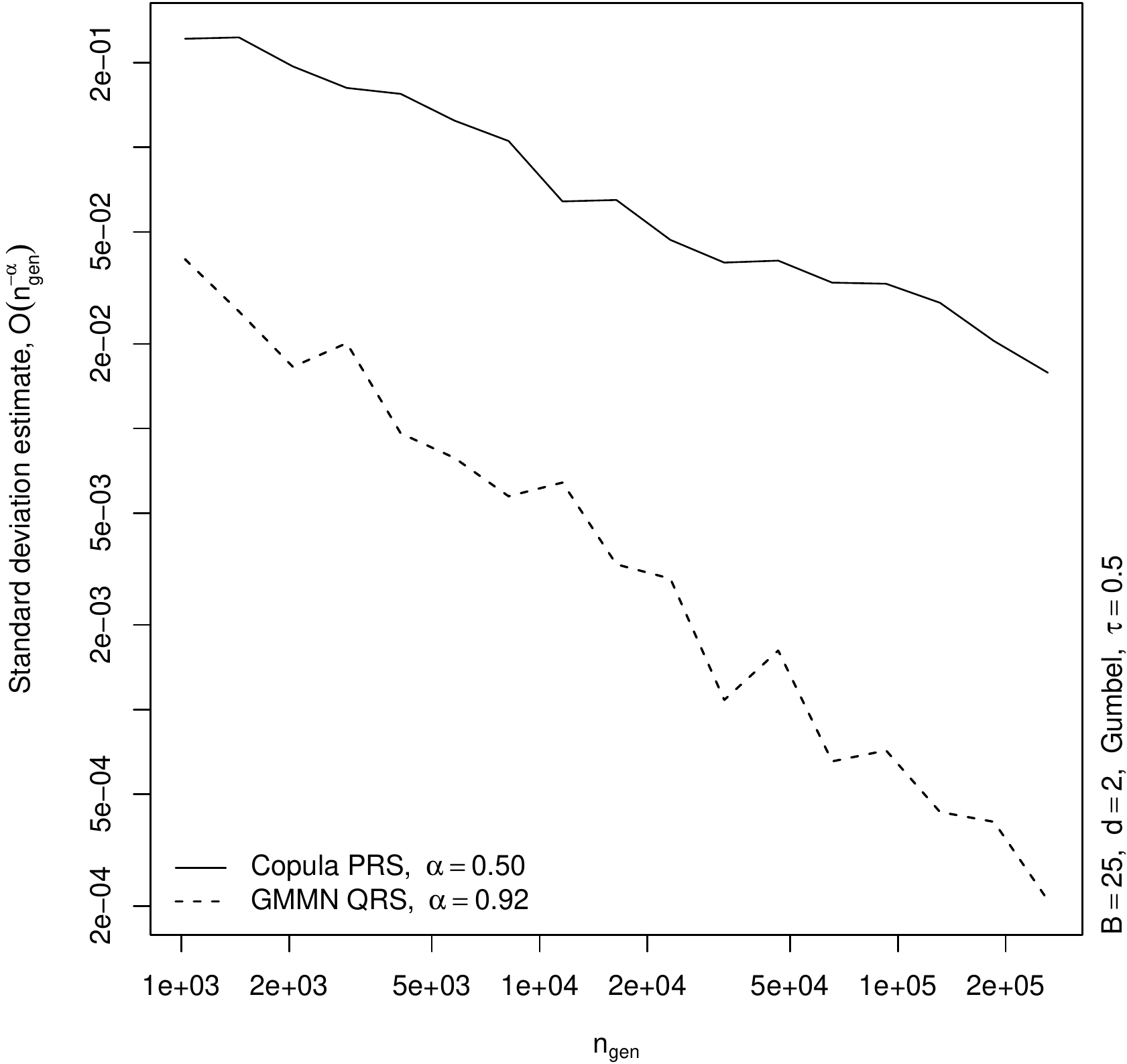}\hfill
  \includegraphics[width=0.31\textwidth]{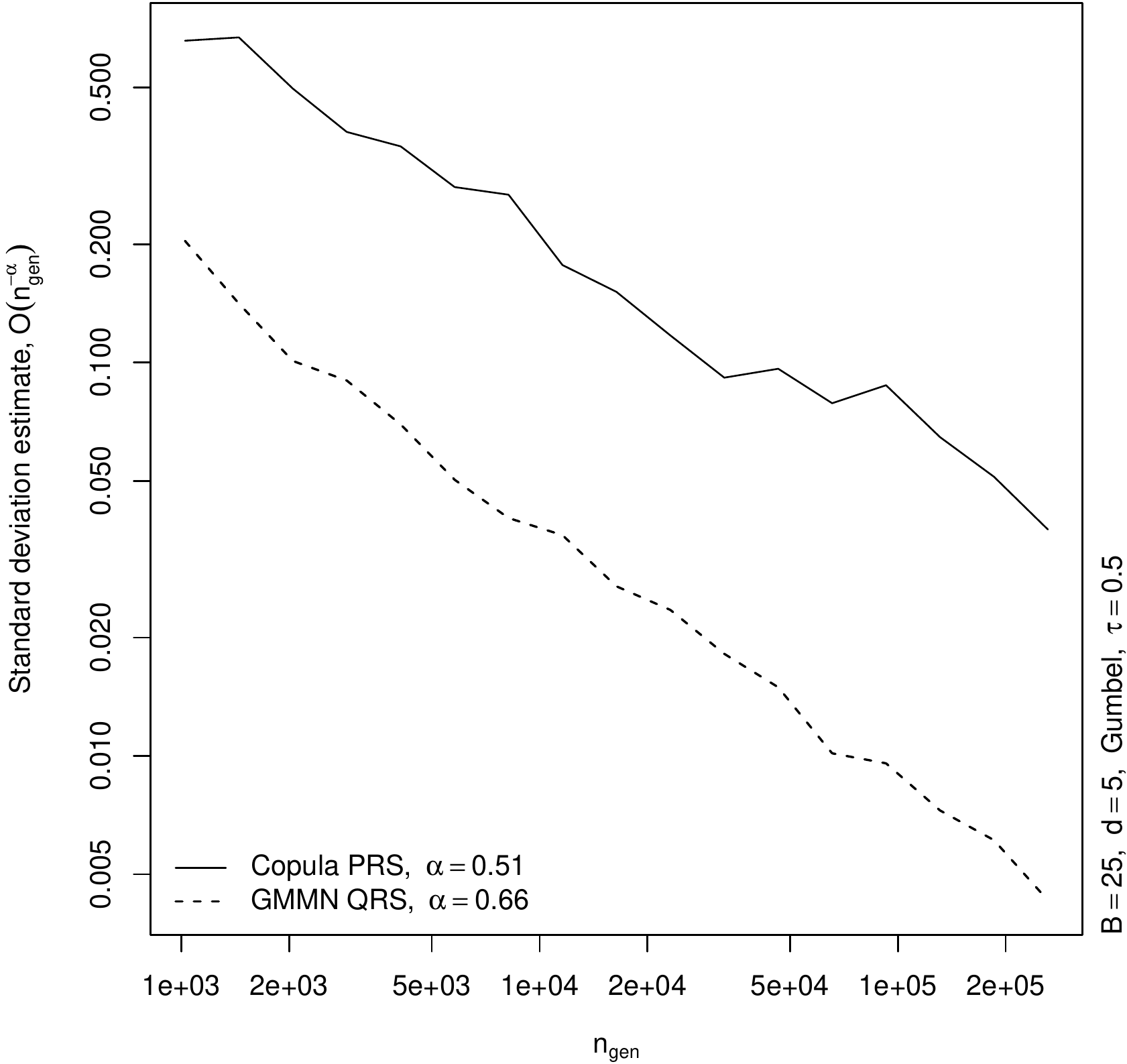}\hfill
  \includegraphics[width=0.31\textwidth]{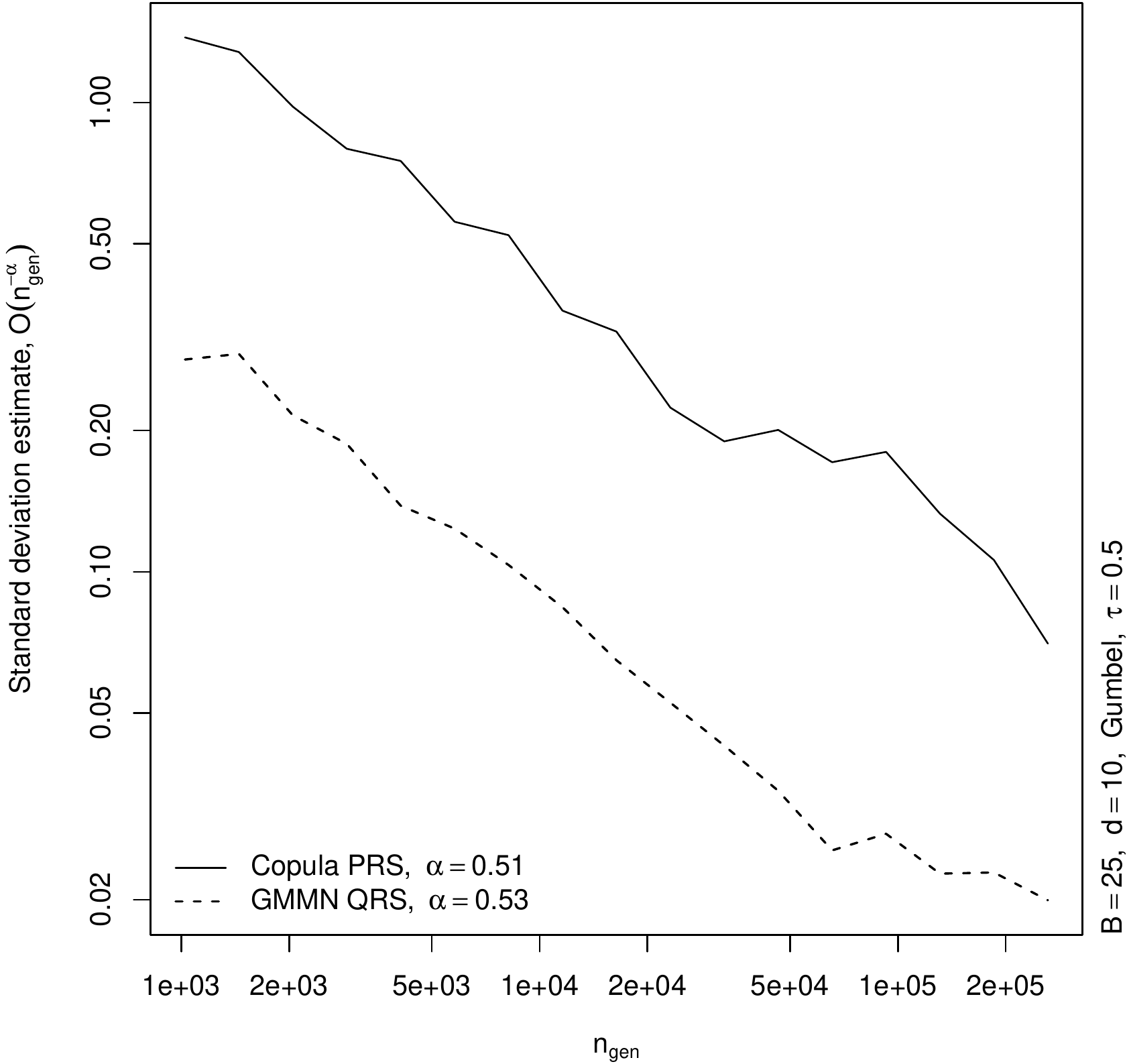}
  \includegraphics[width=0.31\textwidth]{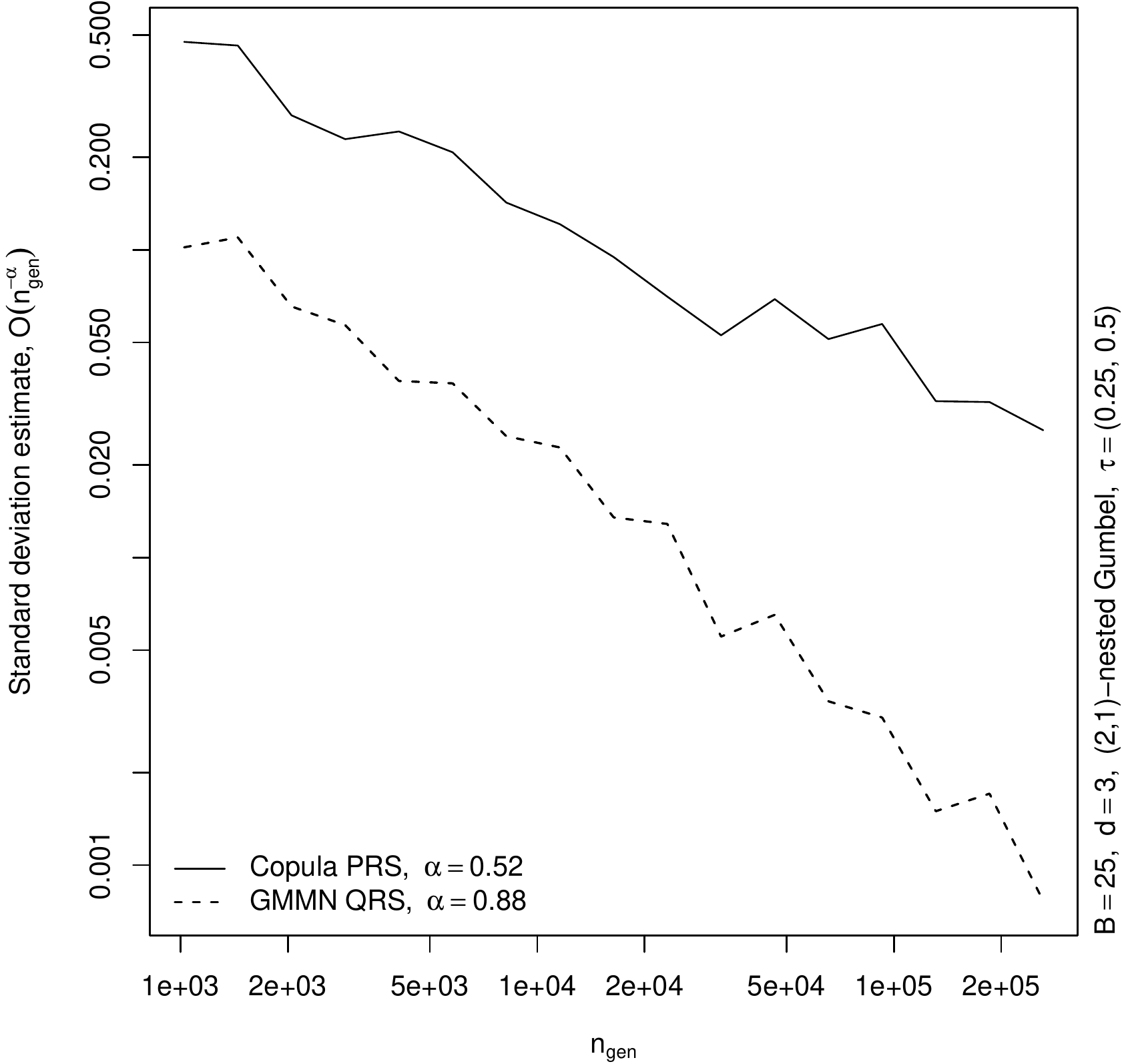}\hfill
  \includegraphics[width=0.31\textwidth]{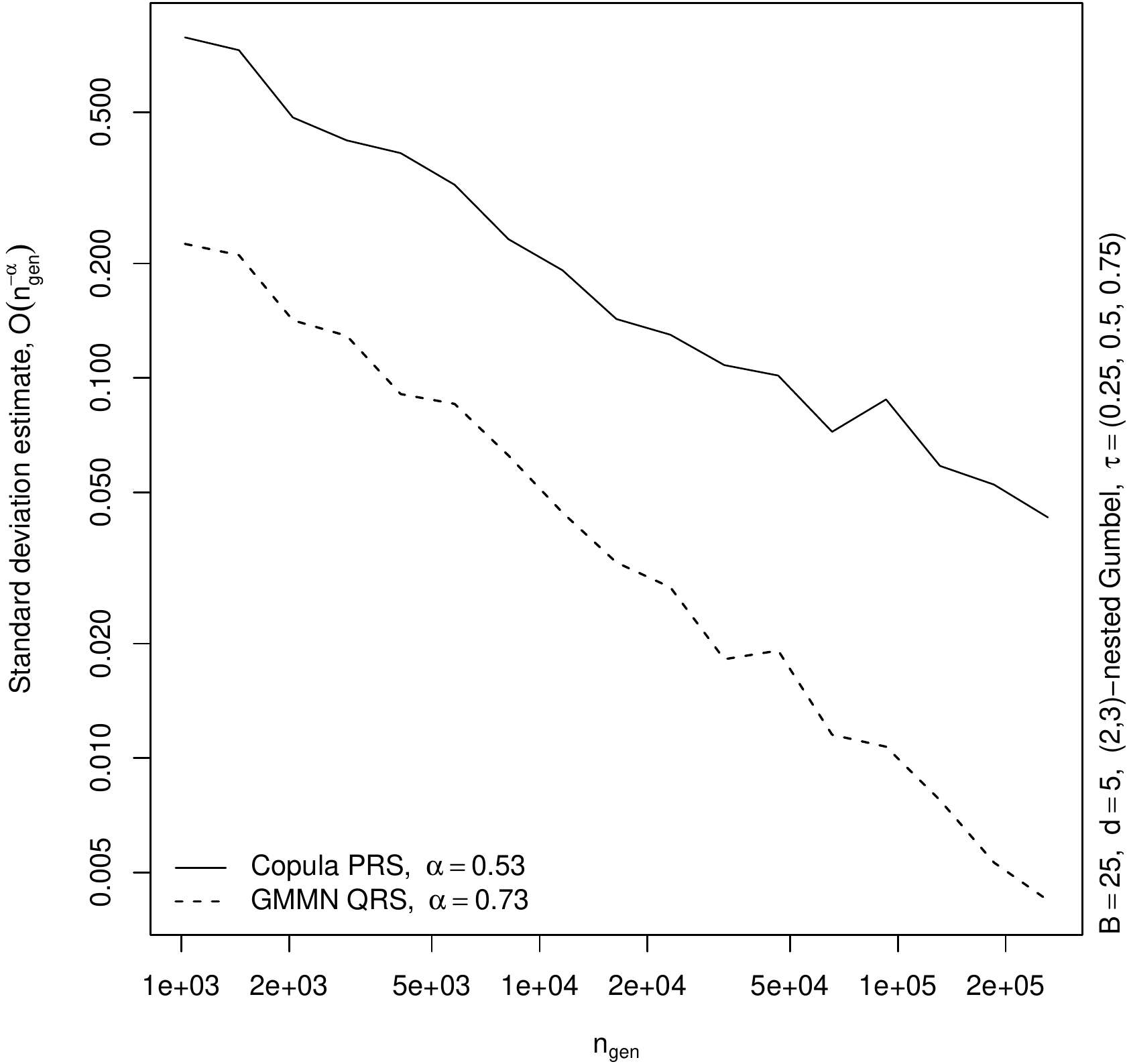}\hfill
  \includegraphics[width=0.31\textwidth]{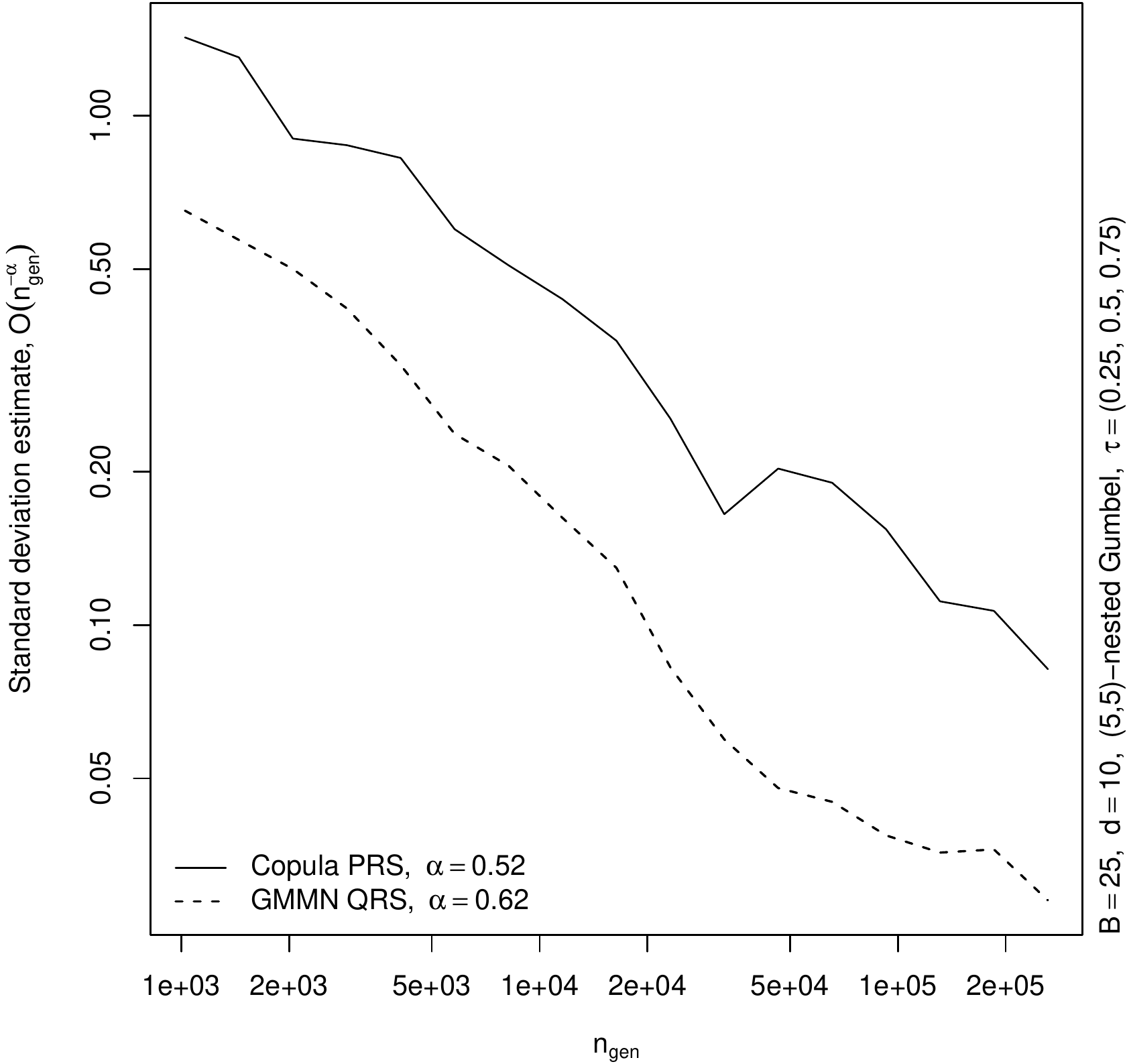}
  \caption{Standard deviation estimates based on $B=25$ replications for
    estimating $\E(\Psi_2(\bm{X}))$ via MC based on a pseudo-random sample (PRS), via the copula RQMC
    estimator (whenever available; rows 1--2 only) and via the
    GMMN RQMC estimator (based on digitally shifted nets). Note that in
    rows 1--3, $d\in\{2,5,10\}$, whereas in row 4,
    $d\in\{3,5,10\}$.}\label{fig:aggregateES}
\end{figure}

\section{Run time}\label{sec:timings}
Run time depends on factors such as the hardware used, the current workload, the
algorithm implemented, the programming language used, the implementation style,
compiler flags, whether garbage collection was used, etc. There is not even a
unique concept of time (system vs user vs elapsed time). Although none of our
code was optimized for run time, we still report on various timings here,
measured in elapsed time also known as wall-clock time.

\subsection{Training and sampling}
The results in this section are reproducible with the demo
\texttt{GMMN\_QMC\_timings} of the \R\ package \texttt{gnn}.

Table~\ref{tab:train:time} shows elapsed times in minutes for training a GMMN
on training data from $t_4$, Clayton (C), Gumbel (G) and nested Gumbel (NG) copulas in
dimensions 2, 3, 5 and 10 as described in Sections~\ref{sec:GMMN:visual} and
\ref{sec:GMMN:accuracy}. As is reasonable, the measured times are only affected
by the dimension, not by the type of dependence model.

\begin{table}[htbp]
  \centering
  \begin{tabular}{*{1}{c}*{3}{S[table-format=1.2]}}
    \toprule
    \multicolumn{1}{c}{$C$} & \multicolumn{1}{c}{$d = 2, 3$} & \multicolumn{1}{c}{$d = 5$} & \multicolumn{1}{c}{$d = 10$} \\
    \midrule
    $t_4$ & 5.52 & 7.01 & 9.46 \\
    C & 5.52 & 7.00 & 9.45 \\
    G & 5.52 & 7.01 & 9.46 \\
    NG & 6.01 & 7.01 & 9.44 \\
    \bottomrule
  \end{tabular}
  \caption{Elapsed times in minutes for training GMMNs of the same architecture
    as used in Sections~\ref{sec:GMMN:visual} and \ref{sec:GMMN:accuracy} with
    $\nepo=300$, $\ntrn=60\,000$ and $\nbat=5000$ on respective copula samples;
    training was done on one NVIDIA Tesla P100 GPU.}\label{tab:train:time}
\end{table}

Table~\ref{tab:run:time} contains elapsed times for generating $\ngen=10^5$
observations from the respective dependence model and sampling method on
two different machines, once on the NVIDIA Tesla P100 GPU used for training
and once locally on a 2018 2.7 GHz Quad-Core Intel Core i7 processor. The
results for the copula-based pseudo-random sampling method are averaged over 100
repetitions. The results for the copula-based quasi-random sampling method are
obtained as follows. If the conditional copulas involved in applying the inverse
Rosenblatt transform of the respective copula model are not available
analytically nor numerically, NA is reported; this applies to the nested Gumbel
copula.  And if they are only available numerically (by root finding), then a
reduced sample size of 1000 is used and the reported run times were obtained by
scaling up to $\ngen$ by multiplication with $100$; this applies to the Gumbel
copula. We also measured run times for $\ngen=10^6$ and $\ngen=10^7$ and they
scale proportionally as one would expect.

\begin{table}[htbp]
  \centering
  \begin{tabular}{c c
    S[table-format=1.4] S[table-format=4.4] *{2}{S[table-format=1.4]}
    S[table-format=1.4] S[table-format=4.4] *{2}{S[table-format=1.4]}}
    \toprule
    & & \multicolumn{4}{c}{2018 2.7 GHz Quad-Core Intel Core i7} & \multicolumn{4}{c}{NVIDIA Tesla P100 GPU}\\
    \cmidrule(lr{0.4em}){3-6}\cmidrule(lr{0.4em}){7-10}
    & & \multicolumn{2}{c}{Copula} & \multicolumn{2}{c}{GMMN} & \multicolumn{2}{c}{Copula} & \multicolumn{2}{c}{GMMN}\\
    \cmidrule(lr{0.4em}){3-4}\cmidrule(lr{0.4em}){5-6}\cmidrule(lr{0.4em}){7-8}\cmidrule(lr{0.4em}){9-10}
    $d$ & $C$ & \multicolumn{1}{c}{PRS} & \multicolumn{1}{c}{QRS} & \multicolumn{1}{c}{PRS} & \multicolumn{1}{c}{QRS} & \multicolumn{1}{c}{PRS} & \multicolumn{1}{c}{QRS} & \multicolumn{1}{c}{PRS} & \multicolumn{1}{c}{QRS}\\
    \midrule
    2     & $t_4$ & 0.0642 & 0.4420 & 1.2960 & 1.2720 & 0.1045 & 0.8210 & 3.6140 & 3.5820 \\
    2     & C     & 0.0144 & 0.0230 & 1.3110 & 1.3140 & 0.0308 & 0.0400 & 3.6290 & 3.5820 \\
    2     & G     & 0.0348 & 374.8000 & 1.3470 & 1.3310 & 0.0669 & 687.7000 & 3.6400 & 3.5750 \\
    (2,1) & NG    & 0.0633 & NA & 1.3360 & 1.3260 & 0.1369 & NA & 3.6560 & 3.6530 \\
    5     & $t_4$ & 0.1410 & 1.4830 & 1.4490 & 1.3830 & 0.2567 & 3.0150 & 3.7330 & 3.6960 \\
    5     & C     & 0.0425 & 0.0580 & 1.3890 & 1.5060 & 0.0936 & 0.1110 & 3.7670 & 3.6980 \\
    5     & G     & 0.0523 & 1529.5000 & 1.3930 & 1.3860 & 0.1161 & 2939.9000 & 3.7380 & 3.7010 \\
    (2,3) & NG    & 0.0989 & NA & 1.4020 & 1.4070 & 0.2167 & NA & 3.9450 & 3.7210 \\
    10    & $t_4$ & 0.2766 & 3.6080 & 1.5870 & 1.6720 & 0.4917 & 6.5290 & 3.9530 & 4.0990 \\
    10    & C     & 0.0734 & 0.1190 & 1.6430 & 1.6630 & 0.1806 & 0.2320 & 3.9680 & 4.1910 \\
    10    & G     & 0.0807 & 3579.6000 & 1.5500 & 1.5290 & 0.1984 & 7119.6000 & 4.0060 & 3.9370 \\
    (5,5) & NG    & 0.1324 & NA & 1.5470 & 1.5280 & 0.3087 & NA & 3.9740 & 3.9530 \\
    \bottomrule
  \end{tabular}
  \caption{Elapsed times in seconds for generating samples of size $\ngen=10^5$.}\label{tab:run:time}
\end{table}

We see from Table~\ref{tab:run:time} that quasi-random sampling from specific
copulas is available and can be fast, e.g., for $t_4$ and Clayton
copulas. However, we already see that quasi-random sampling gets more
time-consuming for larger $d$. For other copulas, such as Gumbel copulas, it can
be much more time consuming. Furthermore, as seen from the nested case and as is
currently the case for most copula models, a quasi-random sampling procedure is
not even available. By contrast, on the same machine, GMMNs show very close run
times, are barely affected by the dimension and are not affected by the
type of dependence model. For $d=10$, the GMMN quasi-random sampling
procedure even outperforms the $t_4$ quasi-random sampling procedure for which
the conditional copulas are analytically available; for $d=5$ the two procedures perform on par,
depending on the machine used.

This highlights the universality of using neural networks for dependence
modeling purposes. As an example, say a risk management application such as
estimating expected shortfall with variance reduction is based on a
$t_4$ copula and a regulator requires us to change the model to a Gumbel copula for
stress testing purposes. Suddenly run time increases substantially. Also, if the
regulator decides to incorporate hierarchies (as was more easily done for the
$t_4$ model due to its correlation matrix) by utilizing a nested Gumbel copula,
then there is suddenly no quasi-random sampling procedure known anymore. It is
one of the biggest drawbacks of parametric copula models in applications that the level of
difficulty of carrying out important statistical tasks such as sampling, fitting
and goodness-of-fit can largely depend on the class of copulas used. These
problems are eliminated with neural networks as dependence models.

\subsection{Fitting and training times for the real-data applications}
We now briefly present the run times for fitting the parametric copula models
and training the GMMNs used in Section~\ref{sec:realdata}. Recall that we
considered three dimensions $d\in\{3,5,10\}$ and that fitting, respectively
training, was only required once for each dimension, independently of the number
of applications considered.

Table~\ref{tab:run:time:fit:appl} contains the elapsed times in seconds. Recall
from Section~\ref{sec:realdata} that GMMNs provided the best fit, followed by
the unstructured $t$ copula.  The latter is in general a popular parametric
copula model in practice; see \cite{fischerkoeckschlueterweigert2009}. Comparing the last two columns of
Table~\ref{tab:run:time:fit:appl}, we see that fitting the $t$ copula is
comparably fast for $d=3$, %
however, already for $d=5$, run
time for training a GMMN is on par. For $d=10$,
training a GMMN is significantly faster.
\begin{table}[htbp]
  \centering
  \begin{tabular}{c *{3}{S[table-format=1.3]} *{2}{S[table-format=2.3]} S[table-format=3.3] S[table-format=2.3]}
    \toprule
    $d$ & \multicolumn{1}{c}{Gumbel} & \multicolumn{1}{c}{Clayton} & \multicolumn{1}{c}{Normal (ex)} & \multicolumn{1}{c}{Normal (un)} & \multicolumn{1}{c}{$t$ (ex)} & \multicolumn{1}{c}{$t$ (un)} & \multicolumn{1}{c}{GMMN} \\
    \midrule
    3  & 1.078 & 0.437 & 0.388 &  1.000 &  3.315 &   9.064 & 43.336\\
    5  & 1.291 & 0.455 & 0.435 &  6.071 &  5.932 &  41.131 & 41.235\\
    10 & 1.344 & 0.531 & 0.981 & 82.982 & 11.966 & 783.406 & 55.555\\
    \bottomrule
  \end{tabular}
  \caption{Elapsed times in seconds for fitting the respective parametric copula model and training the GMMN
    on one NVIDIA Tesla P100 GPU for the applications presented in Section~\ref{sec:realdata}.}\label{tab:run:time:fit:appl}
\end{table}

\subsection{TensorFlow vs R}
Finally, let us stress again what we initially said, namely, that run time
depends on many factors. In particular, one typically relies on TensorFlow
for the feed-forward step of input through the GMMN, %
which creates overhead especially for smaller data size $n$.  In
the demo \texttt{GMMN\_QMC\_timings}, we also provide a pure \R\ implementation
for this step for GMMNs considered in this work.

For each of $d\in\{2,5,10\}$, we randomly initialize $B=10$ GMMNs as in
Algorithm~\ref{algorithm:GMMN:train} and average the elapsed times of their
feed-forward steps when passing through data of size $n$ (chosen equidistant
in log-scale from $10$ to $10^6$) from the input distribution, once with
TensorFlow, and once with our own \R\ implementation. We then divide the
averaged run times of the \R\ implementation by the ones of the TensorFlow
implementation. Whenever the ratio is smaller (larger) than one, the \R\
implementation is faster (slower) than the TensorFlow implementation. We ran
this experiment once locally on the 2018 2.7 GHz Quad-Core Intel Core i7
processor and once on the NVIDIA Tesla P100 GPU. The results are depicted on
the left and on the right plot in Figure~\ref{fig:R:vs:TF},
respectively. Depending on the machine used, the \R\
implementation can be significantly faster, especially for small
$n$. %
\begin{figure}[htbp]
  \centering
  \includegraphics[width=0.48\textwidth]{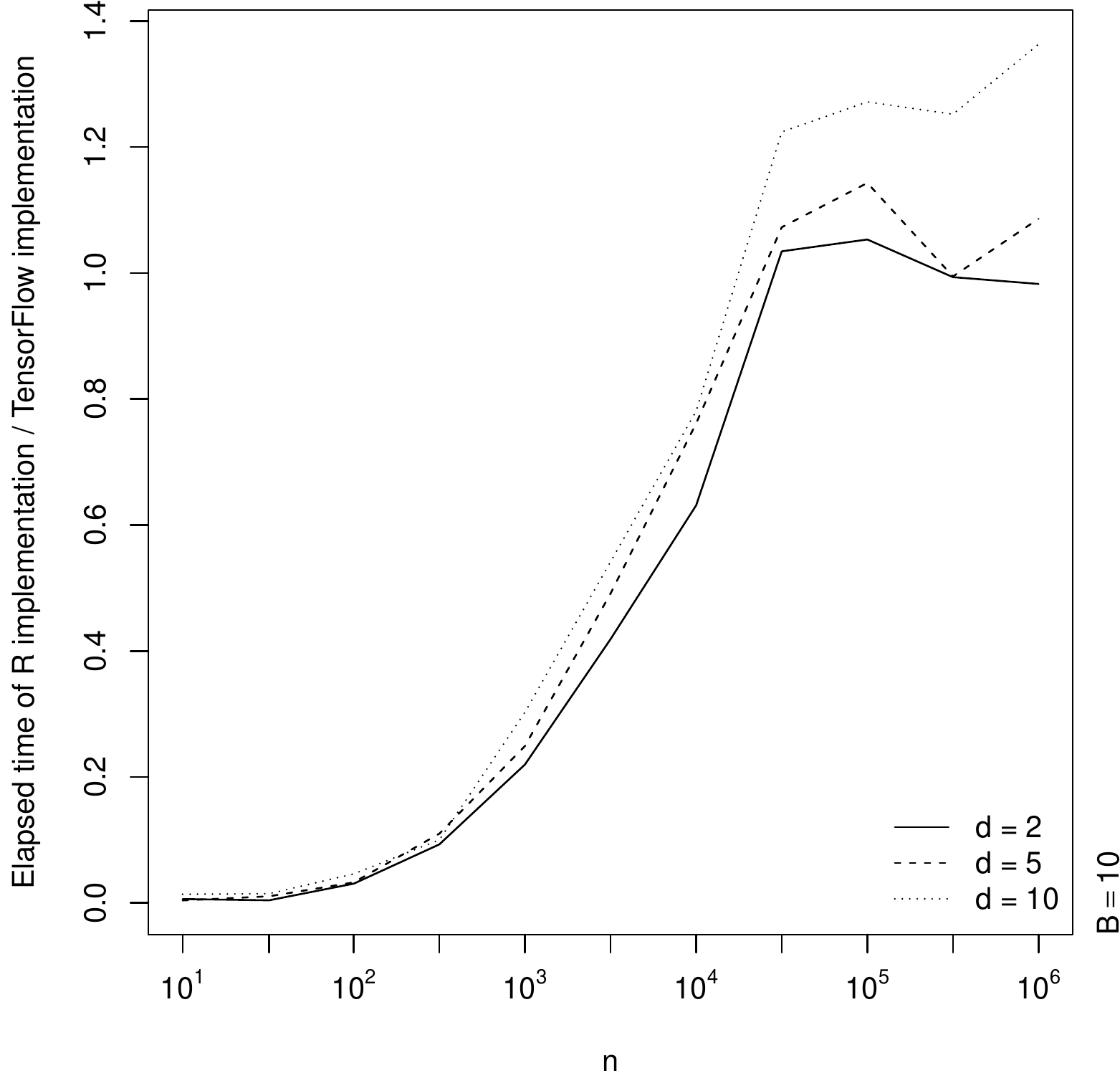}%
  \hfill%
  \includegraphics[width=0.48\textwidth]{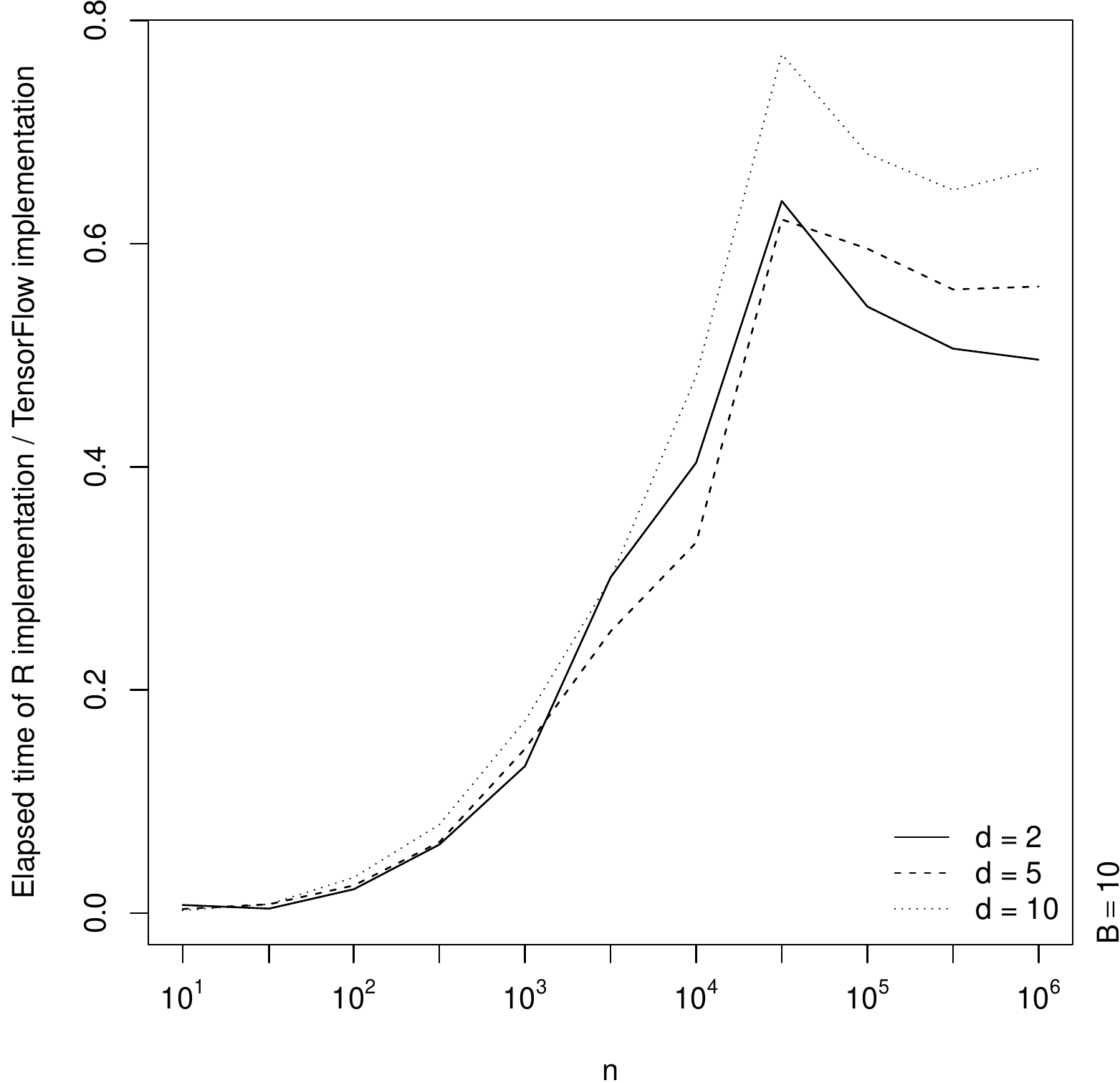}%
  \caption{Ratio of averaged elapsed times of an \R\ implementation over the TensorFlow
    implementation when evaluating randomly initialized GMMNs, once
    run on a 2018 2.7 GHz Quad-Core Intel Core i7 processor (left) and once
    on an NVIDIA Tesla P100 GPU (right).}\label{fig:R:vs:TF}
\end{figure}

\Urlmuskip=0mu plus 1mu\relax%
\printbibliography[heading=bibintoc]
\end{document}
